\title{In-Context Learning for Pure Exploration}
\author{Alessio Russo\thanks{Equal contribution (alphabetical order).} \\
  Boston University \\
  \texttt{arusso2@bu.edu}
  \And
  Ryan Welch\footnotemark[1] \\
 Stanford University\\
  \texttt{rcwelch@stanford.edu}
  \And
  Aldo Pacchiano \\
  Boston University \\
  Broad Institute of MIT and Harvard \\
  \texttt{pacchian@bu.edu}
}
\newtheorem{theorem}{Theorem}[section]
\newtheorem{lemma}[theorem]{Lemma}
\newtheorem{proposition}[theorem]{Proposition}
\newtheorem{assumption}{Assumption}
\newtheorem{corollary}[theorem]{Corollary}
\theoremstyle{definition}
\newtheorem{example}{Example}[section]
\crefname{assumption}{assum.}{assums.}
\Crefname{theorem}{Assum.}{Assums.}
\crefname{theorem}{thm.}{thms.}
\Crefname{theorem}{Thm.}{Thms.}
\crefname{proposition}{prop.}{props.}
\Crefname{proposition}{Prop.}{Props.}
\crefname{lemma}{lem.}{lems.}
\Crefname{lemma}{Lem.}{Lems.}
\crefname{corollary}{cor.}{cors.}
\Crefname{corollary}{Cor.}{Cors.}
\crefname{definition}{def.}{defs.}
\Crefname{definition}{Def.}{Defs.}
\crefname{equation}{eq.}{eqs.}
\Crefname{equation}{Eq.}{Eqs.}
\crefname{figure}{fig.}{figs.}
\Crefname{figure}{Fig.}{Figs.}
\crefname{table}{tab.}{tabs.}
\Crefname{table}{Tab.}{Tabs.}
\crefname{appendix}{app.}{apps.}
\Crefname{appendix}{App.}{Apps.}
\DeclareMathOperator*{\argmax}{arg\,max}
\DeclareMathOperator*{\argmin}{arg\,min}
\DeclareMathOperator*{\argsup}{arg\,sup}
\DeclareMathOperator*{\arginf}{arg\,inf}
\renewcommand{\P}{\mathcal{P}}
\newcommand{\refalgbyname}[2]{\hyperref[#1]{\texttt{\textbf{#2}}}}
\newcommand{\icpe}{\refalgbyname{algo:icpe_fixed_confidence}{ICPE}}
\begin{document}

\maketitle

\begin{abstract}

We study the  \emph{active sequential hypothesis testing} problem, also known as \emph{pure exploration}: given a new task, the learner \emph{adaptively collects data} from the environment to  efficiently determine an underlying correct hypothesis. A classical instance of this problem is the task of identifying the best arm in a multi-armed bandit problem (a.k.a. BAI, Best-Arm Identification), where actions index hypotheses. Another important case is generalized  search, a problem of determining the correct label through a sequence of strategically
selected queries that indirectly reveal information about the label.
In this work, we introduce \emph{In-Context Pure Explorer} (\icpe), which meta-trains Transformers to map \emph{observation histories} to \emph{query actions} and a \emph{predicted hypothesis}, yielding a model that transfers in-context. At inference time,  \icpe{} actively gathers evidence on new tasks and infers the true hypothesis without parameter updates.
Across deterministic, stochastic, and structured benchmarks, including BAI  and generalized search,  \icpe{} is competitive with  adaptive baselines while requiring no explicit modeling of information structure. Our results support Transformers as practical architectures for \emph{general sequential testing}. Code repository \url{https://github.com/rssalessio/icpe}.

\end{abstract}

\section{Introduction}\label{sec:introduction}

Sequential architectures have shown striking in-context learning (ICL) abilities: given a short sequence of examples, they can infer task structure and act without parameter updates \citep{lee2023supervised,schaul2010metalearning,bengio1990learning}. While this behavior is well documented for supervised input–output tasks, as well as regret minimization problems,  many real problems demand sequential experiment design: how do we allocate experiments to reliably infer an hypothesis? For instance, imagine a librarian trying to figure out which book you want by asking a series of questions. Similarly, in generalized search \citep{nowak2008generalized}, the learner adaptively chooses which tests to run, each partitioning the hypothesis class, to identify the true hypothesis as quickly as possible.
This raises a natural question: can we leverage ICL for adaptive \emph{data collection and hypothesis identification} across a family of  problems?

We study this question through the lens of Active Sequential Hypothesis Testing (ASHT) \citep{chernoff1992sequential,cohn1996active}, a.k.a. \emph{pure exloration} \citep{degenne2019pure}, where an agent adaptively performs measurements in an environment to identify a ground-truth hypothesis. In particular, we study a Bayesian formulation of ASHT, where each environment  is drawn from a family of possible problems ${\cal M}$.

Classically, ASHT has been studied  either (i) with a fixed confidence   $\delta$ (i.e., stop as soon as the predicted hypothesis is correct with error probability at most $\delta$) \citep{NEURIPS2024_1fb0a4de}
or (ii) a fixed sampling budget (use $N$ samples to predict the correct hypothesis) \citep{atsidakou2022bayesian}.  For example, in the fixed-confidence setting one can use ASHT  to minimize the number of  DNA-based tests performed to accurately detect cancer \citep{gan2021greedy}.  Another canonical instantiation is Best-Arm Identification (BAI) in stochastic multi-armed bandits \citep{audibert2010best}. In this problem the agent sequentially selects an action (the query) and observes a noisy reward:  the task is to identify the action with the highest mean reward\footnote{Note that, \emph{in this particular case}, the hypothesis space coincides with the query space of the agent.}. Other applications include medical diagnostics \citep{berry2010bayesian}, sensor management \citep{hero2011Sensor} and  recommender systems \citep{resnick1997recommender}.

Despite substantial progress \citep{ghosh1991brief,naghshvar2013active,naghshvar2012noisy,mukherjee2022chernoff}, solving ASHT problems remains difficult.
 Even in simple tabular environments, computing optimal sampling policies often requires strong modeling assumptions (known observation models that do not depend on the history, and/or known inference rules) and solving challenging (often nonconvex) programs~\citep{al2021navigating}. This leaves open whether one can \emph{learn}, in a simple way, to both gather informative data and infer the correct hypothesis without such assumptions.

To answer this question, we introduce In-Context Pure Explorer (\icpe{}), a Transformer-based architecture meta-trained on a family of tasks to jointly learn a data-collection policy and an inference rule, in both \emph{fixed-confidence}  and \emph{fixed-budget} regimes.
\icpe{} is a model that transfers in-context: at inference time, \icpe{} gathers evidence on new tasks and infers the true hypothesis without parameter updates \citep{schaul2010metalearning,bengio1990learning}.

The practical implementation of \icpe{} emerges naturally from the theory alone, showing how  a principled information-theoretic reward function can be used to train, using Reinforcement Learning (RL), an optimal data-collection policy.
Additionally,
\icpe{}  relaxes classical assumptions: the data-generation mechanism $P$ is unknown and may be history-dependent, and the mapping from data to hypotheses is also unknown (we do not assume a known likelihood or a hand-designed test). These facts, combined with the simple and practical implementation of \icpe{}, offer a new way to design efficient ASHT methods in more general environments.

\begin{figure}[t]
    \centering
    \vspace{-25pt}
    \begin{subfigure}[t]{0.6\textwidth}
      
    \includegraphics[width=\linewidth]{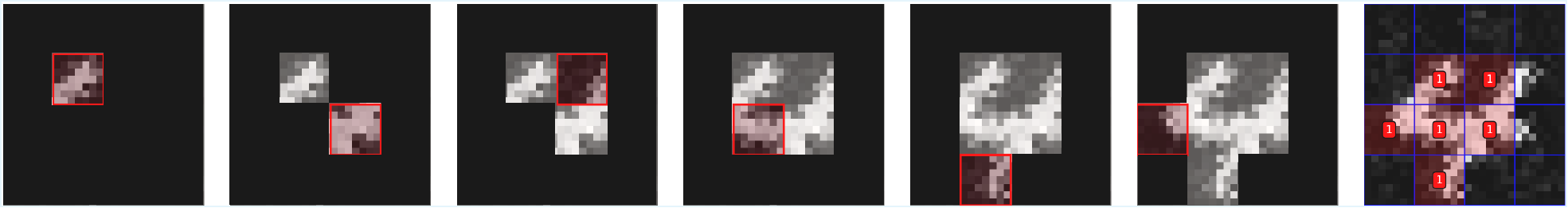}
    \caption{}
    \label{fig:mnist_ex}
    \end{subfigure}
    \hfill
    \begin{subfigure}[t]{0.38\textwidth}
      
   \includegraphics[width=\linewidth]{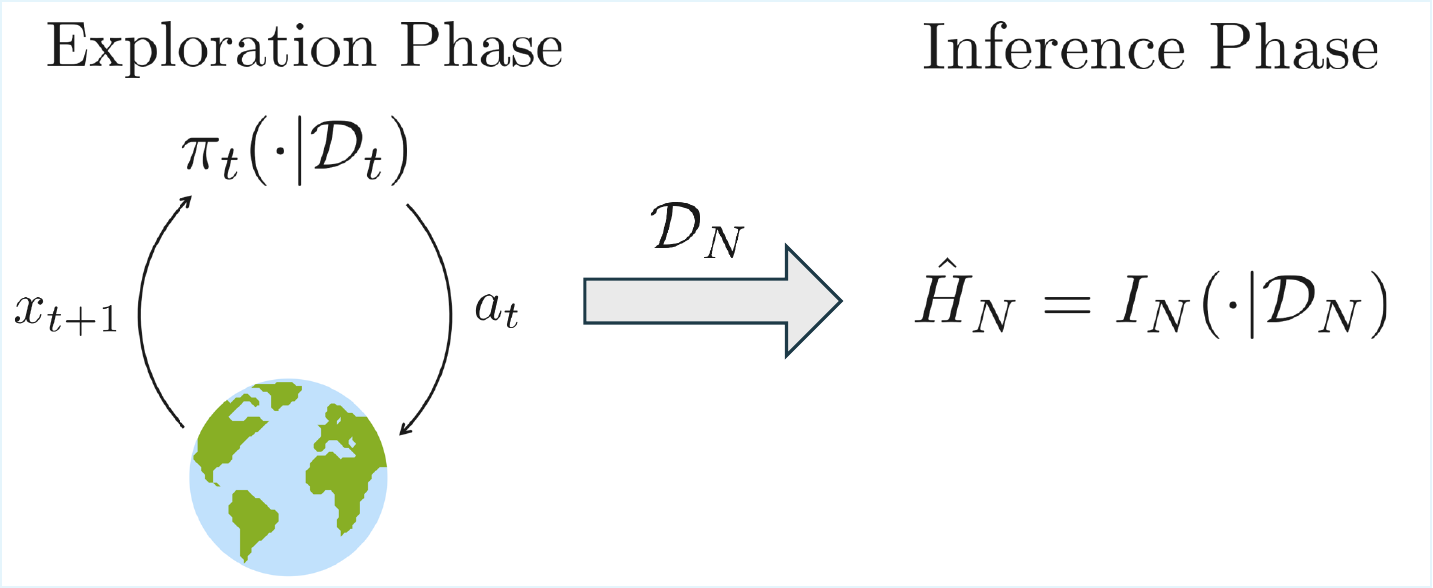}
    \caption{}
    \label{fig:mnist_ex}
    \end{subfigure}
    \caption{{\bf (a)} Generalized search example: \icpe{} starts from a masked image  (left), and sequentially reveals patches expected to  reduce the posterior entropy over labels. It stops once the inferred label is $\delta$-correct (right). {\bf (b)} After executing an action $a_t$, the agent  observes $x_{t+1}$. At inference time, the data collected is used to infer an hypothesis.}
    \vspace{-5pt}
\end{figure}

On BAI and generalized search tasks (deterministic, stochastic, structured), \icpe{} efficiently explores and achieves performance comparable to instance-dependent algorithms, while requiring only a forward pass at test time, and without requiring solving any complex optimization problem.

\section{Problem Setting}\label{sec:method}

 The problem we consider is as follows: on an environment instance $M\sim {\cal P}$, sampled from a prior ${\cal P}$ over an environment class ${\cal M}$, the learner chooses actions (queries\footnote{The reason why denote ``queries" as ``actions" stems from the fact that the problem can be modeled similarly to a Markov Decision Process (MDP)\citep{puterman2014markov}, and  queries correspond to actions in an MDP.}) $a_t$ in rounds $t=1,2,\dots$ and observes outcomes $x_{t+1}$. The aim is to gather a trajectory $\mathcal{D}_t=(x_1,a_1,\ldots,a_{t-1},x_t)$ that is informative enough to identify an environment-specific ground-truth hypothesis $H^\star$ with high probability.

Informally, we seek to answer the following question:
\begin{tcolorbox}
\begin{center}
\it Given an environment $M$ drawn from a prior ${\cal P}$, how can we learn (i) a sampling policy $\pi$ that collects data $\mathcal{D}$ from $M$ and (ii) an inference rule $I$ such that $I(\mathcal{D})$ reliably predicts $H^\star$?
\end{center}
\end{tcolorbox}

\noindent{\bf Environments, sampling policy and hypotheses.}
We consider environments  $(M=(\mathcal X,\mathcal A,\rho,P,H^\star)$ with observation space $\mathcal X$, action set $\mathcal A$, initial observation law $\rho\in\Delta(\mathcal X)$, and a (possibly history-dependent) generative mechanism $P=(P_t)_{t\ge1}$ such that $x_{t+1}\sim P_t(\cdot| \mathcal D_t,a_t)$.
All $M\in{\cal M}$ share the same $\mathcal X$ and $\mathcal A$.
The learner uses a (possibly randomized) policy $\pi=(\pi_t)_{t\ge1}$ with $a_t\sim \pi_t(\cdot| \mathcal D_t)$, and  a sequence of inference rules $I=(I_t)_{t\ge1}$ with $I_t:\mathcal D_t\to{\cal H}$ for a finite hypothesis set ${\cal H}$.
We assume throughout that $H^\star$ is induced by the environment via a measurable functional  $h^\star$, i.e., $H^\star \coloneqq h^\star(\rho,P)$, 
and is almost surely unique under ${\cal P}$.

\begin{example}[Best Arm Identification]\it In BAI an agent seeks to identify the best arm among $K$ arms. Upon selecting an action $a$ at time $t$, it observes a random reward $x_t$ distributed according to  a distribution $P(\cdot|a_t)$. The goal is to identify $a^\star =\argmax_a \mathbb{E}_{x\sim P(\cdot|a)}[x]$ (so $H^\star=a^\star$). Many algorithms exist for specific assumptions \citep{garivier2016optimal,jedra2020optimal}, but designs change drastically with the model, and extensions to richer settings can be difficult and often non-convex \citep{al2021adaptive}. 
\end{example}

\noindent{\bf Fixed confidence and fixed budget regimes.} Two regimes are usually considered in pure exploration:
\begin{itemize}
    \item {\bf Fixed confidence:} Given a target error level $\delta\in(0,1)$, the learner chooses: (i) a stopping time $\tau\in \mathbb{N} $ that denotes the total number of queries and \emph{marks}  the \emph{random} moment data collection stops, (ii) a data-collection policy $\pi$, (iii) and an inference rule $I$ that minimize the expected total number of queries $\tau$ while meeting a correctness guarantee:
\begin{equation}\label{eq:fixed_confidence_problem}
\inf_{\tau,\pi,I}\ \mathbb{E}^\pi\left[\tau\right]
\quad\text{s.t.}\quad
\mathbb{P}^\pi\left( I_\tau({\cal D}_\tau)=H^\star\right) \geq\ 1-\delta .
\end{equation}
where $\mathbb{P}^\pi(\cdot)$ denotes the probability of the underlying data collection process when $\pi$ gathers data from $M$, and $M$ is sampled from a prior $\P$.

\item {\bf Fixed budget:}
For a given horizon $N\in\mathbb{N}$, the learner chooses $\pi$ and $I$ to maximize the chance of predicting the correct hypothesis after exactly $N$ queries:
\begin{equation}\label{eq:fixed_budget_problem}
\sup_{\pi,I} \mathbb{P}^\pi\left( I_N({\cal D}_N)=H^\star\right).
\end{equation}
\end{itemize}
These two objectives capture the main operational modes of pure exploration: “stop when certain” and “maximize accuracy over a fixed horizon". 
Further note that the problem we propose to solve extends classical ASHT by allowing environment-specific, history-dependent observation kernels:  $x_{t+1}\sim P_t(\cdot|{\cal D}_t, a_t)$. Standard formulations assume memoryless dependence only on  $(H^\star, a_t)$\citep{naghshvar2013active,garivier2016optimal}. Moreover, whereas ASHT/BAI typically use known estimators (e.g., maximum likelihood), \emph{we learn the inference rule from data}. Consequently, both the sampling policy  $\pi$
 and the inference rule  $I$
 can depend on entire histories.

\section{ICPE: In-Context Pure Explorer}

In this section we describe \icpe{}, a meta-RL approach for solving \cref{eq:fixed_budget_problem,eq:fixed_confidence_problem}.  The implementation of \icpe{} is motivated from the theory. We first show  that learning an optimal inference rule $I$ amounts to computing a posterior distribution. Secondly, the policy $\pi$  can be learned using  RL with an appropriate reward function.

Importantly, the reward function used for training $\pi$ \emph{emerges} naturally from the problem formulation, and it \emph{is not} a user-chosen criterion, making it a principled information-theoretical  reward function.
We now describe  the theory,  and then describe the practical implementation of \icpe{}.

\vspace{-5pt}
\subsection{Theoretical Results}\label{subsec:theoretical_results}
\vspace{-5pt}
Our theoretical results highlight that the main quantity of interest, in both regimes in \cref{eq:fixed_budget_problem,eq:fixed_confidence_problem}, is the posterior distribution over the true hypothesis $\mathbb{P}(H^\star=H|{\cal D}_t)$. First, the \emph{optimal inference rule $I^\star$ is based on this posterior}. Secondly, \emph{this posterior naturally defines a reward function} that can characterize the optimality of a data-collection policy.

Throughout this section, we assume that $\mathcal X\subset\mathbb R$ is compact and $\mathcal A,{\cal H}$ are finite. We instantiate ${\cal M}$ via a parametrized family $\{(P_\omega,\rho_\omega):\omega\in\Omega\}$ with $\Omega$ compact and $\omega\mapsto(P_\omega,\rho_\omega)$ continuous, so a prior on $\Omega$ induces a prior on ${\cal M}$. For the sake of brevity, we provide informal statements here, and refer the reader to \cref{app:theoretical_results:icpe} for all the details.

We  have the following result about the optimality of the inference, proved in \cref{app:theoretical_results:icpe:posterior_distribution_and_inference_rule}.
\begin{proposition}[Inference Rule Optimality]\label{prop:main:optimal_inference_rule}
Let $t\geq 1$ and a  policy $\pi$.  The  optimal inference rule to $\sup_{I_t} \mathbb P^\pi(H^\star=I_t({\cal D}_t))$ is given by $I_t^\star(z)=\argmax_{H\in {\cal H}} {\mathbb P}(H^\star=H|{\cal D}_t=z)$.
\end{proposition}
Concretely, \cref{prop:main:optimal_inference_rule} identifies the optimal inference rule as the \textit{maximum a posteriori} estimator based on ${\mathbb P}(H^\star=H|{\cal D}_t)$, so that learning $I_\phi$ amounts to learning this posterior.
Based on this we can now differentiate between the two settings.

\noindent{\bf Fixed budget.} We begin with the simpler fixed budget case.  The key idea is to show that the optimal policy $\pi^\star$ maximizes an action-value function $Q$ \citep{sutton2018reinforcement}. First, define the following reward function: for $t< N$ let $r_t({\cal D}_t)\coloneqq 0$, and for $t=N$ set $r_N({\cal D}_N)=\max_{H} \mathbb{P}(H^\star =H|{\cal D}_N)$. In words, we assign a reward equal to the maximum value of the posterior distribution at the last time step and 0 otherwise.

Then, define  $V_N({\cal D}_N)=r_N({\cal D}_N)$ to be the optimal value at the last timestep $t=N$. From this definition, we can recursively define the $Q$-function as follows:
\[
Q_t({\cal D}_t,a) = \mathbb{E}_{x_{t+1}|({\cal D}_t,a)}[V_{t+1}(\underbrace{({\cal D}_t,a, x_{t+1})}_{={\cal D}_{t+1}})] \;\hbox{ and } \;V_t({\cal D}_t)=\max_{a\in {\cal A}}Q_t({\cal D}_t,a)\quad \forall t\leq N-1.
\]
where ``$x_{t+1}|({\cal D}_t,a)$'' denotes the posterior distribution of $x_{t+1}$ given $({\cal D}_t,a)$. 
Optimizing with respect to this reward function yields an optimal solution to (\ref{eq:fixed_budget_problem}), which we formalize in the following result proved in \cref{app:theoretical_results:icpe:fixed_budget:optimal_policy}.
\begin{theorem}[Policy Optimality for Fixed Budget]
\label{thm:main:3.2}
For all $t\geq 1$, define the policy $\pi_t^\star({\cal D}_t)=\argmax_{a\in {\cal A}} Q_t({\cal D},a)$. Then, $(\pi^\star, I_N^\star)$ (where $I_N^
\star$ is as in  \cref{prop:main:optimal_inference_rule}) are an optimal solution of \cref{eq:fixed_budget_problem}, and we have that 
\begin{equation}
\sup_{\pi,I} \mathbb{P}^\pi\left( I_N({\cal D}_N)=H^\star\right) = \mathbb{E}^{\pi^\star}[r_N({\cal D}_N)].
\end{equation}
\end{theorem} 
Simply speaking, \cref{thm:main:3.2} indicates that an optimal exploration policy in the fixed-budget setting is obtained by a greedy policy with respect to a $Q$-function whose terminal reward is the maximum posterior mass $r_N({\mathcal D}_N) = \max_H {\mathbb P}(H^\star = H \mid {\mathcal D}_N)$ (and zero reward for all other timesteps). A similar principle also holds for the fixed confidence setting.

\noindent{\bf Fixed confidence.} In the fixed confidence setting, we first simplify the problem by noting  that the stopping time $\tau$ can be simply embedded as a stopping action $a_{\rm stop}$ in the policy $\pi$ (see \cref{app:theoretical_results:icpe:fixed_confidence:dual}  for a formal justification). Hence, we extend the action set as ${\cal A}\gets {\cal A}\cup\{a_{\rm stop}\}$ and $\tau=\inf\{t\in \mathbb{N}: a_t=a_{\rm stop}\}$.
Then, as in classical ASHT literature \citep{naghshvar2013active}, we study the dual problem of \cref{eq:fixed_confidence_problem}, that is:
\begin{equation}
\inf_{\lambda\geq 0} \sup_{\pi,I} V_\lambda(\pi,I),\;\hbox{ where }\; V_\lambda(\pi,I)\coloneqq -\mathbb{E}^\pi[\tau] + \lambda\left[\mathbb{P}^\pi\left(I_\tau({\cal D}_\tau)=H^\star\right)-1+\delta\right].\label{eq:dual_problem_fixed_confidence}
\end{equation}
To show optimality of a policy, and satisfaction of the correctness constraint, there are 2 key observations to make: (1) one can show that the optimal inference rule $I^\star$ remains as in \cref{prop:optimal_inference_rule}; (2)  solving \cref{eq:dual_problem_fixed_confidence} amounts to solving an RL problem in $\pi$.

Indeed, similarly to the  the fixed budget setting, for $t\geq 1$ define the reward model as
\begin{equation}\label{eq:reward_model}
r_{t,\lambda}({\cal D}_t,a)=-\mathbf{1}_{\{a \neq a_{\rm stop}\}}+\lambda \mathbf{1}_{\{a=a_{\rm stop}\}}\max_H \mathbb{P}(H^\star=H|{\cal D}_t),\end{equation}
which simply penalizes the policy for each extra timestep,  accompanied by a reward proportional to the maximum posterior value at the stopping time. Accordingly, we define the $Q$-function as
\begin{equation}\label{eq:Q_function_theory}
Q_{t,\lambda}({\cal D}_t,a)= r_{t,\lambda}({\cal D}_t,a)+ {\bf 1}_{\{a \neq a_{\rm stop}\}}\mathbb{E}_{x_{t+1}|({\cal D}_t,a)}\left[\max_{a'} Q_{t+1,\lambda}\left(({\cal D}_t, a, x_{t+1}),a'\right)\right].
\end{equation}
Then, we have the following result (see \cref{app:theoretical_results:icpe:fixed_confidence:optimal_policy} and \cref{app:theoretical_results:icpe:fixed_confidence:identifiability_and_correctness} for a proof) indicating that optimizing with respect to this reward function yields an optimal solution to (\ref{eq:fixed_confidence_problem}).
\begin{theorem}[Policy Optimality for Fixed Confidence]
\label{thm:main:3.3}
Let $\pi^\star_{t,\lambda}({\cal D}_t)=\argmax_{a\in {\cal A}} Q_{t,\lambda}({\cal D}_t, a)$ and $\pi_\lambda^\star=(\pi_{t,\lambda})_t$. Then, for  $\lambda\geq 0$ the pair $(I^\star,\pi_\lambda^\star)$ , with $I^\star=(I_t^\star)_{t}$ defined as in \cref{prop:main:optimal_inference_rule}, is an optimal solution of $\sup_{\pi,I} V_\lambda(\pi,I)$. Furthermore, under suitable identifiability conditions (see \cref{assump:identifiability}), any maximizer $\lambda^\star $ of \cref{eq:dual_problem_fixed_confidence}  guarantees that $\pi_{\lambda^\star}^\star$ satisfies the $\delta$-correctness criterion.
\end{theorem}
Intuitively, for the fixed-confidence setting, we first recast the constrained problem in \cref{eq:fixed_confidence_problem} via a Lagrangian dual, then prove that \emph{any} admissible stopping rule $\tau$ can be represented as the selection time of an absorbing stopping action $a_{\rm stop}$. In \cref{thm:main:3.3}, we show that the resulting dual problem is solved by a greedy policy on the $Q$-function defined via the reward in \cref{eq:reward_model}, and that such policy achieves the desired level of correctness, $1-\delta$. This result establishes that both an optimal $\delta$-aware stopping rule and  exploration strategy can be learned on the extended action space ${\mathcal A} \cup \{ a_{\rm stop} \}$.
In the next section, we describe the practical implementation of \icpe{} based on these results using the Transformer architecture.
\vspace{-5pt}
\subsection{Practical Implementation: the ICPE algorithm}
\vspace{-5pt}
\begin{algorithm}[t]
 \footnotesize 
   \caption{\icpe{} (In-Context Pure Explorer)}
   \label{algo:icpe_fixed_confidence}
\begin{algorithmic}[1]
    
   \State {\bfseries Input:} Tasks distribution $\P$; confidence $\delta$; horizon $N$; initial  $\lambda$ and  hyper-parameter $T_\phi,T_\theta$.
   \Statex \texttt{\color{blue}//  Training phase}
     \State Initialize buffer ${\cal B}$, networks $Q_\theta,I_\phi$ and set $\bar\theta\gets\theta, \bar\phi\gets \phi$.
   \While{Training is not over}

   \State Sample environment $M\sim {\cal P}$ with hypothesis $H^\star$, observe $x_1\sim\rho$ and  set $t\gets 1$. 
   \Repeat
   \State Execute action $a_t=\argmax_a Q_\theta({\cal D}_t,a)$ in $M$ and observe $x_{t+1}$.
   
   \State Add partial trajectory $({\cal D}_t,a_t,x_{t+1},H^\star)$ to ${\cal B}$ and set $t\gets t+1$.
   \Until{$a_{t-1}=a_{\rm stop}$ (fixed confidence) or $t>N$ (fixed budget).}
   \State In the fixed confidence,   update  $\lambda$ according to \cref{eq:cost_update}.
    \State Sample batch $B\sim {\cal B}$ and update $\theta,\phi$ using ${\cal L}_{\rm inf}(B;\phi)$ (\cref{eq:loss_infernece}) and ${\cal L}_{\rm policy}(B;\theta)$ (\cref{eq:loss_fixed_budget} or \cref{eq:loss_fixed_confidence}).
\State Every $T_\phi$ steps set $\bar\phi\gets\phi $ (similarly, every $T_\theta$ steps set $\bar\theta\gets\theta$). 
   \EndWhile
    \Statex \hrulefill 
   \Statex \texttt{\color{blue}//  Inference phase}
   \State Sample unknown environment $M\sim \P$.
    \State Collect a trajectory ${\cal D}_N$ (or ${\cal D}_\tau$ in fixed confidence) according to a policy $\pi_t({\cal D}_t)=\arg\max_a Q_\theta({\cal D}_t,a)$, until $t=N$ (or $a_t=a_{\rm stop}$).
   \State {\bf Return} $\hat H_N=\argmax_H I_\phi(H|{\cal D}_N)$ (or $\hat H_\tau=\argmax_H I_\phi(H|{\cal D}_\tau)$ in the fixed confidence)
\end{algorithmic}
\end{algorithm}

We instantiate \icpe{} with two  learners: an \emph{inference network} $I_\phi(H| {\cal D}_t)$, parametrized by $\phi$, that  approximates the posterior $\mathbb{P}(H^\star=H|\mathcal D_t)$ (cf. \cref{prop:main:optimal_inference_rule}) and a \emph{Q-network} $Q_\theta({\cal D}_t,a)$, parametrized by $\theta$, whose greedy policy defines $\pi_\theta$ (and includes $a_{\rm stop}$ in the fixed confidence setting only). Both networks are implemented using Transformer architectures, and, for practical reasons, we impose a maximum trajectory length of $N$. This architecture handles both \emph{fixed budget} (\cref{eq:fixed_budget_problem}) and \emph{fixed confidence} (\cref{eq:fixed_confidence_problem}) settings. However, we find it important to explicitly note that while \cref{algo:icpe_fixed_confidence} abstracts the main ideas of \icpe{} in a unified way, in practice we train separate models for the fixed-budget and fixed-confidence regimes, each with their own reward and $Q$-function as derived in \cref{subsec:theoretical_results}.

\noindent{\bf Training phase.}
At training time \icpe{} interacts with an online environment: each episode draws an instance $M\sim\mathcal P$ and generates a trajectory.
We maintain a buffer $\mathcal B$ with tuples $({\cal D}_t,a_t,x_{t+1},H^\star_M)$,
where $H^\star$ is the true hypothesis for the sampled environment $M$ (from a single tuple we also obtain ${\cal D}_{t+1}=({\cal D}_t,a_t,x_{t+1})$). This buffer is used to sample mini-batches $B\subset\mathcal B$ to train $(\theta,\phi)$. Lastly, we treat each optimization in $(\phi,\theta)$ (and $\lambda$ too for the fixed confidence) separately, treating the other variables as fixed.

\noindent{\bf Training of $I_\phi$}.  We train $I_\phi$ to learn the posterior by SGD on the negative log-likelihood on a a batch $B\subset{\cal B}$ of partial trajectories sampled from the buffer:
\begin{equation}\label{eq:loss_infernece}
\mathcal L_{\rm inf}(\phi)= -\frac{1}{|B|}\sum_{({\cal D}_{t},a_t,x_{t+1},H^\star)\in  B} \log I_\phi(H^\star| {\cal D}_{t+1}).
\end{equation}
In expectation this is (up to an additive constant) equivalent to minimizing the KL-divergence between $\mathbb P(H^\star=H|{\cal D})$ and $I_\phi(H|{\cal D})$ (a similar loss is also used in \citep{lee2023supervised}). Lastly, we also set $\hat H_t=\argmax_H I_\phi(H|{\cal D}_t)$ to be predicted hypothesis with data ${\cal D}_t$.

\noindent{\bf Training in the Fixed Budget}. In the fixed budget we train $\theta$ using DQN~\citep{mnih2015human} and the rewards defined in the previous section.  We denote the target network $Q_{\bar \theta}$, which is parameterized by $\bar{\theta}$.
Since rewards are defined in terms of $I_\phi$, to improve training stability we introduce a separate target  inference network $I_{\bar\phi}$, parameterized by $\bar\phi$, which provides feedback for training $\theta$. These target networks are  periodically updated, setting $\bar\phi\gets\phi$ every $T_\phi$ steps (similarly,  $\bar\theta \gets \theta$ every $T_\theta$ steps).

 Hence, in the fixed budget, for a batch $B\sim {\cal B}$, we update $\theta$ by performing SGD on the following loss
\begin{equation}\label{eq:loss_fixed_budget}
    {\cal L}_{\rm policy}(B;\theta)= \frac{1}{|B|} \sum_{({\cal D}_t,a_t,x_{t+1})\in B}\left(\max_H I_{\bar \phi}(H|{\cal D}_t)\cdot \mathbf{1}_{\{t=N\}}  + \max_{a} Q_{\bar\theta}({\cal D}_{t+1},a)- Q_{\theta}({\cal D}_t,a_t)\right)^2.
\end{equation}

\noindent{\bf Training in the Fixed Confidence}. 
In this setting  we train $\theta$ similarly to the fixed budget setting. However, we also have a dedicated stop-action $a_{\rm stop}$ whose value depends solely on history. Thus, its $Q$-value can be updated at any time, allowing retrospective evaluation of stopping.
In other words, $Q_\theta({\cal D}_t,a_{\rm stop})$ can be updated for \emph{any} sampled transition $z\in {\cal B}$, even if the logged action $a_t\neq a_{\rm stop}$ (i.e., a “pretend to stop” update). This allows the model to retro-actively evaluate the quality of stopping earlier in a trajectory.

Then, based on \cref{eq:Q_function_theory}, we update $\theta$ by performing SGD on the following  $Q$-loss
\begin{align}
    {\cal L}_{\rm policy}(B;\theta) = \frac{1}{|B|} \sum_{({\cal D}_t,a_t,x_{t+1})\in B}&\Bigg[\mathbf{1}_{\{a_t\neq a_{\rm stop}\}}\cdot \left(-1+ \max_a Q_{\bar \theta}({\cal D}_{t+1},a)-Q_\theta({\cal D}_t,a_t)\right)^2\label{eq:loss_fixed_confidence}\\
    &\qquad+\left(\lambda \max_H I_{\bar \phi}(H|{\cal D}_t)-Q_\theta({\cal D}_t,a_{\rm stop})\right)^2\Bigg],
\end{align}

and note that the loss depends on $\lambda$. We learn $\lambda$ using a gradient descent update, which depends on the correctness of the predicted hypothesis. We sample $K$ trajectories $\{({\cal D}_\tau^{(i)},H_i^\star)\}_{i=1}^K$ with fixed $(\theta,\phi)$ and update $\lambda$ with a small learning rate $\beta$: 
\begin{equation}
      \lambda \gets \max\left[0,\lambda- \beta\left(\hat p -1+\delta\right)\right],\;\hbox{ where } \; \hat p=\frac{1}{K}\sum_{i=1}^K \mathbf{1}_{\{\argmax_H I_\phi(H|{\cal D}_\tau^{(i)})\} = H_i^\star\}}.\label{eq:cost_update}
   \end{equation}
The quantity $\hat p$ can be used to assess when to stop training by checking its empirical convergence.
In the fixed confidence, in practice we can stop whenever $\hat p\geq 1-\delta$ is stable and $\lambda$ is almost a constant.  However, to obtain rigorous guarantees care must be taken. In \cref{app:theoretical_results:icpe:fixed_confidence:identifiability_and_correctness} we discuss how to provide formal guarantees on the $\delta$-correctness of the resulting method, bsaed on a sequential testing procedure.

\noindent{\bf Inference phase.}  At inference time \icpe{} operates by simple forwards passes. An unknown task $M\sim \P$ is sampled, and actions are selected according to $a = \argmax_a Q_\theta({\cal D}_t,a)$. At the last timestep a hypothesis is predicted using $\hat H_N=\argmax_H I_\phi(H|{\cal D}_N)$ (or  $\hat H_\tau=\argmax_H I_\phi(H|{\cal D}_\tau)$ at the stopping time for the fixed confidence setting).

\noindent{\bf Theoretical guarantees and training correctness.} In \cref{prop:freeze_at_hit_monotone}, we describe how to decide when to stop training in order to guarantee that the resulting 
$(\pi_\theta,I_\phi)$ are 
$
\delta$-correct. Furthermore, we derive finite-sample guarantees for the fixed-budget \icpe{} meta-learning phase in a stylized setting in \cref{app:meta_training:finite_sample_analysis}. In \cref{thm:value_bound} we derive a bound on the sub-optimality of the policy $\pi^{(k)}$ at training epoch $k$ in terms of stage-wise Bellman residuals and concentrability coefficients. In \cref{thm:fixed_budget_finite_sample_analysis}, we additionally show how these residuals are controlled by an approximation term (capturing how well the function class can represent the Bellman update) and an estimation term that decays with the number and size of training batches. Together, these results yield an explicit finite-sample performance bound for \icpe{} in an ideal scenario.

\section{Empirical Evaluation}\label{sec:simulations}
We evaluate \icpe{} on a range of tasks: BAI on bandit problems, hypothesis testing in MDPs, and general search problems (pixel sampling and binary search). For bandits, we  consider different reward structures: deterministic, stochastic, with feedback graphs  and with hidden information.
Due to space limitations,  refer  to \cref{app:experiments} for the results on bandit problem with feedback graphs  and  MDPs. Also refer  to \cref{app:algorithms} for details on the algorithms.
In all experiments  \emph{we use a target accuracy value} of $\delta=0.1$, and shaded areas indicate $95\%$ confidence intervals computed via hierarchical bootstrapping (see \cref{app:experiments} for details).

\vspace{-5pt}
\subsection{Bandit Problems}
\label{subsec:bandits}
\vspace{-5pt}
We now apply \icpe{} to the classical BAI  problem within MAB tasks.  For the MAB setting we have a finite number of actions ${\cal A}=\{1,\dots,K\}$, corresponding to the actions in the MAB problem $M$. For each action $a$, we define a corresponding reward distribution $P(\cdot|a)$ from which rewards are sampled i.i.d. 
Then,  ${\cal P}$ is a prior distribution on the actions' rewards distributions. For the BAI problem, we let the true hypothesis be $H^\star =\argmax_a \mathbb{E}_{x\sim P(\cdot |a)}[x]$, so that the goal is to identify the best action (and thus ${\cal H}={\cal A}$).

\noindent\textbf{Stochastic Bandit Problems.}
We evaluate \icpe{} on  stochastic bandit environments for both the fixed confidence  and fixed budget setting (with $N=30$). Each action's reward distribution is normally distributed $\nu_a={\cal N}(\mu_a,0.5^2)$, with $(\mu_a)_{a\in {\cal A}}$ drawn from ${\cal P}$. In this case ${\cal P}$ is a uniform distribution over problems with minimum gap $\max_a \mu_a -\max_{b\neq a}\mu_a \geq \Delta_0$, with $\Delta_0=0.4$. Hence, an algorithm could exploit this property to infer $H^\star$ more quickly. For this case, we also derive some sample complexity bounds in \cref{app:theoretical_results}.
\begin{figure}[h!]
    \centering
    \begin{subfigure}[t]{0.325\textwidth}
        \centering
    \includegraphics[width=\linewidth]{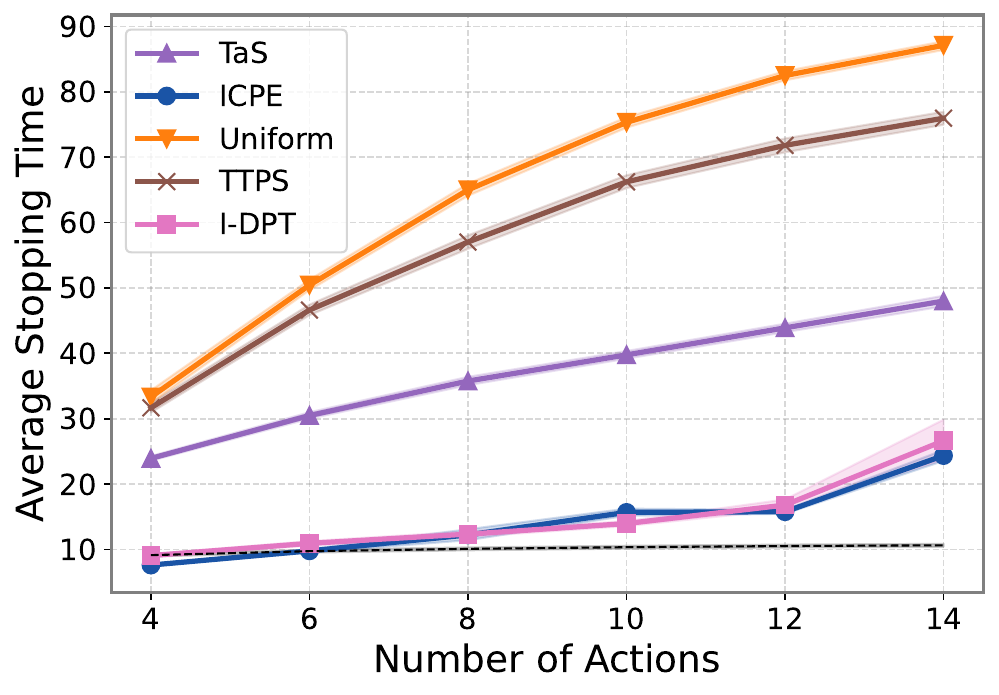}
    
    \caption{}
    \label{fig:stochastic_mab:fixed_confidence:avg_stopping_time}
    \end{subfigure}
    \hfill
    \begin{subfigure}[t]{0.325\textwidth}
        \centering\includegraphics[width=\linewidth]{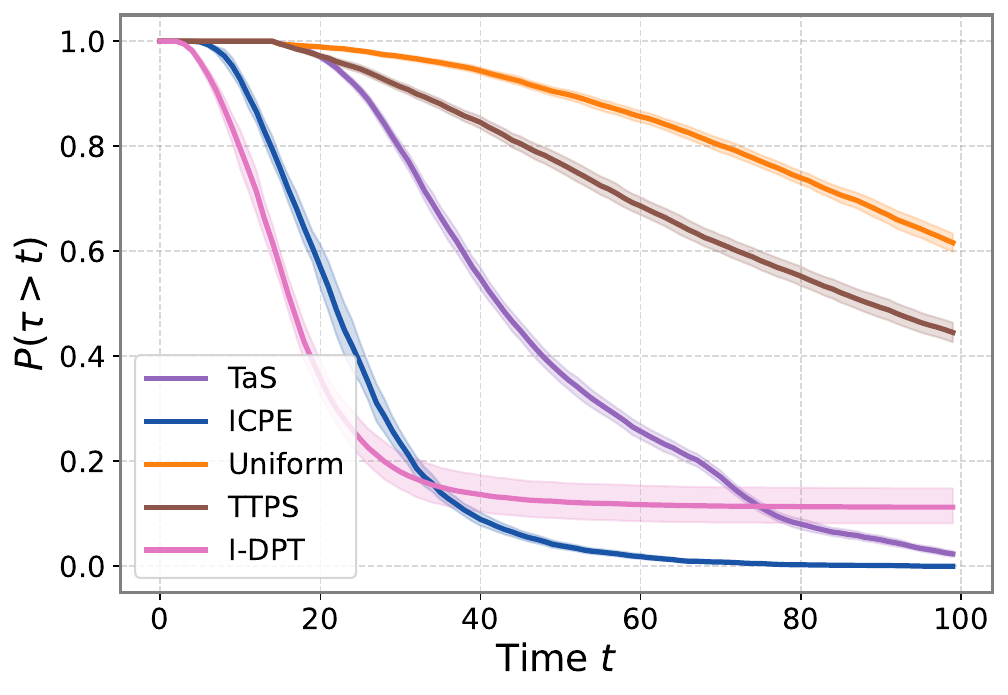}
        \caption{}
        \label{fig:stochastic_mab:fixed_confidence:survival_stopping_time}
    \end{subfigure}
    \hfill
    \begin{subfigure}[t]{0.325\textwidth}
        \centering
    \includegraphics[width=\linewidth]{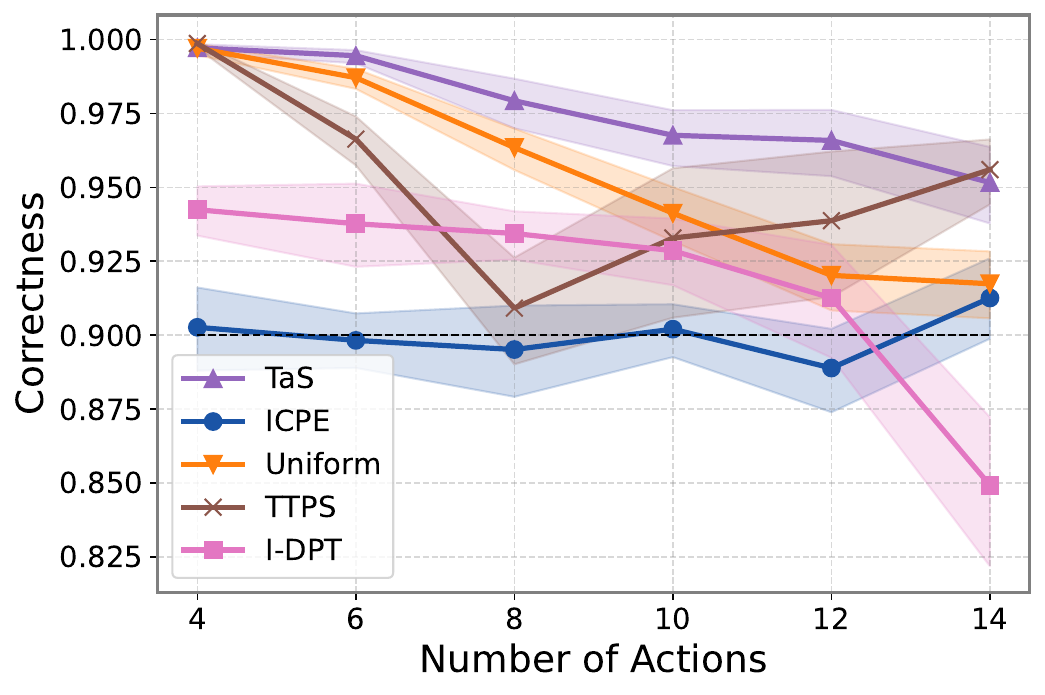}
    \caption{}
    \label{fig:stochastic_mab:fixed_confidence:correctness}
    \end{subfigure}
    \vspace{-10pt}
    \caption{Results for stochastic MABs with fixed confidence $\delta=0.1$ and $N=100$: {\bf(a)} average stopping time $\tau$; {\bf(b)} survival function of $\tau$; {\bf(c) }probability of correctness $\mathbb{P}^\pi(\hat H_\tau=H^\star)$.\protect\footnotemark}\label{fig:stochastic_mab:fixed_confidence}
    \vspace{-5pt}
\end{figure}

We compare against pure exploration baselines:  {\bf TaS} (Track-and-Stop)~\citep{garivier2016optimal} and {\bf TTPS} (Top-Two)~\citep{russo2018tutorial}, which are principled choices for hypothesis testing (asymptotically optimal or  close to optimal allocations that target the most confusable hypotheses).  We also include an ablation, ``{\bf $I$-DPT}'', which uses our learned inference $I_\phi(H| D_t)$ as in DPT~\citep{lee2023supervised} and acts greedily with respect to the posterior (and a simple confidence-threshold stop); this isolates the value of learning a query policy versus relying on posterior-driven greedy control. Details for $I$-DPT are in \cref{app:algorithms}.

In \cref{fig:stochastic_mab:fixed_confidence} are reported results for the fixed confidence.  In \cref{fig:stochastic_mab:fixed_confidence:avg_stopping_time} we see how \icpe{} is able to find an efficient strategy compared to other  techniques. 
Interestingly, also $I$-DPT seems to achieve relatively small sample complexities. However, the tail distribution of its $\tau$ is rather large compared to \icpe{} (\cref{fig:stochastic_mab:fixed_confidence:survival_stopping_time}) and the  correctness is smaller than $1-\delta$ for large values of $K$. Methods like TaS and TTPS achieve larger sample complexity, but also larger correctness values (\cref{fig:stochastic_mab:fixed_confidence:correctness}). This is a well known fact: theoretically-sound stopping rules, such as the ones used by TaS and TTPS, tend to be are overly conservative \citep{garivier2016optimal}.

Lastly, we verified the robustness of \icpe{} to distribution shifts. We trained \icpe{} in the stochastic fixed-confidence bandit setting  as described above,  and then evaluated the  trained model on bandit instances drawn from \emph{shifted} environment distributions.  We report the results in \cref{app:experimental_results:stochastic_bandit}. Across all experiments, we observed that both correctness and stopping time remain remarkably stable, with only minor fluctuations within the reported confidence intervals. This suggests that ICPE is not excessively sensitive to moderate shifts in the environment distribution around the training family.

Finally, for the sake of space, we refer the reader to  \cref{app:experimental_results:stochastic_bandit} for the results in the fixed budget setting.
\begin{figure}[t!]
        \centering    
        \includegraphics[width=0.65\linewidth]{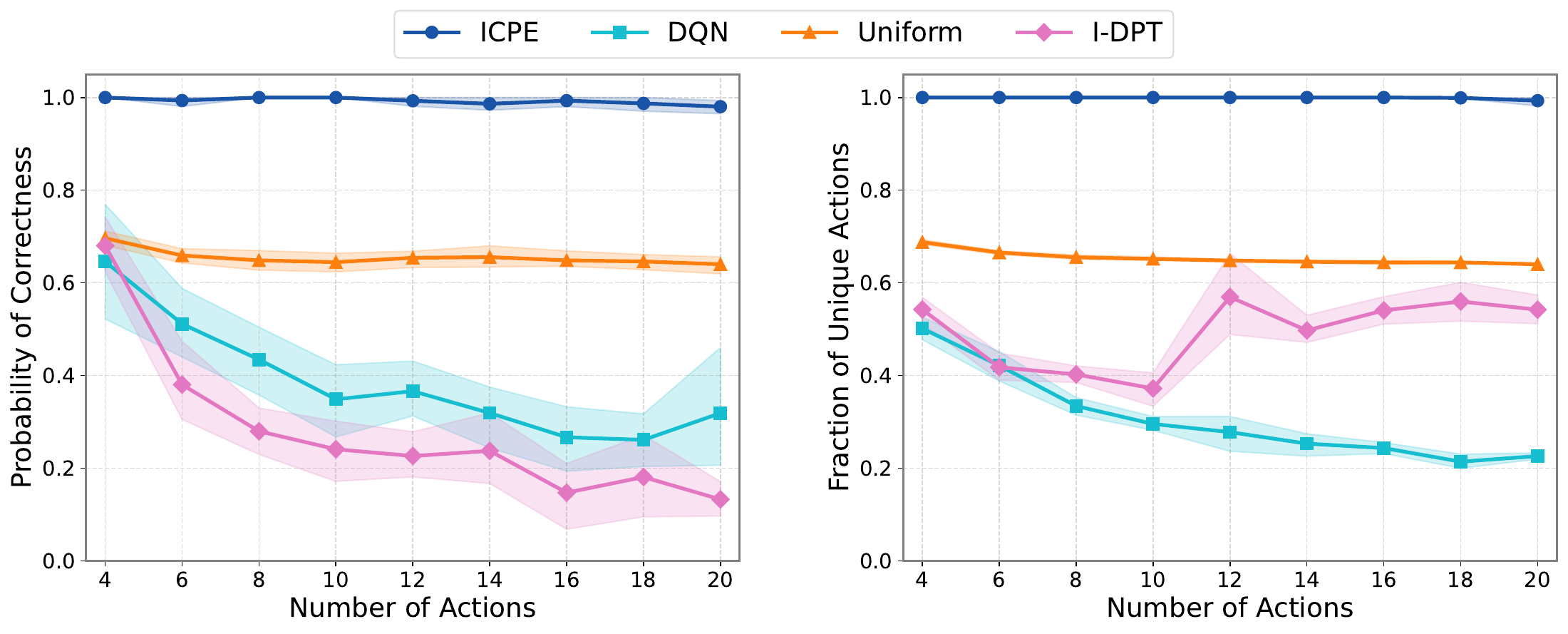}

    \caption{Deterministic bandits: (left) probability of correctly identifying the best action vs.  $K$; (right) average fraction of unique actions selected during exploration vs.   $K$.}
    \label{fig:deterministic}
    \vspace{-15pt}
\end{figure}

\noindent\textbf{Deterministic Bandits.}
We also evaluated \icpe{} in  deterministic bandit environments with a fixed budget $K$, equal to the number of actions. Thus, \icpe{} needs to learn to select each action only once to determine the optimal action. 
Since the rewards are deterministic, we cannot compare to classical BAI methods, which are tailored for stochastic environments. Instead, we compare to: (i) a uniform policy that uses a maximum likelihood estimator to estimate the best arm; (ii) {\bf DQN}~\citep{mnih2013playing}, which uses ${\cal D}_t$ as the state, and trains an $I$ network to infer the true hypothesis; (iii) and $I$-{\bf DPT}, acts greedily with respect to the posterior of $I_\phi$, as in DPT.
Figure~\ref{fig:deterministic} reports the results: \icpe{} consistently identifies optimal actions (correctness $\approx 1$) and learns optimal sampling strategies (fraction of unique actions $\approx 1$). Without being explicitly instructed to “choose each action exactly once”,  \icpe{} discovers on its own that sampling every action is exactly what yields enough information to identify the best. While the optimal exploration strategy in this setting is intuitive, baseline performance degrades sharply as the number of actions grows, illustrating that existing exploration methods can fail even in such simple environments.

\noindent\textbf{Bandit Problems with Hidden Information.}
To evaluate \icpe{} in structured settings, we introduce bandit environments with latent informational dependencies, termed \textit{magic actions}. In the single magic action case, the magic action $a_m$'s  reward is distributed according to ${\cal N}(\mu_{a_m}, \sigma_m^2)$, where $\sigma_m\in (0,1)$ and  $\mu_{a_m}\coloneqq\phi(\argmax_{a\neq a_m}\mu_a)$ encodes information about the optimal action’s identity through an invertible mapping $\phi$ that is unknown to the learner.  The index $a_m$ is fixed, and the mean rewards of the other actions $(\mu_a)_{a\neq a_m}$ are sampled from ${\cal P}$, a uniform distribution over models guaranteeing that $a_m$, as defined above, is not optimal (see \cref{app:subsec:sample_complexity_magic_action,app:subsec:bandit_hiden_info} for more details). Then, we define the reward distribution of the non-magic actions as ${\cal N}(\mu_a, (1-\sigma_m)^2)$.
\begin{figure}[h!]
    \centering
    \begin{subfigure}[t]{0.325\textwidth}
        \centering
        \includegraphics[width=\linewidth]{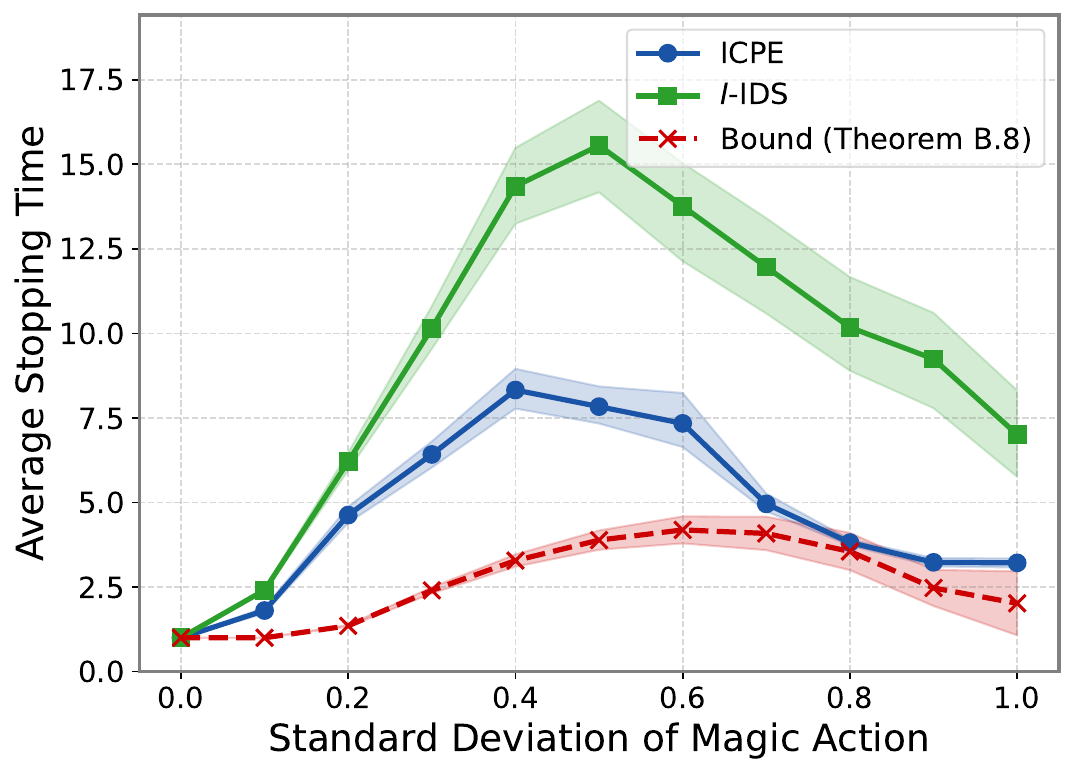}
        \caption{}
        \label{fig:noisy_magic}
    \end{subfigure}
    \hfill
    \begin{subfigure}[t]{0.325\textwidth}
        \centering
        \includegraphics[width=\linewidth]
        {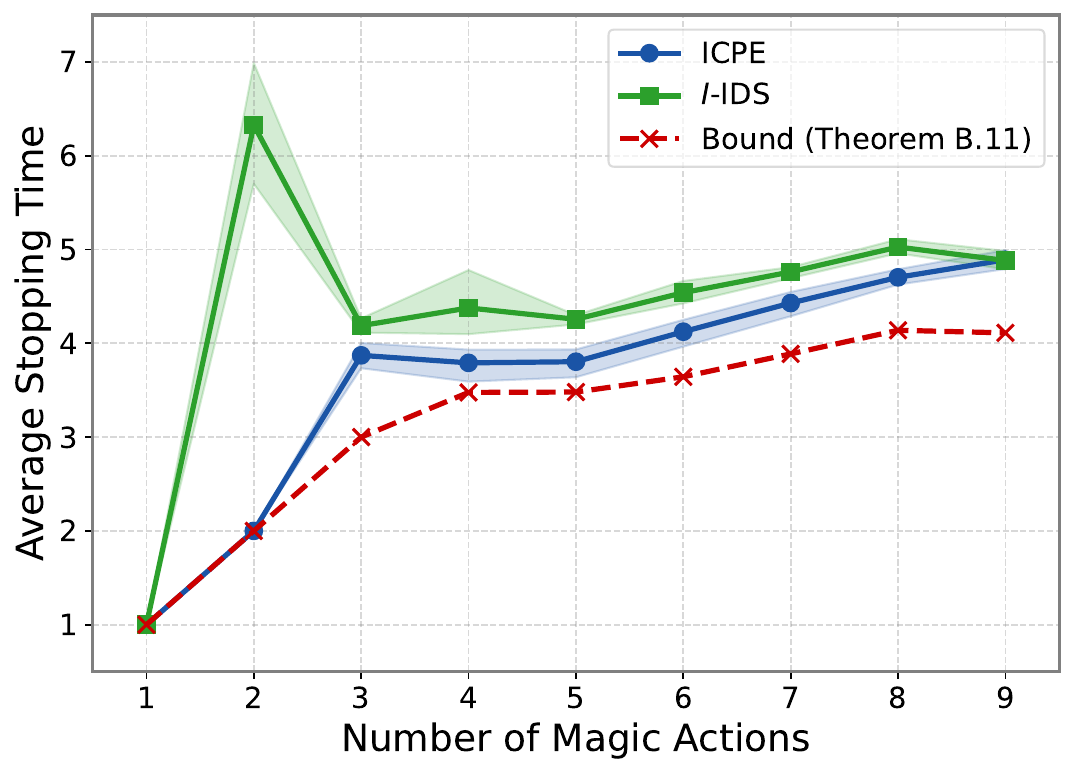}
        \caption{}
        \label{fig:mult_magic}
    \end{subfigure}
    \hfill
    \begin{subfigure}[t]{0.325\textwidth}
        \centering
        \includegraphics[width=\linewidth]{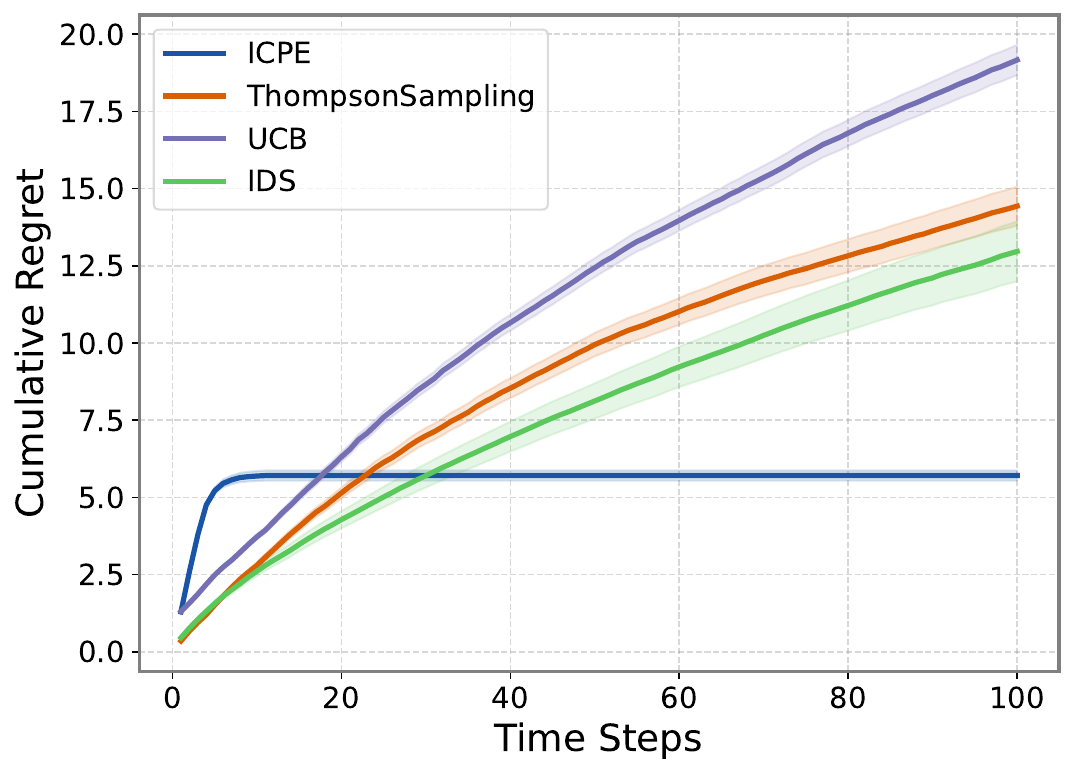}
        \caption{}
        \label{fig:regret}
    \end{subfigure}
    \vspace{-10pt}
    \caption{ {\bf (a)} Single magic action: average stopping time and the theoretical lower bound across varying $\sigma_m$. {\bf(b)} Magic chain: average stopping time  between \icpe{}, $I$-IDS vs. number of magic actions. {\bf(c)}  \icpe{} in a regret minimization task, with $\sigma_m=0.1$.}
    \label{fig:magic_results}
    \vspace{-10pt}
\end{figure}

In our first experiment, we  vary the standard deviation $\sigma_m$ in $[0,1]$.  This problem isolates whether \icpe{} can detect and exploit latent informational dependencies (via a single diagnostic action that encodes the optimal arm) and balance sampling across action based on varying uncertainty levels.

Regarding the baselines, applying classical  baselines (e.g., TaS) here is nontrivial: the magic action is coupled to the optimal arm via an \emph{unknown} map $\phi$, which would need to be encoded as inductive bias. Instead, we compare \icpe{} to ``$I$-{\bf IDS}'', which is standard pure exploration IDS \citep{russo2018learning} instantiated on top of ICPE’s trained inference $I_\phi$ for exploiting the magic action.

We evaluate in a fixed-confidence setting with error rate $\delta=0.1$. Figure~\ref{fig:noisy_magic} compares \icpe{}’s sample complexity against a theoretical lower bound (see \cref{app:theoretical_results}).
\icpe{} achieves sample complexities close to the theoretical  bound across all tested noise levels, consistently outperforming $I$-IDS.

Additionally, in \cref{fig:regret} we evaluate \icpe{} in a cumulative regret minimization setting, despite not being explicitly optimized for regret minimization. At the stopping $\tau$, \icpe{} commits to the identified best action (i.e., explore-then-commit strategy). As shown in the results, \icpe{} outperforms classic algorithms such as UCB, Thompson Sampling, and standard IDS initialized with Gaussian priors. 

To further challenge \icpe{}, we introduce a \emph{multi-layered ``magic chain'' bandit} environment, where there is a sequence of $n$ magic actions ${\cal A}_m\coloneqq\{a_{i_1},\dots,a_{i_n}\}\subset {\cal A}$ such that $\mu_{a_{i_j}}=\phi(\mu_{a_{i_{j+1}}})$, and $\mu_{a_{i_n}}=\phi(\argmax_{a\notin {\cal A}_m} \mu_a)$. The first index $i_1$ is known, and by following the chain, an agent can uncover the best action in $n$ steps. However, the optimal sample complexity depends on the ratio of magic actions to non-magic arms. Varying the number of magic actions from 1 to 9 in a 10-actions environment, Figure~\ref{fig:mult_magic} demonstrates \icpe{}'s empirical performance, outperforming $I$-IDS. 
\vspace{-5pt}
\subsection{General Search Problems: Pixel Sampling and Probabilistic Binary Search}
\vspace{-5pt}
We now evaluate the applicability of \icpe{} to general search problems, including structured real-world examples.

\begin{figure}
\vspace{-13pt}
  \centering
  
  \begin{subfigure}[t]{0.6\textwidth}
    \centering
    \includegraphics[width=\linewidth]{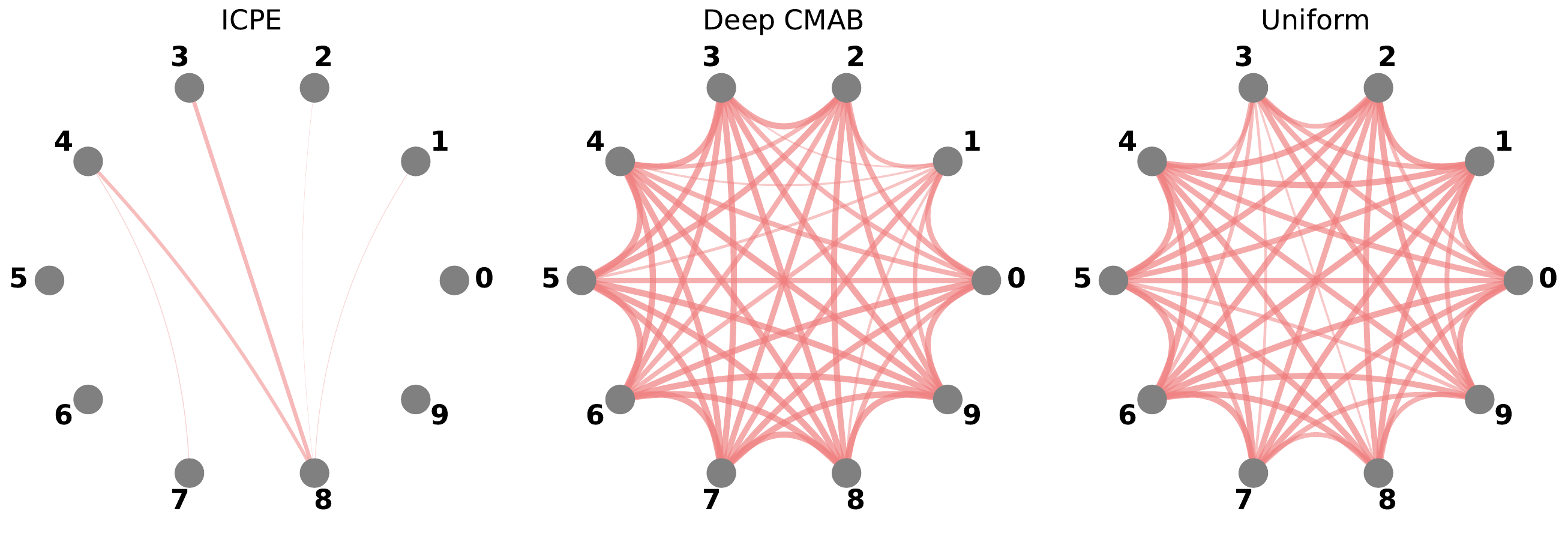}
    \caption{}
    \label{fig:chord}
  \end{subfigure}%
  \hfill
  \begin{subtable}[t]{0.38\textwidth}
  \vspace{-55pt}
    \centering
    \resizebox{\columnwidth}{!}{%
    \begin{tabular}{lcc}
      \toprule
      \textbf{Agent} & \textbf{Accuracy} & \textbf{Avg.\ Regions Used} \\
      \midrule
      \icpe{}      & $0.91 \pm 0.03$ & $10.09 \pm 0.11$ \\
      Deep CMAB    & $0.66 \pm 0.04$ & $7.90 \pm 0.09$  \\
      Uniform      & $0.25 \pm 0.04$ & $10.42 \pm 0.09$ \\
      \bottomrule
    \end{tabular}
    }
    \caption{}
    \label{tab:mnist_results}
  \end{subtable}
  \caption{MNIST pixel-sampling task: {\bf(a)} A chord between two digits indicates that their distributions were not significantly different ($p$-value $>0.05$, based on  a pairwise chi-squared test), with thicker chords representing higher $p$-values; {\bf(b)} accuracy and performance (mean $\pm$ 95\% CI)}
  \vspace{-10pt}
\end{figure}

\noindent {\bf Pixel sampling as generalized search.} We introduce a classification task inspired by active perception settings.  We consider the MNIST images~\citep{lecun1998gradient}, each partitioned into a set of 36 distinct pixel patches, corresponding to the query  space ${\cal A} = \{1,\dots,36\}$.  The agent starts from a blank (masked) image and, patch by patch, reveals pixels to quickly discover “what the image is about.” After choosing a query $a_t\in {\cal A}$ the agent observes $x_t$ (the revealed patch) and accumulates a partially observed image. After a budget \(N=12\), the agent outputs the predicted digit $\hat{H}_N\in\{0,\dots, 9\}$.

For this setting we consider a slight variation of \icpe{} that may be of interest: we consider an inference net $I$ that is a pre-trained classifier, trained on fully revealed images from ${\cal P}$. Using this network,  we benchmark \icpe{} against two baselines: standard uniform random sampling and Deep Contextual Multi-Armed Bandit ({\bf Deep CMAB})~\citep{collier2018deep}, which employs Bayesian neural networks to sample from a posterior distribution (Deep CMAB uses as rewards  the correctness probabilities computed by $I$). Importantly, we cannot compare to methods such as DPT since  ${\cal A}\neq {\cal H}$, the hypothesis space is different from the query space.

Table~\ref{tab:mnist_results} reports the classification accuracy and number of regions sampled. \icpe{} achieves substantially better performance than both baselines using fewer regions. However, to analyze whether \icpe{} learns a sampling strategy that adapts to the context of the task, we compare region selection distributions across digit classes using pairwise chi-squared tests. \icpe{} exhibits significantly more variation across classes than either baseline, as visualized in Figure~\ref{fig:chord}. This suggests \icpe{} adapts its exploration to class-conditional structure, rather than applying a generic sampling policy.

\begin{table}[h!]
\centering

\footnotesize
\setlength{\tabcolsep}{6pt}
\renewcommand{\arraystretch}{0.6}
\begin{tabular}{@{}rcccr@{}}
\toprule
\textbf{$K$ } & \textbf{Min Accuracy} & \textbf{Mean Stop Time} & \textbf{Max Stop Time} & $\log_2 K$ \\
\midrule
8   & 1.00 & $2.13 \pm 0.12$ & 3 & 3 \\
16  & 1.00 & $2.93 \pm 0.12$ & 4 & 4 \\
32  & 1.00 & $3.71 \pm 0.15$ & 5 & 5 \\
64  & 1.00 & $4.50 \pm 0.21$ & 6 & 6 \\
128 & 1.00 & $5.49 \pm 0.23$ & 7 & 7 \\
256 & 1.00 & $6.61 \pm 0.26$ & 8 & 8 \\
\bottomrule
\end{tabular}
\caption{\icpe{} performance on the binary search task as $K$ increases.}
\label{tab:main:binary_search}
\end{table}

\noindent \textbf{Probabilistic  binary search.}  We also evaluated \icpe{}'s capabilities to  autonomously meta-learn binary search.
We define an action space of ${\cal A} = \{1, \dotsc, K\}$, with $H^\star\in {\cal A}$. Pulling an arm above or below $H^\star$ yields a observation $x_t = -1$ or $x_t = +1$, respectively, providing directional feedback.  In \cref{tab:main:binary_search} we report results on $100$ held-out tasks per setting.  \icpe{} consistently achieves perfect accuracy with worst-case stopping times that match the optimal $\log_2(K)$ rate, demonstrating that it has successfully learned binary search. While simple, this task illustrates \icpe{}’s broader potential to learn efficient search strategies in domains where no hand-designed algorithm is available.

\section{Discussion and Conclusions}\label{sec:discussion}
Our results position \icpe{} within a broader line of work on \emph{active sequential hypothesis testing}~\citep{naghshvar2013active} and its close ties to exploration in RL~\citep{sutton2018reinforcement}. Regarding exploration, note that classical regret-minimization methods, including UCB variants~\citep{auer2002finite}, posterior sampling~\citep{osband2013more,russo2014learning}, and regret-focused IDS~\citep{russo2018tutorial}, optimize long-run reward, not hypothesis identification.  On the other hand, pure-exploration formulations in BAI \citep{audibert2010best}  yield sharp, instance-dependent procedures for hypothesis testing in fixed-confidence regimes (e.g., Track-and-Stop, \citealp{garivier2016optimal}).  However, these approaches assume to know the problem structure, which is not always possible if the user is not aware of such structure. Furthermore, computing an optimal data-collection policy remains a challenge in more general scenarios \citep{al2021navigating}, and we discuss some of these challenges in \cref{app:algorithm:tas}.

\icpe{} uses Transformers \citep{vaswani2017attention} to learn, in-context, a data collection policy and and inference rule. Transformers  have demonstrated remarkable in-context learning capabilities~\citep{brown2020language,garg2022can}. In-context learning \citep{moeini2025survey} is a form of meta-RL
\citep{beck2023survey}, where  agents can solve new tasks without updating any parameters by simply conditioning on histories. Building on this approach,  Transformers can mimic posterior sampling from offline data, as in DPT ~\citep{lee2023supervised}, or perform return-conditioning for regret minimization (e.g., {\sc ICEE}~\citealp{dai2024context}). 
 However, these approaches primarily target cumulative reward and typically lack a learned, $\delta$-aware stopping rule; applying them to hypothesis testing would require altering objectives, data-collection protocol, and add stopping semantics. Moreover, in generalized search where $\mathcal{A}\neq\mathcal{H}$, additional modeling is needed to map hypotheses to actions.

\icpe{} addresses these gaps by \emph{learning} to acquire information in-context. \icpe{} targets \emph{pure exploration for identification}: it splits inference and control, using a supervised inference network to provide task-relevant information signals, while an RL-trained Transformer learns acquisition policies that maximize information gain. This separation makes it possible to exploit rich, non-tabular structures  that are difficult to encode in hand-designed tests or confidence bounds.

Empirically, \icpe{} is competitive on unstructured bandits and extends naturally to structured and deterministic settings. The results on the MNIST dataset highlights a key strength: \icpe{} adapts sampling to the class-conditional structure. More broadly, \icpe{} suggests a path for \emph{data-driven generalized search}.

Limitations point to concrete avenues for future work. First, scaling to continuous or combinatorial hypothesis spaces to deal with more general scenarios is an important direction. However, such extensions require substantial further theoretical development, as rigorous formalisms for continuous hypothesis-testing frameworks remain an active area of research, even in classical pure-exploration settings (see, e.g., \citep{garivier2021nonasymptotic}). Second, extending \icpe{} to offline datasets is also a promising research direction. When offline data can be used to construct a reliable simulator, \icpe{} can already be applied directly. Moreover, even without such a simulator, \icpe{} could in principle be meta-trained purely from logged data using offline RL methods (e.g., IQL, CQL), and a systematic study of this offline regime is an important question for future work. Third, while the main focus of this work is to introduce and analyze \icpe{} as a general framework that can address a broad family of pure exploration problems, and we validate it on numerous BAI and active search tasks, we view real-world experiments as a natural next step. \icpe{} holds the promise to discover novel exploration and search algorithms in complex domains that do not offer a concrete way of finding an optimal solution a priori, such as determining efficient sequences of proteins to test in a lab~\citep{amin2024bayesian}, minimizing the number of tests required to detect cancer~\citep{gan2021greedy}, and expediting the design of materials with desired properties~\citep{talapatra2018autonomous}. In sum, we believe \icpe{} advances pure exploration by leveraging in-context learning to discover task-adaptive acquisition strategies, and it opens a route toward unifying classical sequential testing with learned, structure-aware search policies that scale to real problems.

\section*{Reproducibility Statement}
\addcontentsline{toc}{section}{Reproducibility Statement}
We have taken several measures to ensure the reproducibility of our results. All model architectures, optimization procedures, and hyperparameters are described in detail in the paper (see Sections 2–3 and Appendix C–D). Experiments were conducted using Python 3.10.12 and standard libraries including NumPy, SciPy, PyTorch, Pandas, Seaborn, Matplotlib, CVXPY, and Gurobi.

To facilitate replication, we provide our full source code at \url{https://github.com/rssalessio/icpe} under the MIT license. The code contains (i) implementations of ICPE and all baselines, (ii) configuration files specifying the hyperparameters for each experiment, and (iii) detailed instructions in the README.md file for installing dependencies and running all experiments. Running the provided scripts will reproduce the main results reported in the paper, including bandit, MDP, and generalized search benchmarks.
\section*{Acknowledgments}
\addcontentsline{toc}{section}{Acknowledgments}
The authors are pleased to acknowledge that the computational work reported on in this paper was performed on the Shared Computing Cluster administered by Boston University’s Research Computing Services and computing resources from the Laboratory for Information and Decision Systems at MIT. R.W. was supported by a Master of Engineering fellowship by the Eric and Wendy Schmidt Center at the Broad Institute.

\bibliographystyle{iclr2026_conference}

\bibliography{ref}

\newpage
\appendix
\part*{Appendix}
\tableofcontents
\newpage
\section*{Appendix}\label{app:appendix}
\section*{Limitations and Broader Impact}\label{app:sec:limitations}
   \addcontentsline{toc}{section}{Limitations and Broader Impact}
\paragraph{Finite vs continuous sets of hypotheses.} A limitation of this work is the assumption that ${\cal H}$ is finite. This is a common assumption in active sequential hypothesis testing, and the continuous case is also referred to as \emph{active regression} \citep{mukherjee2022chernoff}. We believe our framework can be extended to this case with a proper parametrization of the inference mapping $I$ that allows to sample from a continuous set.

\paragraph{On the prior set of problems ${\cal P}.$} One limitation of our approach is the assumption of access to a prior set of problems 
$\mathcal P$. Such set may lack a common structure, and need not be stationary. Nonetheless, we view this as a useful starting point for developing more sophisticated methods. A natural direction for future work is to extend our framework to an adversarial setting, in which problem instances can evolve or even be chosen to thwart the learner.

\paragraph{Online training.} Another limitation arises from assuming access to an online simulator from which we can sample $M\sim \P$ and training \icpe{}. Implicitely, this assumes access to $H^\star$ during training. Learning how to generalize to setting where $H^\star$ is not perfectly known at training time is an exciting research direction.  Furthermore, our main focus   is in the sequential process of starting from "no data", to being able to predict the right hypothesis as quickly as possible (see the MNIST example). We believe this framework to be  valuable when one can build verifiable simulations to train policies that transfer to real‐world problems.

\paragraph{Practical  limitations and transformers.} A limitation of \icpe{} is the current limit $N$ on the horizon of the trajectory. This is due to the computation cost of training and using transformer architectures. Future work could investigate how to extend this limit, or completely remove it.

 Another technical limitation of \icpe{} is the hardness to scaling to larger problems. This is closely related to the above limitation, and it is mainly an issue of investigating how to improve the current architecture of \icpe{} and/or distribute training.

 Lastly, we believe that \icpe{} does not use the full capabilities of transformer architectures. For example, during training and evaluation, we always use the last hidden state of the transformer to make prediction, while the other hidden states are left untouched.

\paragraph{Bayesian BAI.} Some of our work falls within the Bayesian Best Arm Identification theoretical framework. However, the Bayesian setting is less known compared to the frequentist one, and only recently some work \citep{NEURIPS2024_1fb0a4de} studied the unstructured Gaussian case. Future work should compare \icpe{} more thoroughly with Bayesian techniques once the Bayesian setting is more developed.

\paragraph{Broader impact.} This paper primarily focuses on foundational research in  pure exploration problems. Although we do not directly address societal impacts, we
recognize their importance. The methods proposed here improve the sample efficiency of active sequential hypothesis testing procedures,  and could be applied in various contexts with societal implications. For instance, our technique could be used in decision-making systems in healthcare, finance, and autonomous vehicles, where biases or
errors could have significant consequences. Therefore, while the immediate societal impact of our work may not be evident, we urge future researchers and practitioners to carefully consider the ethical implications and potential negative impacts in their specific applications
\newpage
\section{Extended Related Work}\label{app:sec:extended_related_work}

\paragraph{Exploration for Regret Minimization.}
The problem of exploration is particular relevant in RL \citep{sutton2018reinforcement},  and many strategies have been introduced, often with the goal of minimizing regret. Notably, approaches based on Posterior Sampling \citep{kaufmann2012thompson,osband2013more,russo2014learning,gopalan2014thompson} and Upper Confidence Bounds \citep{auer2002finite,auer2008near,cappe2013kullback,lattimore2012pac,auer2002using} have received significant attention. However, the problem of minimizing regret  is a relevant objective only when one cares about the rewards accumulated so far, and does not answer  the problem  of how to efficiently gather data to reach some desired goal. In this context, \emph{Information-Directed Sampling} (IDS) \citep{russo2014learning,russo2018tutorial} has been proposed to strike a balance between minimizing regret and maximizing information gain, where the latter is quantified as the mutual information between the true optimal action and the subsequent observation. However, when the information structure is unknown, it effectively becomes a significant challenge to exploit it.  Importantly, if the state does not encode the structure of the problem, RL techniques may not be able to exploit hidden information. 

\paragraph{In-Context Learning, LLMs and Return Conditioned Learning.}
Recently, Transformers \citep{vaswani2017attention,chen2021decision} have demonstrated remarkable in-context learning capabilities \citep{brown2020language,garg2022can}. In-context learning \citep{moeini2025survey} is a form of meta-RL
\citep{beck2023survey}, where  agents can solve new tasks without updating any parameters by simply conditioning on additional context such as their action-observation histories.  When provided with a few supervised input-output examples, a pretrained model can predict the most likely next token \citep{lee2023supervised}. Building on this ability, \citet{lee2023supervised} recently showed that Transformers can be trained in a supervised manner using offline data to mimic posterior sampling in reinforcement learning.
In \citep{krishnamurthy2024can} the authors investigate the extent to which LLMs \citep{achiam2023gpt} can perform in-context exploration in multi-armed bandit problems. Similarly, other works \citep{coda2023meta,monea2024llms,nie2024evolve,harris2025should,sun2025large} evaluate the in-context learning capabilities of LLMs in sequential decision making problems, with \citep{harris2025should} showing that LLMs can help at exploring large action spaces with inherent semantics. On a different note, in \citep{arumugam2025toward} investigate how to use LLMs to implement PSRL, leveraging the  full expressivity and fluidity of natural language to express the prior and current knowledge about the  problem.

In \citep{dai2024context} the authors presente {\sc ICEE} (In-Context Exploration Exploitation), a method closely related to \icpe{}. {\sc ICEE} uses Transformer architectures to  perform in-context exploration-exploration for RL. ICEE tackles this challenge by expanding the framework of return conditioned RL with in-context learning \citep{chen2021decision,emmons2021rvs}. Return conditioned learning is a type of technique where the agent learns the return-conditional distribution of actions in each state. Actions are then sampled  from the distribution of actions that receive high return. This methodoloy was first proposed for the online RL setting by work on Upside Down RL \citep{upsidedownrl} and Reward Conditioned Policies  \citep{kumar2019reward}.
Lastly, we note the important contribution of RL$^2$~\citep{duan2016rl}, which proposes to represent an RL policy as the hidden state of an RNN, whose weights are learned via RL. \icpe{}  employs a similar idea,  but focuses on a different objective (identification), and splits the process into a supervised inference network that provides rewards to an RL‐trained transformer network that selects actions to maximize information gain.

\paragraph{Active Pure Exploration in Bandit and RL Problems.} Other strategies consider the \emph{pure exploration problem} \citep{even2006action,audibert2010best,bubeck2011pure,kaufmann2016complexity}, or Best Arm Identification (BAI), in which the
samples collected by the agent  are no longer perceived as rewards, and the agent must actively optimize its exploration strategy to identify the optimal action. In this pure exploration framework, the task is typically formulated as a hypothesis testing problem: given a desired goal, the agent must reject the hypothesis that the observed data could have been generated by any environment whose behavior is fundamentally inconsistent with the true environment \citep{garivier2016optimal}. This approach leads to instance-dependent exploration strategies that adapt to the difficulty of the environment and has been extensively studied in the context of bandit problems under the fixed confidence setting \citep{even2006action,garivier2016optimal,degenne2019non,NEURIPS2024_1fb0a4de,russo2023sample,shukla2024preference,carlsson2024pure,karthik2024optimal,russo2024Fair,poiani2024best, russo2025pure}, where the objective is to identify the optimal policy using the fewest number of samples while maintaining a specified level of confidence. Similar ideas have been applied to Markov Decision Processes for identifying the best policy \citep{al2021adaptive,marjani2021navigating,russo2023model,NEURIPS2024_c6b2921f,russo2025multi}, policy testing \citep{ariu2025policy}, or rapidly estimating the value of a given policy \citep{russo2025adaptive}.
Another setting is that of of identifying the best arm in  MAB problems with a fixed horizon. In this case characterizing the complexity of the problem is challenging, and this is an area of work that is less developed compared to the fixed confidence one \citep{wang2023best,karnin2013almost,audibert2010best,atsidakou2022bayesian,nguyen2025PriorDependent,ghosh2024fixed}. Because of this reason, we believe \icpe{} can help better understand the nuances of this specific setting.

However, while BAI strategy are powerful, they may be suboptimal when the underlying information structure is not adequately captured within the hypothesis testing framework.
Hence, the issue of leveraging hidden environmental information, or problem with complex information structure remains a difficult problem. Although IDS and BAI techniques offer frameworks to account for such structure, extending these approaches to Deep Learning is difficult, particularly when the  information structure is unknown to the learner.

A closely related work is that of \citep{liu2024single}.
In \citep{liu2024single} the authors  present empirical evidence of skills and directed exploration
emerging from using RL with a sparse reward and a contrastive loss.   They define a goal state, and encode a sparse reward using that goal state. Their objective, which maximizes the probability of reaching the goal state, is similar to ours, where in our framework the goal state would be a hypothesis. Note, however, that they do not learn an inference network, and  we do not assume the observations to possess the Markov property.

\paragraph{Active Learning and Active Sequential Hypothesis Testing}

In the problem of active sequential hypothesis testing \citep{chernoff1992sequential,ghosh1991brief,lindley1956measure,naghshvar2013active,naghshvar2012noisy,mukherjee2022chernoff,gan2021greedy}, a learner is tasked with adaptively performing a sequence of actions to identify an unknown property of the environment. Each action yields noisy feedback about the true hypothesis, and the goal is to minimize the number of samples required to make a confident and correct decision. Similarly, active learning \citep{cohn1996active,chen2023active} studies the problem of data selection, and, closely related,  Bayesian Active Learning \citep{golovin2011adaptive}, or Bayesian experimental design \citep{rainforth2024modern}, studies
how to adaptively select from a number of expensive tests in order to identify an unknown hypothesis sampled from a known prior distribution.

Active sequential hypothesis testing generalizes the pure exploration setting in bandits and RL by allowing for the identification of arbitrary hypotheses, rather than just the optimal action. However, most existing approaches assume full knowledge of the observation model \citep{naghshvar2013active}, which is the distribution of responses for each action under each hypothesis. While some work has attempted to relax this assumption to partial knowledge~\citep{cecchi2017adaptive}, it remains highly restrictive in practice. As in bandit settings, real-world exploration and hypothesis testing often proceed without access to the true observation model, requiring strategies that can learn both the structure and the hypothesis from interaction alone.

\paragraph{Algorithm Discovery.} Our method is also closely related to the problem of discovering algorithms \citep{oh2020discovering}. In fact, one can argue that \icpe{} is effectively discovering active sampling techniques. This is particularly important for BAI and Best Policy Identification (BPI) problems, where often one needs to solve a computationally expensive optimization technique numerous times. For BPI the problem is even more exacerbated, since the optimization problem is usually non-convex \citep{al2021adaptive,russo2025adaptive}.

\paragraph{Cognitive Theories of Exploration.}
Our approach draws inspiration from cognitive theories of exploration. Indeed, in animals, exploration arises naturally from detecting mismatches between sensory experiences and internal cognitive maps—mental representations encoding episodes and regularities within environments \citep{o1979precis,nadel2013hippocampus}. Detection of novelty prompts updates of these cognitive maps, a function strongly associated with the hippocampus \citep{nadel1991hippocampus,lisman2017viewpoints}. Conversely, exploration can also be explicitly goal-directed: psychological theories posit that an internal representation of goals, combined with cognitive maps formed through experience, guides adaptive action selection \citep{kagan1972motives,morris2022motivates}. \icpe{} embodies these cognitive principles computationally: the exploration ($\pi$) network learns an internal model (analogous to a cognitive map), while the inference ($I$) network encodes goal-directed evaluation. This interplay enables \icpe{} to effectively manage exploration as an adaptive, structure-sensitive behavior.
\newpage
\section{Theoretical Results}\label{app:theoretical_results}
In this section we provide different theoretical results: first, we describe the theoretical results for \icpe{}. Then, we discuss some sample complexity results for different MAB problems with structure.

\subsection{ICPE: Theoretical Results}\label{app:theoretical_results:icpe}
In this subsection we present the theoretical results of \icpe{}. We begin by describing the problem setup. After that, we present results for the fixed budget and fixed confidence regimes respectively.

\subsubsection{Problem setup}\label{app:theoretical_results:icpe:problem_setup}

We now provide a formal definition of the underlying probability measures of the problem we consider. To that aim, it is important to formally define what a model $M$ is, as well as the definition of policy $\pi$ and inference rule $I$ (infernece rules are also known as  recommendation rules).

\paragraph{Spaces and $\sigma$-fields.}
We let ${\cal X}\subset\mathbb R$ be nonempty, compact, and endowed with its Borel $\sigma$-field ${\cal B}({\cal X})$.
Let ${\cal A}=\{1,\dots,K\}$ be a finite action set with the discrete $\sigma$-field $2^{\cal A}$, and let ${\cal H}$ be a finite hypothesis set with the discrete $\sigma$-field.
Write $\Delta({\cal X})$ for the set of Borel probability measures on $({\cal X},{\cal B}({\cal X}))$, equipped with the topology of weak convergence and its Borel $\sigma$-field ${\cal B}(\Delta({\cal X}))$.

For $t\in\mathbb N$, define the trajectory space
\[
{\cal Z}_t\coloneqq ({\cal X}\times{\cal A})^{t-1}\times{\cal X},
\qquad
z_t=(x_1,a_1,\ldots,a_{t-1},x_t),
\]
with the product topology and Borel $\sigma$-field ${\cal B}({\cal Z}_t)$.
Since ${\cal X}$ is compact metric and ${\cal A}$ is finite, each $({\cal Z}_t,{\cal B}({\cal Z}_t))$ is standard Borel (in fact compact Polish).
Set ${\cal Z}_\infty\coloneqq {\cal X}\times({\cal A}\times{\cal X})^{\mathbb N}$ with the product $\sigma$-field ${\cal B}({\cal Z}_\infty)$.

\paragraph{Observation dynamics and parameterization.} To define the dynamics $(\rho,P)$ we introduce a parametrization in $\omega \in \Omega$.
We take $(\Omega,d)$ to be a compact metric with Borel $\sigma$-field ${\cal B}(\Omega)$ and metric $d$.
For each $\omega\in\Omega$, we assume that $\rho$ and $P$ are functionals of $\omega$:
\begin{itemize}
\item $\rho_\omega\in\Delta({\cal X})$ is the initial observation law (a Borel probability measure on ${\cal X}$).
\item $P_s^\omega(\,\cdot\,|z_s,a_s)\in\Delta({\cal X})$ is a Borel probability \emph{kernel} for each round $s\ge1$: for every $(z_s,a_s)$, $P_s^\omega(\cdot|z_s,a_s)$ is a probability measure on $({\cal X},{\cal B}({\cal X}))$, and for every $C\in{\cal B}({\cal X})$ the map $(z_s,a_s)\mapsto P_s^\omega(C|z_s,a_s)$ is measurable.
\end{itemize}
We assume the following \emph{weak continuity} in $\omega$: for every bounded continuous $f:{\cal X}\to\mathbb R$,
\[
\omega\mapsto \int f\,d\rho_\omega \quad\text{and}\quad
(\omega,z_s,a_s)\mapsto \int f(x')\,P_s^\omega(dx'|z_s,a_s)
\ \ \text{are continuous.}
\]
Equivalently, $\omega\mapsto\rho_\omega$ and $(\omega,z_s,a_s)\mapsto P_s^\omega(\cdot|z_s,a_s)$ are continuous maps into $\Delta({\cal X})$ with the weak topology.

\paragraph{Set of models (environments).} To define the set of models, consider a mapping
 $\phi:\Omega\to \Delta({\cal X})\times\prod_{s\ge1}\big(\Delta({\cal X})^{{\cal Z}_s\times{\cal A}}\big)$ so that 
$\phi(\omega)=(\rho_\omega,P^\omega)$, where $P^\omega\coloneqq (P_s^\omega)_{s\ge1}$. Set
 ${\cal M}^\sharp\coloneqq \phi(\Omega)$ with the product of weak topologies.  We indicate by $(\rho,P)\in {\cal M}^\sharp$ a model in this set (hence $P=P^\omega$ for some $\omega$).
By continuity of $\phi$ and compactness of $\Omega$, ${\cal M}^\sharp$ is compact.

We also let $h^\star: \Delta({\cal X})\times\prod_{s\ge1}\big(\Delta({\cal X})^{{\cal Z}_s\times{\cal A}}\big)\to {\cal H}$,  with ${\cal H}$ finite (with the discrete topology), to be a Borel measurable mapping defining the ground truth hypothesis $H^\star=h^\star(\rho,P)$ for a pair $(\rho,P)\in {\cal M}^\sharp$. Then, we define ${\cal M}$ as
\[
{\cal M}\ \coloneqq\ \left\{\,(\rho,P,h^\star(\rho,P)): (\rho,P)\in {\cal M}^\sharp\right\}
\]
to be the push-forward set of environments\footnote{We omit ${\cal X},{\cal A}$ from the definition since these sets are the same for all models in ${\cal M}^\sharp$.}.
Therefore, a prior $Q$ on $\Omega$ induces a prior on ${\cal M}$ (and ${\cal M}^\sharp$) by pushforward: $\P \coloneqq Q\circ(\omega\mapsto(\phi(\omega), h^\star(\phi(\omega))))^{-1}$ and $\P^\sharp \coloneqq Q\circ \phi^{-1}$. In the following, we mainly work with ${\cal M}^\sharp$ and use  $\P$ and $\P^\sharp$ interchangeably whenever clear from the context.

\paragraph{Policies and inference (recommendation) rules.}
A (possibly randomized) policy $\pi=(\pi_s)_{s\geq 1}$ is a sequence of Borel probability kernels $\pi_s(\,\cdot\,|z_s)\in\Delta({\cal A})$, $s\ge1$, with $\pi_s:\ ({\cal Z}_s,{\cal B}({\cal Z}_s))\to(\Delta({\cal A}),{\cal B}(\Delta({\cal A})))$ measurable.
Deterministic policies are the special case $\pi_s(\,\cdot\,|z_s)=\delta_{\alpha_s(z_s)}$ for some measurable $\alpha_s:{\cal Z}_s\to{\cal A}$.
An inference rule at timestep $t$ is defined as any Borel map $I_t:{\cal Z}_t\to{\cal H}$. We also define an inference rule as the collection $I\coloneqq (I_s)_{s\geq 1}$.

\paragraph{Path laws and probability measures.}
Fix $M\in M^\sharp$, with $M=(\rho,(P_s)_s)$, and a policy $\pi=(\pi_s)_{s\ge1}$.
By the Ionescu--Tulcea theorem, there exists a unique probability measure $\mathbb P_{M,t}^\pi$ on $({\cal Z}_t,{\cal B}({\cal Z}_t))$ such that for all cylinder sets
$C=C_1\times B_1\times\cdots\times B_{t-1}\times C_t$, with $C_i\in{\cal B}({\cal X})$ and $B_i\subset{\cal A}$,
\begin{align*}
\mathbb P_{M,t}^\pi(C)
&=\int_{C_1}\rho({\rm d}x_1)
\prod_{s=1}^{t-1}\Bigg[
\int_{B_s}\pi_s({\rm d}a_s|z_s)\,
\int_{C_{s+1}}P_s({\rm d}x_{s+1}\,|\,z_s,a_s)\Bigg],
\end{align*}
equivalently,
\[
\mathbb P_{M,t}^\pi(dz_t)=\rho(dx_1)\prod_{s=1}^{t-1}\big[\pi_s(da_s|z_s)\,P_s({\rm d}x_{s+1}|z_s,a_s)\big].
\]
Analogously, one obtains a unique path measure $\mathbb P_{M}^\pi$ on $({\cal Z}_\infty,{\cal B}({\cal Z}_\infty))$.

Now, define the joint law on ${\cal M}^\sharp\times{\cal Z}_t$ by
\[
\mathbf P_t^\pi({\rm d} M,{\rm d}z_t)\coloneqq {\cal P}({\rm d} M)\,\mathbb P_{M,t}^\pi({\rm d}z_t),
\]
and the  trajectory marginal
\[
\mathbb P_{t}^\pi(A)\coloneqq \int_{{\cal M}^\sharp} \mathbb P_{M,t}^\pi(A)\,\P({\rm d} M),\qquad A\in{\cal B}({\cal Z}_t).
\]
Similarly, we also define $\mathbf P^\pi({\rm d} M, {\rm d} z)$ on $M^\sharp\times {\cal Z}_\infty$ and $\mathbb P^\pi(A)$ for $A\in {\cal B}({\cal Z}_\infty)$.

Lastly, since $\Omega$ and ${\cal Z}_t$ are standard Borel, regular conditional probabilities exist on $\Omega\times{\cal Z}_t$; by pushforward through $\phi$ they induce regular conditional probabilities on ${\cal M}^\sharp\times{\cal Z}_t$, for all $t\geq 1$.

\subsubsection{Posterior distribution over the true hypothesis and inference rule optimality}\label{app:theoretical_results:icpe:posterior_distribution_and_inference_rule}
We first record a domination assumption that allows us to express likelihoods w.r.t. fixed reference measures and obtain continuity.

\begin{assumption}[Domination and joint continuity]\label{assump:domination_param}
There exist probability measures $\lambda_0,\lambda$ on $({\cal X},{\cal B}({\cal X}))$ such that, for all $(\rho,P)\in{\cal M}^\sharp$, $s\in\mathbb N$, and $(z,a)\in{\cal Z}_s\times{\cal A}$,
\[
\rho(\cdot)\ll \lambda_0(\cdot)
\quad\text{and}\quad
P_s(\cdot\,|\,z,a)\ll \lambda(\cdot).
\]
We also let $p_0^\omega(x)\coloneqq \frac{d\rho_\omega}{d\lambda_0}(x)$ and $p_s^\omega(x\,|\,z,a)\coloneqq \frac{dP_s^\omega(\cdot\,|\,z,a)}{d\lambda}(x)$ be the corresponding densities (versions chosen jointly measurable).
\end{assumption}

\noindent\emph{Remark.} For compact ${\cal X}\subset\mathbb R$, such a dominating pair always exists (e.g., Lebesgue on ${\cal X}$).

Under \cref{assump:domination_param}, define the (policy-independent) likelihood for $(\omega,z)\in\Omega\times{\cal Z}_t$:
\[
\ell_t(M,z)\coloneqq p_0(x_1)\prod_{s=1}^{t-1}p_s(x_{s+1}\,|\,z_s,a_s).
\]

We now give a posterior kernel representation that is independent of $\pi$.

\begin{lemma}[Posterior kernel]\label{lemma:posterior_distribution_param}
For each $t\in\mathbb N$ there exists a probability kernel $R_t:{\cal Z}_t\times{\cal B}(M^\sharp)\to[0,1]$, independent of $\pi$, such that for every policy $\pi$ and all $A\in{\cal B}(M^\sharp)$, $Z\in{\cal B}({\cal Z}_t)$,
\[
\mathbf P_t^\pi(M\in A,\,{\cal D}_t\in Z)=\int_Z R_t(A |z)\,\mathbb P_{M\sim \P,t}^\pi(dz).
\]
Consequently, for $B\subset{\cal H}$,
\[
\mathbb P_t(H^\star\in B|{\cal D}_t=z)\coloneqq R_t\big(\{(\rho,P)\in {\cal M}^\sharp: h^\star(\rho,P)\in B\}|z\big)
\quad\text{for }\ \mathbb P_{{t}}^\pi\text{-a.e. }z.
\]
\end{lemma}

\begin{proof}
Fix $\pi$ and $t$.
Define the reference measure on ${\cal Z}_t$ (depending on $\pi$)
\[
\nu_t^\pi(dz)\coloneqq \lambda_0({\rm d}x_1)\prod_{s=1}^{t-1}\big[\pi_s({\rm d}a_s|z_s)\,\lambda({\rm d}x_{s+1})\big].
\]
By construction and \cref{assump:domination_param}, $\mathbb P_{M,t}^\pi\ll \nu_t^\pi$ for every $\omega$, and the Radon–Nikodym density is
\[
\frac{d\mathbb P_{M,t}^\pi}{d\nu_t^\pi}(z)=\ell_t(M,z),
\]
which does not depend on $\pi$.
Therefore,
\[
\mathbf P_t^\pi(M\in A,{\cal D}_t\in Z)=\int_A\int_Z \ell_t(M,z)\,\nu_t^\pi({\rm d}z)\,\P({\rm d} M),
\]
and
\[
\mathbb P_{{t}}^\pi(Z)=\int_Z\int_{{\cal M}^\sharp} \ell_t(M,z)\,\P({\rm d} M)\,\nu_t^\pi({\rm d}z).
\]
Absolute continuity $\mathbf P_t^\pi(A,\cdot)\ll \mathbb P_{{t}}^\pi(\cdot)$ follows immediately, and the Radon–Nikodym derivative is the displayed ratio, which we denote $R_t(A|z)$. Standard arguments show $R_t(\cdot|z)$ is a probability measure and $z\mapsto R_t(A|z)$ is measurable; independence of $\pi$ is evident from the formula. Mapping through $h^\star$ yields the posterior on ${\cal H}$.
\end{proof}

Define then \begin{equation}\mathbb P^\pi_t(H^\star=H) \coloneqq \int_{{\cal Z}_t} \mathbb P_t(H^\star=H|{\cal D}_t=z) \mathbb{P}^\pi_{t}({\rm d }z).\end{equation}
We now provide a proof of the optimality of an inference rule. In the following, we use the following quantity 
\begin{equation}
    r_t(z)\coloneq \max_{H\in {\cal H}} {\mathbb P}_t(H^\star=H|{\cal D}_t=z),
\end{equation}
which is the  maximum value of the posterior distribution at time $t$ for some dataset $z$.

\begin{proposition}\label{prop:optimal_inference_rule} Consider a fixed policy $\pi$. Let $t\in \mathbb{N}$ and $z\sim \mathbb{P}_{{t}}^\pi$. For any $t$ the  optimal inference rule to $\sup_{I_t} \mathbb P^\pi_t(H^\star=I_t({\cal D}_t))$ is given by $I_t^\star(z)=\argmax_{H\in {\cal H}} {\mathbb P}_t(H^\star=H|{\cal D}_t=z)$ (break ties according to some fixed ordering).
\end{proposition}
\begin{proof}
Fix a policy $\pi$ and an inference rule $ I_t$ at timestep $t$.
 By definition, we have
\begin{align*}
{\mathbb P}_t^\pi(H^
\star= I_t({\cal D}_t))&=\int_{{\cal Z}_t} \sum_{H\in {\cal H}}\mathbf{1}_{\{I_t(z)=H\}} {\mathbb P}_t(H^
\star=H|{\cal D}_t=z) \mathbb{P}_{t}^\pi({\rm d}z) \tag{Posterior independent of $\pi$},\\
&\leq  \int_{{\cal Z}_t} \max_{H\in {\cal H}}{\mathbb P}_t(H^
\star=H|{\cal D}_t=z) \mathbb{P}_{t}^\pi({\rm d}z),\\
&= \int_{{\cal Z}_t} r_t(z)  \mathbb{P}_{t}^\pi({\rm d}z).
\end{align*}

However, for any ${\cal D}_t=z$ choosing $I_t(z)=\argmax_{H\in {\cal H}}{\mathbb P}_t^\pi(H^
\star=H|{\cal D}_t=z) $ (break ties according to some fix ordering $\Rightarrow$ hence $ I_t(z)$ is Borel measurable) yields that 
\[
{\mathbb P}_t^\pi(H^
\star= I_t({\cal D}_t))  = \int_{{\cal Z}_t} r_t(z)  \mathbb{P}_{t}^\pi({\rm d}z),
\]
which concludes the proof.
\end{proof}

\subsubsection{Fixed budget setting: optimal policy}\label{app:theoretical_results:icpe:fixed_budget:optimal_policy}
We now turn our attention to the fixed budget setting. In particular, we prove that an optimal policy $\pi_t^\star$ in ${\cal D}_t$ attains the optimal value defined as  $V_t({\cal D}_t)=\max_a \mathbb{E}_{x_{t+1}|({\cal D}_t,a)}[V_{t+1}(({\cal D}_{t}, a, x_{t+1})|{\cal D}_t, a]$ with $V_N({\cal D}_N)=\max_H {\mathbb P}_t(H^\star=H|{\cal D}_N)$ (see a rigorous definition of $x_{t+1}|({\cal D}_t,a)$ below).

First, note that from \cref{prop:optimal_inference_rule} we can deduce that the optimal objective in the fixed budget satisfies, for all $t\geq 1$,
\[
\sup_{\pi,  I_t} {\mathbb P}_t^\pi(H^
\star= I_t({\cal D}_t)) =  \sup_{\pi} \mathbb{E}_t^\pi[r_t({\cal D}_t)]
\]
where $ r_t(z)\coloneq \max_{H\in {\cal H}} {\mathbb P}_t(H^\star=H|{\cal D}_t=z)$.

We now show that there exists an optimal deterministic policy $\pi^\star$ that optimally solves the  fixed budget regime. First, define the  following posterior mixture for any $t\in \mathbb{N}, X\in {\cal B}({\cal X}),(z,a)\in {\cal Z}_t\times {\cal A}$:
\begin{equation}\label{eq:posterior_mixture}
\bar P_t(x'\in X|z,a) \coloneqq  \int_{{\cal M}^\sharp} P_t(X|z,a) \, R_t({\rm d} M|z),
\end{equation}
where $P_t(\cdot|z,a)$ is the transition function at step $t$ in $(M,z,a)$. This is simply the posterior $x'|(z,a)$, that in the main text of the manuscript is denoted by $x_{t+1}|({\cal D}_t,a)$.

\paragraph{Optimal value.}
Define the value at $N\in \mathbb{N}$ as $V_N(z_N)= r_N(z_N)$ for any $z_N\in {\cal Z}_N$, and define the value function for $z_t\in {\cal Z}_t, a\in {\cal A}$ as
\begin{equation}
    V_t(z_t)=\max_{a\in {\cal A}} Q_t(z_t,a),\quad Q_t(z_t,a)=\int_{\cal X} V_{t+1}(\underbrace{z_t,a,x'}_{= z_{t+1}})\,\bar P_t({\rm d}x'|z_t,a),\quad t=1,\dots,N-1,
\end{equation}

For some ordering on ${\cal A}$, for $z\in {\cal Z}_t$ define $\pi_t^\star(z)\in \argmax_{a\in{\cal A}} Q_t(z,a)$ (break ties according to the ordering).  We have the following result.

\begin{proposition}
    For any $t\in \{1,\dots,N-1\}, z\in {\cal Z}_t$, the policy $\pi_t^\star(z)\in \argmax_{a\in{\cal A}} Q_t(z,a)$ (break ties according to a fixed ordering) is an optimal policy, that is
    \begin{equation}
          \sup_{\pi,I_N} {\mathbb P}_N^\pi(H_M^\star=I_N({\cal D}_N)) =   \mathbb{E}_N^{\pi^\star}[r_N({\cal D}_N)].
    \end{equation}
\end{proposition}
\begin{proof}
    To prove the result, we use \cref{lemma:upper_bound_value_fixed_budget}, which shows that  $\mathbb{E}_N^\pi[V_N({\cal D}_N)| {\cal D}_t] \leq V_t({\cal D}_t)$ holds $\mathbb{P}_t^\pi\hbox{-almost surely}$ for any policy $\pi$ and $t\in [T]$, with equality if $\pi=\pi^\star$. Then, using this inequality we can show that for $t=1$, with $z_1\in {\cal X}$, we have   \[
    \mathbb{E}_N^\pi[V_N({\cal D}_N)|{\cal D}_1=z_1] \leq V_1(z_1) \Rightarrow \mathbb{E}_N^\pi[r_N({\cal D}_N)] \leq \mathbb{E}_1[V_1({\cal D}_1)],
    \]
    with equality if $\pi=\pi^\star$, implying $\pi^\star$ is optimal since we can attain the r.h.s. (note that it does not depend on $\pi$). Hence
    \[
\sup_{\pi,I_N} {\mathbb P}_N^\pi(H_M^
\star=I_N({\cal D}_N)) =   \sup_\pi\mathbb{E}_N^{\pi}[r_N({\cal D}_N)] =   \mathbb{E}_N^{\pi^\star}[r_N({\cal D}_N)].
    \]
    where the first equality follows from \cref{prop:optimal_inference_rule}.
\end{proof}

We  now prove the result used in the proof.
\begin{lemma}\label{lemma:upper_bound_value_fixed_budget}
Consider the fixed budget setting with horizon $N$. For any policy $\pi=(\pi_s)_{s\geq 1}$, $t\in \{1,\dots,N\}, z\in {\cal Z}_t$, we have
\begin{equation}
    \mathbb{E}_N^\pi[V_N({\cal D}_N)| {\cal D}_t=z] \leq V_t(z) \quad \mathbb{P}_{M\sim \P,t}^\pi\hbox{-almost surely},
\end{equation}
with equality when $\pi=\pi^\star$.
\end{lemma}
\begin{proof}
    We prove it by backward induction. For $t=N$ the equality holds by definition. Assume it holds for $t+1$, then  at time $t$ for any policy $\pi$ we have
    \begin{align*}
        \mathbb{E}_N^\pi[V_N({\cal D}_N)| {\cal D}_t=z] &= \mathbb{E}_N^\pi[\mathbb{E}_N^\pi[V_N({\cal D}_N)| {\cal D}_{t+1}]| {\cal D}_t=z],\\
        &\leq \mathbb{E}_{t+1}^\pi[ V_{t+1}({\cal D}_{t+1})| {\cal D}_t=z],\\
        &= \mathbb{E}_{t}^\pi[ Q_{t}(z,\pi_t(z))| {\cal D}_t=z],\\
        &\leq V_t(z),
    \end{align*}
    and if $\pi=\pi^\star$ then both inequalities hold since they hold at $t+1$. Hence the result holds also for $t$, which concludes the induction argument.
\end{proof}

\subsubsection{Fixed confidence setting: dual problem formulation}\label{app:theoretical_results:icpe:fixed_confidence:dual}

In the fixed confidence setting we are interested in solving the following problem.

\begin{equation}\label{eq:problem_fixed_confidence}
        \inf_{\tau,\pi, I} \mathbb{E}^\pi[\tau] \quad \hbox{ subject to }\quad \mathbb{P}^\pi(H^\star=I_\tau({\cal D}_\tau))\geq 1-\delta, \;\mathbb{E}^\pi[\tau]<\infty.
    \end{equation} 
where $\tau$ is a stopping time adapted to $(\sigma({\cal D}_t))_t$ (recall that it counts the total number of observations; thus, if $\tau=t \Rightarrow $ we have observations $x_1,\dots, x_t$), $\pi=(\pi_s)_{s\geq 1}$, is a collection of policies, and $I=(I_s)_{s\geq 1}$, is a sequence of recommendation rules. Furthermore, we have that $\mathbb P^\pi(\tau<\infty)=1$ (this follows from $\mathbb E^\pi[\tau]<\infty$).

\paragraph{Dual problem and optimal recommendation rule}  In the following we focus on the dual problem of \cref{eq:problem_fixed_confidence}. First, we show what is the dual problem, and what is the optimal recomendation rule.
\begin{proposition}
    The Lagrangian dual of the problem in \cref{eq:problem_fixed_confidence} is given by
    \begin{equation}
        \sup_{\lambda\geq 0} \inf_{\pi,\tau,I} V_\lambda(\pi,\tau,I)=\sup_{\lambda\geq 0} \inf_{\pi,\tau,I}\lambda(1-\delta) +\mathbb{E}^\pi\left[ \tau-\lambda\mathbb{P}_\tau(H^\star=I_\tau({\cal D}_\tau)|{\cal D}_\tau)\right],
    \end{equation}
    where $\lambda\geq 0$ is the Lagrangian variable.
\end{proposition}
\begin{proof}
Since  any feasible solution  stops almost surely, we can also write
\begin{align*}
\mathbb{P}^\pi(H_M^\star=I_\tau({\cal D}_\tau)) &= \sum_{t\geq 1} \mathbb{P}_t^\pi(I_t({\cal D}_t)=H^\star,\tau=t), \tag{law of total probability}\\
&= \sum_{t\geq 1} \mathbb{E}_t^\pi\left[\mathbf{1}_{\{\tau=t\}}\mathbf{1}_{\{H^\star=I_t({\cal D}_t)\}}\right],\\
&= \sum_{t\geq 1} \mathbb{E}_t^\pi\left[\mathbb{E}_t^\pi\left[\mathbf{1}_{\{\tau=t\}}\mathbf{1}_{\{H^\star=I_t({\cal D}_t)\}}\Big|{\cal D}_t\right]\right], \tag{tower rule}\\
&= \sum_{t\geq 1} \mathbb{E}_t^\pi\left[\mathbf{1}_{\{\tau=t\}}\mathbb{E}_t^\pi\left[\mathbf{1}_{\{H^\star=I_t({\cal D}_t)\}}\Big|{\cal D}_t\right]\right], \tag{$\{\tau=t\}\in \sigma({\cal D}_t)$}\\
&= \sum_{t\geq 1} \mathbb{E}_t^\pi\left[\mathbf{1}_{\{\tau=t\}}\mathbb{P}_t(H^\star=I_t({\cal D}_t)|{\cal D}_t)\right]\tag{Outer expectation integrates over ${\cal D}_t$}
\end{align*}
where we used the fact that the posterior distribution does not depend on $\pi$ (\cref{lemma:posterior_distribution_param}).
Using this decomposition, for $\lambda\geq 0$ (the dual variable) we can write the Lagrangian dual of the problem as
\begin{align*}
    V_\lambda(\pi,\tau,I)&\coloneqq\mathbb{E}^\pi\left[\tau+ \lambda\left(1-\delta- \sum_{t\geq 1} \mathbf{1}_{\{\tau=t\}}\mathbb{P}_t(H^\star=I_t({\cal D}_t)|{\cal D}_t)\right)\right],\\
    &=\lambda-\lambda\delta +\mathbb{E}^\pi\left[ \sum_{t\geq 1} \mathbf{1}_{\{t\leq\tau\}}-\lambda\mathbf{1}_{\{\tau=t\}}\mathbb{P}_t(H^\star=I_t({\cal D}_t)|{\cal D}_t)\right],
\end{align*}
where we used that $\mathbb{E}^\pi[\tau]=\mathbb{E}^\pi\left[\sum_{t=1}^\tau1\right] =\mathbb{E}^\pi\left[\sum_{t=1}^\infty \mathbf{1}_{\{t\leq\tau\}}\right]$.
\end{proof}

For the dual problem, we now show we can embed the stopping rule as a stopping action. Define the extended action space $\bar {\cal A}\coloneqq {\cal A}\cup \{a_{\rm stop}\}$, where $a_{\rm stop}$ is absorbing. We show that for every $(\tau,\pi,I)$ there exists a policy $\bar\pi=(\bar \pi_s)_{s\geq 1}$, with $\bar\pi_s:{\cal Z}_s\to \Delta(\bar{\cal A})$, such that $V_\lambda(\pi,\tau,I)=V_\lambda(\bar \pi,I)$, where
\[
V_\lambda(\bar \pi,I)\coloneqq  \lambda(1-\delta) +\mathbb{E}^{\bar \pi}\left[ \bar \tau-\lambda\mathbb{P}_{\bar \tau}(H^\star=I_{\bar \tau}({\cal D}_{\bar \tau})|{\cal D}_{\bar \tau})\right].
\]
and
\[
{\bar \tau} =\inf\{t: a_t=a_{\rm stop}\}.
\]
At the beginning of round  $t$, given  ${\cal D}_t$, the learner may stop by choosing  $a_{\rm stop}$
 (termination at $t$, no new observation) or continue by choosing  $a_t\neq a_{\rm stop}$ and then observing  $x_{t+1}$. If the learner decides to stop after seeing $x_{t+1}$
, this is equivalent to choosing $a_{\rm stop}$ at round $t+1$ , leading to $\bar \tau=t+1=\tau$.
\begin{lemma}
   For every $(\pi,\tau,I)$ with $\tau\ge 1$ a.s., there exists a policy $\bar\pi$ on $\bar{\cal A}={\cal A}\cup\{a_{\rm stop}\}$ with stopping rule $\bar \tau=\inf\{t:a_t=a_{\rm stop}\}$ such that $V_\lambda(\pi,\tau,I)=V_\lambda(\bar\pi,I)$.
\end{lemma}
\begin{proof}
Since $\{\tau=t\}\in \sigma({\cal D}_t)$ note that there exists $A_t\in {\cal B}({\cal Z}_t)$ such that $\{\tau=t\}=\{{\cal D}_t\in A_t\}$.
Define, for all $t\ge 1$ and $z\in{\cal Z}_t$,
\[
\bar\pi_t(a_{\rm stop}| z)=\mathbf 1_{\{z\in A_t\}},\qquad
\bar\pi_t(a| z)=\pi_t(a| z)\ \ (a\in{\cal A}).
\]
Clearly we have $\{\bar \tau=t\}=\{a_t=a_{\rm stop}\}=\{\tau=t\}$,
 then
    \begin{align*}
    V_\lambda(\bar \pi,I)&=\lambda(1-\delta) +\mathbb{E}^{\bar \pi}\left[ \bar \tau-\lambda\mathbb{P}_{\bar \tau}(H^\star=I_{\bar \tau}({\cal D}_{\bar \tau})|{\cal D}_{\bar \tau})\right],\\
    &=\lambda(1-\delta) +\mathbb{E}^{\bar \pi}\left[ \sum_{t\geq 1} \mathbf{1}_{\{t\leq\bar\tau\}}-\lambda\mathbf{1}_{\{\bar \tau=t\}}\mathbb{P}_t(H^\star=I_t({\cal D}_{t})|{\cal D}_{t})\right],\\
    &=\lambda(1-\delta) +\mathbb{E}^{\bar \pi}\left[ \sum_{t\geq 1}^{\bar\tau} 1-\lambda\mathbf{1}_{\{a_t=a_{\rm stop}\}}\mathbb{P}_t(H^\star=I_t({\cal D}_{t})|{\cal D}_{t})\right],\\
    &=\lambda(1-\delta) +\mathbb{E}^{\bar \pi}\left[ \sum_{t\geq 1}^{ \tau} 1-\lambda\mathbf{1}_{\{\tau=t\}}\mathbb{P}_t(H^\star=I_t({\cal D}_{t})|{\cal D}_{t})\right],\\
    &=\lambda(1-\delta) +\mathbb{E}^{ \pi}\left[ \sum_{t\geq 1}{\bf 1}_{\{t\leq \tau\}}-\lambda\mathbf{1}_{\{\tau=t\}}\mathbb{P}_t(H^\star=I_t({\cal D}_{t})|{\cal D}_{t})\right],\\
    &=\lambda(1-\delta) +\mathbb{E}^{ \pi}\left[ \tau-\lambda\mathbb{P}_\tau(H^\star=I_\tau({\cal D}_{\tau})|{\cal D}_{\tau})\right].
    \end{align*}
\end{proof}

Then, in the following, we assume to work with the extended space $\bar {\cal A}$, and indicate by $\tau=\inf\{t:a_t=a_{\rm stop}\}$ the stopping time. We avoid the bar notation for simplicity.

We now show what is the optimal inference rule.
\begin{proposition}
    For any $t\in \mathbb{N}$ the optimal inference rule satisfies $I_t({\cal D}_t)\in\argmax_{H\in {\cal H}} \mathbb{P}_t(H^\star=H|{\cal D}_t=z_t)$ (break ties according to some fixed ordering), where  $z_t\in {\cal Z}_t$. Moreover, we also have
    \begin{equation}
        \sup_{\lambda\geq 0} \inf_{\pi,I} V_\lambda(\pi,I)=\sup_{\lambda\geq 0} \inf_{\pi} \lambda(1-\delta) +\mathbb{E}^\pi\left[\tau-\lambda r_\tau({\cal D}_\tau)\right],
    \end{equation}
    where $\tau=\inf\{t: a_t=a_{\rm stop}\}$ and   $r_t(z_t)\coloneqq\max_H \mathbb{P}_t(H^\star=H|{\cal D}_t=z_t)$ . 
\end{proposition}
\begin{proof}
First, we optimize over recommendation rules. For any $t\in \mathbb{N}, z_t\in {\cal Z}_t$ define $r_t(z_t)= \max_{H\in {\cal H}} \mathbb{P}_t(H^\star=H|{\cal D}_t=z_t)$ as before. Then, for fixed $(\pi,\tau)$ we have that
\begin{align*}
\inf_{I} V_\lambda(\pi,I)&=\lambda-\lambda\delta +\mathbb{E}^\pi\left[ \inf_{I}\sum_{t\geq 1} \mathbf{1}_{\{t\leq\tau\}}-\lambda\mathbf{1}_{\{\tau=t\}}\mathbb{P}_t(H^\star=I_t({\cal D}_t)|{\cal D}_t)\right],\\
&=\lambda-\lambda\delta +\mathbb{E}^\pi\left[ \sum_{t\geq 1} \mathbf{1}_{\{t\leq\tau\}}-\lambda\mathbf{1}_{\{\tau=t\}}\sup_{I_t}\mathbb{P}_t(I_t({\cal D}_t)=H^\star|{\cal D}_t)\right],\\
&=\lambda-\lambda\delta +\mathbb{E}^\pi\left[ \sum_{t\geq 1} \mathbf{1}_{\{t\leq\tau\}}-\lambda\mathbf{1}_{\{\tau=t\}}r_t({\cal D}_t)\right],
\end{align*}
where the last step follows from \cref{prop:optimal_inference_rule}. Therefore, we can also conclude that
\[
\sup_{\lambda\geq 0} \inf_{\pi,I}V_\lambda(\pi,I)=\sup_{\lambda\geq 0} \inf_{\pi} \lambda-\lambda\delta +\mathbb{E}^\pi\left[\tau-\lambda r_\tau({\cal D}_\tau)\right].
\]
\end{proof}

\subsubsection{Fixed confidence setting: optimal policy}\label{app:theoretical_results:icpe:fixed_confidence:optimal_policy}
 We now optimize over policies. Recall that ${\cal A}$ includes the stopping action, and $\tau=\inf\{t:a_t=a_{\rm stop}\}$. For $t\in \mathbb{N}, z\in {\cal Z}_t$ define the optimal value to go
\[
V_t(z;\lambda) \coloneqq \inf_{\pi:\tau\geq t} \lambda-\lambda\delta +\mathbb{E}^\pi\left[\tau-t-\lambda r_\tau({\cal D_\tau}) |{\cal D}_t=z\right].
\]

Also define the following optimal $Q$-function for $z\in {\cal Z}_t, a \neq a_{\rm stop}$
\[
Q_t(z,a;\lambda)\coloneqq 1+\int_{\cal X} V_{t+1}(\underbrace{z,a,x'}_{=z'};\lambda)\,\bar P_t({\rm d}x'|z,a)
\]
where $\bar P_t$ is the posterior mixture defined in \cref{eq:posterior_mixture}. We also set
\[
Q_t(z,a_{\rm stop};\lambda)\coloneqq \lambda(1-\delta-r_t(z)).
\]
Consider then the  policy $\pi_\lambda^\star=(\pi_t^\star)_t$, where $\pi_t^\star(z;\lambda)\in\argmin_{a\in {\cal A}} Q_t(z,a;\lambda)$, where we break ties according to some fixed ordering over ${\cal A}$.  We have then the following result.
\begin{proposition}
    $\pi_\lambda^\star$ is  a $\lambda$-optimal policy. Furthermore, the optimal value for $z\in {\cal Z}_t$ satisfies 
    \begin{equation}
V_t(z;\lambda)=\min_a  Q_t(z,a;\lambda) .
\end{equation}
\end{proposition}
\begin{proof}
    Fix ${\cal D}_t=z, z\in {\cal Z}_t$. Assume the optimal stopping action stops at $\tau=t$ for such $z$. Then $V_t(z;\lambda)=\lambda-\lambda\delta -\lambda r_t(z)=Q_t(z,a_{\rm stop};\lambda)$.

    Otherwise, assume the optimal stopping rule stops for $\tau >t$. Then
    \begin{align*}
    V_t(z;\lambda) &= \inf_{\pi:\tau> t} \lambda-\lambda\delta +\mathbb{E}^\pi\left[\tau-t-\lambda r_\tau({\cal D}_\tau) |{\cal D}_t=z\right],\\
    &= \inf_{\pi:\tau>t}1+\mathbb{E}_a^\pi\left[\int_{\cal X}V_{t+1}(z, a,x';\lambda)\,\bar P_t({\rm d}x'|z,a)\right],\\
    &= \min_{a\neq a_{\rm stop}} Q_t(z,a;\lambda).
    \end{align*}

    We then clearly obtain the lower bound
    \[
    V_t(z;\lambda)\geq \min\left\{\lambda(1-\delta-r_t(z)), \min_{a\in {\cal A}} Q_t(z,a;\lambda)\right\}.
    \]
    We now show that $\pi^\star$ achieves this value, and thus is optimal. 

    \begin{enumerate}
        \item If $\tau=t$, then $\lambda(1-\delta-r_t(z)\leq \min_{a\neq a_{\rm stop}} Q_t(z,a;\lambda)$ and the value to go for $\pi^\star$ is exactly $\lambda(1-\delta -r_t(z))$. 
        \item if $\tau \neq t$, then $\lambda(1-\delta-r_t(z)\geq\min_{a\neq a_{\rm stop}} Q_t(z,a;\lambda)$, and the value to go is $Q_t(z,\pi_t(z);\lambda)=\min_a Q_t(z,a;\lambda)=V_t(z;\lambda)$.
    \end{enumerate}
    Therefore, the value to go for $\pi^\star$ at time $t$ in ${\cal D}_t=z$ attains the lower bound $\min_{a\in {\cal A}} Q_t(z,a;\lambda)$, 
and is thus optimal. Applying the result from the previous proposition leads to the desired result.
\end{proof}

\subsubsection{Fixed confidence setting: identifiability and correctness}\label{app:theoretical_results:icpe:fixed_confidence:identifiability_and_correctness}
 Lastly, to verify the correctness, we need to make an explicit identifiability assumption.
\begin{assumption}\label{assump:identifiability}
    For every $\delta>0$ there exists a policy $\pi$ with $\mathbb{E}^\pi[\tau]<\infty$, such that $\mathbb{E}^\pi[r_\tau({\cal D}_\tau)]\geq 1-\delta$.
\end{assumption}

 Now we show that the optimization problem solved by ICPE can lead to a $\delta$-correct policy and stopping rule. To that aim, define  
\[{\cal S}(\lambda)\coloneqq\argmin_{\pi} V_\lambda(\pi),\,\hbox{ where }\, V_\lambda(\pi)\coloneqq \lambda(1-\delta)+\mathbb{E}^\pi[\tau-\lambda r_\tau({\cal D}_\tau)].\]
Observe that the  set ${\cal S}(\lambda)$ is not empty since we know that $\pi_\lambda^\star$ belongs to it. We have the following result.
\begin{lemma}\label{lemma:monotonicity_phi}
    Define the set  $\Phi(\lambda)=\{\mathbb{E}^\pi[r_\tau({\cal D}_\tau)]: \pi\in {\cal S}(\lambda)\}$. Then, any $\phi(\lambda)\in \Phi(\lambda)$ is non-decreasing and under \cref{assump:identifiability}  any $\phi(\lambda)\in \Phi(\lambda)$ satisfies $\lim_{\lambda\to\infty} \phi(\lambda)=1$.
\end{lemma}
\begin{proof}
We first prove the limit, and then prove the monotonicity.
\\

\noindent\underline {\it  Step 1: $\lim_{\lambda\to\infty} \phi(\lambda)=1$}.
    For $\epsilon>0$ consider a policy $\pi_\epsilon$ such that  $\mathbb{E}^{\pi_\epsilon}[r_{\tau}({\cal D}_{\tau})]\geq 1-\epsilon$.

    Define $g(\lambda)\coloneqq \inf_{\pi,I}V_\lambda(\pi,I)=\inf_{\pi} V_\lambda(\pi)$.
    
    Now, assume that some feasible minimizer $\pi\in {\cal S}(\lambda)$ satisfies $\mathbb{E}^\pi[r_\tau({\cal D}_\tau)]\leq 1-2\epsilon$. We proceed by contradiction and show that this is not possible. First, note that
    \[
    V_\lambda(\pi_\epsilon)-g(\lambda)=\mathbb{E}^{\pi_\epsilon}[\tau]-\mathbb{E}^{\pi}[\tau] +\lambda(\mathbb{E}^{\pi}[r_\tau({\cal D}_{\tau})]-\mathbb{E}^{\pi_\epsilon}[r_\tau({\cal D}_{\tau})]).
    \]
     Observe then that  $\mathbb{E}^{\pi}[r_\tau({\cal D}_{\tau})]-\mathbb{E}^{\pi_\epsilon}[r_\tau({\cal D}_{\tau})]\leq-\epsilon<0$.
    Therefore, we obtain that whenever $\lambda > \frac{\mathbb{E}^{\pi_\epsilon}[\tau]}{\epsilon}$ we have that
    \[
       V_\lambda(\pi_\epsilon)-g(\lambda)\leq \mathbb{E}^{\pi_\epsilon}[\tau]-\lambda\epsilon<0,
    \]
    which is however a contradiction to $g(\lambda)$ being a minimum. Hence, any feasible solution $\pi\in {\cal S}(\lambda)$ must satisfy 
    $\mathbb{E}^\pi[r_\tau({\cal D}_\tau)]> 1-2\epsilon$ for $\lambda> \mathbb{E}^{\pi_\epsilon}[\tau]/\epsilon$. Since any $\pi\in {\cal S}(\lambda)$ satisfies $\mathbb{E}^\pi[\tau]<\infty$,  we have that for any fixed $\epsilon>0$ we get $\lim_{\lambda\to\infty}\inf_{\pi\in {\cal S}(\lambda)} \mathbb{E}^{\pi}[r_\tau({\cal D}_{\tau})]> 1-2\epsilon$. Since the statement holds for any $\epsilon>0$, letting $\epsilon\to 0$ yields the desired result.
\\

\noindent\underline {\it  Step 2: Monotonicity}.
Consider two feasible optimal solutions $\pi_1\in {\cal S}(\lambda_1)$ and $\pi_2\in {\cal S}(\lambda_2)$, with  $\lambda_2> \lambda_1$. We have that
    \[
    g(\lambda_2)-V_{\lambda_1}(\pi_1)\leq V_{\lambda_2}(\pi_1)-V_{\lambda_1}(\pi_1)=(\lambda_2-\lambda_1)(1-\delta-\mathbb{E}^{\pi_1}[r_{\tau_1}({\cal D}_{\tau_1})])
    \]
    
    and 
    \[
    g(\lambda_1)-V_{\lambda_2}(\pi_2)\leq V_{\lambda_1}(\pi_2)-V_{\lambda_2}(\pi_2)=(\lambda_1-\lambda_2)(1-\delta-\mathbb{E}^{\pi_2}[r_{\tau_2}({\cal D}_{\tau_2})]).
    \]
    Summing up, and using that $g(\lambda_i)=V_{\lambda_i}(\pi_i)$ we have that
    \[
    0 \leq (\lambda_2-\lambda_1)(\mathbb{E}^{\pi_2}[r_{\tau_2}({\cal D}_{\tau_2})]-\mathbb{E}^{\pi_1}[r_{\tau_1}({\cal D}_{\tau_1})]).
    \]
    Since $\lambda_2>\lambda_1$, we must have that $\mathbb{E}^{\pi_2}[r_{\tau_2}({\cal D}_{\tau_2})]-\mathbb{E}^{\pi_1}[r_{\tau_1}({\cal D}_{\tau_1})]\geq 0$. Since we chose the elements in ${\cal S}$ arbitrarily, it implies that any $\phi(\lambda)\in \Phi(\lambda)$ is non-decreasing.
\end{proof}

Lastly, to verify the correctness, we use the fact that the sub-gradient of the optimal value of the dual problem is non-decreasing.
To show this result, we employ the following proposition  from \citep{hantoute2008Characterizations} (see Prop. 3.1 therein), which characterizes the subdifferential of the supremum of a family of affine functions.

\begin{proposition}[Subdifferential of the supremum of affine functions \citep{hantoute2008Characterizations}]\label{prop:subdifferential_affine_functions}
    Given a non-empty set $\{(a_t,b_t):t\in {\cal T}\} \subset \mathbb{R}^2$, and the supremum function $f(x):\mathbb{R}\to\mathbb{R}\cup\{\infty\}$
    \[
    f(x)=\sup \{ a_tx- b_t: t\in {\cal T}\},
    \]
    for every $x\in {\rm dom}f$ we have
    \[
    \partial f(x)= \cap_{\epsilon>0} {\rm cl}\left({\rm conv}\{a_t: t\in {\cal T}_\epsilon(x)\} + B(x)\right)
    \]
    with \[ {\cal T}_\epsilon(x)\coloneqq \{t\in {\cal T}: a_tx -b_t \geq f(x)-\epsilon\},\]
    and
    \[
    B(x)\coloneqq  \left\{y\in \mathbb{R}: (y,yx) \in \left(\overline{\rm conv}\{(a_t,b_t):t\in {\cal T}\}\right)_\infty \right\},
    \]
    where $C_\infty$ is the recession cone of  a set $C$ and
      $\overline{{\rm conv}(\cdot)}$ denotes the closed convex hull of a set.
    In particular, if $x\in {\rm int}({\rm dom} f)$ we have
    \[
    \partial f(x)=\cap_{\epsilon>0} \overline{{\rm conv}}\left\{a_t: t\in {\cal T}_\epsilon(x)\right\}.
    \]
\end{proposition}

This last proposition permits us to define the subdifferential of the supremum of affine functions, and, as we see later, we can also  find a  lower bound on any subdifferential $d\in \partial f(x)$. 
\\

We are now ready to state the identifiability result.

\begin{proposition}Consider  \cref{assump:identifiability}, and, for simplicity, assume the set of optimal policies ${\cal S}(\lambda)$ is a singleton for each $\lambda$. Then, an optimal solution $(\lambda^\star,\pi_{\lambda^\star}^\star)$ satisfies
    \begin{equation}
        \boxed{
        \mathbf{P}^{\pi_{\lambda^\star}^\star}(H_M^\star=\hat H_{\tau_{\lambda^\star}^\star})\geq 1-\delta,
        }
    \end{equation}
    for any critical point $\lambda^\star\in \argmax_{\lambda\geq 0} \inf_{\pi,I} V_\lambda(\pi,I)$. 
\end{proposition}
\begin{proof}
Define $g(\lambda)\coloneqq \inf_{\pi,I}V_\lambda(\pi,I)=\inf_{\pi} V_\lambda(\pi)$. Clearly $V_\lambda$ is differentiable with respect to $\lambda$ for all $\pi$, and we have $\partial V_\lambda(\pi)/\partial\lambda=1-\delta-\mathbb{E}^\pi[r_\tau({\cal D}_\tau)]$.
\\

\noindent\underline{\it Part 1: application of \cref{prop:subdifferential_affine_functions}.} We now derive the subdifferential of $g(\lambda)$ for $\lambda>0$.
For $t=\pi\in {\cal T}$, let $a_t=-(1-\delta-\mathbb{E}^\pi[r_\tau({\cal D}_\tau)])$ and $b_t= \mathbb{E}^\pi[\tau]$. Then
\[
-g(\lambda)=\sup\{a_t\lambda -b_t:t\in {\cal T}\}.
\]
By \cref{prop:subdifferential_affine_functions} it follows that for $\lambda\in \mathbb{R}$
\[
\partial (-g(\lambda))= \cap_{\epsilon>0} \overline{{\rm conv}}\left\{a_t: t\in {\cal T}_\epsilon(\lambda)\right\},\quad {\cal T}_\epsilon(\lambda )\coloneqq \{t\in {\cal T}: a_t\lambda -b_t \geq -g(\lambda)-\epsilon\},
\]
where $\partial (-g(\lambda))$ is the  subdifferential of $-g$.

Using that ${\cal S}(\lambda)\subseteq {\cal T}_\epsilon(\lambda)$ for all $\epsilon\geq 0$, we conclude  that for any $d\in \partial(-g(\lambda))$ we have 
\[
-d \geq \inf_{\pi\in{\cal S}(\lambda)} 1-\delta-\mathbb{E}^{\pi}[r_\tau({\cal D}_\tau)]= 1-\delta-\mathbb{E}^{\pi_\lambda^\star}[r_\tau({\cal D}_\tau)].
\]
Defining $\phi(\lambda)=\mathbb{E}^{\pi_\lambda^\star}[r_\tau({\cal D}_\tau)]$, we note that $\phi(\lambda)\in \Phi(\lambda)$.
\\

Next, consider the case $\lambda=0$. From \cref{prop:subdifferential_affine_functions}, we have
\[
 B(0)\coloneqq  \left\{y\in \mathbb{R}: (y,0) \in \left(\overline{\rm conv}\{(a_t,b_t):t\in {\cal T}\}\right)_\infty \right\}.
\]
Let $C=\overline{\rm conv}\{(a_t,b_t):t\in {\cal T}\}$.
For  a nonempty closed convex set $C\subset\mathbb{R}^2$  the recession cone is defined as $C_\infty=\{y\in \mathbb{R}^2|\forall x\in C,\forall t\geq 0: x+yt \in C\}$. By contradiction, assume $(y,0) \in C_\infty$, then for any $(a,b)\in C ,t\geq 0$ we have $(a+yt,b)\in C$. However, $a_t\in [-1+\delta,\delta]$ for all $t\in {\cal T}$, bounded. Hence, there exists $t>0$ such that $a+yt\notin [-1+\delta,\delta]$, which is a contradiction. Since $y$ is arbitrary, only the $0$ element satisfies the condition, and thus  $B(0)=\{0\}$. Therefore the set of subdifferentials in $0$ is simply given by  $\partial (-g(0))=\cap_{\epsilon>0} \overline{{\rm conv}}\left\{a_t: t\in {\cal T}_\epsilon(0)\right\}$.
\\

\noindent\underline{\it Part 2: critical points.}  Define the following value:
\[
    \bar \lambda \coloneqq \inf\{\lambda\geq 0: \phi(\lambda)\geq 1-\delta\}.
    \]

By \cref{lemma:monotonicity_phi}, since $\phi(\lambda)\in \Phi(\lambda)$  we know that $\bar\lambda<\infty$. Then,
  for any $0\leq \lambda<\bar\lambda$ we have that any $d\in \partial (-g(\lambda))$ satisfies 
  \[
  -d\geq 1-\delta-\phi^+(\lambda)>0
  \]
  hence $-d>0$   for $0<\lambda<\bar\lambda$. Since $-d$ is a superdifferential (we are maximizing $g$!), any critical solution   $\lambda^\star\in \argmax g(\lambda)$ satisfies $\lambda^\star \in [\bar \lambda,\infty)$. Furthermore, such critical point exists: as $\lambda\to\infty$, the differential $-d$ becomes negative (since $\phi^+(\lambda)\to1$ by \cref{lemma:monotonicity_phi}), implying that  $g(\lambda)$ decreases. Hence, the maximum is attained in $\lambda^\star\in [0,\infty)$.

  Then, since $\lambda^\star \in [\bar \lambda,\infty)$, we have that \[ 1-\delta\leq \phi(\lambda^\star)= \mathbb{E}^{\pi_{\lambda^\star}^\star}[r_{\tau_{\lambda^\star}^\star}({\cal D}_{\tau_{\lambda^\star}^\star})]=\mathbb{E}^{\pi_{\lambda^\star}^\star}[\mathbf{P}_{\tau_{\lambda^\star}^\star}(H_M^\star=\hat H_{\tau_{\lambda^\star}^\star}|{\cal D}_{\tau_{\lambda^\star}^\star})]=\mathbf{P}^{\pi_{\lambda^\star}^\star}(H_M^\star=\hat H_{\tau_{\lambda^\star}^\star}).\]

\end{proof}

\paragraph{Training-time certification and stopping.} To obtain formal guarantees on correctness, note that sequentially testing the accuracy $\hat p$ during training, where
\[
\hat p=\frac{1}{K}\sum_{i=1}^K \mathbf{1}_{\{H_i^\star=\argmax_H I_\phi(H|{\cal D}_\tau^{(i)})\}},
\]
may not imply  $\delta$-correctness, unless we adopt the correct sequential test.
Alternatively, one can simply avoid to sequentially test the accuracy of the model, and simply stop training at a fixed number of epochs $T_E$, where $T_E$ is fixed a priori. Then, the user can test the model $(\theta_{T_E},\phi_{T_E})$ on a number of i.i.d. trajectories to evaluate  a lower bound on the accuracy of the model (e.g., through a simple Hoeffding bound).

On the other hand, if we want to stop training as soon as the model is $\delta$-correct, then we should employ a sequential testing procedure to decide when to stop.
To that aim, we need to introduce an additional confidence $\delta'\in (0,1/2)$. This value becomes the desired correctness of the method, while $\delta$ is chosen to satisfy $\delta<\delta'$, with $\delta'-\delta$ sufficiently large. The reason is simple: by forcing the model to be more accurate, it becomes easier (for the test that we use) to detect that the accuracy crossed the threshold $1-\delta'$.

We  employ the following procedure.
\begin{itemize}
    \item At epoch $t=1,2,\dots$ we evaluate  $(\theta_t,\phi_t,\lambda_t)$ on $K$ i.i.d. rollouts (independent of the training updates at epoch $t$, and sampled on $K$ different environments $M_i\sim \P$).
    \begin{itemize}
        \item  For each $n\in\{1,\dots,K\}$ let $Z_{t,n}\in\{0,1\}$ indicate whether the returned hypothesis on that rollout equals $H_i^\star$ on the $i$-th environment, and set $X_t=\frac{1}{K}\sum_{n=1}^K Z_{t,n}$ with conditional mean $p_t\coloneqq\mathbb E[Z_{t,1}\mid{\cal F}_{t-1}]$.
    \end{itemize}
    \item We adopt the rule: fix $\eta\in(0,1)$ and stop at the first epoch
\[
T\ \coloneqq\ \inf\Big\{t\ge1:\ \frac1t\sum_{s=1}^t X_s\ \ge\ (1-\delta')\ +\ \frac1t\sqrt{\,2\!\left(1+\tfrac{1-\delta'}{B}\,t\right)\ln\!\Big(\tfrac{\sqrt{1+\frac{1-\delta'}{B}t}}{\eta}\Big)}\ \Big\},
\]
then \emph{freeze} the parameters and return $(\theta_T,\phi_T,\lambda_T)$. The proposition below (an anytime bound via a mixture-martingale) guarantees
\[
\mathbb P\Big(\exists t:\ \tfrac1t\sum_{s=1}^t X_s\ \text{crosses the boundary}\ \Big|\ \sup_{t\ge1}p_t\le 1-\delta'\Big)\ \le\ \eta,
\]
so, with probability at least $1-\eta$, we only stop when the global null “$p_t\le 1-\delta'$ for all epochs” is false, i.e., there exists some $s\le T$ with $p_s>1-\delta'$. If, in addition, the epochwise performance is nondecreasing ($p_1\le p_2\le\cdots$, a property that typically arises when the method converges), then $p_T\ge 1-\delta'$, and the returned model is $\delta'$-correct with confidence $1-\eta$.
\end{itemize}
 
\begin{proposition}[Training correctness]\label{prop:freeze_at_hit_monotone}
Let $({\cal F}_t)_{t\ge1}$ be the training filtration, with ${\cal F}_t=\sigma(x_1,a_1,x_2,\dots,a_{t-1},x_t)$. For each epoch $t$ let $Z_{t,1},\dots,Z_{t,K}$ be conditionally i.i.d. ${\rm Ber}(p_t)$ given ${\cal F}_{t-1}$, with $X_t\coloneqq K^{-1}\sum_{n=1}^K Z_{t,n}$ and $p_t\coloneqq\mathbb E[Z_{t,1}|{\cal F}_{t-1}]$. For $\eta\in (0,1)$, define the stopping time
\[
T\ \coloneqq\ \inf\!\left\{t\ge1:\ \frac1t\sum_{s=1}^t X_s\ \ge\ (1-\delta')\ +\ \frac1t\sqrt{\,2\!\left(1+\tfrac{1-\delta'}{K}\,t\right)\ln\!\Big(\tfrac{\sqrt{1+\frac{1-\delta'}{K}t}}{\eta}\Big)}\ \right\}.
\]
Assume further that with probability at least $1-\xi$ there exists a (finite, ${\cal F}_t$-stopping) time $t_0$ such that 
\begin{equation}\label{eq:eventual_monotone}
p_{t+1}\ge p_t\quad\forall\,t\geq t_0, \;\hbox{ and }\; \sup_{t\geq t_0} p_t\geq 1-\delta.
\end{equation}
Then
\[
\mathbb P\big(T<\infty,\,  p_T\ge 1-\delta'\big) \geq \mathbb P(T<\infty)- (\eta+\xi).
\]
\end{proposition}
\begin{proof}
Let $\mathcal E$ be the event in \eqref{eq:eventual_monotone}. The idea is to construct an event ${\cal G}$ such that $\{T<\infty\}\cap {\cal E}\cap {\cal G} \subseteq \{T<\infty,\, p_T\geq 1-\delta'\}$.
On ${\cal E}$, define the stopping time $S=\inf\{t\geq t_0: p_t\geq 1-\delta'\}$.

We let ${\cal G}=\{T\geq S\}$. Clearly, by \cref{prop:stopping_rule_training} we have that on ${\cal E}$ the event ${\cal G}$ happens with probability at-least $1-\eta$. Therefore, on $\{T<\infty\}\cap {\cal E}\cap {\cal G} $ we have that $T<\infty$ and $p_T\geq p_S\geq 1-\delta'$, hence $\{T<\infty\}\cap {\cal E}\cap {\cal G} \subseteq \{T<\infty,\,p_T\geq 1-\delta'\}$.

Let $A=\{T<\infty\}$. Using the following decomposition of $A$ in disjoint regions
\[
A=(A\cap{\cal E}\cap{\cal G})\cup (A\cap{\cal E}\cap{\cal G}^c)\cup (A\cap{\cal E}^c),
\]
we obtain 
\begin{align*}
\mathbb P(T<\infty,\,p_T\geq 1-\delta')&\geq \Pr(\{T<\infty\}\cap {\cal E}\cap {\cal G}),\\
&=\mathbb P(\{T<\infty\})-\mathbb P(\{T<\infty\}\cap {\cal E}\cap{\cal G}^c)-\mathbb P(\{T<\infty\}\cap{\cal E}^c),\\
&\geq\mathbb P(\{T<\infty\})-\mathbb P( {\cal E}\cap{\cal G}^c)-\mathbb P({\cal E}^c),\\
&=\mathbb P(\{T<\infty\})-\eta-\xi.
\end{align*}
\end{proof}

\noindent\emph{Remark.}
Condition \eqref{eq:eventual_monotone} is one natural way to formalize “monotone convergence from some epoch $t_0$ with high probability.”   Under \eqref{eq:eventual_monotone} the first epoch $S$ with $p_S\ge 1-\delta'$ exists a.s., and the anytime validity ensures we do not stop before $S$ except with probability at most $\eta$. Hence, upon stopping, the returned snapshot is $\delta'$-correct with probability at least $1-\eta-\xi$.
We also note that the event ${\cal E}$ is  a consequence of \cref{lemma:monotonicity_phi}, from the monotonicity of $\mathbb{E}^\pi[r_\tau({\cal D}_\tau)]$ in $\lambda$.

Lastly, note that the test that we use considers the average over epochs of $X_n$. If $\delta'=\delta$,  this average may take a long time to converge to $1-\delta$, and even to cross the threshold. Hence, we practically run the algorithm with confidence $\delta$, with $\delta<\delta'$ (where $\delta'$ is the desired accuracy), so that $(1/t) \sum_n X_n$ converges to $1-\delta>1-\delta'$ (and this fact can help the test trigger earlier).

Lastly, we prove an anytime bound via a
mixture-martingale on the repeated tests on $p_t$.

\begin{proposition}\label{prop:stopping_rule_training}
For all $t\geq 1, B\in \mathbb{N}$, let $X_t=\frac{1}{B}\sum_{n=1}^B Z_{t,n}$, where, for each $t$, $(Z_{t,n})_{n=1}^B$ are conditionally i.i.d. Bernoulli random variables with mean $p_t$ given ${\cal F}_{t-1}$, where ${\cal F}_t=\sigma(x_1,a_1,x_2,\dots,a_{t-1},x_t)$.  Assume that $\sup_{t\geq 1}p_t\leq 1-\delta'$. Then, for all $\eta\in (0,1)$ we have 
\[
    \mathbb{P}\left(\exists t\geq 1: \frac{1}{t}\sum_{n=1}^tX_n\geq  (1-\delta')+\frac{1}{t}\sqrt{2\left(1+\frac{1-\delta'}{B}t\right)\ln\left(\frac{\sqrt{1+\frac{1-\delta'}{B}t}}{\eta}\right)} \right) \leq \eta .
    \]
\end{proposition}
\begin{proof}
Let $S_t=\sum_{i=1}^t X_i-p_i$.

    For any $\lambda\geq 0, \alpha>0$, let $\phi_t(\lambda) = \frac{\alpha B}{p_t(1-p_t)}\ln\mathbb{E}[e^{\frac{\lambda}{B}(Z-p_t)}|{\cal F}_{t-1}]$ be the (normalized) CGF of $Z \sim {\rm Ber}(p_t)$. Define $V_t=\frac{p_t(1-p_t)}{\alpha} $ be a measure of variance. Then, for $M_t(\lambda)=\exp\left(\lambda S_t - \sum_{i=1}^t \phi_i(\lambda) V_i\right)$ we get that 
    \begin{align*}
        \mathbb{E}\left[M_t(\lambda) |{\cal F}_{t-1}\right]&= \mathbb{E}\left[\exp\left(\sum_{i=1}^t \lambda (X_i-p_i) - \phi_i(\lambda) V_i\right) |{\cal F}_{t-1}\right],\\
&=M_{t-1}(\lambda)\mathbb{E}\left[\exp\left(\lambda (X_t -p_t) - \phi_t(\lambda) V_t\right) |{\cal F}_{t-1}\right],\\
&=M_{t-1}(\lambda)\mathbb{E}\left[\exp\left(\frac{\lambda(\sum_{n=1}^B Z_{t,n}-p_t)}{B}-B\ln\mathbb{E}[e^{\frac{\lambda}{B}(Z-p_t)}|{\cal F}_{t-1}]\right) |{\cal F}_{t-1}\right],\\
&=M_{t-1}(\lambda)\frac{\mathbb{E}\left[\exp\left(\frac{\lambda(\sum_{n=1}^B Z_{t,n}-p_t)}{B} \right)|{\cal F}_{t-1}\right]}{\mathbb{E}[\exp\left(\frac{\lambda}{B}(Z-p_t)\right)|{\cal F}_{t-1}]^B} ,\\
&=M_{t-1}(\lambda)\frac{\mathbb{E}\left[\exp\left(\frac{\lambda}{B} (Z -p_t)\right)|{\cal F}_{t-1}\right]^B}{\mathbb{E}[\exp\left(\frac{\lambda}{B}(Z-p_t)\right)|{\cal F}_{t-1}]^B}=M_{t-1}(\lambda).
    \end{align*}

    Since $M_t\geq 0, \lambda \geq 0$, we have that $M_t(\lambda)$ is a non-negative martingale (hence, also a super-martingale).

We use the method of mixtures to integrate $M_t(\lambda)$ over a prior over $\lambda$.  To do so, we need to find an appropriate lower  bound on $M_t$.
Consider then $\phi_t(\lambda)$: we can use the fat that  $\phi_t(\lambda)\leq \alpha \lambda^2/(2B)$ from the sub-gaussianity of $Z$. 
Then,  choose a prior $\pi({\rm d}\lambda)= \sqrt{2/\pi} e^{-\lambda^2/2} {\rm d}\lambda$ (a half normal). We obtain
\begin{align*}
\int_0^{\infty} M_t(\lambda) \,\pi({\rm d}\lambda)&=\int_0^{\infty} \exp\left(\lambda S_t  -\sum_{i=1}^t \phi_i(\lambda) V_i\right)\,\pi({\rm d}\lambda),\\
&\geq \int_0^{\infty} \exp\left(\lambda S_t  -\sum_{i=1}^t \frac{\alpha \lambda ^2}{2B} \frac{1-\delta'}{\alpha}\right)\,\pi({\rm d}\lambda), \tag{$V_i\leq p_t/\alpha \leq (1-\delta')/\alpha$}\\
&= \sqrt{2/\pi}\int_0^{\infty } \exp\left(\lambda S_t  - \frac{\lambda^2}{2}\left[1+\frac{(1-\delta')}{B} t\right]\right){\rm d}\lambda.
\end{align*}
Since the Gaussian integral satisfies
\[
\int_0^\infty e^{-a\lambda^2 + b\lambda}\,{\rm d}\lambda=e^{\frac{b^2}{4a}}\int_0^\infty e^{-a\left(\lambda-\frac{b}{2a}\right)^2}\,{\rm d}\lambda\geq e^{\frac{b^2}{4a}}\int_{0}^\infty e^{-ax^2}\,{\rm d}x = e^{\frac{b^2}{4a}}\frac{1}{2} \sqrt{\frac{\pi}{a}},
\]
 for $v_t=(1-\delta')t /B$ we can lower bound the integral over $M_t(\lambda)$ as
\[
\int_0^{\infty} M_t(\lambda) \,\pi({\rm d}\lambda)\geq  \frac{1}{\sqrt{1+v_t}}e^{\frac{S_t^2}{2(1+v_t)}}.
\]

Therefore, by Ville's inequality we obtain
    \begin{align*}
    \mathbb{P}\left(\exists t\geq 1: \int_0^\infty M_t(\lambda) \,\pi({\rm d}\lambda)\geq \frac{1}{\eta}\right) 
    \leq \eta.
    \end{align*}

    Therefore, with probability $1-\eta$ for all $t\geq 1$ we have
    \[
    \frac{1}{\sqrt{1+v_t}}e^{\frac{S_t^2}{2(1+v_t)}} < \frac{1}{\eta} \Rightarrow S_t < \sqrt{2(1+v_t)\ln\left(\frac{\sqrt{1+v_t}}{\eta}\right)}
    \]
    Since $S_t\geq \sum_{i=1}^t X_i -(1-\delta')$, we  obtain the desired result.
    \end{proof}

\newpage
\subsection{Meta-training: Finite Sample Analysis}\label{app:meta_training:finite_sample_analysis}

\begin{algorithm}[H]
\caption{Finite-budget idealized  ICPE training}
\textbf{Inputs:} Value function space ${\cal F}$, posterior rewards $ r_N$, reference measure $\mu$.\\
\textbf{Init:} choose $Q^{(0)}\in\mathcal F$; for $k\ge0$ set $\pi^{(k+1)}=\mathcal G(Q^{(k)})$ with $\pi_t^{(k+1)}(z)\in\arg\max_{a}Q_t^{(k)}(z,a)$.
\begin{algorithmic}[1]
\For{$k=0,\dots$}
   \State \textbf{Sampling:} draw a batch $B_k=\{(z^{(i)},a^{(i)},t^{(i)})\}_{i=1}^B$ i.i.d.\ from $\mu$.
  \For{each $(z,a,t)\in B_k$}
      \State sample $x'\sim \bar P_t(\cdot\mid z,a)$ and let $z'=(z,a,x')$.
      \State set targets:
      \[
      \hat Q_{t}^{(k+1)}(z,a)\gets
      \begin{cases}
      Q_{t+1}^{(k)}(z',\,\pi_{t+1}^{(k+1)}(z')\big), & t<N,\\
       r_N(z'), & t=N.
      \end{cases}
      \]
   \EndFor
  
    \State \textbf{Regression:} for each $t=1,\dots,N$, fit
   \[
   Q_t^{(k+1)}\in\arg\min_{Q\in{\cal F}_t} \widehat{\mathcal L}_{t}(Q,\hat Q^{(k+1)}; B_{k,t}).
   \]
\EndFor
\end{algorithmic}
\end{algorithm}

We work in the Bayes/history MDP induced by the prior over environments. Let $\{\mathcal Z_t\}_{t=1}^N$ be the history spaces (as in ICPE), with $z\in\mathcal Z_t$ encoding the full trajectory prefix up to stage $t$. The terminal (posterior) reward is
\[
r_N(z)=\max_{H\in\mathcal H}\, \mathbb P_N\big(H^\star=H|{\cal D}_N=z),
\]
with $z\in {\cal Z}_N$.

\paragraph{Reference sampling law.}
Let $\mu$ be a probability distribution on triples $(z,a,t)\in\bigcup_{t=1}^N(\mathcal Z_t\times\mathcal A\times\{t\})$, with stage marginals $\mu_t$ on $\mathcal Z_t\times\mathcal A$.
During training, all regression samples are drawn i.i.d. from $\mu$: this measure represents sampling from idealized replay buffer. 
\\

The next sample is then sampled according to $x'\sim\bar P_t(\cdot\mid z,a)$, where 
\begin{equation}
\bar P_t(x'\in X|z_t,a) \coloneqq  \int_{{\cal M}^\sharp} P_t(X|z_t,a) \, R_t({\rm d} M|z_t),
\end{equation}
so that the next history is $z_{t+1}=(z_t,a,x')$.

\paragraph{Function class and Stage-wise Bellman operators.}
We let $\mathcal F\subset\prod_{t=1}^N\{\mathcal Z_t\times\mathcal A\to[0,1]\}$ be the $Q$-function class.
For $Q=(Q_t)_{t=1}^N\in\mathcal F$, define the stagewise greedy policy
\[
\pi_{t}= {\cal G}(Q_t)\in\arg\max_{a\in\mathcal A} Q_t(\cdot,a)\qquad(t=1,\dots,N).
\]

For a nonstationary policy $\pi=(\pi_1,\dots,\pi_N)$ and a $Q$-array $Q=(Q_1,\dots,Q_N)$, define for $t=1,\dots,N, z\in {\cal Z}_t$ the operator
\[
\left[ \Gamma_t^\pi Q\right](z,a)\coloneqq \mathbb E_{x'\sim\bar P_t(\cdot| z,a)}\left[Q_{t+1}\left(z', \pi_{t+1}(z')\right)\right],\qquad 
z'=(z,a,x').
\]
At the last stage, with terminal posterior reward $ r_N$ 
\[
\left[ \Gamma_N^\pi Q\right](z,a)\coloneqq   r_N\left(z\right), \quad\forall a.
\]
We also define the optimal operator 
\[ \Gamma _t^\star Q(z,a)\coloneqq \mathbb E\left[\mathbf 1_{\{t<N\}}\max_{a'} Q_{t+1}(z',a')+\mathbf 1_{\{t=N\}} r_N(z)\right].
\]
In the following we write $ \Gamma ^\pi Q=( \Gamma _1^\pi Q,\dots, \Gamma _N^\pi Q)$ and similarly for $ \Gamma ^\star,{\cal G}$.

Given a policy $\pi$ we also indicate by $Q_t^\pi(z,a)$ the true  $Q$-value of $\pi$ at $(z,a,t)$. Similarly,  we define the value as $V_t^\pi(z)=Q_t^\pi(z,\pi_t(z))$. We similarly define the optimal value $V_t^\star$.

\paragraph{Concentrability (w.r.t.\ $\mu$).}
Let $\nu_t^{\pi}$ be the occupancy measure on $(z_t)$ at stage $t$ under policy $\pi$, when the initial history is sampled from the prior-induced initial distribution $\rho$ (where $\rho$ is the initial observation distribution in $M$), that is
\[
\nu_t^\pi(\cdot)\coloneqq  \mathbb E_{M\sim\P}^\pi\left[ \rho P_{1}\cdots P_{t-1}(\cdot) \right].
\]
with $\nu_1(\cdot)=\mathbb{E}_{M\sim \P}[\rho(\cdot)]$.
\begin{assumption}
    For all $t=1,\dots, N$ we assume that $\nu_t^\pi \ll \mu_t^Z$, where $\mu_t^Z$ is the marginal of $\mu$ on ${\cal Z}_t$.
\end{assumption}

Define then the concentrability coefficients
\[
c_\infty(t)\coloneqq \sup_{\pi}\left\|\frac{\mathrm d\nu_t^{\pi}}{\mathrm d\mu_t^Z}\right\|_\infty.
\]
Recall also assumption \ref{assump:domination_param}, which states  that there exist probability measures $\lambda_0,\lambda$ on $({\cal X},{\cal B}({\cal X}))$ such that, for all $(\rho,P)\in{\cal M}^\sharp$, $s\in\mathbb N$, and $(z,a)\in{\cal Z}_s\times{\cal A}$,
\[
\rho(\cdot)\ll \lambda_0(\cdot)
\quad\text{and}\quad
P_s(\cdot|z,a)\ll \lambda(\cdot).
\]
We make the following additional assumption.
\begin{assumption}
For all $(\rho,P)\in M^\sharp, s\in \mathbb{N}$ and $(z,a)\in {\cal Z}_s\times A$ we assume that ${\rm d}\rho_M(\cdot)/{\rm d}\lambda_0 (\cdot)$ and ${\rm d}P_s(\cdot|z,a)/{\rm d}\lambda(\cdot)$ are upper semicontinuous.
\end{assumption}
Hence, by compactness and upper semicontinuity there exist $L_0,L_1$ such that 
\[
\sup_{\rho\in M^\sharp}\sup_{x\in {\cal X}} \frac{\mathrm d\rho}{\mathrm d\lambda_0}(x) \leq L_0,\quad \max_{t=1,\dots,N}\sup_{P\in M^\sharp}\sup_{x\in {\cal X}, z\in {\cal Z}_t, a\in {\cal A}} \frac{\mathrm dP_t(\cdot|z,a)}{\mathrm d\lambda}(x) \leq L_1.
 \]
Consequently, one can bound $c_\infty$ as follows
\[
c_\infty(t) \leq L_0L_1^{t}.
\]

\paragraph{Function class and losses.}
Let $\mathcal F_t$ be a hypothesis class for $Q_t$. We indicate by $B_k\subset(\mathcal Z\times\mathcal A\times[N])$ a batch of samples, and by $B_{k,t}=\{(z,a,s)\in B_k: s=t\}$. Hence, for a batch $B_{k}$ with targets $\hat Q^{(k+1)}$, define the empirical squared loss
\[
\widehat{\mathcal L}_t(Q, \hat Q^{(k+1)}; B_{k,t})\coloneqq \frac{1}{| B_{k,t}|}\sum_{(z_t,a)\in B_{k,t}}\left(Q_t(z_t,a)- \hat Q_t^{(k+1)}(z_t,a)\right)^2,
\]
and the Monte Carlo targets are
\[
\hat Q_{t}^{(k+1)}(z,a)=
\begin{cases}
Q_{t+1}^{(k)}(z',\pi_{t+1}^{(k+1)}(z')), & t<N,\\
 r_N(z'), & t=N,
\end{cases}
\quad z'=(z,a,x'),\;\; x'\sim\bar P_t(\cdot| z,a).
\]
with $\pi^{(k+1)}=  {\cal G}(Q^{(k)})$.
We also define  the true loss 
\[
 {\mathcal L}(Q^{(k)}, Q^{(k-1)})\coloneqq  \mathbb E_{(z,a,t)\sim\mu}\left[ \left( \Gamma_t^{\pi^{(k)}}Q^{(k-1)}(z,a)-Q_t^{(k)}(z,a)\right)^2\right],
\]
and $ {\mathcal L}_{t}(Q^{(k)}, Q^{(k-1)})\coloneqq  \mathbb E_{(z,a)\sim\mu_t}\left[ \left( \Gamma_t^{\pi^{(k)}}Q^{(k-1)}(z,a)-Q_t^{(k)}(z,a)\right)^2\right]$. In the following, for simplicity, we also write ${\cal L}_{k,t}\coloneqq {\mathcal L}_{t}(Q^{(k)}, Q^{(k-1)}).$
\\

In each epoch $k$ a regression problem is solved, where the training set $\{(z^{(i)}, a^{(i)}, t^{(i)}, \hat Q^{(k+1)})\}$ and $\hat Q_{t^{(i)}}^{(k+1)}(z^{(i)}, a^{(i)})$ is an unbiased estimate of the target defined by $ \Gamma_t Q$.

\subsubsection{Main results} The main results are the following ones.

\paragraph{Error propagation.}
We first obtain a result on the error propagation that bounds the sub-optimality of the policy at training epoch $k$. This result holds for a general function space ${\cal F}=({\cal F}_t)_{t=1}^N$. In the following, we denote the overlal value of a policy $\pi$ by $J(\pi)=\mathbb{E}_{\P}^\pi[r_N(z_N)]$ and define $\pi^\star \in \argsup_\pi J(\pi)$. 

\begin{theorem}[Sub-optimality of policy $\pi^{(k)}$]\label{thm:value_bound}
Let $J(\pi)=\mathbb{E}_{\P}^\pi[r_N(z_N)]$ and $\pi^\star \in \argsup_\pi J(\pi)$.
For $k\geq N+1$,
    we have that
\[
|J(\pi^\star)-J(\pi^{(k)})|\leq \|w\|_2 \left[\sqrt{S_{k-1}^{(1,N)}}+  2\sqrt{(N+1) \sum_{u=k-N}^k S_u^{(2,N)}}+\sqrt{D_k^{(1,N)}}\right]
\]
where $w=(w_u)_{u=1}^N$ is the vector of concentrability coefficients, with  $w_u\coloneqq c_\infty(u)\kappa_u$; $S_m^{(a,b)}= \sum_{u=a}^b {{\cal L}_{m,u}}$ is the sum of losses for epoch $m$ along the timesteps $(a,a+1,\dots,b)$; $D_m^{(a,b)}=\sum_{u=a}^b {\cal L}_{m-u,u}$ is  the diagonal sum of losses.
\end{theorem}

\paragraph{Finite-sample performance bound.}
We now show how the losses that appear in the previous result can be bounded to derive a  finite-sample performance bounds.

To approximate the target, for each $t=1,\dots, N$ we consider a linear function space ${\cal F}_t$ of dimension $d_t$ with bounded basis function $\{\varphi_{t,i}\}_{i=1}^{d_t}$ $\|\varphi_{t,i}\|_\infty \leq C_b$. For each $t$ we consider a linear family with parameter $\alpha_t\in \mathbb{R}^{d_t}$  and features $\phi_t:{\cal Z}_t\times {\cal A}\to\mathbb{R}^{d_t}$, thus ${\cal F}_t=\{(z,a)\mapsto \phi_t(z,a)^\top \alpha_t: \alpha_t\in \mathbb{R}^{d_t}\}$.

At epoch  $k$  regression returns a linear predictor  $\tilde Q_t^{(k)}$. We then define the  $Q$-function used by the algorithm as the truncation $Q_t^{(k)}=\mathbb{T}(\tilde Q_t^{(k)})$.
In the analysis,  $Q_t^{(k)}$
 always denotes this truncated version.
\begin{theorem}[Fixed-budget finite-sample training error]\label{thm:fixed_budget_finite_sample_analysis}
Fix $\delta\in(0,1)$ and choose $\delta'=\delta/(4kN)$. Suppose (i) the features are bounded, $\sup_{z,a}\|\phi_t(z,a)\|_2\le C_b$; (ii) concentrability holds with coefficients $c_\infty(t)$ and $\kappa_t$; and (iii) the batch size satisfies
\[
B\geq \frac{2}{p_{\min}\eta^2}\log\frac{4kN}{\delta}.
\]
for some $\eta\in (0,1)$.
Then, for $k\geq N+1$, with probability at least $1-\delta$,
\[
|J(\pi^\star)-J(\pi^{(k)})|\leq O\left(  NC _0 \left[\sqrt{\sum_{t=1}^N\beta_t^2}+ \sqrt{\sum_{t=1}^N\frac{ d_t}{(1-\eta)p_{\rm min}B} \log\frac{4kN}{\delta}}\right]\right),
\]
where $\beta_t=\sup_{Q\in\mathcal F,\pi}\inf_{f\in\mathcal F_t}\|\Gamma_t^\pi Q-f\|_{\mu_t}$ and $C_0=\big(\sum_{t=1}^N c_\infty(t)^2\kappa_t^2\big)^{1/2}$, and $p_{\rm min}=\min_t\mu(t)$, where $\mu(t)$ is the marginal over timesteps of the buffer distribution.
\end{theorem}
\paragraph{Intuition.} \cref{thm:value_bound} shows that the performance gap $J(\pi^\star)-J(\pi^{(k)})$ is controlled by how well each  step approximates the Bellman update: the terms $S_{k-1}^{(1,N)}$, $\sum_{u=k-N}^k S_u^{(2,N)}$, and $D_k^{(1,N)}$ aggregate the single–step squared Bellman residuals ${\cal L}_{k,t}$ across time and across a window of epochs, and the concentrability vector $w=(c_\infty(t)\cdot\kappa_t)_{t=1}^N$ measures how much these local errors can be amplified when propagated along the trajectory distribution. The finite-sample bound in \cref{thm:fixed_budget_finite_sample_analysis} then replaces these abstract residuals with explicit statistical quantities: each ${\cal L}_{k,t}$ is bounded by an \emph{approximation} term $\beta_t$ (how well the function class can represent an exact  update) plus an \emph{estimation} term that decays as $\sqrt{d_t/((1-\eta)p_{\min}B)}$. In other words, the final rate cleanly separates an approximation error, captured by $\sqrt{\sum_t \beta_t^2}$, from a sample error, captured by $\sqrt{\sum_t d_t/((1-\eta)p_{\min}B)}$, and both are scaled by the horizon $N$ and the concentrability constant $C_0$, which quantify how errors accumulate along the history MDP.
\subsubsection{Convergence Analysis: Proof of \cref{thm:value_bound}}
To prove \cref{thm:value_bound}, we follow an analysis similar to the one in  \citep{scherrer2012approximate}. However, note that their setting is quite different from ours:  we do not have the classical discounted Bellman operator, and as a consequence the proofs are different.

We begin by defining the following key quantities :
\begin{enumerate}
    \item At iteration $k$ we indicate by $\pi_t^{(k)}={\cal G}(Q_t^{(k-1)})$ the greedy policy.
    \item The one-step evaluation $Q_t^{(k)}=\Gamma_t^{\pi^{(k)}} Q^{(k-1)}+\epsilon_t^{(k)}$,  and $\epsilon_t^{(k)}$ is the regression error and $Q_t^{(k)}$ is computed according to $ Q_t^{(k)}\in\arg\min_{Q\in{\cal F}_t} \widehat{\mathcal L}_{t}(Q,\hat Q^{(k)})$ for all $t=1,\dots, N$. We also write 
    $ V_t^{(k)}(z)=[\Gamma_t^{\pi^{(k)}} Q^{(k-1)}](z,\pi_t^{(k)}(z))$.
    \item We define $ V_t^{\pi^{(k)}}$ to be the value of $\pi^{(k)}$ under $ \Gamma $, that is, the true value of $\pi^{(k)}$ with rewards $ r$. Similarly, we define $ Q_{t}^{\pi^{(k)}}$ to be the $Q$-value.
    \item The Bellman residual w.r.t. the next greedy policy: $b_t^{(k)}=Q_t^{(k)}- \Gamma_t^{\pi^{(k+1)}} Q^{(k)}$
    \item The performance gap $\ell_t^{(k)}=V_t^\star-  V_t^{\pi^{(k)}}\geq 0$. 
    \item Distance before approximation: $d_t^{(k)}=V_t^\star-V_t^{(k)}$.
    \item The shift: $s_t^{(k)}=V_t^{(k)}-  V_t^{\pi^{(k)}}$.
\end{enumerate}
Therefore $\ell_t^{(k)}= s_t^{(k)}+d_t^{(k)}$: this is the quantity we wish to bound for $t=1$. The proof of \cref{thm:value_bound} is based on bounding $s_t$ and $d_t$ separately. We begin by proving a lemma that we use repeatedly in all of the proofs.

\begin{lemma}\label{lemma:bound_epsilon}
    Let $\kappa_t\coloneqq \sqrt{\operatorname{ess sup}_z \max_a \frac{1}{\mu_t(a|z)}}$. Let $\mu_t^Z$ be the marginal of $\mu_t$ on ${\cal Z}_t$. Then, for any $t$, measurable function $f_t:{\cal Z}_t\to {\cal A}$,  we have that
    \[
     \mathbb{E}_{z\sim\mu_t^Z}[|\epsilon_{t}^{(k)}(z,f_t(z))|] \leq \kappa_t \sqrt{{\mathcal L}_{t}(Q^{(k)}, Q^{(k-1)})}.
    \]
\end{lemma}
\begin{proof}
    Consider $|\epsilon_{t}^{(k)}(z,f_t(z))|$, then
    \begin{align*}
    \mathbb{E}_{z\sim\mu_t^Z}[|\epsilon_{t}^{(k)}(z,f_t(z))|]&=\mathbb{E}_{\mu_t^Z}\left[\sum_a \mathbf{1}_{\{f_t(z)=a\}}|\epsilon_{t}^{(k)}(z,a)|\right],\\
    &=\mathbb{E}_{z\sim\mu_t^Z}\left[\sum_a \sqrt{\frac{\mu_t(a|z)}{\mu_t(a|z)}}\mathbf{1}_{\{f_t(z)=a\}}|\epsilon_{t}^{(k)}(z,a)|\right],\\
    &\leq 
    \sqrt{\mathbb{E}_{z\sim\mu_t^Z}\left[\sum_a \frac{\mathbf{1}_{\{f_t(z)=a\}}}{\mu_t(a|z)}\right]\mathbb{E}_{z\sim\mu_t^Z}\left[\sum_a|\epsilon_{t}^{(k)}(z,a)|^2\mu_t(a|z)\right]} \tag{Cauchy-Schwartz},\\
    &\leq 
    \sqrt{\mathbb{E}_{z\sim\mu_t^Z}\left[ \frac{1}{\mu_t(f_t(z)|z)}\right]\mathbb{E}_{(z,a)\sim \mu_t}\left[|\epsilon_{t}^{(k)}(z,a)|^2\right]},\\
    &\leq 
    \kappa_t \sqrt{ {\mathcal L}_{t}(Q^{(k)}, Q^{(k-1)})} \tag{by definition }.
    \end{align*}
\end{proof}

We now have the bound on $d_t^{(k+1)}$.
\begin{lemma}\label{lemma:bound_d}
    For $t=1,\dots, N$, and all $k\geq 1$ we have that
    \begin{align*}
    \mathbb{E}_{z\sim \nu_t^{\pi^{(k+1)}}}\left[d_t^{(k+1)}(z)\right]&\leq \mathbb{E}_{z\sim \nu_t^{\pi^{(k+1)}}}\left[b_t^{(k)}(z,\pi_t^{(k+1)}(z))\right]+\sum_{j=0}^{N-t}c_\infty(t+j)\kappa_{t+j}\sqrt{{\cal L}_{k-j,t+j}}.
    \end{align*}
   
\end{lemma}
\begin{proof}
    Consider $ d_N^{(k)}(z)= V_N^\star(z)- V_N^{(k)}(z)=r_N(z)- r_N(z)=0$. Then $d_N^{(k)}(z)=0$ for all $z\in {\cal Z}_N$.

    For $t<N$ we have
    \begin{align*}
         d_t^{(k+1)}(z)&= V_t^\star(z)- [\Gamma_t^{\pi^{(k+1)}} Q^{(k)}](z,\pi_t^{(k+1)}(z)),\\
         &= \max_a Q_t^\star(z,a)- [\Gamma_t^{\pi^{(k+1)}} Q^{(k)}](z,\pi_t^{(k+1)}(z)),\\
          &= \max_a Q_t^\star(z,a)- [\Gamma_t^{\pi^{(k+1)}} Q^{(k)}](z,\pi_t^{(k+1)}(z))\pm Q_t^{(k)}(z,\pi_t^{(k+1)}(z)),\\
          &= \max_a Q_t^\star(z,a)-Q_t^{(k)}(z,\pi_t^{(k+1)}(z))+ Q_t^{(k)}(z,\pi_t^{(k+1)}(z))- [\Gamma_t^{\pi^{(k+1)}} Q^{(k)}](z,\pi_t^{(k+1)}(z)),\\
          &= \max_a Q_t^\star(z,a)-Q_t^{(k)}(z,\pi_t^{(k+1)}(z))+ b_t^{(k)}(z,\pi_t^{(k+1)}(z)),\\
          &\leq  \max_a [Q_t^\star(z,a)-Q_t^{(k)}(z,a)]+ b_t^{(k)}(z,\pi_t^{(k+1)}(z)),
    \end{align*}
    where the last step follows from the greediness of $\pi_t^{(k+1)}$ w.r.t. $Q_t^{(k)}$.
    Define $\Delta_t^{(k)}(z,a)=Q_t^\star(z,a)-Q_t^{(k)}(z,a)$ and $\Delta_t^{(k)}(z)=\max_a \Delta_t^{(k)}(z,a)$. Then
    \[
    d_t^{(k+1)}(z) \leq  \Delta_t^{(k)}(z) +  b_t^{(k)}(z,\pi_t^{(k+1)}(z)).
    \]
    We are now tasked with bounding $\Delta_t^{(k)}$. To that aim, observe that $Q_t^\star= \Gamma^\star_t Q^\star$, thus
    \begin{align*}
        \Delta_t^{(k)}(z,a)&=[\Gamma^\star_t Q^\star](z,a)- [\Gamma_t^{\pi^{(k)}}Q^{(k-1)}](z,a)-\epsilon_t^{(k)}(z,a),\\
        &=\mathbb{E}_{z'\sim \bar P(\cdot|z,a)}[V_{t+1}^\star(z')-Q_{t+1}^{(k-1)}(z',\pi_{t+1}^{(k)}(z'))]- \epsilon_t^{(k)}(z,a),\\
         &=\mathbb{E}_{z'\sim \bar P(\cdot|z,a)}[\max_{a'} Q_{t+1}^\star(z',a')-Q_{t+1}^{(k-1)}(z',\pi_{t+1}^{(k)}(z'))]- \epsilon_t^{(k)}(z,a),\\
          &\leq \mathbb{E}_{z'\sim \bar P(\cdot|z,a)}\left[\Delta_{t+1}^{(k-1)}(z')\right]- \epsilon_t^{(k)}(z,a).\tag{similarly to above}
    \end{align*}
Therefore, we have that
\[\Delta_t^{(k)}(z) \leq \max_a \mathbb{E}_{z'\sim \bar P(\cdot|z,a)}\left[\Delta_{t+1}^{(k-1)}(z')\right]+ \max_a|\epsilon_t^{(k)}(z,a)|,
\]
from which we can recursively show that 
\[
\mathbb{E}_{z\sim \nu_t^{\pi^{(k+1)}}}\left[\Delta_t^{(k)}(z)\right]\leq \sum_{j=0}^{N-t}c_\infty(t+j)\kappa_{t+j}\sqrt{{\cal L}_{k-j,t+j}}
\]
using \cref{lemma:bound_epsilon}.

\end{proof}

We now have the bound on $s_t^{(k)}$.
\begin{lemma}\label{lemma:bound_s}
    For all $t=1,\dots, N$ and $k$
 we have that
 \[
     \mathbb{E}_{z \sim \nu_t^{\pi^{(k)}}}[s_t^{(k)}(z)]  = \sum_{j=1}^{N-t} \mathbb{E}_{z' \sim \nu_{t+j}^{\pi^{(k)}}}\left[b_{t+j}^{(k-1)}(z',\pi_{t+j}^{(k)}(z'))\right ].
 \]
 \end{lemma}
\begin{proof}
First, note that $ s_N^{(k)}(z)=0$ . Then, for $t<N$ we have
    \begin{align*}
        s_t^{(k)}(z)&= V_t^{(k)}(z)- V_t^{\pi^{(k)}}(z),\\
        &=[ \Gamma_t^{\pi^{(k)}} Q^{(k-1)}](z,\pi_t^{(k)}(z))-  Q_{t}^{\pi^{(k)}}(z,\pi_t^{(k)}(z)),\\
        &= \mathbb{E}_{x'|z,\pi_t^{(k)}(z)}\left[Q_{t+1}^{(k-1)}(z',\pi_{t+1}^{(k)}(z')) -  V_{t+1}^{\pi^{(k)}}(z')\, \Big|\, z'=(z,\pi_t^{(k)}(z),x')\right ],\\
        &= \mathbb{E}_{x'|z,\pi_t^{(k)}(z)}\left[b_{t+1}^{(k-1)}(z',\pi_{t+1}^{(k)}(z'))+[ \Gamma_{t+1}^{\pi^{(k)}}Q^{(k-1)}](z',\pi_{t+1}^{(k)}(z')) -  V_{t+1}^{\pi^{(k)}}(z')\, \Big|\, z'=(z,\pi_t^{(k)}(z),x')\right ],\\
        &= \mathbb{E}_{x'|z,\pi_t^{(k)}(z)}\left[b_{t+1}^{(k-1)}(z',\pi_{t+1}^{(k)}(z'))+V_{t+1}^{(k)}(z') -  V_{t+1}^{\pi^{(k)}}(z')\, \Big|\, z'=(z,\pi_t^{(k)}(z),x')\right ],\\
        &= \mathbb{E}_{x'|z,\pi_t^{(k)}(z)}\left[b_{t+1}^{(k-1)}(z',\pi_{t+1}^{(k)}(z'))+s_{t+1}^{(k)}(z') \, \Big|\, z'=(z,\pi_t^{(k)}(z),x')\right ],\\
        &= \sum_{j=1}^{N-t} \mathbb{E}\left[b_{t+j}^{(k-1)}(z',\pi_{t+j}^{(k)}(z'))\, \Big|\, z_t=z,\hbox{ then follow } \pi^{(k)}\right ].
    \end{align*}
    Therefore
    \[
    \mathbb{E}_{z \sim \nu_t^{\pi^{(k)}}}[s_t^{(k)}(z)]  = \sum_{j=1}^{N-t} \mathbb{E}_{z' \sim \nu_{t+j}^{\pi^{(k)}}}\left[b_{t+j}^{(k-1)}(z',\pi_{t+j}^{(k)}(z'))\right ].
    \]
\end{proof}

\begin{lemma}\label{lemma:bound_b}
    For all $t=1,\dots, N,\forall a\in{\cal A}$ and epochs $k\geq N$ we have that
    \begin{align*}
        \mathbb{E}_{z\sim \nu_t^{\pi^{(k)}}}&\left[b_t^{(k)}(z,a)\,\Big |\,  \pi^{(k)},\dots, \pi^{(k-(N-t)+1)}\right ]\\
        &\qquad\qquad\leq c_\infty(t)\kappa_t \sqrt{{\cal L}_{k,t}}+c_\infty(N)\kappa_N\left[\sqrt{{\cal L}_{k-(N-t), N}}+\sqrt{{\cal L}_{k-(N-t-1), N}}\right]\\
        &\qquad\qquad\quad+\sum_{j=1}^{N-t-1} c_\infty(t+j) \kappa_{t+j} \left[\sqrt{
        {\cal L}_{k-j, t+j}}+\sqrt{
        {\cal L}_{k-j+1, t+j}}\right].
    \end{align*}
\end{lemma}
\begin{proof}
    \noindent\emph{(One-step recursion).} For $t<N$ write
    \begin{align*}
     b_t^{(k)}&=Q_t^{(k)}- \Gamma_t^{\pi^{(k+1)}} Q^{(k)},\\
     &=  \Gamma_t^{\pi^{(k)}} Q^{(k-1)} - \Gamma_t^{\pi^{(k+1)}} Q^{(k)}+\epsilon_t^{(k)}.
    \end{align*}
    Use the definition of $ \Gamma_t^\pi$, we have that at time $t=N$ we get $b_N^{(k)}=\epsilon_{N}^{(k)}$. For $t<N$ we get
    \begin{align*}
    b_{t}^{(k)}&(z,a)= \epsilon_t^{(k)}(z,a)+\mathbb E_{x'\sim\bar P_t(\cdot| z,a)}\left[Q_{t+1}^{(k-1)}\left(z', \pi_{t+1}^{(k)}(z')\right)-Q_{t+1}^{(k)}\left(z', \pi_{t+1}^{(k+1)}(z')\right) \,\Big |\, z'=(z,a,x')\right],\\
    =&\epsilon_t^{(k)}(z,a)+ \mathbb E_{x'\sim\bar P_t(\cdot| z,a)}\left[Q_{t+1}^{(k-1)}\left(z', \pi_{t+1}^{(k)}(z')\right)-Q_{t+1}^{(k)}\left(z', \pi_{t+1}^{(k+1)}(z')\right) \pm Q_{t+1}^{(k)}(z',\pi_{t+1}^{(k)}(z'))\,\Big |\, z'=(z,a,x')\right],\\
    =&\epsilon_t^{(k)}(z,a) +\mathbb E_{x'\sim\bar P_t(\cdot| z,a)}\Big[Q_{t+1}^{(k-1)}\left(z', \pi_{t+1}^{(k)}(z')\right)-Q_{t+1}^{(k)}(z',\pi_{t+1}^{(k)}(z'))\\
    &\qquad\qquad\qquad\quad +Q_{t+1}^{(k)}(z',\pi_{t+1}^{(k)}(z'))-Q_{t+1}^{(k)}\left(z', \pi_{t+1}^{(k+1)}(z')\right)\,\Big |\, z'=(z,a,x')\Big],\\
    &\leq \epsilon_t^{(k)}(z,a) +\mathbb E_{x'\sim\bar P_t(\cdot| z,a)}\left[Q_{t+1}^{(k-1)}\left(z', \pi_{t+1}^{(k)}(z')\right)-Q_{t+1}^{(k)}(z',\pi_{t+1}^{(k)}(z'))\,\Big |\, z'=(z,a,x')\right],
    \end{align*}
    where in the last inequality, we used that  $Q_{t+1}^{(k)}(z',\pi_{t+1}^{(k+1)}(z'))\geq Q_{t+1}^{(k)}(z',\pi_{t+1}^{(k)}(z'))$ (since $\pi^{(k+1)}={\cal G}(Q^{(k)})$. 
    Now, using the definition $ b_{t+1}^{(k)}=Q_{t+1}^{(k)}-\Gamma_{t+1}^{\pi^{(k+1)}} Q^{(k)}$, we continue with $Q_{t+1}^{(k)}= \Gamma_{t+1}^{\pi^{(k)}} Q^{(k-1)}+\epsilon_{t+1}^{(k)}$
    \begin{align*}
   &=  \epsilon_t^{(k)}(z,a) +\mathbb E_{x'\sim\bar P_t(\cdot| z,a)}\left[Q_{t+1}^{(k-1)}\left(z', \pi_{t+1}^{(k)}(z')\right)- [ \Gamma_{t+1}^{\pi^{(k)}} Q^{(k-1)} +\epsilon_{t+1}^{(k)}](z',\pi_{t+1}^{(k)}(z'))\,\Big |\, z'=(z,a,x')\right],\\
   &=\epsilon_t^{(k)}(z,a) +\mathbb E_{x'\sim\bar P_t(\cdot| z,a)}\left[b_{t+1}^{(k-1)}\left(z', \pi_{t+1}^{(k)}(z')\right)- \epsilon_{t+1}^{(k)}(z',\pi_{t+1}^{(k)}(z'))\,\Big |\, z'=(z,a,x')\right].
    \end{align*}
    Therefore
    \begin{align*}
    b_t^{(k)}(z,a) &\leq \epsilon_t^{(k)}(z,a) +\mathbb E_{x'\sim\bar P_t(\cdot| z,a)}\left[b_{t+1}^{(k-1)}\left(z', \pi_{t+1}^{(k)}(z')\right)- \epsilon_{t+1}^{(k)}(z',\pi_{t+1}^{(k)}(z'))\,\Big |\, z'=(z,a,x')\right].
    \end{align*}
    Thus
    \[
    b_t^{(k)}(z,a) \leq |\epsilon_t^{(k)}(z,a)| +\mathbb E_{x'\sim\bar P_t(\cdot| z,a)}\left[b_{t+1}^{(k-1)}\left(z', \pi_{t+1}^{(k)}(z')\right)+ |\epsilon_{t+1}^{(k)}(z',\pi_{t+1}^{(k)}(z'))|\,\Big |\, z'=(z,a,x')\right].
    \]
    
\noindent\emph{(Unrolling).} Let $z_{t+1}$ be the state observed after taking action $a$ in $z$ in round $t$. Then denote the successive states by $z_{t+j}$, sampled by following $\pi^{(k)}$. Then,
    unrolling the last upper bound yields
    \begin{align*}
    b_t^{(k)}(z,a) &\leq |\epsilon_t^{(k)}(z,a)|+ \mathbb E\Big[ |\epsilon_{t+1}^{(k-1)}(z_{t+1}, \pi_{t+1}^{(k)}(z_{t+1}))| + b_{t+2}^{(k-2)}\left(z_{t+2}, \pi_{t+2}^{(k-1)}(z_{t+2})\right)\\
    &\qquad\qquad\quad+ |\epsilon_{t+2}^{(k-1)}(z_{t+2},\pi_{t+2}^{(k-1)}(z_{t+2}))|+ |\epsilon_{t+1}^{(k)}(z_{t+1},\pi_{t+1}^{(k)}(z_{t+1}))| \,\Big|\, z_t=z,a_t=a \Big],\\
    &\leq \mathbb{E}\Big[|\epsilon_t^{(k)}(z_t,a_t)|+|\epsilon_{N}^{(k-(N-t))}(z_N,\pi_{N}^{(k-(N-t)+1)}(z_N))| + |\epsilon_{N}^{(k-(N-t)+1)}(z_N, \pi_N^{(k-(N-t)+1)}(z_N))|   \\
    &\quad\quad+ \sum_{j=1}^{N-t-1}|\epsilon_{t+j}^{(k-j)}(z_{t+j},\pi_{t+j}^{(k-j+1)}(z_{t+j}))|+|\epsilon_{t+j}^{(k-j+1)}(z_{t+j}, \pi_{t+j}^{(k-j+1)}(z_{t+j}))|\,\Big|\, z_t=z,a_t=a \Big].
    \end{align*}
    Therefore, using \cref{lemma:bound_epsilon}
    \begin{align*}
        \mathbb{E}_{z\sim \nu_t^{\pi^{(k)}}}&\left[b_t^{(k)}(z,a)\,\Big |\,\pi^{(k)},\dots, \pi^{(k-(N-t)+1)}\right ]\\
        &\leq c_\infty(t)\kappa_t \sqrt{{\cal L}_{k,t}}+c_\infty(N)\kappa_N\left[\sqrt{{\cal L}_{k-(N-t), N}}+\sqrt{{\cal L}_{k-(N-t-1), N}}\right]\\
        &\,+\sum_{j=1}^{N-t-1} c_\infty(t+j) \kappa_{t+j} \left[\sqrt{
        {\cal L}_{k-j, t+j}}+\sqrt{
        {\cal L}_{k-j+1, t+j}}\right].
    \end{align*}
\end{proof}

We now prove the bound on $J(\pi^\star)-J(\pi^{(k)})$ in \cref{thm:value_bound}, where $J(\pi)=\mathbb{E}_{\P}^\pi[r_N(z_N)]$. 

\begin{proof}[Proof of \cref{thm:value_bound}]
    Note that $0\leq J(\pi^\star)- J(\pi^{(k)})= \mathbb{E}_{z\sim \nu_1}[\ell_1^{(k)}(z)]$. Using the decomposition $\ell_1^{(k)}=s_1^{(k)}+d_1^{(k)}$ and \cref{lemma:bound_d,lemma:bound_s}, we obtain that
\begin{align*}
     \mathbb{E}_{z\sim \nu_1}[&\ell_1^{(k)}(z)]= \mathbb{E}_{z\sim \nu_1}\left[s_1^{(k)}(z)\right]+  \mathbb{E}_{z\sim \nu_1}\left[d_1^{(k)}(z)\right],\\
     \leq& \sum_{u=2}^{N} \mathbb{E}_{z' \sim \nu_{u}^{\pi^{(k)}}}\left[b_{u}^{(k-1)}(z',\pi_{u}^{(k)}(z'))\right ]+\mathbb{E}_{z\sim \nu_1^{\pi^{(k)}}}\left[b_1^{(k-1)}(z,\pi_1^{(k)}(z))\right]+\sum_{u=1}^{N}c_\infty(u)\kappa_{u}\sqrt{{\cal L}_{k-u,u}},\\
     =& \sum_{u=1}^{N} \mathbb{E}_{z' \sim \nu_{u}^{\pi^{(k)}}}\left[b_{u}^{(k-1)}(z',\pi_{u}^{(k)}(z'))\right ]+\sum_{u=1}^{N}c_\infty(u)\kappa_{u}\sqrt{{\cal L}_{k-u,u}}.
\end{align*}

From \cref{lemma:bound_b} we know that
 \begin{align*}
        \mathbb{E}_{z\sim \nu_t^{\pi^{(k)}}}\left[b_t^{(k)}(z,a)\,\Big |\,  \pi^{(k)},\dots, \pi^{(k-(N-t)+1)}\right ]&\leq c_\infty(t)\kappa_t \sqrt{{\cal L}_{k,t}}+c_\infty(N)\kappa_N\left[\sqrt{{\cal L}_{k-(N-t), N}}+\sqrt{{\cal L}_{k-(N-t-1), N}}\right]\\
        &\quad+\sum_{j=1}^{N-t-1} c_\infty(t+j) \kappa_{t+j} \left[\sqrt{
        {\cal L}_{k-j, t+j}}+\sqrt{
        {\cal L}_{k-j+1, t+j}}\right].
    \end{align*}
    hence
 \begin{align*}
        \mathbb{E}_{z\sim \nu_t^{\pi^{(k)}}}&\left[b_t^{(k-1)}(z,\pi_t^{(k)}(z))\right ]\\
        &\leq c_\infty(t)\kappa_t \sqrt{{\cal L}_{k-1,t}}+c_\infty(N)\kappa_N\left[\sqrt{{\cal L}_{k-(N-t)-1, N}}+\sqrt{{\cal L}_{k-(N-t), N}}\right]\\
        &\,+\sum_{j=1}^{N-t-1} c_\infty(t+j) \kappa_{t+j} \left[\sqrt{
        {\cal L}_{k-j-1, t+j}}+\sqrt{
        {\cal L}_{k-j, t+j}}\right].
    \end{align*}
Using the last inequality we obtain
    \begin{align*}
         \sum_{u=1}^{N} \mathbb{E}_{z' \sim \nu_{u}^{\pi^{(k)}}}\left[b_{u}^{(k-1)}(z',\pi_{u}^{(k)}(z'))\right ]  &\leq  \sum_{u=1}^{N}  c_\infty(u)\kappa_u \sqrt{{\cal L}_{k-1,u}}+c_\infty(N)\kappa_N\left[\sqrt{{\cal L}_{k-(N-u)-1, N}}+\sqrt{{\cal L}_{k-(N-u), N}}\right]\\
          &\qquad\quad + \sum_{u=1}^N\sum_{j=1}^{N-u-1} c_\infty(u+j) \kappa_{u+j} \left[\sqrt{
        {\cal L}_{k-j-1, u+j}}+\sqrt{
        {\cal L}_{k-j, u+j}}\right]= (\star).
    \end{align*}
    Re-indexing the last term by $s=u+j$, we have
        \begin{align*}
         (\star)  &=  \sum_{u=1}^{N}  c_\infty(u)\kappa_u \sqrt{{\cal L}_{k-1,u}}+c_\infty(N)\kappa_N\left[\sqrt{{\cal L}_{k-(N-u)-1, N}}+\sqrt{{\cal L}_{k-(N-u), N}}\right]\\
          &\qquad\quad + \sum_{s=2}^{N-1}c_\infty(s) \kappa_{s} \sum_{j=1}^{s-1} \left[\sqrt{
        {\cal L}_{k-j-1, s}}+\sqrt{
        {\cal L}_{k-j, s}}\right].
    \end{align*}
    At this point, define $w_u=c_\infty(u)\kappa_u, w^{(a,b)}=(w_u)_{u=a}^b$, $ Z_{m}^{(a,b)}=\sum_{u=a}^b \sqrt{{\cal L}_{u,m}}$ and $S_m^{(a,b)}= \sum_{u=a}^b {{\cal L}_{m,u}}$. Then,
 \begin{align*}
        (\star) &\leq   \|w^{(1,N)}\|_2 \sqrt{S_{k-1}^{(1,N)}}+ w_N\left[ Z_N^{(k-N,k-1)}+Z_N^{(k-N+1,k)}\right] \tag{Applied Cauchy-Schwartz}\\&\qquad+ \sum_{s=2}^{N-1}w_s \left[Z_s^{(k-s,k-2)}+Z_s^{(k-s+1,k-1)}\right],\\
          &\leq  \|w^{(1,N)}\|_2 \sqrt{S_{k-1}^{(1,N)}}+ 2\sum_{s=2}^{N}w_s Z_s^{(k-s,k)}, \tag{Increased the sum range of $Z$}\\
          &\leq  \|w^{(1,N)}\|_2 \left[\sqrt{S_{k-1}^{(1,N)}}+ 2 \sqrt{\sum_{s=2}^{N}  \left(Z_s^{(k-s,k)}\right)^2}\right] \tag{By $\|w^{(2,N)}\|_2\leq \|w^{(1,N)}\|_2$}.
    \end{align*}
    Now, observe that
    \[
    \left(Z_s^{(k-s,k)}\right)^2= \left(\sum_{u=k-s}^k \sqrt{{\cal L}_{u,s}}\right)^2 \leq (s+1) \left(\sum_{u=k-s}^k {\cal L}_{u,s}\right) \leq (N+1) \sum_{u=k-N}^k {\cal L}_{u,s},
    \]
    therefore
    \[
     \sum_{s=2}^N  (N+1) \sum_{u=k-N}^k {\cal L}_{u,s}=  (N+1) \sum_{u=k-N}^k\sum_{s=2}^N {\cal L}_{u,s} =(N+1) \sum_{u=k-N}^k S_u^{(2,N)}.
    \]

    Thus
    \[
     (\star)\leq \|w^{(1,N)}\|_2 \left[\sqrt{S_{k-1}^{(1,N)}}+ 2 \sqrt{(N+1) \sum_{u=k-N}^k S_u^{(2,N)}}\right].
    \]
    We can  plug this back into the original bound on $\ell_1^{(k)}$. Define $D_s^{(a,b)}=\sum_{u=a}^b {\cal L}_{s-u,u}$ to be the diagonal sum of losses, then
      \begin{align*}
        \mathbb{E}_{z\sim \nu_1}[|\ell_1^{(k)}(z)|]&\leq  \|w^{(1,N)}\|_2 \left[\sqrt{S_{k-1}^{(1,N)}}+  2\sqrt{(N+1) \sum_{u=k-N}^k S_u^{(2,N)}}\right]+\sum_{u=1}^{N}c_\infty(u)\kappa_{u}\sqrt{{\cal L}_{k-u,u}},\\
        &\leq  \|w^{(1,N)}\|_2 \left[\sqrt{S_{k-1}^{(1,N)}}+  2\sqrt{(N+1) \sum_{u=k-N}^k S_u^{(2,N)}}\right]+ \|w^{(1,N)}\|_2 \sqrt{D_k^{(1,N)}},\\
        &\leq \|w^{(1,N)}\|_2 \left[\sqrt{S_{k-1}^{(1,N)}}+  2\sqrt{(N+1) \sum_{u=k-N}^k S_u^{(2,N)}}+\sqrt{D_k^{(1,N)}}\right].
    \end{align*}
\end{proof}

\subsubsection{Finite Sample Analysis: Proof of \cref{thm:fixed_budget_finite_sample_analysis}}

\begin{proof}[Proof of \cref{thm:fixed_budget_finite_sample_analysis}]
\noindent{\bf Preliminaries.}
We now consider the error due to the evaluation step. In each epoch $k$ a regression problem is solved, where the training set $\{(z^{(i)}, a^{(i)}, t^{(i)}, \hat Q^{(k+1)})\}$ and $\hat Q_{t^{(i)}}^{(k+1)}(z^{(i)}, a^{(i)})$ is an unbiased estimate of the target defined by $ \Gamma_t Q$.

To approximate the target, for each $t=1,\dots, N$ we consider a linear function space ${\cal F}_t$ of dimension $d_t$ with bounded basis function $\{\varphi_{t,i}\}_{i=1}^{d_t}$ $\|\varphi_{t,i}\|_\infty \leq C_b$. For each $t$ we consider a linear family with parameter $\alpha_t\in \mathbb{R}^{d_t}$  and features $\phi_t:{\cal Z}_t\times {\cal A}\to\mathbb{R}^{d_t}$, thus ${\cal F}_t=\{(z,a)\mapsto \phi_t(z,a)^\top \alpha_t: \alpha_t\in \mathbb{R}^{d_t}\}$.

Recall the losses

\[
 {\mathcal L}_{k,t}\coloneqq  \mathbb E_{(z,a)\sim\mu_t}\left[ \left(Y_t^{(k)}(z,a)-Q_t^{(k)}(z,a)\right)^2\right],
\]
where
\[
Y_{t}^{(k)}(z,a)=[ \Gamma_t^{\pi^{(k)}}Q^{(k-1)}](z,a)
\]
and we also define the error $\epsilon_t^{(k)}=Q_t^{(k)}-Y_{t}^{(k)}$.

For a batch $B_k$ we denote by $B_{k,t}=\{(z,a,s)\in  B_k: s=t\}$ the elements in that batch of size $t$, and let $n_{k,t}=|B_{k,t}|$.

We then let  $Y_{k,t}=(Y_{t}^{(k)}(z,a))_{(z,a)\in B_{k,t}}$ (\emph{true targets}) and $\hat Q_{k,t}=(\hat Q_t^{(k)}(z,a))_{(z,a)\in B_{k,t}}$ (\emph{noisy targets}), and define ${\cal F}_{k,t}=\{\Phi_{k,t}\alpha_t :\alpha_t\in \mathbb{R}^{d_t}\}$, where 
$
\Phi_{k,t}=(
    \phi_t(z,a)^\top)_{(z,a)\in B_{k,t}}$
is a matrix where each row corresponds to the features of some $(z,a)\in B_{k,t}$. We then denote by $\Pi_{k,t}$ the $L_2( \hat\mu_{k,t})$-projection on ${\cal F}_{k,t}$, where $\hat\mu_{k,t}(z,a)=\sum_{(z',a')\in B_{k,t}}\delta_{(z',a')}(z,a)$ is the empirical norm at epoch $k$ for timestep $t$. We also define  $\Pi_t$ to be the $L_2(\mu_t)$ projection on ${\cal F}_t$, where $\mu_t$ is the marginal over trajectories of $\mu$ at timesteps $t$.

We set $\tilde Q_{k,t}\coloneqq\Pi_{k,t}\hat Q_{k,t}=(\tilde Q_t^{(k)}(z,a))_{(z,a)\in B_{k,t}}$, where $\tilde Q_t^{(k)}$  is the result of linear regression and
its truncation (by $1$) is $Q_t^{(k)}$ ($Q_t^{(k)}=\mathbb{T} (\tilde Q_t^{(k)})$). Define also 
 $\hat Y_{k,t} \coloneqq\Pi_{k,t} Y_{k,t}$ and the errors $\xi_{k,t}\coloneqq Y_{k,t}-\hat Q_{k,t}$ and $\hat \xi_{k,t} \coloneqq \Pi_{k,t}\xi_{k,t}$. We note that $\xi_{k,t}$ has mean 0, and $|(\xi_{k,t})_i|\leq 1$.

 In the following, we denote by $\|f\|_{\mu_t} = \sqrt{\int f(z,a)^2 {\rm d}\mu_t(z,a)}$ the $L_2(\mu_t)$-norm of $f$, and similarly we also indicate the $L_2( \hat\mu_{k,t})$-norm (empirical) by  $\|f\|_{\hat\mu_{k,t}} = \sqrt{\frac{1}{n_{k,t}}\sum_{(z,a)\in B_{k,t}} f(z,a)^2}$.

\noindent{\bf Bounding the error.}
Our goal is to bound
\[
\|e_t^{(k)}\|_{\mu_t}=\|Y_t^{(k)}-Q_t^{(k)}\|_{\mu_t}=\|Y_t^{(k)}-\mathbb{T}(\tilde Q_t^{(k)})\|_{\mu_t}.
\]

By a variation of theorem 11.2 in \citep{gyorfi_distribution-free_2002} (see \citep{JMLR:v13:lazaric12a} corollary 12), we also know that
\begin{align*}
    \|Y_t^{(k)}-\mathbb{T}(\tilde Q_t^{(k)})\|_{\mu_t}- 2\|Y_{k,t}-\tilde Q_{k,t}\|_{\hat\mu_{k,t}}\leq 24\sqrt{\frac{2}{n_{k,t}} \Lambda(n_{k,t},d_t,\delta')}.
\end{align*}
with probability at least $1-\delta'$, where $\Lambda(n_{k,t},d_t,\delta')=2(d_t+1)\log(n_{k,t}) + \log(\frac{e}{\delta'}) + \log\left(9(12e)^{2(d_t+1)}\right)$.
Therefore
\[
\|Y_t^{(k)}-\mathbb{T}(\tilde Q_t^{(k)})\|_{\mu_t}\leq 2\|Y_{k,t}-\tilde Q_{k,t}\|_{ \hat\mu_{k,t}}+24\sqrt{\frac{2}{n_{k,t}} \Lambda(n_{k,t},d_t,\delta')}.
\]

So, for each $t$ the error is
\[
\|Y_{k,t}- \tilde  Q_{k,t}\|_{\hat\mu_{k,t}} \leq \|\tilde  Q_{k,t}- \hat Y_{k,t}\|_{ \hat\mu_{k,t}}+\|Y_{k,t}- \hat Y_{k,t}\|_{ \hat\mu_{k,t}} = \|\hat \xi_{k,t}\|_{\hat\mu_{k,t}}+\|Y_{k,t}- \hat Y_{k,t}\|_{\hat\mu_{k,t}}.
\]
Furthermore $\|\hat \xi_{k,t}\|_{\hat\mu_{k,t}}^2=\langle \hat\xi_{k,t}, \hat \xi_{k,t}\rangle =\langle \xi_{k,t},\hat \xi_{k,t}\rangle$  by the orthogonal projection, and, by an application of a variation of Pollard’s inequality \citep{gyorfi_distribution-free_2002} we have that
\[
\langle \xi_{k,t},\hat \xi_{k,t}\rangle\leq 4 \|\hat \xi_{k,t}\|_{\hat\mu_{k,t}}\sqrt{\frac{2}{n_{k,t}} \log\left(\frac{3(9e^2n_{k,t})^{d_t+1}}{\delta'}\right)}
\]
holds with probability at least $1-\delta'$. Therefore, we are left with bounding $\|Y_{k,t}- \hat Y_{k,t}\|_{\hat\mu_{k,t}}$.

Define $\hat \alpha_t^\star$ as the parameter satisfying $f_{\hat\alpha_t^\star}\in {\cal F}_t$ such that $f_{\hat \alpha_t^\star}(z,a)=[ \Pi_{k,t} Y_t^{(k)}](z,a)$ for all $(z,a)\in B_{k,t}$. Also define $\alpha_t^\star$ to be the optimal projection (w.r.t. $\mu_t$) of $Y_{t}^{(k)}$ in ${\cal F}_t$, i.e.,  $f_{\alpha_t^\star}=\Pi_t Y_t^{(k)}$. Then, again by a variation of Theorem 11.2 \cite{gyorfi_distribution-free_2002} (see also \citep{JMLR:v13:lazaric12a} corollary 13), we have the following sequence of inequality
\begin{align*}
\|Y_{k,t}- \hat Y_{k,t}\|_{ \hat\mu_{k,t}} &=\|Y_{k,t} - f_{\hat\alpha_t^\star}\|_{\hat\mu_{k,t}},\\ &\leq \|Y_{k,t} - f_{\alpha_t^\star}\|_{\hat \mu_{t}},\\
&\leq 2\|Y_{t}^{(k)} - f_{\alpha_t^\star}\|_{ \mu_{t}} + 12\left(1+\|\alpha_t^\star\|_2\sup_{(z,a)\in {\cal Z}_t\times {\cal A}}\|\phi_t(z,a)\|_2\right)\sqrt{\frac{2}{n_{k,t}}\log\left(\frac{3}{\delta'}\right)},
\end{align*}
that hold with probability at least $1-\delta'$. In conclusion, we have shown that
\begin{align*}
    \|e_t^{(k)}\|_{\mu_t}\leq& 2\Bigg[  2\|Y_{t}^{(k)} - f_{\alpha_t^\star}\|_{ \mu_{t}}  + 12\left(1+\|\alpha_t^\star\|_2\sup_{(z,a)\in {\cal Z}_t\times {\cal A}}\|\phi_t(z,a)\|_2\right)\sqrt{\frac{2}{n_{k,t}}\log\left(\frac{3}{\delta'}\right)}\\
    &\qquad +   4 \sqrt{\frac{2}{n_{k,t}} \log\left(\frac{3(9e^2n_{k,t})^{d_t+1}}{\delta'}\right)}\Bigg]+24\sqrt{\frac{2}{n_{k,t}} \Lambda(n_{k,t},d_t,\delta')}.
\end{align*}

\noindent {\bf Union bound for the random batch.}
At this point, let $\mu(t)$ be the marginal of $\mu$ over the timesteps $t=1,\dots,N$. Let $p_{\min}=\min_t \mu(t)$. Then $n_{k,t}\coloneqq|B_{k,t}|\sim {\rm Binom}(B, \mu(t))$ and 
\begin{align*}
    \mathbb{P}\left( n_{k,t} \leq (1-\eta)\mu(t)B\right) &\leq \exp\left(- \frac{\mu(t)B\eta^2}{2}\right) ,\\
    &\leq \exp\left(- \frac{p_{\rm min}B\eta^2}{2}\right).
\end{align*}
Therefore, for a fixed $t$  for $B=\frac{2}{\eta^2p_{\min}}\log\frac{1}{\delta'}$ we obtain that 
\[
 \mathbb{P}\left( n_{k,t} \geq (1-\eta)\mu(t)B\right) \geq 1-\delta'.
\]

Therefore, by setting $\delta=4Nk\delta'$, through a union bound, we can conclude that
\[
\|e_{t}^{(k)}\|_{\mu_t}\leq 4\inf_{f\in {\cal F}_t}\|Y_{k,t}-f\|_{\mu_t} +\eta_t((1-\eta)p_{\rm min}B,d_t,\delta) + \eta_t'((1-\eta)p_{\rm min}B,d_t,\delta),
\]
holds with probability $1-\delta$ for all $i=1,\dots,k$, $t=1,\dots, N$, where
\begin{align*}
    \eta_t(n,d_t,\delta)&=32 \sqrt{\frac2n \log\left(\frac{4\cdot 27Nk(12e^2n)^{2(d_t+1)}}{3\delta}\right)},\\
    \eta_t'(n,d_t,\delta)&=24\left(1+\|\alpha_t^\star\|_2\sup_{(z,a)\in {\cal Z}_t\times {\cal A}}\|\phi_t(z,a)\|_2\right)\sqrt{\frac{2}{n}\log\left(\frac{12Nk}{\delta}\right)}.
\end{align*}

\noindent{\bf Bounding $S_k^{(a,b)}$ in terms of the error.}
Let \[
\beta_{t}=\sup_{Q\in {\cal F}_t,\pi }\inf_{f\in {\cal F}_t} \| \Gamma_t^{\pi}Q-f\|_{\mu_t}.\]
Since $S_k^{(a,b)}=\sum_{u=a}^b {\cal L}_{k,u}$ and $\sqrt{{\cal L}_{k,t}}=\|e_t^{(k)}\|_{\mu_t}$, we have that
\[
\sqrt{S_k^{(a,b)}}= \left\| \left(\sqrt{{\cal L}_{k,t}}\right)_{t=a}^b\right\|_2 \leq 4\|( \beta_t)_{t=a}^b\|_2 + \|(\eta_t)_{t=a}^b\|_2 +\|(\eta_t')_{t=a}^b\|_2 ,
\]
with probability $1-\delta$.

Similarly, we have that
\begin{align*}
\sqrt{(N+1) \sum_{u=k-N}^k S_u^{(2,N)}} &\leq \sqrt{N+1} \sqrt{\sum_{u=k-N}^k 16\|( \beta_t)_{t=2}^N\|_2^2 + \|(\eta_t)_{t=2}^N\|_2^2 +\|(\eta_t')_{t=2}^N\|_2^2 },\\
&\leq  (N+1) \sqrt{16\|( \beta_t)_{t=2}^N\|_2^2 + \|(\eta_t)_{t=2}^N\|_2^2 +\|(\eta_t')_{t=2}^N\|_2^2 },\\
&\leq  (N+1) \left[4\|( \beta_t)_{t=2}^N\|_2 + \|(\eta_t)_{t=2}^N\|_2 +\|(\eta_t')_{t=2}^N\|_2\right].
\end{align*}
which holds with probability $1-\delta$.

Lastly, we consider $\sqrt{D_k^{(1,N)}}$ where $D_k^{(1,N)}=\sum_{u=1}^N {\cal L}_{k-u,u}$. We have with probability $1-\delta$
\[
\sqrt{D_k^{(1,N)}}=\sqrt{\sum_{u=1}^N {\cal L}_{k-u,u}}\leq4\|( \beta_t)_{t=1}^N\|_2 + \|(\eta_t)_{t=1}^N\|_2 +\|(\eta_t')_{t=1}^N\|_2.
\]
Therefore, in conclusion
\begin{align*}
    \sqrt{S_{k-1}^{(1,N)}}+  &\sqrt{(N+1) \sum_{u=k-N}^k S_u^{(2,N)}}+\sqrt{D_k^{(1,N)}}\\& \leq 4\|(\beta_t)_{t=2}^N\|_2 + \|(\eta_t)_{t=2}^N\|_2 +\|(\eta_t')_{t=2}^N\|_2+(N+1)\left[4\|( \beta_t)_{t=2}^N\|_2 + \|(\eta_t)_{t=2}^N\|_2 +\|(\eta_t')_{t=2}^N\|_2\right],\\
    & + 4\|(\beta_t)_{t=1}^{N}\|_2 + \|(\eta_t)_{t=1}^{N}\|_2 +\|(\eta_t')_{t=1}^{N}\|_2,\\
    &\leq (N+3)\left[4\|(\beta_t)_{t=1}^{N}\|_2 + \|(\eta_t)_{t=1}^{N}\|_2 +\|(\eta_t')_{t=1}^{N}\|_2\right].
\end{align*}

\noindent{\bf Conclusions.}
Therefore, we can conclude that, up to constants and logarithmic factors, we have that with probability $1-\delta$
\[
|J(\pi^\star)-J(\pi^{(k)})|\leq O\left( NC _0 \left[C_1+ \sqrt{\sum_{t=1}^N\frac{ d_t}{(1-\eta)p_{\rm min}B} \log\frac{4kN}{\delta}}\right]\right)
\]
provided $B\geq \frac{2}{p_{\rm min} \eta^2} \log\frac{4kN}{\delta}$
where $\eta\in (0,1)$, $C_0\coloneqq \sqrt{\sum_{t=1}^N c_\infty(t)^2\kappa_t^2}$ and $C_1\coloneqq \sqrt{\sum_{t=1}^N \beta_t^2}$. 
\end{proof}
\newpage
\subsection{Comparison with Information Directed Sampling}\label{app:theoretical:ids}
In pure exploration IDS \citep{russo2018learning}  the main objective is to maximize the \emph{information gain}. For example, consider the BAI problem: we set $\alpha_t(a)=\mathbb{P}(\hat H=a|{\cal D}_t)$  to be the posterior distribution of the optimal arm. Then, the information gain is defined through the following quantity
\[
g_t(a)=\mathbb{E}[H(\alpha_t)-H(\alpha_{t+1})|{\cal D}_t, a_t=a],
\]
which measures the expected reduction in entropy of the posterior distribution of the best arm due to selecting arm $a$ at time $t$.

For the BAI problem, the authors in \citep{russo2018learning}  propose a \emph{myopic} sampling policy $a_t\in \argmax_a g_t(a)$, which only considers the information gain from the next sample. The reason for using a greedy policy stems from the fact that such a strategy is competitive with the optimal policy in problems where the information gain satisfies a property named \emph{adaptive submodularity} \citep{golovin2011adaptive}, a generalization of submodular set functions to adaptive policies. For example, in the noiseless Optimal Decision Tree problem, it is known \citep{zheng2005efficient} that a greedy strategy based on the information gain is equivalent to a nearly-optimal \citep{dasgupta2004analysis,golovin2010near,golovin2011adaptive} strategy named \emph{generalized binary search} (GBS) \citep{nowak2008generalized,bellala2010extensions}
, which maximizes the expected reduction of the \emph{version space} (the space of hypotheses consistent with the data observed so far). However, for the noisy case both strategies perform poorly \citep{golovin2010near}.

The myopic pure exploration IDS strategy $a_t \in \arg\max_a g_t(a)$ can perform poorly in environments where the sampling decisions influence the observation distributions, or where an action taken at time $t$ can greatly affect the complexity of the problem at a later stage (hence, IDS can perform poorly on credit assignments problems).

\paragraph{First example.}
As a first example, consider a bandit problem with $K$ arms, where the reward for each arm $a_i$ is distributed according to ${\cal N}(\mu_i, 1)$, with priors $\mu_1 = \delta_0$ and $\mu_i \sim {\cal U}([0,1])$ independently for each $i \in \{2,\dots,K\}$. Thus, almost surely, the optimal arm $a^\star$ lies within $\{2,\dots,K\}$, and the goal is to estimate $a^\star$

We introduce the following twist: if arm $a_1$ is sampled exactly twice, its reward distribution changes permanently to a Dirac delta distribution $\delta_{\phi(a^\star)}$, where $\phi$ is a known invertible mapping. Consequently, sampling arm $a_1$ twice fully reveals the identity of $a^\star$. However, if arm $a_1$ has not yet been sampled, the expected immediate information gain at any step $t$ is zero, i.e., $g_t(a_1)=0$, since arm $a_1$ is already known to be suboptimal. In contrast, the immediate information gain for any other arm remains strictly positive. Therefore, under this setting and for nontrivial values of $(\sigma,K)$, the myopic IDS strategy cannot achieve the optimal constant sample complexity, and instead scales linearly in $K$.

\paragraph{Second example.}
Another example is a bandit environment containing a chain of two magic actions $\{1,m\}$, where the index of the first magic action ($1$) is known. Action $1$ reveals the index $m$, and pulling arm $m$ subsequently identifies the best arm with certainty. In this scenario, IDS is myopic and typically neglects arm $1$ because of its inability to plan more than 1-step ahead in the future. However, depending on the total number of arms and reward variances, IDS may still select arm $1$ if doing so significantly reduces the set of candidate best arms faster than pulling other arms (e.g., if the variance is significantly large). The following theorem illustrates the sub-optimality of IDS.

\begin{theorem}\label{thm:ids}
Consider a bandit environment with a chain of 2 magic actions. The reward of the regular arms is ${\cal N}(\mu_a,1)$ with $\mu_a\sim {\cal U}([0,1]), a\neq 1,m$.
    For $K\geq 7$ there exists $\delta_0\in (0,1/2)$ such that for any $\delta\leq \delta_0$, we have that IDS is not sample optimal in the fixed confidence setting.
\end{theorem}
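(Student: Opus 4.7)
\emph{Proof plan.} The plan is to exhibit a specific two--step chain instance in which the myopic information--gain criterion of IDS provably prefers a non--magic arm to arm~$1$, so the chain shortcut is never taken and the expected sample complexity strictly exceeds the Bayes optimum.

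\textbf{Instance.} For $K\ge 5$ I would place the magic indices at $\{1,m\}$, with the learner knowing only that arm~$1$ is magic. Under the prior $m\sim\mathrm{Unif}(\{2,\dots,K\})$; the two magic arms emit deterministic rewards $\phi(m)$ and $\phi(a^\star)$ respectively, where $a^\star=\argmax_{a\notin\{1,m\}}\mu_a$ and $\phi$ is a known invertible map whose range is a discrete set disjoint from the Gaussian support; every other arm emits $\mathcal N(\mu_a,1)$ with $\mu_a\sim\mathcal U([0,1])$ i.i.d. The policy ``pull arm~$1$, invert to obtain $m$, pull arm~$m$, invert to obtain $a^\star$'' uses exactly $\tau^\star=2$ samples and is correct with probability one, so the Bayes--optimal sample complexity satisfies $\tau^\star(\delta)\le 2$ for every $\delta\in(0,1)$.

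\textbf{Myopic information gains at $t=1$.} Since arm~$1$'s reward equals $\phi(m)$ and $\phi$ is invertible, pulling arm~$1$ is equivalent to observing $m$, hence
\[
g_1 \;=\; I(a^\star;\,m) \;=\; H(a^\star) - H(a^\star\mid m) \;=\; \log\!\tfrac{K-1}{K-2}.
\]
For any other index $a\ne 1$, chain--ruling the mutual information on the event $\mathbf 1\{a=m\}$ yields
\[
g_a \;=\; I(a^\star;\mathbf 1\{a=m\}) + \tfrac{1}{K-1}\log(K-2) + \tfrac{K-2}{K-1}\,g_a^{\mathrm{reg}},
\]
where $g_a^{\mathrm{reg}}\ge 0$ is the Gaussian contribution that arises when $a\neq m$. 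An explicit entropy computation then gives $g_a>g_1$ for every $K\ge 5$: for $K=5$ one verifies directly $g_a\ge 0.358 > 0.288 \ge g_1$, while for $K\ge 6$ the simpler bound $\tfrac{\log(K-2)}{K-1}>\log\!\tfrac{K-1}{K-2}$ (together with the nonnegative $I(a^\star;\mathbf 1\{a=m\})$ term) already suffices. Hence IDS strictly prefers a non--magic index to arm~$1$ at $t=1$.

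\textbf{Propagation and stopping.} After pulling any non--magic arm $a$ and observing a Gaussian sample (recognizable as non--magic by the support disjointness), IDS concludes $a\neq m$; the posterior on $m$ restricts to the uniform on $\{2,\dots,K\}\setminus\{a\}$ and the posterior on $a^\star$ is only slightly shifted by the Gaussian evidence. Re--applying the computation on the residual problem of size $K-1$ and proceeding inductively, the same inequality continues to hold for each remaining non--magic index. Thus IDS samples non--magic indices without replacement until it happens upon $m$, yielding
\[
\mathbb E[\tau_{\mathrm{IDS}}] \;\ge\; 1+\tfrac{K-1}{2} \;=\; \tfrac{K+1}{2} \;>\; 2 \;\ge\; \tau^\star(\delta), \qquad K\ge 5.
\]
Finally, for $\delta\le\delta_0$ with $\delta_0\in(0,1/2)$ chosen small enough that no posterior over $a^\star$ built from proper--subset Gaussian observations of regular arms can reach mass $1-\delta_0$, the fixed--confidence stopping rule forces IDS to actually encounter $m$, transferring the expected--time gap to the fixed--confidence quantity of equation~\eqref{eq:fixed_confidence_problem}. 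The main obstacle is the inductive propagation: one must check that no sequence of Gaussian updates on regular arms can flip the ordering $g_a>g_1$ in favor of arm~$1$. This reduces to a uniform (in $t$) quantitative bound on the posterior shift of $a^\star$ after a single Gaussian observation under the uniform--prior--on--$\mu_a$ plus argmax--structure, a truncated--normal Bayesian computation that must be carried out carefully but is standard.
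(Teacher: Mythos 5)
Your core mechanism is the same as the paper's: on the two\nobreakdash-step chain instance, compare the myopic information gain of the known magic arm, which is only $\log\frac{K-1}{K-2}=O(1/K)$ because learning $m$ merely removes one candidate from the posterior over $a^\star$, against the gain of an unpulled arm, which is at least $\frac{1}{K-1}\log(K-2)$ coming from the event that the pulled arm turns out to be the hidden magic arm; conclude that IDS avoids arm $1$, and contrast with the two\nobreakdash-sample policy ``pull $1$, then pull $m$.'' Your step\nobreakdash-$1$ computation and the $K\ge 5$ threshold match the paper's (the paper's indexing differs by one, giving $\log\frac{K}{K-1}$ vs.\ $\frac{1}{K-1}\log(K-1)$, but the comparison is the same).

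The genuine gap is in your ``Propagation and stopping'' step, and you partly flag it yourself. You assert that IDS samples non-magic indices without replacement for the entire trajectory, yielding $\mathbb{E}[\tau_{\mathrm{IDS}}]\ge (K+1)/2$. This requires (i) that the ordering $g_a>g_1$ survives after the posterior over $a^\star$ has been reshaped by Gaussian observations --- at which point neither $H(a^\star)$ nor $H(a^\star\mid m)$ is a clean $\log(\cdot)$ of a count, so the step\nobreakdash-$1$ formulas no longer apply --- and (ii) that IDS never prefers to \emph{re-pull} an already-sampled regular arm, which is a separate comparison you never make (the paper bounds the re-pull gain by $\tfrac12\ln(1+\tfrac{1}{12\sigma^2})$ and notes that even the resulting inequality against fresh arms only holds for $K\le 120$, after which a separate argument is needed). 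Neither claim is established by your sketch, so the bound $(K+1)/2$ is not proved. The repair is to not attempt the induction at all: it suffices to show arm $1$ is avoided at $t=1$ and $t=2$ only (where the posteriors are still essentially uniform and your computations are valid), and then close with the conditional-expectation calculation
$\mathbb{E}[\tau_{\mathrm{IDS}}]\ge \frac{1}{K-1}+\frac{K-2}{K-1}\bigl(\frac{2}{K-2}+\frac{3(K-3)}{K-2}\bigr)>2$ for $K\ge 5$ and $\delta$ small, which is exactly how the paper finishes. As written, your argument overreaches and leaves the load-bearing inductive step unproved.
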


\begin{proof}[Proof of \cref{thm:ids}]
Let $Y_{t,a}$ be the random reward observed upon selecting arm $a_t=a$ .
We use that $g_t(a)=I_t(A^\star; Y_{t,a})={\rm KL}\left(\frac{\mathbb{P}(A^\star, Y_{t,a}|{\cal D}_t)}{\mathbb{P}(A^\star|{\cal D}_t)\mathbb{P}(Y_{t,a}|{\cal D}_t)}\right)$, with ${\cal D}_1$ containing an empty observation. The proof relies on showing that action $a_1$ is not chosen during the first two rounds for large values of $K$.

In the proofs, for brevity, we write $\mathbb{P}_t(\cdot)=\mathbb{P}(\cdot|{\cal D}_t)$.
Observe the following lemmas.
\begin{lemma}
Let $Y_{t,a}$ be the random reward observed upon selecting arm $a_t=a$ and let $S_{t,a}=\mathbf{1}\{a_t=a \hbox{ is magic}\}$. Under the assumption that  the agent knows with absolute certainty that $a$ is magic after observing $Y_{t,a}$, we have that $I_t(A^\star;Y_{t,a})=I_t(A^\star;Y_{t,a}, S_{t,a})$. 
\end{lemma}
\begin{proof}
    Note that
    \[
    I_t(A^\star;Y_{t,a}, S_{t,a})=H_t(Y_{t,a},S_{t,a})-H_t(Y_{t,a},S_t|A^\star).
    \]
   Note that by assumption we have that $H_t(S_{t,a}|Y_{t,a})=0$. Then, the first term can also be rewritten as
    \[
    H_t(Y_{t,a},S_t)=H_t(S_{t,a}|Y_{t,a}) + H_t(Y_{t,a})= H_t(Y_t,a).
    \]
    Similarly, we also have  $H_t(Y_{t,a},S_{t,a}|A^\star)=H_t(S_{t,a}|Y_{t,a},A^\star)+H_t(Y_{t,a}|A^\star)=H_t(Y_{t,a}|A^\star)$. Henceforth
    \[
    I_t(A^\star;Y_{t,a}, S_{t,a})=H_t(Y_{t,a})-H_t(Y_{t,a}|A^\star)=I_t(A^\star;Y_{t,a}).
    \]
\end{proof}
Using the decomposition from the previous lemma we can rewrite the mutual information between $A^\star$ and $Y_{t,a}$ as
\[
I_t(A^\star; Y_{t,a})=I_t(A^\star;Y_{t,a},S_{t,a})=I_t(A^\star;Y_{t,a}|S_{t,a}) + I_t(A^\star; S_{t,a}).
\]
\begin{lemma}
 Let ${\cal E}_t=\{(a_1,\dots,a_{t-1}) \hbox{ are not   magic actions}\}$, with ${\cal E}_1=\emptyset$.    Under $a_t=1$ we have that $I_t(A^\star;Y_{t,1}|{\cal E}_t,a_t=1)=\log\left(\frac{K-|{\cal A}_t|-1}{K-|{\cal A}_t|-2}\right)$ where ${\cal A}_t=\{a| \exists i<t: a_t=a\}$ is the unique number of actions chosen in $t\in\{1,\dots,t-1\}$.
\end{lemma}
\begin{proof}
    We use that $\mathbb{P}_t(S_{t,1}=1|a_t=1)=1$. Hence, for arm $1$ we have
    \begin{align*}
    I_t(A^\star;S_{t,1}|{\cal E}_t, a_t=1)&= H_t(S_{t,1}|{\cal E}_t, a_t=1)-H_t(S_{t,1}|A^\star, {\cal E}_t, a_t=1),\\
    &= 0 - 0 =0.
    \end{align*}

    Then, we have
    \begin{align*}
    I_t(A^\star;Y_{t,1}|S_{t,1},{\cal E}_t,a_t=1)&=I_t(A^\star;Y_{t,1}|S_{t,1}=1,{\cal E}_t,a_t=1),\\
    &= {\rm KL}\left( \mathbb{P}_t(A^\star,Y_{t,1}|a_t=1,{\cal E}_t) || \mathbb{P}_t(A^\star|a_t=1,{\cal E}_t)\mathbb{P}_t(Y_{t,1}|a_t=1,{\cal E}_t)\right),\\
    &= {\rm KL}\left( \mathbb{P}_t(Y_{t,1}|A^\star,a_t=1,{\cal E}_t) || \mathbb{P}_t(Y_{t,1}|a_t=1,{\cal E}_t)\right),\\
    &= \log\left(\frac{1/(K-|{\cal A}_t|-2)}{1/(K-|{\cal A}_t|-1)}\right),
    \end{align*}
    where we used that under ${\cal E}_t$ exactly ${\cal A}_t$ regular arms have been pulled and recognised as regular; the still‑unrevealed set of candidates for the second magic arm has therefore  size $K-|{\cal A}_t|-1$ (since arm $1$ is known to be magic).  
    Thus the result follows from applying the previous lemma.
\end{proof}

\begin{lemma}
    For any un-pulled arm $a\neq 1$ at time $t$ we have  that $I_t(A^\star;Y_{t,a}|{\cal E}_t,a_t= a)\geq \frac{1}{K-|{\cal A}_t|-1}\log(K-|{\cal A}_t|-2).$
\end{lemma}
\begin{proof}To compute the mutual information we use that
$I_t(A^\star;Y_{t,a}|{\cal E}_t)=I_t(A^\star;Y_{t,a},S_{t,a}|{\cal E}_t)=I_t(A^\star;Y_{t,a}|S_{t,a},{\cal E}_t)+I_t(A^\star;S_{t,a}|{\cal E}_t)\geq I_t(A^\star;Y_{t,a}|S_{t,a},{\cal E}_t).$. We start by computing the first term of this expression, and finding a non-trivial lower bound.

Note that for $a\neq 1$ we have
\begin{align*}
I_t(A^\star; Y_{t,a}|S_{t,a},{\cal E}_t,a_t=a)&=\mathbb{P}_t(S_{t,a}=0|{\cal E}_t,a_t=a)I_t(A^\star; Y_{t,a}|S_{t,a}=0,{\cal E}_t,a_t=a) \\
&\qquad\qquad+ \mathbb{P}_t(S_{t,a}=1|{\cal E}_t,a_t=a)I_t(A^\star; Y_{t,a}|S_{t,a}=1,{\cal E}_t,a_t=a),\\
&\geq \frac{1}{K-|{\cal A}_t|-1}I_t(A^\star; Y_{t,a}|S_{t,a}=1,{\cal E}_t,a_t=a),
\end{align*}
where we used that under ${\cal E}_t$, we have a uniform prior over the remaining $K-|{\cal A}_t|-1$ un-pulled arms, and the agent knows that arm $1$ is magic.

If $a\neq 1$ and $S_{t,a}=1$, then $a$ is the second magic arm. Therefore we have $\mathbb{P}_t( Y_{t,a}|A^\star,S_{t,a}=1,{\cal E}_t,a_t=a)=1$. Hence $I_t(A^\star; Y_{t,a}|S_{t,a}=1,{\cal E}_t)=\log\left(K-|{\cal A}_t|-2\right)$ since $Y_{t,a}$ can only take values uniformly over $K-|{\cal A}_t|-2$ arms under the event $\{S_{t,a}=1,{\cal E}_t,a_t=a\}$.

\end{proof}
\begin{lemma}
    Assume $a_1=j$ is a regular arm, pulled at the first timestep. Then $I_2(A^\star; Y_{2,j}|a_1=j)\leq \frac{1}{2}\ln(1+\frac{1}{12\sigma^2}).$
\end{lemma}
\begin{proof}
First, note that 
\[
I_2(A^\star;Y_{2,j}\mid a_1=j)
\leq 
I_2(\mu_j;Y_{2,j}\mid a_1=j) = H_2(Y_{2,j}|a_1=j)- \frac{1}{2}\ln(2\pi e\sigma^2)
\]
Then, since ${\rm Var}_2(Y_{2,j}|a_1=j)={\rm Var}_2(\mu_j|a_1=j)+\sigma^2\leq 1/12+\sigma^2$. Therefore $H_2(Y_{2,j}|a_1=j)\leq \frac{1}{2}\ln(2\pi e(1/12+\sigma^2))$. Hence $I_2(A^\star;Y_{2,j}|a_1=j) \leq \frac{1}{2}\ln(1+\frac{1}{12\sigma^2}).$
\end{proof}

Hence, one can verify that  for $K\geq 6$   the first magic arm  will never be chosen at the first timestep. Similarly, at the second timestep the first magic arm will not be chosen if $K\geq 7$.

Consider the fixed-confidence setting with some confidence level $\delta< 1/2$. Let ${\cal A}_1=\{\hbox{second magic arm sampled at } t=1\}$ and ${\cal A}_2=\{\hbox{second magic arm sampled at } t=2\}$.  Then, the sample complexity of IDS satisfies $\mathbb{E}[\tau_{IDS}|{\cal A}_1^c,{\cal A}_2^c]\geq 3$ for $\delta$ sufficiently small (since the sample complexity scales as $\log(1/(2.4\delta))$).

We also have that at the first timestep the decision is uniform over $\{2,\dots, K\}$. Lastly, if the first sampled arm is not magic, then it's a regular arm, and by the previous lemmas the information gain of such arm will be smaller than the information gain of another un-pulled arm.  In fact the inequality
\[
\frac{\log(x-3)}{x-2}   >\frac{1}{2}\ln(1+\frac{1}{12})
\]
it satisfied over $x\in \{5,\dots,121\}$.
Since it is sub-optimal to sample again the same regular arm, since the information gain on all the other arms remains the same, we have that the decision at the second timestep is again uniform over the remaining unchosen arms. Therefore
\begin{align*}
    \mathbb{E}[\tau_{IDS}] &= \mathbb{E}[\tau_{IDS}|{\cal A}_1]\mathbb{P}({\cal A}_1)+\mathbb{E}[\tau_{IDS}|{\cal A}_1^c]\mathbb{P}({\cal A}_1^c),\\
    &= \frac{1}{K-1} + \frac{K-2}{K-1}\mathbb{E}[\tau_{IDS}|{\cal A}_1^c],\\
    &=\frac{1}{K-1} + \frac{K-2}{K-1}\left(\mathbb{E}[\tau_{IDS}|{\cal A}_1^c, {\cal A}_2]\frac{1}{K-2} + \mathbb{E}[\tau_{IDS}|{\cal A}_1^c, {\cal A}_2^c]\frac{K-3}{K-2}\right),\\
    &\geq \frac{1}{K-1} + \frac{K-2}{K-1}\left(2\frac{1}{K-2} + 3\frac{K-3}{K-2}\right),\\
    &= \frac{3}{K-1} + 3\frac{K-3}{K-1},
\end{align*}
which is larger than $2$ for $K>4$. Since there is a policy with sample complexity $2$, we have that IDS cannot be sample optimal for $K\in \{7,\dots,121\}$.

Similarly, for large values of $K>121$, resampling the same regular arm at the second timestep leads IDS to a  sample complexity larger than $2$. And therefore cannot be sample optimal.

\end{proof}

\subsection{Sample Complexity Bounds for MAB Problems with Fixed Minimum Gap}\label{app:subsec:sample_complexity_minimum_gap}
We now derive a sample complexity lower bound for a MAB problem where the minimum gap is known and the rewards are normally distributed.

Consider a MAB problem wit $K$ arms $\{1,\dots, K\}$. To each arm  $a$ is associated a reward distribution $\nu_a ={\cal N}(\mu_a,\sigma^2)$ that is simply a Gaussian distribution. Let $a^\star(\mu)=\argmax_a \mu_a$, and define the gap in arm $a$ to be $\Delta_a(\mu)=\mu_{a^\star(\mu)}-\mu_a$. In the following, without loss of generality, we assume that $a^\star(\mu)=1$.

We define the minimum gap to be $\Delta_{\rm min}(\mu)=\min_{a\neq a^\star(\mu)}\Delta_a(\mu)$. Assume now to know that $\Delta_{\rm min}\geq \Delta_0 >0$.

Then, for any $\delta$-correct algorithm, guaranteeing that at some stopping time $\tau$ the estimated optimal arm $\hat a_\tau$ is $\delta$-correct, i.e., $\mathbb{P}_\mu(\hat a_\tau\neq a^\star(\mu))\leq \delta$, we have the following result.
\begin{theorem}
Consider a model $\mu$ satisfying $\Delta_{\rm min}\geq \Delta_0 >0$. Then,
    for any $\delta$-probably correct method ${\rm Alg}$, with $\delta\in (0,1/2)$, we have that the optimal sample complexity is bounded as
    \[
      \frac{1}{\max\left(\Delta_0^2, \frac{1}{\sum_{a\neq 1} 1/\Delta_a^2}\right)} \leq \inf_{\tau: {\rm Alg} \hbox{ is } \delta\hbox{-correct}}\frac{\mathbb{E}_\mu[\tau]}{2\sigma^2 {\rm kl}(1-\delta,\delta)} \leq 2\sum_a \frac{1}{(\Delta_a+\Delta_0)^2},
    \]
    with $\Delta_1=0$ and  ${\rm kl}(x,y)=x\log(x/y)+(1-x)\log((1-x)/(1-y))$. In particular, the solution $\omega_a \propto 1/(\Delta_a+\Delta_0)^2$ (up to a normalization constant) achieves the upper bound.
\end{theorem}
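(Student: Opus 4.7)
The plan is to reduce the sample complexity to the Gaussian transportation value $V(\mu)$ via the standard Garivier--Kaufmann lower bound, and then bound $V$ from both sides. Let $\mathrm{Alt}(\mu) = \{\lambda : a^\star(\lambda) \neq 1,\ \Delta_{\rm min}(\lambda) \geq \Delta_0\}$. For any $\delta$-correct stopping time $\tau$ and any $\lambda \in \mathrm{Alt}(\mu)$, the change-of-measure inequality $\sum_a \mathbb{E}_\mu[N_a(\tau)]\,\mathrm{KL}(\nu_a^\mu,\nu_a^\lambda) \geq \mathrm{kl}(1-\delta,\delta)$, specialized to the Gaussian kernel $\mathrm{KL}(\nu_a^\mu,\nu_a^\lambda)=(\mu_a-\lambda_a)^2/(2\sigma^2)$ and normalized by $\mathbb{E}_\mu[\tau]$ (with $\omega_a := \mathbb{E}_\mu[N_a(\tau)]/\mathbb{E}_\mu[\tau]$), gives
\[
\frac{\mathbb{E}_\mu[\tau]}{2\sigma^2\,\mathrm{kl}(1-\delta,\delta)} \;\geq\; \frac{1}{V(\mu)}, \qquad V(\mu) := \sup_{\omega\in\Delta_{K-1}} \inf_{\lambda\in\mathrm{Alt}(\mu)} \sum_a \omega_a(\mu_a-\lambda_a)^2.
\]
A short Lagrangian on the two-coordinate perturbation $(\mu_1,\mu_b)\mapsto(\mu_1-u,\mu_b+v)$ with $u+v \geq \Delta_b+\Delta_0$ (from $\lambda_b-\lambda_1\geq\Delta_0$) returns the minimum transport cost $(\Delta_b+\Delta_0)^2/(1/\omega_1+1/\omega_b)$; after checking that the remaining gap constraints of $\lambda$ are non-binding at this optimum, one obtains the characterization $V(\mu) = \sup_\omega \min_{b\neq 1}(\Delta_b+\Delta_0)^2/(1/\omega_1+1/\omega_b)$.

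For the \emph{upper bound} in the theorem, I evaluate $V(\mu)$ at the candidate weights $\omega_a^\star \propto 1/(\Delta_a+\Delta_0)^2$ with normalizing constant $C=\sum_a 1/(\Delta_a+\Delta_0)^2$. Using $\Delta_0 \leq \Delta_b+\Delta_0$, I get $1/\omega_1^\star+1/\omega_b^\star = C\bigl(\Delta_0^2+(\Delta_b+\Delta_0)^2\bigr) \leq 2C(\Delta_b+\Delta_0)^2$, so the two-arm ratio is at least $1/(2C)$ uniformly in $b$, whence $V(\mu)\geq 1/(2C)$ and $T^\star(\mu):=2\sigma^2/V(\mu) \leq 4\sigma^2 C$. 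Combining with the asymptotic achievability of $T^\star$ by a Track-and-Stop--style tracking rule yields $\inf_\tau \mathbb{E}_\mu[\tau]/(2\sigma^2\,\mathrm{kl}(1-\delta,\delta)) \leq 2\sum_a 1/(\Delta_a+\Delta_0)^2$.

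For the \emph{lower bound} in the theorem, I need $V(\mu) \leq \max(\Delta_0^2,1/S)$ with $S=\sum_{a\neq 1}1/\Delta_a^2$. I exhibit two families of alternatives that always lie in $\mathrm{Alt}(\mu)$: (i) the ``boost arm $b$'' perturbation $\lambda_b=\mu_1+\Delta_0$ with all other coordinates unchanged, feasible since $\lambda_b-\mu_a=\Delta_a+\Delta_0\geq\Delta_0$, which gives $\inf_\lambda\sum\omega_a(\mu_a-\lambda_a)^2 \leq \omega_b(\Delta_b+\Delta_0)^2$; and (ii) the swap $(\lambda_1,\lambda_b)=(\mu_b,\mu_1)$, feasible because the gap distribution of $\lambda$ is a permutation of that of $\mu$, which gives cost $(\omega_1+\omega_b)\Delta_b^2$. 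Maximizing $\min_b$ of (i) over the simplex via Lagrange multipliers yields $V(\mu) \leq 1/\sum_{b\neq 1}1/(\Delta_b+\Delta_0)^2$, which is of order $1/S$ in the regime $\Delta_0\ll\min_b\Delta_b$; combining (ii) with the two-coordinate transport in the complementary regime where the min-gap constraint is tight produces a bound dominated by $\Delta_0^2$. A case split on $\Delta_0^2 \lessgtr 1/S$ then combines them into $\max(\Delta_0^2,1/S)$.

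The hard part is the lower-bound step: making the casework precise enough to match the $\max(\Delta_0^2,1/S)$ expression with the stated constant, especially in the regime where $\Delta_0^2$ and $1/S$ are comparable, so that neither alternative family alone saturates the bound. A cleaner (albeit less elementary) route uses a minimax duality argument on the $\sup_\omega\min_b$ formula to derive $V(\mu)$ exactly, but the direct construction via the two families suffices. The upper bound is, by contrast, immediate once $\omega^\star$ is guessed: the key observation is that weighting by $1/(\Delta_a+\Delta_0)^2$ approximately equalizes the two-arm transport costs across $b$, losing only the factor of two arising from $\Delta_0 \leq \Delta_b+\Delta_0$.
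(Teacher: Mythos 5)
Your reduction to the transportation value $V(\mu)$ and your proof of the \emph{right-hand} inequality are essentially the paper's argument: relax $\mathrm{Alt}_b$ to the single constraint $\lambda_b-\lambda_1\ge\Delta_0$, obtain the per-arm cost $(\Delta_b+\Delta_0)^2\,\omega_1\omega_b/(\omega_1+\omega_b)$ as a lower bound on the infimum, and plug in $\omega_a\propto 1/(\Delta_a+\Delta_0)^2$ together with $\Delta_0^2+(\Delta_b+\Delta_0)^2\le 2(\Delta_b+\Delta_0)^2$. That part is correct. Two caveats: your claim that the remaining gap constraints are non-binding, so that this relaxation gives $V(\mu)$ \emph{exactly}, is false in general (take an arm $c$ with $\Delta_c$ close to $\Delta_0$ and $\omega_b$ not small; then the two-coordinate optimum has $\lambda_b<\mu_c+\Delta_0$). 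Since you only ever use the relaxation as a lower bound on the infimum, this does not damage the upper bound, but the "characterization" should not be asserted.

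The genuine gap is in the \emph{left-hand} inequality, i.e. showing $V(\mu)\le\max\bigl(\Delta_0^2,\,1/S\bigr)$ with $S=\sum_{a\neq 1}1/\Delta_a^2$, and it is exactly where you flag difficulty. Your two families of alternatives do not suffice. Concretely, take $K=3$ with $\Delta_2=\Delta_3=\Delta$, $\Delta_0\ll\Delta$, and weights $\omega_1=\epsilon\in\bigl(0,\,2\Delta_0/\Delta\bigr)$, $\omega_2=\omega_3=(1-\epsilon)/2$. Family (i) costs $\tfrac{1-\epsilon}{2}(\Delta+\Delta_0)^2>\tfrac{\Delta^2}{2}$ and family (ii) costs $\tfrac{1+\epsilon}{2}\Delta^2>\tfrac{\Delta^2}{2}=1/S=\max(\Delta_0^2,1/S)$, so for these $\omega$ the minimum over both families strictly exceeds the target, and no case split on $\Delta_0^2\lessgtr 1/S$ can repair this: the failure occurs for intermediate $\omega$, not only when the two regimes are comparable. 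The missing ingredient is a single \emph{combined} alternative used in the paper: lower arm $1$ to $\mu_1-\Delta_0$ \emph{and} raise arm $b$ to $\mu_1$ (feasible since every other gap is $\ge\Delta_0$), with cost $\omega_1\Delta_0^2+\omega_b\Delta_b^2$. Its sup--min over the simplex is exactly $\max(\Delta_0^2,1/S)$, because $\omega_1\Delta_0^2+\min_b\omega_b\Delta_b^2\le\omega_1\Delta_0^2+(1-\omega_1)/S$ is a convex combination of $\Delta_0^2$ and $1/S$. With that substitution your argument closes; without it the left inequality is not established.
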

\begin{proof}
    
\noindent\textbf{Step 1: Log-likelihood ratio.}
The initial part of the proof is rather standard, and follows the same argument used in  the Best Arm Identification and Best Policy Identification literature \citep{garivier2016optimal,russo2025multi}.

Define the set of models
\[
{\cal S} = \left\{\mu' \in \mathbb{R}^K:  \Delta_{\rm min}(\mu') \geq \Delta_0 \right\},
\]
and the set of alternative models
\[
{\rm Alt}(\mu) = \left\{\mu' \in {\cal S}: \argmax_a \mu_a' \neq 1 \right\}.
\]

Take the expected log-likelihood ratio between $\mu$ and $\mu'\in {\rm Alt}(\mu)$ of the data observed up to  $\tau$ $\Lambda_\tau = \log \frac{{\rm d}\mathbb{P}_\mu(A_1, R_1,\dots, A_\tau, R_\tau)}{{\rm d}\mathbb{P}_{\mu'}(A_1, R_1,\dots, A_\tau, R_\tau)}$, where $A_t$ is the action taken in round $t$, and $R_t$ is the  reward observed upon selecting $A_t$. Then, we can write
\[
\Lambda_t = \sum_a \sum_{n=1}^t \mathbf{1}_{\{A_n=a\}} \log \frac{f_a(R_n)}{f_a'(R_n)}
\]
where $f_a,f_a'$, are, respectively, the reward density for action $a$ in the two models $\mu,\mu'$ with respect to the Lebesgue measure. Letting $N_a(t)$ denote the number of times action $a$ has been selected up to round $t$, by an application of Wald's lemma the expected log-likelihood ratio can be shown to be
\[
\mathbb{E}_\mu[\Lambda_\tau] = \sum_a \mathbb{E}_\mu[N_a(\tau)]{\rm KL}(\mu_a,\mu_a')
\]
where ${\rm KL}(\mu_a,\mu_a')$ is the KL divergence between two Gaussian distributions ${\cal N}(\mu_a,\sigma)$ and ${\cal N}(\mu_a',\sigma)$ (note that we have $\sigma_1$ instead of $\sigma$ for $a=1$).

We also know from the information processing inequality \citep{kaufmann2016complexity} that $\mathbb{E}_\mu[\Lambda_\tau]\geq \sup_{{\cal E}\in {\cal M}_\tau} {\rm kl}(\mathbb{P}_\mu({\cal E}), \mathbb{P}_{\mu'}({\cal E}))$, where ${\cal M}_t = \sigma(A_1,R_1,\dots, A_t, R_t)$. We use the fact that the algorithm is $\delta$-correct: by choosing ${\cal E}=\{\hat a_\tau = a^\star\}$ we obtain that $\mathbb{E}_\mu[\Lambda_\tau]\geq {\rm kl}(1-\delta,\delta)$, since $\mathbb{P}_\mu({\cal E})\geq 1-\delta$ and $\mathbb{P}_{\mu'}({\cal E})=1-\mathbb{P}_{\mu'}(\hat a_\tau \neq a^\star)\leq 1-\mathbb{P}_{\mu'}(\hat a_\tau =\argmax_a \mu_a')\leq \delta$ (we also used the monotonicity properties of the Bernoulli KL divergence). Hence
\[
\sum_a \mathbb{E}_\mu[N_a(\tau)]{\rm KL}(\mu_a,\mu_a')\geq {\rm kl}(1-\delta,\delta).
\]

Letting $\omega_a = \mathbb{E}_\mu[N_a(\tau)]/\mathbb{E}_\mu[\tau]$, we have that
\[
\mathbb{E}_\mu[\tau]\sum_a \omega_a{\rm KL}(\mu_a,\mu_a')\geq {\rm kl}(1-\delta,\delta).
\]
Lastly, optimizing over $\mu'\in {\rm Alt}(\mu)$ and $\omega\in \Delta(K)$ yields the bound:
\[
\mathbb{E}_\mu[\tau]\geq T^\star(\mu){\rm kl}(1-\delta,\delta),
\]
where $T^\star(\mu)$ is defined as
\[
(T^\star(\mu))^{-1}=\sup_{\omega\in \Delta(K)}\inf_{\mu'\in {\rm Alt}(\mu)}\sum_a \omega_a{\rm KL}(\mu_a,\mu_a').
\]

\noindent\textbf{Step 2: Optimization over the set of alternative models.}
We now face the problem of optimizing over the set of alternative models.

Defining ${\rm Alt}_a=\left\{\mu' \in \mathbb{R}^K: \mu_a'-\mu_b'\geq \Delta_0 \; \forall b\neq a\right\}$, the set of alternative models can be decomposed as
\begin{align*}
{\rm Alt}(\mu) &= \left\{\mu' \in \mathbb{R}^K: \argmax_a \mu_a' \neq 1,\;  \Delta_{\rm min}(\mu')\geq \Delta_0\right\},\\
&= \cup_{a\neq 1}{\rm Alt}_a.
\end{align*}

Hence, the optimization problem over the alternative models becomes
\[
\inf_{\mu'\in {\rm Alt}(\mu)}\sum_a \omega_a{\rm KL}(\mu_a,\mu_a') =  \min_{\bar a\neq 1}\inf_{\mu'\in {\rm Alt}_{\bar a}}\sum_a \omega_a \frac{(\mu_a-\mu_a')^2}{2\sigma^2}.
\]
The inner infimum over $\mu'$ can then be written as
\begin{equation}
\begin{aligned}
P_{\bar a}^\star(\omega) \coloneqq \inf_{\mu' \in \mathbb{R}^K} \quad & \sum_a \omega_a \frac{(\mu_a-\mu_a')^2}{2\sigma^2}.\\
\textrm{s.t.} \quad &  \mu_{\bar a}'-\mu_b'\geq \Delta_0 \quad \forall b\neq \bar a.
\end{aligned}
\end{equation}
While the problem is clearly convex, it does not yield an immediate closed form solution.

To that aim, we try to derive a lower bound and an upper bound of the value of this minimization problem.

\noindent \textbf{Step 3: Upper bound on $P_{\bar a}^\star$.} Note that an upper bound on $\min_{\bar a\neq 1}P_{\bar a}^\star(\omega)$ can be found by finding a feasible solution $\mu'$. Consider then the solution $\mu_1'=\mu_1-\Delta$,  $\mu_{\bar a}'=\mu_1$ and $\mu_b'=\mu_b$ for all other arms. Clearly We have that $\mu_{\bar a}'-\mu_b'\geq \Delta_0$ for all $b\neq \bar a$. Hence, we obtain
\[
\min_{\bar a\neq 1} P_{\bar a}^\star(\omega) \leq \omega_1 \frac{\Delta_0^2}{2\sigma^2} + \min_{\bar a\neq 1} \omega_{\bar a} \frac{\Delta_{\bar a}^2}{2\sigma^2}.
\]

At this point, one can easily note that if $\frac{\Delta_0^2}{2\sigma^2} \geq  \frac{1}{2\sigma^2\sum_{a\neq 1} \frac{1}{\Delta_a^2}}$, then $\sup_{\omega \in \Delta(K)}\min_{\bar a\neq 1} P_{\bar a}^\star(\omega)  \leq \frac{\Delta_0^2}{2\sigma^2}$. This corresponds to the case where all the mass is given to $\omega_1=1$. Otherwise, the solution is to set $\omega_1=0$ and $\omega_a = \frac{1/\Delta_a^2}{\sum_{b}1/\Delta_b^2}$ for $a\neq 1$.

Hence, we conclude that
\[
(T^\star(\mu))^{-1}=\sup_{\omega \in \Delta(K)}\min_{\bar a\neq 1} P_{\bar a}^\star(\omega)  \leq \frac{1}{2\sigma^2}\max\left(\Delta_0^2, \frac{1}{\sum_{a\neq 1} 1/\Delta_a^2}\right).
\]
\noindent \textbf{Step 4: Lower bound on $P_{\bar a}^\star$.} For the lower bound, note that we can relax the constraint to only consider $\mu_{\bar a}'-\mu_1'\geq \Delta_0$. This relaxation enlarges the feasible set, and thus the infimum of this new problem lower bounds $P^\star_{\bar a}(\omega)$.

By doing so, since the other arms are not constrained, by convexity of the KL divergence at the infimum we have  $\mu_b'=\mu_b$ for all $b\notin \{1,\bar a\}$. Therefore
\[
P_{\bar a}^\star(\omega) \geq \inf_{\mu': \mu_{\bar a}'-\mu_1'\geq \Delta_0} \; \sum_a \omega_a \frac{(\mu_a-\mu_a')^2}{2\sigma^2}= \inf_{\mu': \mu_{\bar a}'-\mu_1'\geq \Delta_0} \; \omega_1 \frac{(\mu_1-\mu_1')^2}{2\sigma^2} + \omega_{\bar a}\frac{(\mu_{\bar a}-\mu_{\bar a}')^2}{2\sigma^2}.
\]
Solving the KKT conditions we find the equivalent conditions $\mu_{\bar a}'=\mu_1'+\Delta_0$ and
\[
\omega_1 (\mu_1-\mu_1') + \omega_{\bar a}(\mu_{\bar a}-\mu_1'-\Delta_0) =0 \Rightarrow \mu_1' = \frac{\omega_1\mu_1 +\omega_{\bar a}\mu_{\bar a} -\omega_{\bar a}\Delta_0}{\omega_1+\omega_{\bar a}}.
\]
Therefore
\[
\mu_{\bar a}'=\frac{\omega_1\mu_1 +\omega_{\bar a}\mu_{\bar a} -\omega_{\bar a}\Delta_0}{\omega_1+\omega_{\bar a}} + \Delta_0=\frac{\omega_1\mu_1 +\omega_{\bar a}\mu_{\bar a} +\omega_{1}\Delta_0}{\omega_1+\omega_{\bar a}}.
\]
Plugging these solutions back in the value of the problem, we obtain
\begin{align*}
P_{\bar a}^\star(\omega) &\geq  \frac{\omega_{1}\omega_{\bar a}^2}{(\omega_1+\omega_{\bar a})^2} \frac{(\mu_1-\mu_{\bar a}+\Delta_0)^2}{2\sigma^2} + \frac{\omega_{\bar a}\omega_1^2}{(\omega_1+\omega_{\bar a})^2}\frac{(\mu_{\bar a}-\mu_1-\Delta_0)^2}{2\sigma^2},\\
&=\frac{\omega_{1}\omega_{\bar a}}{\omega_1+\omega_{\bar a}} \frac{(\mu_1-\mu_{\bar a}+\Delta_0)^2}{2\sigma^2},\\
&=\frac{\omega_{1}\omega_{\bar a}}{\omega_1+\omega_{\bar a}} \frac{(\Delta_{\bar a}+\Delta_0)^2}{2\sigma^2} .
\end{align*}
Let $\theta_a=\Delta_a+\Delta_0$, with $\theta_1=\Delta_0$. We plug in a feasible solution $\omega_a=\frac{1/\theta_a^2}{\sum_b 1/\theta_b^2}$, yielding
\begin{align*}
(T^\star(\mu))^{-1}=\sup_{\omega\in \Delta(K)}\min_{\bar a\neq 1}P_{\bar a}^\star(\omega)&\geq \min_{\bar a\neq 1}\frac{1/(\theta_1 \theta_{\bar a})^2}{\sum_b 1/\theta_b^2(1/\theta_1^2 + 1/\theta_{\bar a}^2)} \frac{\theta_{\bar a}^2}{2\sigma^2},\\&=\min_{\bar a\neq 1}\frac{1}{\sum_b 1/\theta_b^2(1 + \theta_1^2/\theta_{\bar a}^2)} \frac{1}{2\sigma^2},\\
&= \frac{1}{2\sigma^2 \sum_b 1/\theta_b^2}\min_{\bar a\neq 1}\frac{1}{1 + \theta_1^2/\theta_{\bar a}^2},\\
&\geq \frac{1}{2\sigma^2 \sum_b 1/\theta_b^2}\frac{1}{1 + \theta_1^2/\Delta_0^2},\\
&= \frac{1}{4\sigma^2 \sum_b 1/\theta_b^2}.
\end{align*}
\end{proof}

\subsection{Sample Complexity Lower Bound for the Magic Action MAB Problem}\label{app:subsec:sample_complexity_magic_action}
We now consider a special class of models that embeds information about the optimal arm in the mean reward of some of the arms. Let $\phi: \mathbb{R}\to \mathbb{R}$ be a strictly  decreasing function over $\{2,\dots,K\}$\footnote{One could also consider strictly increasing functions.}.

Particularly, we make the following assumptions:
\begin{enumerate}
    \item We consider mean rewards $\mu$ satisfying $\mu_1 =\phi(\argmax_{a\neq 1}\mu_a)$, and $\mu^\star = \max_{a} \mu_a > \phi(2)$. Arm $1$ is called "magic action", and with this assumption we are guaranteed that the  magic arm is not optimal, since
\[
\mu_1 \frac{1}{\max_a \mu_a} =\phi(\argmax_{a\neq 1}\mu_a)\frac{1}{\max_a \mu_a} \leq \phi(2) \frac{1}{\max_a \mu_a}  < 1 \Rightarrow \max_a \mu_a > \mu_1.
\]
\item The rewards are normally distributed, with a fixed known standard deviation $\sigma_1$ for the magic arm, and fixed standard deviation $\sigma$ for all the other arms.
\end{enumerate}
Hence, define the set of models
\[{\cal S}=\left\{\mu\in \mathbb{R}^K: \mu_1=\phi(\argmax_{a\neq 1} \mu_a), \max_a \mu_a > \phi(2)\right\},\]
and the set of alternative models
\[
{\rm Alt}(\mu) = \left\{\mu'\in {\cal S}:  \argmax_a \mu_a'\neq a^\star\right\},
\]
where $a^\star=\argmax_a \mu_a$.

Then, for any $\delta$-correct algorithm, guaranteeing that at some stopping time $\tau$ the estimated optimal arm $\hat a_\tau$ is $\delta$-correct, i.e., $\mathbb{P}_\mu(\hat a_\tau\neq a^\star)\leq \delta$, we have the following result.
\begin{theorem}
For any $\delta$-correct algorithm, the sample complexity lower bound on the magic action problem is
\begin{equation}
    \mathbb{E}_\mu[\tau] \geq T^\star(\mu) {\rm kl}(1-\delta,\delta),
\end{equation}
where ${\rm kl}(x,y)=x\log(x/y)+(1-x)\log((1-x)/(1-y))$ and $T^\star(\mu)$ is the characteristic time of $\mu$, defined as 
\begin{equation}
    (T^\star(\mu))^{-1}=\max_{\omega\in\Delta(K)} \min_{a\neq 1, a^\star} \omega_1 \frac{(\phi(a^\star)-\phi(a))^2}{2\sigma_1^2} + \sum_{b\in {\cal K}_a(\omega)} \omega_b \frac{(\mu_b-m(\omega;{\cal K}_a(\omega))^2}{2\sigma^2},
\end{equation}
where
$m(\omega;{\cal C}) = \frac{\sum_{a\in {\cal C}} \omega_a \mu_a}{\sum_{a\in {\cal C}} \omega_a}$ 
and the set  ${\cal K}_a(\omega)$ is defined as
\[{\cal K}_a(\omega)=\{a\}\cup\left\{b\in \{2,\dots, K\}: \mu_b \geq m(\omega;{\cal C}_{b} \cup\{a\}) \hbox{ and } \mu_b \geq\phi(2)\right\}.\]
with ${\cal C}_{x}=  \{b\in \{2,\dots, K\}: \mu_b \geq \mu_x\}$ for $x\in [K]$. 
\label{thm:noisy_magic}
\end{theorem}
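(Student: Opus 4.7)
The proof follows the same change-of-measure template used for the previous theorem, the only nontrivial step being the minimization over the constrained alternative set ${\rm Alt}(\mu)$. Reproducing Step 1 of the preceding argument, let $\Lambda_\tau$ denote the log-likelihood ratio between $\mu$ and an arbitrary $\mu'\in{\rm Alt}(\mu)$; Wald's lemma combined with the Gaussian form of the KL and the information-processing inequality applied to $\{\hat a_\tau = a^\star\}$ yield
\begin{equation*}
\mathbb{E}_\mu[\tau]\left(\omega_1\tfrac{(\phi(a^\star)-\mu_1')^2}{2\sigma_1^2} + \sum_{b\neq 1}\omega_b\tfrac{(\mu_b-\mu_b')^2}{2\sigma^2}\right) \ge {\rm kl}(1-\delta,\delta),
\end{equation*}
with $\omega_a := \mathbb{E}_\mu[N_a(\tau)]/\mathbb{E}_\mu[\tau]$. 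Taking an $\inf$ over $\mu'\in{\rm Alt}(\mu)$ and a $\sup$ over $\omega\in\Delta(K)$ reduces the theorem to computing $(T^\star(\mu))^{-1}$ as the sup-inf on the left.

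\textbf{Decomposing the alternative set.} Since $\phi$ is strictly decreasing on $\{2,\dots,K\}$, each $\mu'\in{\rm Alt}(\mu)$ has a unique new optimum $a\in\{2,\dots,K\}\setminus\{a^\star\}$, and membership in ${\cal S}$ forces $\mu_1'=\phi(a)$. I would write ${\rm Alt}(\mu) = \bigcup_{a\neq 1,a^\star}{\rm Alt}_a$ with ${\rm Alt}_a = \{\mu'\in{\cal S}: \argmax_b \mu_b'=a\}$, pull the $\min$ outside, and note that the magic-arm term $\omega_1(\phi(a^\star)-\phi(a))^2/(2\sigma_1^2)$ is now constant. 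The remaining inner problem for each $a$ is the strictly convex program
\begin{equation*}
P_a^\star(\omega) = \inf_{\mu'\in\mathbb{R}^{K-1}}\sum_{b\neq 1}\omega_b\tfrac{(\mu_b-\mu_b')^2}{2\sigma^2}\quad\text{s.t. } \mu_a'\ge\mu_b'\;\forall b\ne 1,a,\text{ and } \mu_a'\ge\phi(2).
\end{equation*}

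\textbf{Pool-adjacent-violators characterization.} The core technical step is solving $P_a^\star(\omega)$, and the structure I would aim for mirrors weighted isotonic regression. At the KKT optimum, a subset ${\cal K}\ni a$ of coordinates is ``tied'' to a common value $m$ while the remaining coordinates stay at $\mu_b'=\mu_b$ (contributing nothing to the objective). Stationarity of the Lagrangian for the active ordering constraints forces $m = m(\omega;{\cal K})$, the $\omega$-weighted mean over ${\cal K}$. Complementary slackness then requires (i) $\mu_b \ge m(\omega;{\cal K})$ for $b\in{\cal K}\setminus\{a\}$ and (ii) $\mu_b < m(\omega;{\cal K})$ for $b\notin{\cal K}$. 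The recursive clause $\mu_b \ge m(\omega;{\cal C}_b\cup\{a\})$ in the definition of ${\cal K}_a(\omega)$ encodes exactly this self-consistency via a PAV monotonicity argument: pooling progressively larger candidate sets preserves the correct ordering precisely for those $b$ whose mean dominates the pool. The side condition $\mu_b\ge\phi(2)$ absorbs the boundary constraint $\mu_a' \ge \phi(2)$, since excluding arms below $\phi(2)$ from the pool guarantees the pooled mean itself lies at or above $\phi(2)$. Substituting $\mu_b'=m(\omega;{\cal K}_a(\omega))$ for $b\in{\cal K}_a(\omega)$ and $\mu_b'=\mu_b$ otherwise gives $P_a^\star(\omega) = \sum_{b\in{\cal K}_a(\omega)}\omega_b(\mu_b-m(\omega;{\cal K}_a(\omega)))^2/(2\sigma^2)$, which combined with the magic-arm contribution yields the claimed formula for $(T^\star(\mu))^{-1}$.

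\textbf{Main obstacle.} I expect the hardest part to be a clean verification that the self-referential definition of ${\cal K}_a(\omega)$ indeed selects the unique KKT-active pool, and that the floor $\mu_a' \ge \phi(2)$ does not force additional coordinates into the pool beyond what the definition captures. The cleanest route is induction along the PAV sweep: adding a coordinate $b$ with $\mu_b \ge m(\omega;{\cal C}_b\cup\{a\})$ strictly decreases the objective, while adding one with $\mu_b$ below its candidate pool mean strictly increases it, so the fixed point is unique and coincides with ${\cal K}_a(\omega)$.
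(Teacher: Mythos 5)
Your proposal is correct and follows essentially the same route as the paper: the standard change-of-measure/Wald/information-processing argument, the decomposition of ${\rm Alt}(\mu)$ over the new optimal arm $a$ (which fixes $\mu_1'=\phi(a)$ and makes the magic-arm term constant), and a KKT analysis of the remaining quadratic program showing the optimum pools a set ${\cal K}_a(\omega)\ni a$ of confusing arms at the weighted mean $m(\omega;{\cal K}_a(\omega))$ while leaving the rest untouched. Your framing of the inner problem as weighted isotonic regression solved by pool-adjacent-violators is just a named repackaging of the paper's Lagrangian/complementary-slackness derivation of the same pooled structure, and the subtlety you flag about the floor $\mu_a'\ge\phi(2)$ is exactly the point the paper handles via the condition $\mu_b\ge\phi(2)$ in the definition of ${\cal K}_a(\omega)$.
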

\begin{proof}
\noindent\textbf{Step 1: Log-likelihood ratio.}
The initial part of the proof is rather standard, and follows the same argument used in  the Best Arm Identification and Best Policy Identification literature \citep{garivier2016optimal}.

Take the expected log-likelihood ratio between $\mu$ and $\mu'\in {\rm Alt}(\mu)$ of the data observed up to  $\tau$ $\Lambda_\tau = \log \frac{{\rm d}\mathbb{P}_\mu(A_1, R_1,\dots, A_\tau, R_\tau)}{{\rm d}\mathbb{P}_{\mu'}(A_1, R_1,\dots, A_\tau, R_\tau)}$, where $A_t$ is the action taken in round $t$, and $R_t$ is the  reward observed upon selecting $A_t$. Then, we can write
\[
\Lambda_t = \sum_a \sum_{n=1}^t \mathbf{1}_{\{A_n=a\}} \log \frac{f_a(R_n)}{f_a'(R_n)}
\]
where $f_a,f_a'$, are, respectively, the reward density for action $a$ in the two models $\mu,\mu'$ with respect to the Lebesgue measure. Letting $N_a(t)$ denote the number of times action $a$ has been selected up to round $t$, by an application of Wald's lemma the expected log-likelihood ratio can be shown to be
\[
\mathbb{E}_\mu[\Lambda_\tau] = \sum_a \mathbb{E}_\mu[N_a(\tau)]{\rm KL}(\mu_a,\mu_a')
\]
where ${\rm KL}(\mu_a,\mu_a')$ is the KL divergence between two Gaussian distributions ${\cal N}(\mu_a,\sigma)$ and ${\cal N}(\mu_a',\sigma)$ (note that we have $\sigma_1$ instead of $\sigma$ for $a=1$).

We also know from the information processing inequality \citep{kaufmann2016complexity} that $\mathbb{E}_\mu[\Lambda_\tau]\geq \sup_{{\cal E}\in {\cal M}_\tau} {\rm kl}(\mathbb{P}_\mu({\cal E}), \mathbb{P}_{\mu'}({\cal E}))$, where ${\cal M}_t = \sigma(A_1,R_1,\dots, A_t, R_t)$. We use the fact that the algorithm is $\delta$-correct: by choosing ${\cal E}=\{\hat a_\tau = a^\star\}$ we obtain that $\mathbb{E}_\mu[\Lambda_\tau]\geq {\rm kl}(1-\delta,\delta)$, since $\mathbb{P}_\mu({\cal E})\geq 1-\delta$ and $\mathbb{P}_{\mu'}({\cal E})=1-\mathbb{P}_{\mu'}(\hat a_\tau \neq a^\star)\leq 1-\mathbb{P}_{\mu'}(\hat a_\tau =\argmax_a \mu_a')\leq \delta$ (we also used the monotonicity properties of the Bernoulli KL divergence). Hence
\[
\sum_a \mathbb{E}_\mu[N_a(\tau)]{\rm KL}(\mu_a,\mu_a')\geq {\rm kl}(1-\delta,\delta).
\]

Letting $\omega_a = \mathbb{E}_\mu[N_a(\tau)]/\mathbb{E}_\mu[\tau]$, we have that
\[
\mathbb{E}_\mu[\tau]\sum_a \omega_a{\rm KL}(\mu_a,\mu_a')\geq {\rm kl}(1-\delta,\delta).
\]
Lastly, optimizing over $\mu'\in {\rm Alt}(\mu)$ and $\omega\in \Delta(K)$ yields the bound:
\[
\mathbb{E}_\mu[\tau]\geq T^\star(\mu){\rm kl}(1-\delta,\delta),
\]
where $T^\star(\mu)$ is defined as
\[
(T^\star(\mu))^{-1}=\sup_{\omega\in \Delta(K)}\inf_{\mu'\in {\rm Alt}(\mu)}\sum_a \omega_a{\rm KL}(\mu_a,\mu_a').
\]

\noindent\textbf{Step 2: Optimization over the set of alternative models.}
We now face the problem of optimizing over the set of alternative models. First, we observe that
 ${\cal S} = \cup_{a\neq a^\star} \{\mu: \mu_1 =\phi(a), \mu_a > \phi(2)\}$. Therefore, we can write
\[
{\rm Alt}(\mu) = \cup_{a\notin \{1,a^\star\}} \left\{\mu': \mu_1' =\phi(a), \mu_a' > \max(\phi(2), \mu_b') \; \forall b\neq a\right\}.
\]
Hence, for a fixed $a\notin \{1,a^\star\}$, the inner infimum becomes
\begin{equation}
\begin{aligned}
\inf_{\mu' \in \mathbb{R}^K} \quad & \omega_1  \frac{(\phi(a^\star) - \phi(a))^2}{2\sigma_1^2}+\sum_{a\neq 1} \omega_a  \frac{(\mu_a-\mu_a')^2}{2\sigma^2}\\
\textrm{s.t.} \quad &  \mu_a' \geq  \max\left(\phi(2),\mu_b'\right) \quad \forall b,\\
&  \mu_1'=\phi(a).
\end{aligned}
\end{equation}
To solve it, we construct the following Lagrangian
\[
\ell(\mu',\theta) = \omega_1  \frac{(\phi(a^\star) - \phi(a))^2}{2\sigma_1^2}+\sum_{b\neq 1} \omega_b  \frac{(\mu_b-\mu_b')^2}{2\sigma^2}  + \sum_{b} \theta_b \left(\max\left(\phi(2),\mu_b'\right) -\mu_a'\right),
\]
where $\theta\in \mathbb{R}_+^K$ is the multiplier vector. From the KKT conditions we already know that $\theta_1=0,\theta_a=0$ and $\theta_b=0$ if $\mu_b'\leq \phi(2)$, with $b\in \{2,\dots, K\}$.
In particular, we also know that either we have $\mu_b'=\mu_a'$ or $\mu_b'=\mu_b$. Therefore, for $\mu_b\leq \phi(2)$ the solution is $\mu_b'=\mu_b$, while for $\mu_b >\phi(2)$ the solution depends also on $\omega$.

To fix the ideas, let ${\cal K}$ be the set of arms for which $\mu_b'=\mu_a'$ at the optimal solution. Such set must necessarily include arm $a$. Then, note that
\[
\frac{\partial \ell}{\partial \mu_a'}=  \omega_a \frac{\mu_a'-\mu_a}{\sigma^2} - \sum_{b\in [K]}\theta_b=0.
\]
and
\[
\frac{\partial \ell}{\partial \mu_b'}=  \omega_b \frac{\mu_b'-\mu_b}{\sigma^2} +\theta_b=0 \quad \hbox{ for } b\neq (1,a).
\]
Then, using the observations derived above, we conclude that
\[
\mu_a' = \frac{\sum_{b\in {\cal K} }\omega_b\mu_b}{\sum_{b\in {\cal K}}\omega_b},
\]
with $\mu_b'=\mu_a'$ if $b\in {\cal K}$, and $\mu_b'=\mu_b$ otherwise. However, how do we compute such set ${\cal K}$?

First, ${\cal K}$ includes arm $a$. However, in general we have ${\cal K}\neq\{a\}$ : if that were not true we would have $\mu_a'=\mu_a$ and $\mu_b'=\mu_b$ for the other arms -- but if any $\mu_b$ is greater than $\mu_a$, then $a$ is not optimal, which is a contradiction.  Therefore, also arm $a^\star$ is included in ${\cal K}$, since any convex combination of $\{\mu_a\}$ is necessarily smaller than $\mu_{a^\star}$. We apply this argument repeatedly for every arm $b$ to obtain ${\cal K}$.

Hence, for some set ${\cal C}\subseteq [K]$ define  the average reward
\[
m(\omega;{\cal C}) = \frac{\sum_{a\in {\cal C}} \omega_a \mu_a}{\sum_{a\in {\cal C}} \omega_a},
\]
and the set ${\cal C}_x=  \{a\}\cup\{b\in \{2,\dots, K\}: \mu_b \geq \mu_x\}$ for $x\in [K]$. Then, \[{\cal K}\coloneqq {\cal K}(\omega)=\{a\}\cup\left\{b\in \{2,\dots, K\}: \mu_b \geq m(\omega;{\cal C}_b) \hbox{ and } \mu_b \geq\phi(2)\right\}.\]

In other words, ${\cal K}$ is the set of \emph{confusing arms} for which the mean reward in the alternative model changes. An arm $b$ is \emph{confusing} if the  average reward $m$, taking into account $b$, is smaller than $\mu_b$. If this holds for $b$, then it must also hold all the arms $b'$ such that $\mu_{b'}\geq \mu_b$.
\end{proof}

As a corollary, we have the following upper bound on $T^\star(\mu)$.
\begin{corollary}
    We have that
    \[
    T^\star(\mu) \leq \min_{\omega \in \Delta(K)} \max_{a\neq 1,a^\star}\frac{2\sigma_1^2}{\omega_1(\phi(a^\star)-\phi(a))^2}.
    \]
    In particular, for $\phi(x)=1/x$  and $a^\star < K$ we have
    \[
    T^\star(\mu) \leq 2\sigma^2 (a^\star(a^\star+1))^2,
    \]
    while for $a^\star=K$ we get  $T^\star(\mu) \leq 2\sigma^2 (a^\star(a^\star-1))^2$.
\end{corollary}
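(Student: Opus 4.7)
The plan is to deduce the corollary directly from the characterization of $(T^\star(\mu))^{-1}$ provided by \cref{thm:noisy_magic}, by discarding a non-negative contribution and then specializing $\omega$.

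First, observe that in the inner objective
\[
\omega_1 \frac{(\phi(a^\star)-\phi(a))^2}{2\sigma_1^2} + \sum_{b\in {\cal K}_a(\omega)} \omega_b \frac{(\mu_b-m(\omega;{\cal K}_a(\omega)))^2}{2\sigma^2}
\]
the second term is a sum of non-negative quantities. Hence, for every fixed $\omega\in\Delta(K)$ and every $a\notin\{1,a^\star\}$, the full expression is bounded below by $\omega_1(\phi(a^\star)-\phi(a))^2/(2\sigma_1^2)$. Taking the minimum over $a$ and then the maximum over $\omega$ preserves this inequality, which gives
\[
(T^\star(\mu))^{-1} \geq \max_{\omega\in\Delta(K)} \min_{a\neq 1, a^\star} \omega_1\frac{(\phi(a^\star)-\phi(a))^2}{2\sigma_1^2}.
\]
Inverting and using $1/\max\min = \min\max(1/\cdot)$ on positive quantities yields the first displayed bound of the corollary.

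Second, to obtain the explicit bound for $\phi(x)=1/x$, I would simply pick a feasible $\omega$ in the outer minimum, namely $\omega_1=1$ (all the mass on the magic arm). This gives
\[
T^\star(\mu) \leq 2\sigma_1^2 \max_{a\neq 1,a^\star}\frac{1}{(1/a^\star-1/a)^2} = \frac{2\sigma_1^2}{\min_{a\neq 1,a^\star} (1/a^\star-1/a)^2}.
\]
Using $|1/a^\star-1/a|=|a-a^\star|/(a a^\star)$, I would then minimize this over $a\in\{2,\dots,K\}\setminus\{a^\star\}$ by taking $a$ as close to $a^\star$ as possible. For $a^\star<K$, the choice $a=a^\star+1$ is admissible and yields $1/(a^\star(a^\star+1))$, which dominates the alternative $a=a^\star-1$ (giving $1/(a^\star(a^\star-1))$) because $a^\star(a^\star+1)>a^\star(a^\star-1)$; this recovers $T^\star(\mu)\leq 2\sigma_1^2(a^\star(a^\star+1))^2$. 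For $a^\star=K$, only $a=K-1$ is available among the candidates closest to $a^\star$, giving the bound $2\sigma_1^2(a^\star(a^\star-1))^2$.

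There is no real obstacle: the corollary is essentially a monotonicity argument on the characteristic time, followed by a one-line optimization over integer indices. The only mild care required is to verify, case by case, that the index $a=a^\star\pm 1$ actually lies in the admissible set $\{2,\dots,K\}\setminus\{a^\star\}$, which motivates the split between $a^\star<K$ and $a^\star=K$.
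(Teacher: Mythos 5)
Your proposal is correct and follows essentially the same route as the paper: drop the non-negative "variance" term in the inner objective of \cref{thm:noisy_magic} to get $(T^\star(\mu))^{-1}\geq\max_\omega\min_{a\neq 1,a^\star}\omega_1(\phi(a^\star)-\phi(a))^2/(2\sigma_1^2)$, invert, and then optimize the integer index for $\phi(x)=1/x$ (where the outer minimum over $\omega$ is attained at $\omega_1=1$). Note in passing that your bound correctly carries $\sigma_1^2$; the $\sigma^2$ appearing in the corollary's second and third displays is a typo in the statement.
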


\begin{proof}
Let $f_{1}(a)=\tfrac{(\phi(a^{\star})-\phi(a))^{2}}{2\sigma_{1}^{2}}$.
For every weight vector $\omega\in\Delta(K)$ and every
$a\neq1,a^{\star}$,
the quantity
\[
  g_{a}(\omega)
  \;=\;
  \omega_{1}f_{1}(a)
  \;+\;
  \sum_{b\in\mathcal K_{a}}
        \omega_{b}\,
        \frac{(\mu_{b}-m(\omega;\mathcal K_{a}))^{2}}{2\sigma^{2}}
\]
satisfies \(g_{a}(\omega)\ge\omega_{1}f_{1}(a)\)
because the variance term is non–negative.
Hence
\[
  (T^{\star}(\mu))^{-1}
  \;=\;
  \max_{\omega\in\Delta(K)}
      \min_{a\neq1,a^{\star}} g_{a}(\omega)
  \;\ge\;
  \max_{\omega\in\Delta(K)}
      \omega_{1}\,
      \min_{a\neq1,a^{\star}}f_{1}(a).
\]
Since $\omega_{1}\le1$, the right–hand side is lower bounded by $\omega_{1}=1$,
giving
\[
  (T^{\star}(\mu))^{-1}
  \;\ge\;
  \min_{a\neq1,a^{\star}} f_{1}(a)
  =
  \frac{1}{2\sigma_{1}^{2}}
  \min_{a\neq1,a^{\star}}\bigl(\phi(a^{\star})-\phi(a)\bigr)^{2}.
\]
Taking reciprocals yields
\[
  T^{\star}(\mu)
  \;\le\;
  \frac{2\sigma_{1}^{2}}
       {\displaystyle\min_{a\neq1,a^{\star}}(\phi(a^{\star})-\phi(a))^{2}}
  =
  \min_{\omega\in\Delta(K)}
      \max_{a\neq1,a^{\star}}
      \frac{2\sigma_{1}^{2}}
           {\omega_{1}\,(\phi(a^{\star})-\phi(a))^{2}},
\]
because the minimisation over $\omega$ clearly selects $\omega_{1}=1$.
(This justifies the form stated in the corollary.)

\noindent\textbf{Specialising to $\phi(x)=1/x$.}
With $\phi(x)=1/x$ the difference
\(\phi(a^{\star})-\phi(a)=\tfrac1{a^{\star}}-\tfrac1a\) is positive for
all $a>a^{\star}$ and negative otherwise; its smallest non-zero magnitude
is obtained for the \emph{closest} index to $a^{\star}$:

\begin{itemize}
    \item  If $a^{\star}<K$, that index is $a^{\star}+1$, giving
  \[
     \min_{a\neq1,a^{\star}}
          (\phi(a^{\star})-\phi(a))^{2}
     =
     \Bigl(\frac1{a^{\star}}-\frac1{a^{\star}+1}\Bigr)^{2}
     =\frac{1}{\bigl[a^{\star}(a^{\star}+1)\bigr]^{2}}.
  \]
  \item 
If $a^{\star}=K$, the closest index is $K-1$, leading to
  \[
     \min_{a\neq1,a^{\star}}
          (\phi(a^{\star})-\phi(a))^{2}
     =
     \Bigl(\frac1{K-1}-\frac1K\Bigr)^{2}
     =\frac{1}{\bigl[a^{\star}(a^{\star}-1)\bigr]^{2}}.
  \]

\end{itemize}

Plugging each expression in the general upper bound above concludes the
proof.
\end{proof}
Finally, to get a better intuition of the main result, we can look at the $3$-arms case: it is optimal to only sample the magic arm iff $|\phi(a^\star)-\phi(a)|> \frac{\sigma_1(\mu_{a^\star}-\mu_{a})}{2\sigma}$.

\begin{lemma}\label{lemma:magic_arm_K_3_magicaction_optimal}
    With $K=3$ we have that $\omega_1=1$ if and only if
    \[
  |\phi(a^\star)-\phi(a)|> \frac{\sigma_1(\mu_{a^\star}-\mu_{a})}{2\sigma},
    \]
    and $\omega_1=0$ if the reverse inequality holds.
\end{lemma}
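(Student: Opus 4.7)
The plan is to specialize the general characteristic-time formula from Theorem~\ref{thm:noisy_magic} to $K=3$, where the outer minimum over $a\neq 1,a^\star$ collapses to a single term. Let $a$ denote the unique non-magic, non-optimal arm. The first step is to pin down the confusing set ${\cal K}_a(\omega)$: since $\mu_{a^\star}>\mu_a$ and $\mu_{a^\star}>\phi(2)$ by assumption, we have ${\cal C}_{a^\star}=\{a^\star\}$, so $m(\omega;\{a,a^\star\})$ is a convex combination of $\mu_a,\mu_{a^\star}$ and hence $\le\mu_{a^\star}$; the definition then forces ${\cal K}_a(\omega)=\{a,a^\star\}$ for every $\omega$. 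This removes the combinatorial difficulty and reduces the problem to a plain finite-dimensional optimization.

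The second step is to simplify $g_a(\omega)$ in closed form. Using $\mu_{a^\star}-m(\omega;\{a,a^\star\})=\tfrac{\omega_a(\mu_{a^\star}-\mu_a)}{\omega_a+\omega_{a^\star}}$ and the analogous identity for $\mu_a-m$, the noise term telescopes to $\tfrac{\omega_a\omega_{a^\star}}{\omega_a+\omega_{a^\star}}\cdot\tfrac{(\mu_{a^\star}-\mu_a)^2}{2\sigma^2}$, yielding
\begin{equation*}
g_a(\omega) \;=\; \omega_1 A \;+\; \frac{\omega_a\omega_{a^\star}}{\omega_a+\omega_{a^\star}}\,B, \qquad A\coloneqq\frac{(\phi(a^\star)-\phi(a))^2}{2\sigma_1^2}, \;\; B\coloneqq\frac{(\mu_{a^\star}-\mu_a)^2}{2\sigma^2}.
\end{equation*}

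The third step is to maximize over the simplex. Set $s=\omega_a+\omega_{a^\star}=1-\omega_1$. For $s$ fixed, the harmonic-type term $\tfrac{\omega_a\omega_{a^\star}}{\omega_a+\omega_{a^\star}}$ is concave and symmetric in $(\omega_a,\omega_{a^\star})$, hence maximized at $\omega_a=\omega_{a^\star}=s/2$, giving $s/4$. The problem reduces to the one-dimensional linear program $\max_{s\in[0,1]} (1-s)A + (s/4)B$, whose optimum is attained at an endpoint: $s=0$ (equivalently $\omega_1=1$) when $A>B/4$, and $s=1$ (equivalently $\omega_1=0$) when $A<B/4$.

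Finally, the condition $A>B/4$ unpacks to $4\sigma^2(\phi(a^\star)-\phi(a))^2 > \sigma_1^2(\mu_{a^\star}-\mu_a)^2$, and taking square roots (using $\mu_{a^\star}>\mu_a$) yields exactly $|\phi(a^\star)-\phi(a)|>\tfrac{\sigma_1(\mu_{a^\star}-\mu_a)}{2\sigma}$; the reverse inequality gives $\omega_1=0$. There is no real obstacle here beyond bookkeeping; the only subtlety worth double-checking is that ${\cal K}_a(\omega)$ really is independent of $\omega$ for $K=3$, which is what makes the inner minimization collapse to the clean two-term expression above and turns the outer maximization into a linear program on $[0,1]$.
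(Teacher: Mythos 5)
Your proposal is correct and follows essentially the same route as the paper's proof: both fix ${\cal K}_a(\omega)=\{a,a^\star\}$, reduce the noise term to the symmetric allocation $\omega_a=\omega_{a^\star}=(1-\omega_1)/2$ yielding $(1-\omega_1)\frac{(\mu_{a^\star}-\mu_a)^2}{8\sigma^2}$, and then observe that the objective is linear (a convex combination) in $\omega_1$, so the optimum sits at an endpoint determined by comparing $\frac{(\phi(a^\star)-\phi(a))^2}{2\sigma_1^2}$ with $\frac{(\mu_{a^\star}-\mu_a)^2}{8\sigma^2}$. Your phrasing as a one-dimensional linear program in $s=1-\omega_1$ is just a cleaner packaging of the paper's convex-combination bound.
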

\begin{proof}
With $3$ arms, from the proof of the theorem we know that  ${\cal K}_a(\omega) =\{a,a^\star\}$ for all $\omega$. Letting $m(\omega) = \frac{\omega_a \mu_a+ \omega_{a^\star}\mu_{a^\star}}{\omega_a+\omega_{a^\star}}$, we obtain
\[
(T^\star(\mu))^{-1}=\max_{\omega\in\Delta(3)}  \omega_1 \frac{(\phi(a^\star)-\phi(a))^2}{2\sigma_1^2} +   \frac{\omega_a(\mu_a-m(\omega))^2+\omega_{a^\star}(\mu_{a^\star}-m(\omega))^2}{2\sigma^2}.
\]
Clearly the solution is $\omega_1=1$ as long as
\[
\frac{(\phi(a^\star)-\phi(a))^2}{2\sigma_1^2} >   \max_{\omega: \omega_a+\omega_{a^\star}=1}\frac{\omega_a(\mu_a-m(\omega))^2+\omega_{a^\star}(\mu_{a^\star}-m(\omega))^2}{2\sigma^2}.
\]
To see why this is the case, let $f_1=\frac{(\phi(a^\star)-\phi(a))^2}{2\sigma_1^2}, f_2(\omega_a,\omega_{a^\star}) =\frac{\omega_a(\mu_a-m(\omega))^2}{2\sigma^2}$ and $f_3(\omega_a,\omega_{a^\star}) =\frac{\omega_{a^\star}(\mu_{a^\star}-m(\omega))^2}{2\sigma^2}$. Then,  we can write
\begin{align*}
    \omega_1 f_1 + \omega_af_2(\omega_a,\omega_{a^\star})+\omega_{a^\star}f_3(\omega_a,\omega_{a^\star})&= \omega_1f_1 + (1-\omega_1)\left[ \frac{\omega_a f_2}{1-\omega_1} +\frac{\omega_{a^\star} f_3}{1-\omega_1} \right].
\end{align*}
Being a convex combination,  this last term can be upper bounded as \[    \omega_1 f_1 + \omega_af_2(\omega_a,\omega_{a^\star})+\omega_{a^\star}f_3(\omega_a,\omega_{a^\star})\leq \max\left(f_1, \frac{\omega_a f_2}{1-\omega_1} +\frac{\omega_{a^\star} f_3}{1-\omega_1} \right). \]
Now, note that also the term inside the bracket is a convex combination. Threfore, let $\omega_a=(1-\omega_1)\alpha$ and $\omega_{a^\star}=(1-\omega_1)(1-\alpha)$ for some $\alpha\in [0,1]$. We have that
\[
m(\omega) = \frac{(1-\omega_1)\alpha\mu_a + (1-\omega_1)(1-\alpha)\mu_{a^\star}}{1-\omega_1}=\alpha \mu_a + (1-\alpha)\mu_{a^\star}.
\]
Hence, we obtain that 
\begin{align*}
    \frac{\omega_a(\mu_a-m(\omega))^2+\omega_{a^\star}(\mu_{a^\star}-m(\omega))^2}{2(1-\omega_1)\sigma^2}&= \frac{\omega_af_2+\omega_{a^\star}f_3}{1-\omega_1},\\
    &=\frac{\alpha(1-\alpha)^2(\mu_a-\mu_{a^\star})^2+(1-\alpha)\alpha^2(\mu_{a^\star}-\mu_a )^2}{2\sigma^2},\\
    &=\alpha(1-\alpha)\frac{(1-\alpha)(\mu_a-\mu_{a^\star})^2+\alpha(\mu_{a^\star}-\mu_a )^2}{2\sigma^2},\\
    &=\alpha(1-\alpha)\frac{(\mu_a-\mu_{a^\star})^2}{2\sigma^2}.
\end{align*}
Since this last term is maximized for $\alpha=1/2$, we obtain
\[
\omega_1 f_1 + \omega_af_2(\omega_a,\omega_{a^\star})+\omega_{a^\star}f_3(\omega_a,\omega_{a^\star})\leq \max\left(f_1, \frac{(\mu_a-\mu_{a^\star})^2}{8\sigma^2} \right).
\]
Since $f_1$ is attained for $\omega_1=1$, we have that as long as $f_1 >  \frac{(\mu_a-\mu_{a^\star})^2}{8\sigma^2}$, then the solution is $\omega_1=1$.

On the other hand, if $\frac{(\mu_a-\mu_{a^\star})^2}{8\sigma^2}>f_1$, then we can set $\omega_a=(1-\omega_1)/2$ and $\omega_{a^\star}=(1-\omega_1)/2$, leading to
\[
    \omega_1 f_1 + \omega_af_2(\omega_a,\omega_{a^\star})+\omega_{a^\star}f_3(\omega_a,\omega_{a^\star})= \omega_1f_1 + (1-\omega_1)\frac{(\mu_a-\mu_{a^\star})^2}{8\sigma^2},
\]
which is maximized at $\omega_1=0$.
\end{proof}

\subsection{Sample Complexity  Bound for the Multiple Magic Actions MAB Problem}

We now extend our analysis to the case where multiple magic actions can be present in the environment. In contrast to the single magic action setting, here a \emph{chain} of magic actions sequentially reveals information about the location of the optimal action. Without loss of generality, assume that the first $n$ arms (with indices $1,\dots,n$) are the magic actions, and the remaining $K-n$ arms are non–magic. The chain structure is such that pulling magic arm $j$ (with $1\le j<n$) yields information about only the location of the next magic arm $j+1$, while pulling the final magic action (arm $n$) reveals the identity of the optimal action. As before, we assume that the magic actions are informational only and are never optimal.

To formalize the model, let $\phi:\{1,\dots,n\}\to\mathbb{R}$ be a strictly decreasing function. We assume that the magic actions have fixed means given by
\[
\mu_j =
\begin{cases}
\phi(j+1), & \text{if } j = 1,\dots, n-1, \\[1mm]
\phi\Bigl(\argmax_{a\notin\{1,\dots,n\}} \mu_a\Bigr), & \text{if } j = n.
\end{cases}
\]
and that the non–magic arms satisfy 
\[
\mu^\star = \max_{a\notin\{1,\dots,n\}} \mu_a > \phi(n).
\]
Thus, the optimal arm lies among the non–magic actions. Considering the noiseless case where the rewards of all actions are fixed and the case where we can identify if an action is magic once revealed, we have the following result.

\begin{theorem}\label{thm:bound_multi_magic}
Consider  noiseless magic bandit problem with $K$ arms and $n$ magic actions. The optimal sample complexity is upper bounded as
\begin{align*}
     \inf_{\rm Alg}\mathbb{E}_{\rm Alg}[\tau] \leq  \min \left( n, \sum_{j=1}^{K-n} \left( \prod_{i=j+1}^{K-n} \frac{i}{n-1+i} \right) \left( 1 + \frac{n-1}{n-1+j} \min \left(  \frac{n-2}{2}, \frac{j(n-1+j)}{j+1} \right) \right) \right).
\end{align*}
\end{theorem}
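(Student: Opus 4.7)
The plan is to exhibit two algorithms, one for each quantity in the outer minimum, and upper bound the expected sample complexity of each.

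The bound $\inf\mathbb{E}[\tau]\le n$ is achieved by the deterministic chain-following algorithm: pull $i_1,i_2,\dots,i_n$ in order. In the noiseless setting each pull $i_j$ with $j<n$ deterministically reveals $i_{j+1}$ and pulling $i_n$ reveals $a^\star$, so termination occurs after exactly $n$ pulls, independently of the instance.

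For the second bound I would analyze a randomized algorithm that at each step draws uniformly at random from the pool of not-yet-pulled arms other than $i_1$, terminating as soon as either a magic arm is drawn (the chain can then be completed from the revealed position) or all $K-n$ non-magic arms have been pulled (the optimum is then identified by elimination as the largest observed non-magic reward, since every magic arm is suboptimal by assumption). Let $a_j$ denote the expected number of pulls when the pool currently contains $n-1$ unknown magic arms and $j$ unknown non-magic arms; the initial state is $j=K-n$. A one-step case split on the nature of the first draw gives the linear recursion
\[
a_j \;=\; 1 \;+\; \frac{n-1}{n-1+j}\,M(j) \;+\; \frac{j}{n-1+j}\,a_{j-1}, \qquad a_0=0,
\]
where $M(j)$ is the expected additional number of pulls conditional on the first draw being magic. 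Unrolling by induction on $j$ yields
\[
a_{K-n} \;=\; \sum_{m=1}^{K-n}\Bigl(\prod_{i=m+1}^{K-n}\tfrac{i}{n-1+i}\Bigr)\Bigl(1+\tfrac{n-1}{n-1+m}\,M(m)\Bigr),
\]
which matches the target sum provided $M(m)\le \min\bigl(\tfrac{n-2}{2},\,\tfrac{m(n-1+m)}{m+1}\bigr)$ is established.

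Thus the remaining task is to prove this two-term bound on $M(j)$, which I would obtain by comparing two continuations after the magic draw and keeping whichever is cheaper. Under the chain-following sub-strategy, after drawing a uniformly random $i_{j'}\in\{i_2,\dots,i_n\}$ one simply pulls $i_{j'+1},\dots,i_n$ for $n-j'$ further pulls; averaging over $j'$ uniform on $\{2,\dots,n\}$ gives $\tfrac{1}{n-1}\sum_{j'=2}^{n}(n-j')=\tfrac{n-2}{2}$. Under the alternative sub-strategy that keeps sampling uniformly on the reduced pool (now $n-2$ magic and $j$ non-magic), the expected time to hit a termination event admits an analogous but shorter recursion whose closed form can be telescoped to $\tfrac{j(n-1+j)}{j+1}$. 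The main technical obstacle is this second closed-form estimate: careful bookkeeping of the post-draw state and of the termination rule is required to collapse the hitting-time sum to the compact target expression. Once both sub-strategy bounds are in hand, taking the pointwise minimum gives $M(j)$, and combining with the chain bound $n$ yields the theorem.
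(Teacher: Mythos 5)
Your proposal is correct and follows essentially the same route as the paper: the chain-following policy gives the bound $n$, the uniform-sampling policy gives the recursion $a_j = 1 + \tfrac{n-1}{n-1+j}M(j) + \tfrac{j}{n-1+j}a_{j-1}$, and $M(j)$ is bounded by the better of the two continuations exactly as you describe. The one step you leave open, the closed form $\tfrac{j(n-1+j)}{j+1}$ for the continue-sampling continuation, is obtained in the paper not by a recursion but by directly computing the expected position of the last non-magic arm in a uniformly random ordering of the $m+r$ remaining arms, i.e.\ $\sum_{i=r}^{m+r} i\binom{i-1}{r-1}/\binom{m+r}{r} = \tfrac{r(m+r+1)}{r+1}$.
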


\begin{proof}
In the proof we derive a sample complexity bound for a policy based on some insights. We use the assumption that upon observing a reward from a magic arm, the learner can almost surely identify that the pulled arm is a magic arm.

Let us define the state $(m,r,l)$, where $m$ denotes the number of remaining unrevealed magic actions $(m_0 = n-1)$, $r$ denotes the number of remaining unrevealed non-magic actions $(r_0 = K-n)$, and $l$ is the binary indicator with value 1 if we have revealed any hidden magic action and 0 otherwise.

 Before any observation the learner has no information about   which $n-1$ indices among $\{2,\dots ,K\}$ form the chain of intermediate magic arms. Hence, one can argue that at the first timestep is optimal to sample uniformly at random an action in $\{2,\dots,K\}$.

Upon observing a magic action, and thus we are in state  $(m,r,1)$, we consider the following candidate policies: (1) start from the revealed action and follow the chain, or (2) keep sampling unrevealed actions uniformly at random until all non-magic actions are revealed. As previously discussed, starting the chain from the initial magic action would be suboptimal and we do not consider it. 

    
    Upon drawing a hidden magic arm, let its chain index be $j\in\{2,\dots,n\}$ (which is uniformly distributed). The remaining cost to complete the chain is $n-j$, and hence its expected value is
    \[
    \mathbb{E}[n-j]=\frac{n-2}{2}.
    \]
    Therefore, the total expected cost for strategy (1) is 
    \[
    T_{1} = \frac{n-2}{2}.
    \]
    We can additionally compute the expected cost for strategy (2) as follows: if the last non-magic action is revealed at step $i$, then among the first $i-1$ draws there are exactly $r-1$ non-magic arms. Since there are $ {m+r\choose r}$ ways to place all $r$ non-magic arms $m+r$ slots, we have
    \begin{align*}
        T_2 &= \mathbb{E}[\text{Draws until all non-magic revealed}] \\ 
        &= \sum_{i=r}^{m+r} i \cdot \mathbb{P}[\text{Last non-magic revealed at step $i$}] \\
        &= \sum_{i=r}^{m+r} i \cdot \frac{{i-1 \choose r-1}}{{m+r \choose r}} \\
        &= \frac{r!\cdot m!}{(m+r)!} \sum_{i=r}^{m+r} i {i-1 \choose r-1} \\
        &= \frac{r!\cdot m!}{(m+r)!} \sum_{i=r}^{m+r} \frac{i!}{(r-1)!(i-r)!} \\
        &= \frac{r!\cdot m!}{(m+r)!} \sum_{i=r}^{m+r} r {i \choose r} \\
        &= \frac{r\cdot r!\cdot m!}{(m+r)!} {m+r+1 \choose r+1} \\
        &= \frac{r\cdot r!\cdot m!}{(m+r)!} \cdot  \frac{(m+r+1)\cdot (m+r)!}{(r+1)\cdot r! \cdot m!}\\ 
        &= \frac{r(m+r+1)}{r+1}
    \end{align*}

    Finally, we define a policy in $(m,r,1)$ as the one choosing between strategy $1$ and strategy $2$, depending on which one achieves the minimum cost. Hence, the complexity of this policy is
    \[
    V(m,r,1) = \min \left( \frac{n-2}{2}, \frac{r(m+r+1)}{r+1} \right).
    \]

    Now, before finding a magic arm, consider a policy that uniformly samples between the non-revealed arms. Therefore, in $(m,r,0)$ we can achieve a complexity of $1 + \frac{m}{m+r}V(m-1,r,1) + \frac{r}{m+r}V(m,r-1,0) $. Since we can always achieve a sample complexity of $n$, we can find a policy with the following complexity:
    \begin{align*}
        V(m,r,0) &= \min \left(n,  1 + \frac{m}{m+r}V(m-1,r,1) + \frac{r}{m+r}V(m,r-1,0) \right) \\
        &= \min \left(n,  1 + \frac{m}{m+r} \min \left( \frac{n-2}{2}, \frac{r(m+r)}{r+1} \right) + \frac{r}{m+r}V(m,r-1,0) \right) \\
    \end{align*}
    Given we always start with $n-1$ hidden magic actions we can define a recursion in terms of just the variable $r$ as follows:
    \begin{align*}
        V(r) = 1 + \frac{n-1}{n-1+r} T(r) + \frac{r}{n-1+r}V(r-1), \\
    \end{align*}
    where $T(r) = \min \left( \frac{n-2}{2}, \frac{r(n-1+r)}{r+1} \right)$. Letting $A(r) = \frac{r}{n-1+r}$ and $B(r) = 1 + \frac{n-1}{n-1+r} T(r)$, we can write 
    \begin{align*}
        V(r) = B(r) + A(r)V(r-1), \\
    \end{align*}
    Clearly $V(0) = 0$ since if all non-magic actions are revealed, then we know the optimal action deterministically. Unrolling the recursion we get
    \begin{align*}
        &V(1) = B(1), \\
        &V(2) = B(2) + A(2)B(1), \\
        &V(3) = B(3) + A(3)B(2) + A(3)A(2)B(1), \\
        &... \\
        &V(r) = \sum_{j=1}^r \left( \prod_{i=j+1}^r A(i) \right)B(j).
    \end{align*}
    Substituting back in our expression, we get
    \begin{align*}
        V(r) = \sum_{j=1}^r \left( \prod_{i=j+1}^r \frac{i}{n-1+i} \right) \left( 1 + \frac{n-1}{n-1+j} T(j) \right).
    \end{align*}
    Thus starting at $r = K-n$ we get the following expression:
    \begin{align*}
         \min \left( n, \sum_{j=1}^{K-n} \left( \prod_{i=j+1}^{K-n} \frac{i}{n-1+i} \right) \left( 1 + \frac{n-1}{n-1+j} \min \left(  \frac{n-2}{2}, \frac{j(n-1+j)}{j+1} \right) \right) \right),
    \end{align*}
    which is also an upper bound on the optimal sample complexity.

\end{proof}

To get a better intuition of the result, we also have the following corollary, which shows that we should expect a  scaling  linear in $n$ for small values of $n$ (for large values the complexity tends instead to "flatten").
\begin{corollary}
    Let $T$ be the scaling in \cref{thm:bound_multi_magic}. We have that  \[ \min(n, (K-n)/2)\lesssim  T  \lesssim C\min(n, K/2).
    \]
\end{corollary}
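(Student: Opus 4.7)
The strategy is to reduce $T = \min(n, S)$, where $S$ denotes the sum appearing in \cref{thm:bound_multi_magic}, to a closed form and then compare directly against $\min(n, (K-n)/2)$ and $\min(n, K/2)$.

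First, I would eliminate the inner $\min$. Setting $f(j) = j(n-1+j)/(j+1)$, a direct computation gives $f(1) = n/2$ and $f'(j) = (j^2 + 2j + n - 1)/(j+1)^2 > 0$, so $f(j) \ge n/2 > (n-2)/2$ for every $j \ge 1$. Hence $\min((n-2)/2, f(j)) = (n-2)/2$ uniformly in $j$, and the bracket in $S$ collapses to $1 + \frac{(n-1)(n-2)}{2(n-1+j)}$.

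Second, I would turn $S$ into a closed form. The product telescopes to
\[
\prod_{i=j+1}^{K-n}\frac{i}{n-1+i} = \frac{\binom{n-1+j}{n-1}}{\binom{K-1}{n-1}},
\]
after which splitting the sum into the pieces coming from $1$ and from $(n-1)(n-2)/(2(n-1+j))$, using the identity $\binom{n-1+j}{n-1}/(n-1+j) = \binom{n-2+j}{n-2}/(n-1)$, and applying the hockey-stick identity $\sum_{k=r}^{m}\binom{k}{r} = \binom{m+1}{r+1}$ to both pieces yields
\[
S = \frac{K}{n} + \frac{n-2}{2} - \frac{n/2}{\binom{K-1}{n-1}}.
\]

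Third, both bounds follow from elementary case analysis on this closed form. Using $\binom{K-1}{n-1} \ge n$ for $K \ge n+1$, we have $\frac{K}{n} + \frac{n-3}{2} \le S \le \frac{K}{n} + \frac{n}{2}$. Dividing the upper estimate by $K/2$ yields $\frac{2}{n} + \frac{n}{K} \le 3$, so $S \le 3K/2$; combined with the trivial $T \le n$ this gives $T \le 3\min(n, K/2)$. For the lower bound I would split on whether $K \ge 3n$ (then $\min(n,(K-n)/2) = n$ and $S \ge K/n + (n-3)/2 \ge (n+3)/2$ forces $\min(n, S) \ge n/2$) or $K < 3n$ (then $(K-n)/2 \le n$, and one checks the ratio $(2K/n + n - 3)/(K - n) \ge 1/2$ directly over this range, while the subcase $S \ge n$ is immediate). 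The only step requiring genuine care is the simplification of the inner $\min$ and the telescoping of the product; once the closed form of $S$ is in hand, the remainder is routine bookkeeping with binomial identities.
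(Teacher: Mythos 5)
Your proposal is correct, and it takes a genuinely different route from the paper. The paper never evaluates the sum exactly: it bounds the bracket by $O(n/2)$, estimates $\log\prod_{i=j+1}^{K-n}\tfrac{i}{n-1+i}$ via $\tfrac{x}{1+x}\le\log(1+x)\le x$ and harmonic numbers to sandwich the product between $(j/(K-n))^{n-1}$ and $((n+j-1)/(K-1))^{n-1}$, and then controls the resulting sums with an integral comparison (upper bound $e(K-1)/n$) and a power-sum estimate (lower bound $(K-n)/n$). You instead observe that the inner $\min$ in the theorem is vacuous (since $j(n-1+j)/(j+1)$ is increasing in $j$ with value $n/2$ at $j=1$, it always exceeds $(n-2)/2$), telescope the product into $\binom{n-1+j}{n-1}/\binom{K-1}{n-1}$, and evaluate the sum in closed form via hockey-stick identities, obtaining $S=\tfrac{K}{n}+\tfrac{n-2}{2}-\tfrac{n/2}{\binom{K-1}{n-1}}$; I checked the telescoping, both hockey-stick applications, and the final case analysis, and they are all sound. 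Your version buys an exact formula and explicit constants ($3$ above, $1/2$ below), and it is actually tighter where the paper is weakest: the paper's lower-bound step multiplies a bound of order $(K-n)/n$ on the product sum by an unspecified contribution from the bracket and simply asserts the $(K-n)/2$ scaling, whereas your split on $K\gtrless 3n$ makes that factor fully explicit. The paper's cruder estimates would be more robust if the bracket were changed, but for this statement your derivation is the cleaner and more complete one.
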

\begin{proof}

   First, observe the scaling
   \[
    \left( 1 + \frac{n-1}{n-1+j} \min \left(  \frac{n-2}{2}, \frac{j(n-1+j)}{j+1} \right) \right)  =O(n/2).
   \]
  
    At this point, note that
    \[
     \prod_{i=j+1}^{K-n} \frac{i}{n-1+i}= \prod_{i=j+1}^{K-n}  \left(1+\frac{n-1}{i}\right)^{-1}.
    \]
    Using that $\frac{x}{1+x}\leq \log(1+x)\leq x$, we have
    \[
     \log\prod_{i=j+1}^{K-n} \frac{i}{n-1+i}= \sum_{i=j+1}^{K-n}  -\log\left(1+\frac{n-1}{i}\right)\geq -(n-1)\sum_{i=j+1}^{K-n}  \frac{1}{i}.
    \]
    and
    \[
     \log\prod_{i=j+1}^{K-n} \frac{i}{n-1+i} = \sum_{i=j+1}^{K-n}  -\log\left(1+\frac{n-1}{i}\right)\leq -(n-1)\sum_{i=j+1}^{K-n}   \frac{1}{n-1+i}.
     \]
   Define $H_n=\sum_{i=1}^n 1/i$ to be the $n$-th Harmonic number, we also have
     \[
     \sum_{i=j+1}^{K-n}  \frac{1}{i}=H_{K-n} - H_j.
     \]
     Therefore
     \[
       -(n-1)(H_{K-n}-H_j) \leq \log\prod_{i=j+1}^{K-n} \frac{i}{n-1+i}\leq -(n-1)(H_{K-1}-H_{n+j-1})
     \]
     Using that $H_\ell\sim \log(\ell)+\gamma+O(1/\ell)$, where $\gamma$ is the  Euler–Mascheroni constant, we get
     \[
     \left(\frac{j}{K-n}\right)^{n-1} \lesssim \prod_{i=j+1}^{K-n} \frac{i}{n-1+i}\lesssim \left(\frac{n+j-1}{K-1}\right)^{n-1}.
     \]
     Therefore, we can bound $ \sum_{j=1}^{K-n}\left(\frac{n+j-1}{K-1}\right)^{n-1}$ using an integral bound
     \[
     \sum_{j=1}^{K-n}\left(\frac{n+j-1}{K-1}\right)^{n-1} \leq  \int_0^{K-n}\left(\frac{n+x}{K-1}\right)^{n-1}dx \leq \frac{e(K-1)}{n}.
     \]
     From which follows that the original expression can be upper bounded by an expression scaling as $O(\min(n, (K-1)/2))$.

     Similarly, using that $\sum_{j=1}^{K-n}\left(\frac{j}{K-n}\right)^{n-1}\geq (K-n)/n$, we have that the lower bound scales as $\min(n, (K-n)/2)$.
\end{proof}

\newpage
\section{Algorithms}\label{app:algorithms}
In this section we present  some of the algorithms more in detail. These includes: \icpe{} with fixed horizon, $I$-DPT and $I$-IDS.

 Recall that in \icpe{} we treat trajectories of data ${\cal D}_t=(x_1,a_1,\dots,x_t)$ as sequences to be given as input to sequential models, such as Transformers.

We define the input at timestep $t$ to be passed to a transformer as $s_t=({\cal D}_{t}, \varnothing_{t:N})$, with  $\varnothing_{t:N}$ indicating a null sequence of tokens for the remaining steps up to some pre-defined horizon $N$, with $s_1=(x_1,\varnothing_{1:N})$.

To be more precise, letting $(x_t^\varnothing, a_t^\varnothing)$ denote, respectively, the null elements in the state and action at timestep $t$, we have $\varnothing_{t:t+k}=\{x_t^\varnothing, a_{t+1}^\varnothing, x_{t+1}^\varnothing,\cdots, a_{t+k-1}^\varnothing, x_{t+k}^\varnothing\}$.

The limit $N$ is a practical upper bound on the horizon that limits the dimensionality of the state, which is introduced for implementing the algorithm.  

\begin{algorithm}[h!]
 \footnotesize 
   \caption{\icpe{} (In-Context Pure Explorer)}
   \label{algo:app:icpe_fixed_confidence}
\begin{algorithmic}[1]
    
   \State {\bfseries Input:} Tasks distribution $\P$; confidence $\delta$; horizon $N$; initial  $\lambda$ and  hyper-parameter $T_\phi,T_\theta$.
   \Statex \texttt{\color{blue}//  Training phase}
     \State Initialize buffer ${\cal B}$, networks $Q_\theta,I_\phi$ and set $\bar\theta\gets\theta, \bar\phi\gets \phi$.
   \While{Training is not over}

   \State Sample environment $M\sim {\cal P}$ with hypothesis $H^\star$, observe $x_1\sim\rho$ and  set $t\gets 1$. 
   \Repeat
   \State Execute action $a_t=\argmax_a Q_\theta({\cal D}_t,a)$ in $M$ and observe $x_{t+1}$.
   
   \State Add partial trajectory $({\cal D}_t,{\cal D}_{t+1},H^\star)$ to ${\cal B}$ and set $t\gets t+1$.
   \Until{$a_{t-1}=a_{\rm stop}$ or $t>N$.}
   \State In the fixed confidence,   update  $\lambda$ according to \cref{eq:cost_update}.
    \State Sample batch $B\sim {\cal B}$ and update $\theta,\phi$ using ${\cal L}_{\rm inf}(B;\phi)$ (\cref{eq:loss_infernece}) and ${\cal L}_{\rm policy}(B;\theta)$ (\cref{eq:loss_fixed_budget} or \cref{eq:loss_fixed_confidence}).
\State Every $T_\phi$ steps set $\bar\phi\gets\phi $ (similarly, every $T_\theta$ steps set $\bar\theta\gets\theta$). 
   \EndWhile
    \Statex \hrulefill 
   \Statex \texttt{\color{blue}//  Inference phase}
   \State Sample unknown environment $M\sim \P$.
    \State Collect a trajectory ${\cal D}_N$ (or ${\cal D}_\tau$ in fixed confidence) according to a policy $\pi_t({\cal D}_t)=\arg\max_a Q_\theta({\cal D}_t,a)$, until $t=N$ (or $a_t=a_{\rm stop}$).
   \State {\bf Return} $\hat H_N=\argmax_H I_\phi(H|{\cal D}_N)$ (or $\hat H_\tau=\argmax_H I_\phi(H|{\cal D}_\tau)$ in the fixed confidence)
\end{algorithmic}
\end{algorithm}
\subsection{ICPE with Fixed Confidence}

Optimizing the dual formulation
\[
\min_{\lambda \geq 0} \max_{I,\pi} V_\lambda(\pi,I)
\]
can be viewed as a multi-timescale stochastic optimization problem: the slowest timescale updates the variable $\lambda$, an intermediate timescale optimizes over $I$, and the fastest refines the policy $\pi$.

\paragraph{MDP-like formulation.} As shown in the theory, we can use the MDP formalism to define an RL problem: we   define a reward $r$ that penalizes  the agent at all timesteps, that is $r_t=-1$,  while at the stopping-time we have $r_\tau=-1+\lambda I(H^\star|{\cal D}_\tau)$. Hence, a trajectory's return can be written as
\begin{align*}
G_\tau = \sum_{t=1}^\tau r_t=-\tau + \lambda I(H^\star|{\cal D}_\tau).
\end{align*}
Accordingly, one can define the $Q$-value of $(\pi,I,\lambda)$ in a  pair $({\cal D}_\tau,a)$.

\paragraph{Optimization over $\phi$.}We treat each optimization separately, employing a descent-ascent scheme. The distribution $I$ is modeled using a sequential architecture parameterized by $\phi$, denoted by $I_\phi$. Fixing $(\pi,\lambda)$, the inner maximization in~\cref{eq:dual_problem_fixed_confidence} corresponds to
\[
\max_{\phi} \mathbb{E}^{\pi}[{\bf 1}_{\{\hat H_\tau = H^\star\}}], \quad \text{with } \hat H_\tau=\argmax_H I_\phi(H|{\cal D}_\tau).
\]
We train $\phi$ via cross-entropy loss:
\[
-\sum_{H'}{\bf 1}_{\{\hat H_\tau = H^\star\}}\log I_\phi(H'|{\cal D}_{\tau}) = -\log I_\phi(H^\star|{\cal D}_{\tau}),
\]
averaged across trajectories in a batch. 

\paragraph{Optimization over $\pi$.}
The policy $\pi$ is defined as the greedy policy with respect to learned $Q$-values. Therefore, standard RL techniques can learn the $Q$-function that maximizes the value in~\cref{eq:dual_problem_fixed_confidence} given $(\lambda,I)$. Denoting this function by $Q_\theta$, it is parameterized using a sequential architecture with parameters $\theta$.

We train $Q_\theta$ using DQN~\citep{mnih2015human,van2016deep}, employing a replay buffer $\mathcal{B}$ and a target network $Q_{\bar \theta}$ parameterized by $\bar{\theta}$. To maintain timescale separation, we introduce an additional inference target network $I_{\bar\phi}$, parameterized by $\bar\phi$, which provides stable training feedback for $\theta$. 
When $(I,\lambda)$ are fixed, optimizing $\pi$ reduces to maximizing:
\[
-\tau+\lambda \log I_\phi(H^\star|{\cal D}_{\tau}).
\]

Hence, we define the reward at the transition $z=({\cal D}_n,a_N,{\cal D}_{n+1},d,H^\star)$ (with the convention that ${\cal D}_{n+1}\gets {\cal D}_n$ if $a=a_{\rm stop}$) as:
\[
r_{\lambda}(z) \coloneqq -1 + d\lambda\log I_{\bar\phi}(H^\star|{\cal D}_{n+1}),
\]
where $d=\mathbf{1}\{z \text{ is terminal}\}$ ($z$ is terminal if the transition corresponds to the last timestep in a horizon, or $a=a_{\rm stop}$). Furthermore, for a transition $z$ we define $z_{\rm stop}\coloneqq z|_{(a,{\cal D}_{n+1})\gets (a_{\rm stop},{\cal D}_n)}$ as the same transition $z$ with $a\gets a_{\rm stop}$ and ${\cal D}_{n+1}'\gets {\cal D}_n$.

There is one thing to note: the logarithm in the reward is justified since the original problem can be equivalently written as:
\[
\min_{\lambda\geq 0}\max_{I,\pi}-\mathbb{E}^{\pi}[\tau]+\lambda\left[\log\left(\mathbb{P}^{\pi}(\hat H_\tau=H^\star)\right)-\log(1-\delta)\right],
\]
after noting that we can apply the logarithm to the constraint in \cref{eq:dual_problem_fixed_confidence}, before considering the dual. Thus the optimal solutions $(I,\pi)$ remain the same.

Then, using classical TD-learning~\citep{sutton2018reinforcement}, the training target for a transition $z$ is defined as:
\[
y_{\lambda}(z) = r_{\lambda}(z) + (1 - d)\gamma\max_{a'} Q_{\bar\theta}({\cal D}_{n+1},a'),
\]
where $\gamma\in(0,1]$ is the discount factor.

As discussed earlier, we have a dedicated stopping action $a_{\text{stop}}$, whose value depends solely on history. Thus, its $Q$-value is updated retrospectively at any state $s$ using an additional loss:
\[
\left(r_\lambda(z_{\rm stop}) - Q_\theta(s,a_{\text{stop}})\right)^2.
\]
Therefore, the overall loss that we consider for $\theta$ for a single transition $z$ can be written as
\[
\mathbf{1}_{\{a\neq a_{\rm stop}\}}\left(y_{\lambda}(z)-Q_{\theta}(s,a)\right)^2+\left(r_\lambda(z_{\rm stop}) - Q_\theta(s,a_{\rm stop})\right)^2,
\]
where $\mathbf{1}_{\{a\neq a_{\rm stop}\}}$ avoids double accounting for the stopping action.

To update parameters $(\theta,\phi)$, we sample a batch $B\sim\mathcal{B}$ from the replay buffer and apply gradient updates as specified in the main text. Lastly, target networks are periodically updated.

\paragraph{Optimization over $\lambda$.}We update $\lambda$ by assessing the confidence of $I_\phi$ at the stopping time according to~\cref{eq:cost_update}, maintaining a slow ascent-descent optimization schedule for sufficiently small learning rates.

\paragraph{Cost implementation.} Lastly, in practice, we optimize a reward $r_\lambda(z)=-c+d  I_{\bar \phi}(H^\star|{\cal D}_{n+1})$, by setting $c=1/\lambda$, and noting that for a fixed $\lambda$ the RL optimization remains the same. The reason why we do so is due to the fact that with this expression we do not have the product $\lambda I(H^\star|{\cal D}_{n+1})$, which makes the descent-ascent process more difficult.  

We also use the following cost update
\[
c_{t+1} =c_t-\beta(1-\delta - {\bf 1}_{\{H^\star=\argmax_H I(H|{\cal D}_\tau)\}}).
\] To see why the cost can be updated in this way,  define the parametrization $\lambda = e^{-x}$. Then the optimization problem becomes
\[
\min_{x} \max_I\min_\pi  -\mathbb{E}\pi[\tau] + e^{-x}\left[\mathbb{P}^\pi\left(H^\star=\hat H_\tau \right)-1+\delta\right],
\]
Letting $\rho=\mathbb{P}^\pi\left(H^\star=\hat H_\tau \right)-1+\delta$, the gradient update for $x$ with a learning  rate $\beta$ simply is
\[
x_{t+1}= x_t - \beta e^{-x_t} \rho,
\]
implying that
\[
-\log(\lambda_{t+1}) = -\log(\lambda_{t}) - \beta  \lambda_t \rho.
\]
Defining $c_t=1/\lambda_t$,  we have that
\[
\log(c_{t+1}) = \log(c_t) - (\beta \rho / c_t) \Rightarrow c_{t+1} = c_t e^{\beta \rho/c_t}.
\]
Using then the approximation $e^x\approx 1+x$, we find $c_{t+1} = c_t + \beta\rho=c_t-\beta(1-\delta - {\bf 1}_{\{H^\star=\argmax_H I(H|{\cal D}_\tau)\}})$.

\paragraph{Training vs Deployment.} Thus far, our discussion of \icpe{} has focused on the training phase. After training completes, the learned policy $\pi$ and inference network $I$ can be deployed directly: during deployment, $\pi$ both collects data and determines when to stop—either by triggering its stopping action or upon reaching the horizon $N$.

\subsection{ICPE with Fixed Budget}\label{app:subsec:icpe_fixed_horizon}
  In the fixed budget setting (problem in \cref{eq:fixed_budget_problem}) the MDP terminates at timestep $N$, and we set the reward  to be $r_t=0$ for $t<N$ and $r_N=I(H^\star|{\cal D}_N)$, with $\hat H_N =\argmax_H I(H|{\cal D}_N)$ being the inferred hypothesis.
Accordingly, one can define the value of $(\pi,I)$  using $Q$ functions as beofre.

 \paragraph{Practical implementation.} The practical implementation for the fixed horizon follows closely that of the fixed confidence setting, and we refer the reader to that section for most of the details. In this case the reward in a transition $z=({\cal D}_n,a,{\cal D}_{n+1}',d,H^\star)$ is defined as as:
\begin{equation}
r_{\lambda}(z)\coloneqq d\log I_{\bar\phi}(H^\star|{\cal D}_{n+1}),
\end{equation}
where $d=\mathbf{1}\{z \text{is terminal}\}$. Note that we can use the logarithm, since solving the original problem is also equivalent to solving
But note that the original problem is also equivalent to solving
\begin{equation}
\max_I\max_\pi \log\left(\mathbb{P}^\pi\left(H^\star=\hat H_N\right)\right),
\end{equation}
due to monotonicity of the logarithm.

The $Q$-values can be learned using classical TD-learning techniques \citep{sutton2018reinforcement}: to that aim, for a transition $z$,  we define the target:
\begin{equation}
y_{\lambda}(z) = r_{\lambda}(z) + (1-d) \max_{a'} Q_{\bar\theta}({\cal D}_{n+1},a').
\end{equation}
Then, the gradient updates are the same as for the fixed confidence setting.

\subsection{Other Algorithms}
In this section we describe Track and Stop (TaS) \citep{garivier2016optimal}, and some variants such as $I$-IDS, $I$-DPT and the explore then commit variant of \icpe{}.

\subsubsection{Track and Stop}\label{app:algorithm:tas}
 Track and Stop (TaS, \citep{garivier2016optimal})  is an asymptotically optimal as $\delta \to 0$ for MAB problems. For simplicity, we consider a Gaussian MAB problem with $K$ actions, where the reward of each action is normally distributed according to ${\cal N}(\mu_a,\sigma^2)$, and let $\mu=(\mu_a)_{a\in [K]}$ denote the model. The TaS  algorithm consists of: (1) the model estimation procedure and recommender rule; (2) the sampling rule, dictating which action to select at each timestep; (3) the stopping rule, defining when enough evidence has been collected to identify the best action with sufficient confidence, and therefore to stop the algorithm.

\paragraph{Estimation Procedure and Recommender Rule}
The algorithm maintains a maximum likelihood estimate $\hat \mu_a(t)$ of the average reward for each arm based on the observations up to time $t$. Then, the recommender rule is defined as $\hat a_t = \argmax_a \hat \mu_a(t)$.

\paragraph{Sampling Rule.} The sampling rule is based on the observation that any $\delta$-correct algorithm, that is an algorithm satisfying $\mathbb{P}(\hat a_\tau =a^\star)\geq 1-\delta$, with $a^\star = \argmax_a \mu_a$, satisfies the following sample complexity
\[
\mathbb{E}[\tau]\geq T^\star(\mu){\rm kl}(1-\delta,\delta),
\]
where ${\rm kl}(x,y)=x\log(x/y)+(1-x)\log((1-x)/(1-y))$ and 
\[
(T^\star(\mu))^{-1} = \sup_{\omega\in \Delta(K)}\min_{a\neq a^\star} \frac{\omega_{a^\star}\omega_a}{\omega_a+\omega_{a^\star}}\frac{\Delta_a^2}{2\sigma^2},
\]
with $\Delta_a=\mu_{a^\star}-\max_{a\neq a^\star}\mu_a$.
Interestingly, to design an algorithm with minimal sample complexity, we can look at 
 the solution $\omega^\star = \arginf_{\omega\in \Delta(K)} T(\omega;\mu)$, with $(T(\omega))^{-1} = \min_{a\neq a^\star} \frac{\omega_{a^\star}\omega_a}{\omega_a+\omega_{a^\star}}\frac{\Delta_a^2}{2\sigma^2}$.
 
The solution $\omega^\star$ provides the best  proportion of  draws, that is, an algorithm selecting an action $a$ with probability $\omega_a^\star$ matches the lower bound   and is therefore optimal with respect to $T^\star$.  Therefore, an idea is to ensure that $N_t/t$ tracks $\omega^\star$, where $N_t$ is the visitation vector  $N(t)\coloneqq \begin{bmatrix}
    N_{1}(t) &\dots &N_K(t)
\end{bmatrix}^\top$.

However, the average rewards $(\mu_a)_a$ are initially unknown. A commonly employed idea  \citep{garivier2016optimal,kaufmann2016complexity} is to track an estimated optimal allocation $\omega^\star(t)=\arginf_{\omega\in \Delta(K)} T(\omega;\hat \mu(t))$ using the current  estimate of the model $\hat \mu(t)$. 

However, we still need to ensure that  $\hat \mu(t) \to \mu$. To that aim, we employ a D-tracking rule \citep{garivier2016optimal}, whcih guarantees that arms are sampled at a rate of $\sqrt{t}$. If there is an action $a$ with $N_{a}(t) \leq \sqrt{t}-K/2$ then we choose $a_t=a$. Otherwise, choose the action $a_t =\argmin_{a} N_{a}(t) - t \omega_a^\star(t)$.  

\paragraph{Stopping rule.}
The stopping rule determines when enough evidence has been collected to determine the optimal action with a prescribed confidence level. The problem of determining when to stop can be framed as a statistical hypothesis testing problem \citep{chernoff1959sequential}, where we are testing between $K$ different hypotheses $({\cal H}_a: (\mu_a > \max_{b\neq b}\mu_a))_{a}$.

We consider the following statistic  $
L(t) = t T(N(t)/t; \hat \mu(t))^{-1},$
which is a Generalized Likelihood Ratio Test (GLRT), similarly as in \citep{garivier2016optimal}. Comparing with the lower bound, one needs to stop as soon as  $L(t)\geq {\rm kl}(\delta,1-\delta) \sim \ln(1/\delta)$. However, to  account for the random fluctuations, a more natural threshold is $\beta(t,\delta)=\ln((1+\ln(t))/\delta)$, thus we use $L(t)\geq \beta(t,\delta)$ for stochastic MAB problems. We also refer the reader to \citep{kaufmann2021mixture} for more details.

\paragraph{Why computing the sampling rule can be difficult.}
To derive the sampling rule, one usually first derives the characteristic time $T^\star(\mu)$. Above, we discussed the case where the underlying distributions are Gaussians, but in the more general case $T^\star$ can be written as
\[
T^{\star}(\mu)^{-1}
=\sup_{w\in\Delta_K}\inf_{\lambda\in{\rm Alt}(\mu)} 
\sum_{a=1}^K \omega_a {\rm KL}\big(P_{\mu_a},P_{\lambda_a}\big),
\]
where ${\rm Alt}(\mu)$ is the set of alternatives under which the identity of the best arm changes, $P_\mu$ is the distribution of rewards under $\mu$ (sim. for $\lambda$). Even though the objective is linear in $\omega$ for any fixed $\lambda$, the inner feasible set ${\rm Alt}(\mu)$  can be a nonconvex set (``make some competitor optimal''), and the map $\lambda\mapsto {\rm KL}(P_{\mu_a},P_{\lambda_a})$ is typically nonlinear and model–dependent. Even if these distributions are known,  no closed form is available in general.

When arms belong to a one–parameter exponential family, and the problem has no structure, the optimal $\omega$ can be simply computed by applying (for example) the bisection method to a function whose evaluations requires the resolution of $K$ smooth scalar equations, thus linearly scaling in the number of arms. Since this optimization problem is usually solved at each timestep (or every $T$ timesteps), the complexity scales in the horizon $N$ and the number of arms $K$ as $NK$.

For general distributions, the situation worsens, and may be intractable without additional modeling assumptions.  Lastly, note that  the supremum over $\omega$ is a convex program in principle. First–order methods such as Frank–Wolfe can be applied to find an approximate solution. However,  any tractable implementation presumes structural knowledge (e.g., an exponential–family model, smoothness) to guarantee a number of necessary properties.

\subsubsection{$I$-IDS}
\label{app:iids}
\begin{algorithm}[h!]
\caption{$I$-IDS}
\label{alg:iids}
\begin{algorithmic}[1]
\State \textbf{Input:} Pre-trained inference network $I_\phi$; prior means and variances $\mu_a, \sigma_a^2$ for all $a \in \mathcal{A}$; target error threshold $\delta$
\State \textbf{Initialize:} $f_a(x) = \mathcal{N}(x \mid \mu_a, \sigma_a^2)$ for each $a$
\For{$t = 1, 2, \ldots$}
  \If{$\max_a I_\phi(a \mid \mathcal{D}_{t-1}) \geq 1 - \delta$}
    \State \textbf{return} $\arg\max_a I_\phi(a \mid \mathcal{D}_{t-1})$
  \EndIf
  \For{each arm $a \in \mathcal{A}$}
    \State Approximate information gain:
    \[
    g_t(a) = H\left(I_\phi(\cdot \mid \mathcal{D}_{t-1})\right) - \mathbb{E}_{r \sim p(r \mid a, \mathcal{D}_{t-1})} \left[ H\left(I_\phi(\cdot \mid \mathcal{D}_{t-1}, a, r)\right) \right]
    \]
  \EndFor
  \State Select action $a_t = \arg\max_a g_t(a)$
  \State Observe reward $r_t$
  \State Update dataset $\mathcal{D}_t = \mathcal{D}_{t-1} \cup \{(a_t, r_t)\}$
\EndFor
\end{algorithmic}
\end{algorithm}

We implement a variant of Information Directed Sampling (IDS)~\citep{russo2018learning}, where we use the inference network $I_\phi$ learned during ICPE training as a posterior over optimal arms. This approach enables IDS to exploit latent structure in the environment without explicitly modeling it via a probabilistic model; instead, the learned $I$-network implicitly captures such structure.

By using the same inference network in both ICPE and $I$-IDS, we directly compare the exploration policy learned by ICPE to the IDS heuristic applied on top of the same posterior distribution. While computing the expected information gain in IDS requires intractable integrals, we approximate them using a Monte Carlo grid of 30 candidate reward values per action. The full pseudocode for $I$-IDS is given in Algorithm~\ref{alg:iids}.

\subsubsection{In-Context Explore-then-Commit}
\label{app:icetc}

We implement an ICPE variant for regret minimization via an \textit{explore-then-commit} framework. This method reuses the exploration policy and inference network learned during fixed-confidence training. The agent interacts with the environment using the learned exploration policy until it selects the stopping action. At that point, it commits to the arm predicted to be optimal by the $I$-network and repeatedly pulls that arm for the remainder of the episode. The full pseudo-code is provided in Algorithm~\ref{alg:icetc}.

\begin{algorithm}[h!]
   \caption{In-Context Explore-then-Commit}
   \label{alg:icetc}
\begin{algorithmic}[1]
\State \textbf{Input:} Environment $M \sim \mathcal{P}(\mathcal{M})$; pre-trained critic network $Q_\theta$; pre-trained inference network $I_\phi$
\State Initialize \textit{stopped} $\leftarrow$ False
\State Observe initial state $s_1 \sim \rho$
\For{$t = 1$ to $N$}
    \If{\textit{stopped} = False \textbf{and} $a_{\text{stop}} \neq \arg\max_a Q_\theta(s_t, a)$}
        \State Execute $a_t = \arg\max_a Q_\theta(s_t, a)$ and observe $s_{t+1}$
    \ElsIf{\textit{stopped} = False \textbf{and} $a_{\text{stop}} = \arg\max_a Q_\theta(s_t, a)$}
        \State Set \textit{stopped} $\leftarrow$ True
        \State Execute $a_t = \arg\max_a I_\phi(s_t)$ and observe $s_{t+1}$
    \Else
        \State Execute $a_t = \arg\max_a I_\phi(s_t)$ and observe $s_{t+1}$
    \EndIf
\EndFor
\end{algorithmic}
\end{algorithm}

\subsubsection{$I$-DPT}\label{app:algorithm:idpt}

We implement a variant of DPT \citep{lee2023supervised} using the inference network. The idea is to act greedily with respect to the posterior distribution $I$  at inference time.

First, we train $I$ using \icpe{}. Then, at deployment we act with respect to $I$: in round $t$ we selection action $a_t = \argmax_H I(H|D_t)$. Upon observing $x_{t+1}$, we update $D_{t+1}$  and stop as soon as $\argmax_H I(H|D_t)\geq 1-\delta$.

\subsection{Transformer Architecture}\label{app:algorithm:transformer_architecture}

Here we briefly describe the architecture of the inference network $I$ and of the  network $Q$.

Both networks are implemented using a Transformer architecture. For the inference network, it is designed to predict a hypothesis $H$ given a sequence of observations. Let the input tensor be denoted by $X \in \mathbb{R}^{B \times H \times m}$, where:

\begin{itemize}
\item $B$ is the batch size,
\item $H$ is the sequence length (horizon), and
\item $m=(d+|{\cal A}|),$ where $d$ is the dimensionality of each observation $x_t$.
\end{itemize}

The inference network operates as follows:
\begin{enumerate}
\item \textbf{Embedding Layer}: Each observation vector $m_t=(x_t,a_t)$ is first embedded into a higher-dimensional space of size $d_e$ using a linear transformation followed by a GELU activation:
$         h_t = \text{GELU}(W_{\text{embed}} m_t + b_{\text{embed}}), \quad h_t \in \mathbb{R}^{d_e}
    $.
    
\item \textbf{Transformer Layers}: The embedded sequence $h \in \mathbb{R}^{B \times H \times d_e}$ is then passed through multiple Transformer layers (specifically, a GPT-2 model configuration). The Transformer computes self-attention over the embedded input to model dependencies among observations:
\[
    h' = \text{Transformer}(h), \quad h' \in \mathbb{R}^{B \times H \times d_e}.
\]

\item \textbf{Output Layer}: The final hidden state corresponding to the last element of the sequence ($h'_{:, -1, :}$) is fed into a linear output layer that projects it to logits representing the hypotheses:
\[
    o = W_{\text{out}} h'_{:, -1, :} + b_{\text{out}}, \quad o \in \mathbb{R}^{B \times |{\cal H}|}.
\]

\item \textbf{Probability Estimation}: The output logits are transformed into log-probabilities via a log-softmax operation along the last dimension
\[
    \log p(H|X) = \text{log\_softmax}(o).
\]

\end{enumerate}
For $Q$, we use the same architecture, but do not take a log-softmax at the final step.

\begin{figure}

\begin{center}
\begin{tikzpicture}[scale=0.9, every node/.style={transform shape}]

\node (input) [rectangle, draw, rounded corners, align=center, minimum width=3cm, minimum height=1cm] {Input $X \in \mathbb{R}^{B \times H \times d}$};

\node (embed) [rectangle, draw, rounded corners, align=center, minimum width=3cm, minimum height=1cm, below=1cm of input] {Embedding Layer Linear + GELU};

\node (transformer) [rectangle, draw, rounded corners, align=center, minimum width=4cm, minimum height=1.5cm, below=1cm of embed] {Transformer\\(GPT-2)};

\node (output) [rectangle, draw, rounded corners, align=center, minimum width=3cm, minimum height=1cm, below=1cm of transformer] {Output LayerLinear};

\node (softmax) [rectangle, draw, rounded corners, align=center, minimum width=3cm, minimum height=1cm, below=1cm of output] {Log-softmax\\$\log p(H|X)$};

\draw[->, thick] (input) -- (embed);
\draw[->, thick] (embed) -- (transformer);
\draw[->, thick] (transformer) -- node[right] {Last hidden state} (output);
\draw[->, thick] (output) -- (softmax);

\end{tikzpicture}
\end{center}
\caption{Model architecture for the inference network $I$ (similarly for $Q$).}
\label{fig:model_architecture}
\end{figure}
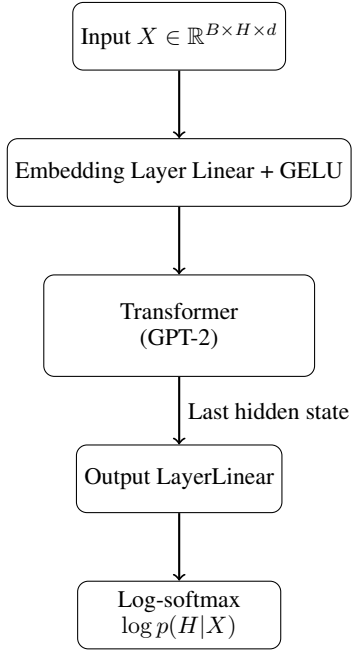

\newpage
\section{Experiments}
\label{app:experiments}

This section provides additional experimental results, along with detailed training and evaluation protocols to ensure clarity and reproducibility. All experiments were conducted using four NVIDIA A100 GPUs.

For more informations about the hyper-parameters, we also refer the reader to the {\tt README.md} file in the code, as well as the training configurations in the {\tt configs/experiments} folder.

\paragraph{Libraries used in the experiments.} We set up our experiments using Python 3.10.12 \citep{van1995python} (For more information, please refer to the following link \url{http://www.python.org}), and made use of the following libraries:  NumPy \citep{harris2020array}, SciPy \citep{2020SciPy-NMeth}, PyTorch  \citep{NEURIPS2019_9015},  Seaborn \citep{michael_waskom_2017_883859}, Pandas \citep{mckinney2010data}, Matplotlib \citep{hunter2007matplotlib}, CVXPY \citep{diamond2016cvxpy}, Wandb \citep{wandb}, Gurobi \citep{gurobi}.
Changes, and new code, are published under the MIT license. To run the code, please, read the attached README file for instructions.

\paragraph{Hierarchical bootstrapping.} For each experiment, we compute confidence intervals using hierarchical bootstrapping. Our data is organized at three levels: seed, environment, and trajectory. For each training seed we evaluate multiple environments, and for each environment we roll out multiple trajectories. Hierarchical bootstrapping allows us to correctly account for this nested structure when estimating uncertainty. This approach captures variability across seeds, environments, and trajectories, yielding a more reliable characterization of performance compared to classical bootstrapping.

To fix the ideas, consider the following random-effects model
$$
Y_{a,b,c} = \mu + \alpha_a + \beta_{a,b,} + \gamma_{a,b,c},\quad
\alpha_a\sim{\cal N}(0,\sigma^2_{\text{seed}}),\;
\beta_{a,b}\sim{\cal N}(0,\sigma^2_{\text{env}}),\;
\gamma_{a,b,c}\sim{\cal N}(0,\sigma^2_{\text{traj}}),
$$
and let $\bar Y=\frac{1}{mKN}\sum_{a=1}^m\sum_{b=1}^k\sum_{c=1}^N (Y_{a,b,c})$ with $m$ seeds, $K$ environments per seed and $N$ trajectories per environment.  

The variance of $\bar Y$ then is
$\text{Var}(\bar Y) = \frac{\sigma^2_{\text{seed}}}{m}+ \frac{\sigma^2_{\text{env}}}{mK} + \frac{\sigma^2_{\text{traj}}}{mKN}$, which hierarchical bootstrapping accounts for.
Instead, a naive bootstrap over the $mKN$ trajectories targets 
$\text{Var}_{\rm naive}(\bar Y)
\approx \frac{\text{ Var}(Y_{a,b,c})}{mKN}=\frac{\sigma^2_{\text{seed}}+\sigma^2_{\text{env}}+\sigma^2_{\text{traj}}}{mKN}$, 
which effectively reduces the contribution at the seed-level by a factor $KN$.

\subsection{Bandit Problems}
\label{app:bandits}

Here, we provide the implementation and evaluation details for the bandit experiments reported in Section~\ref{subsec:bandits}, covering deterministic, stochastic, and structured settings. Note that for this setting the observations are simply the observed rewards, i.e., $x_t=r_t$.

\paragraph{Model Architecture and Optimization.}
For all bandit tasks, \icpe{} uses a Transformer with 3 layers, 2 attention heads, hidden dimension 256, GELU activations, and dropout of 0.1 applied to attention, embeddings, and residuals (see also \cref{app:algorithm:transformer_architecture} for a description of the architecture). Layer normalization uses $\epsilon = 10^{-5}$. Inputs are one-hot action-reward pairs with positional encodings. Models are trained using Adam with learning rates between $1 \times 10^{-4}$ and $1 \times 10^{-6}$, and batch sizes from 128 to 1024 depending on task complexity.

\subsubsection{Deterministic Bandits with Fixed Budget}

Each environment consists of $K$ arms, where $K \in \{4, 6, 8, \dots, 20\}$. Mean rewards for each arm are sampled uniformly from $[0,1]$, and rewards are deterministic (i.e., zero variance). Agents interact with the environment for exactly $K$ steps and are then required to predict the optimal arm. Success is measured by the probability of correctly identifying the best arm. We also compute the average number of unique arms selected during training episodes as a proxy for exploration diversity.

\icpe{} is compared against three baselines in the deterministic setting: \textit{Uniform Sampling}, which selects arms uniformly at random; \textit{DQN}, a deep Q-network trained directly on environmental rewards ~\citep{mnih2013playing}; and \textit{I-DPT}, which performs posterior sampling using \icpe{}’s $I$-network \citep{lee2023supervised}. All methods were evaluated over five seeds, with 900 environments per seed. 95\% confidence intervals were computed with hierarchical bootstrapping.

\begin{figure}[h!]
    \centering    \includegraphics[width=\linewidth]{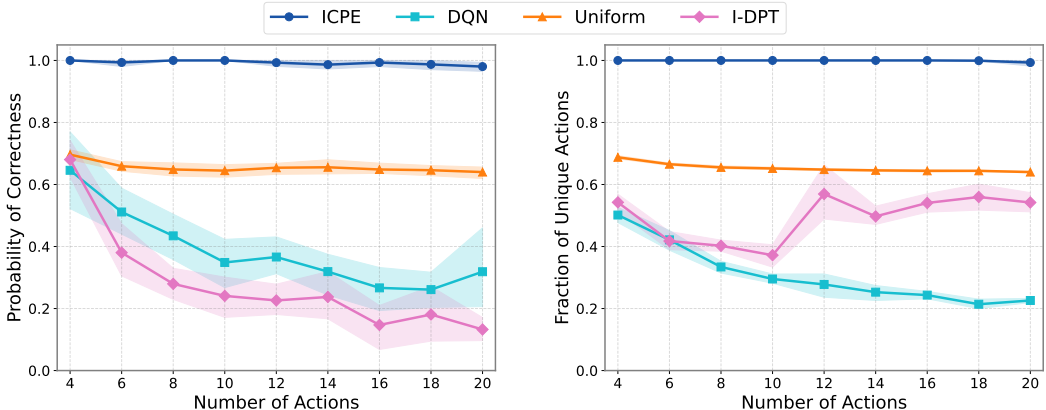}
    \caption{Deterministic bandits: (left) probability of correctly identifying the best action vs.  $K$; (right) average fraction of unique actions selected during exploration vs.   $K$.}
\end{figure}

\subsubsection{Stochastic Bandits Problems}\label{app:experimental_results:stochastic_bandit}

In the stochastic Gaussian bandit setting, we evaluate \icpe{} on best-arm identification tasks with $K \in \{4, 6, 8, \dots, 14\}$. Arm means are sampled uniformly from $[0, 0.4K]$, with a guaranteed minimum gap of $1/K$ between the top two arms. All arms have a fixed reward standard deviation of 0.5. The target confidence level is set to $\delta = 0.1$.

We compare \icpe{} against several widely used baselines: \textit{Top-Two Probability Sampling (TTPS)}~\citep{jourdan2022top}, \textit{Track-and-Stop (TaS)}~\citep{garivier2016optimal}, \textit{Uniform Sampling}, and \textit{I-DPT}. For \textit{I-DPT}, stopping occurs when the predicted optimal arm has estimated confidence at least $1-\delta$. For \textit{TTPS} and \textit{TaS}, we apply the classical stopping rule based on the characteristic time $T^\star(N_t/t;\hat{\mu}_t)$ (explained in \cref{app:algorithm:tas}):
\[
t \cdot T^\star(N_t/t;\hat{\mu}_t) \geq \log\left( \frac{1+\log t}{\delta}\right).
\]
Each method is evaluated over three seeds, with 300 environments, and 15 trajectories per environment. 95\% confidence intervals were computed with hierarchical bootstrapping.

\begin{figure}[h!]
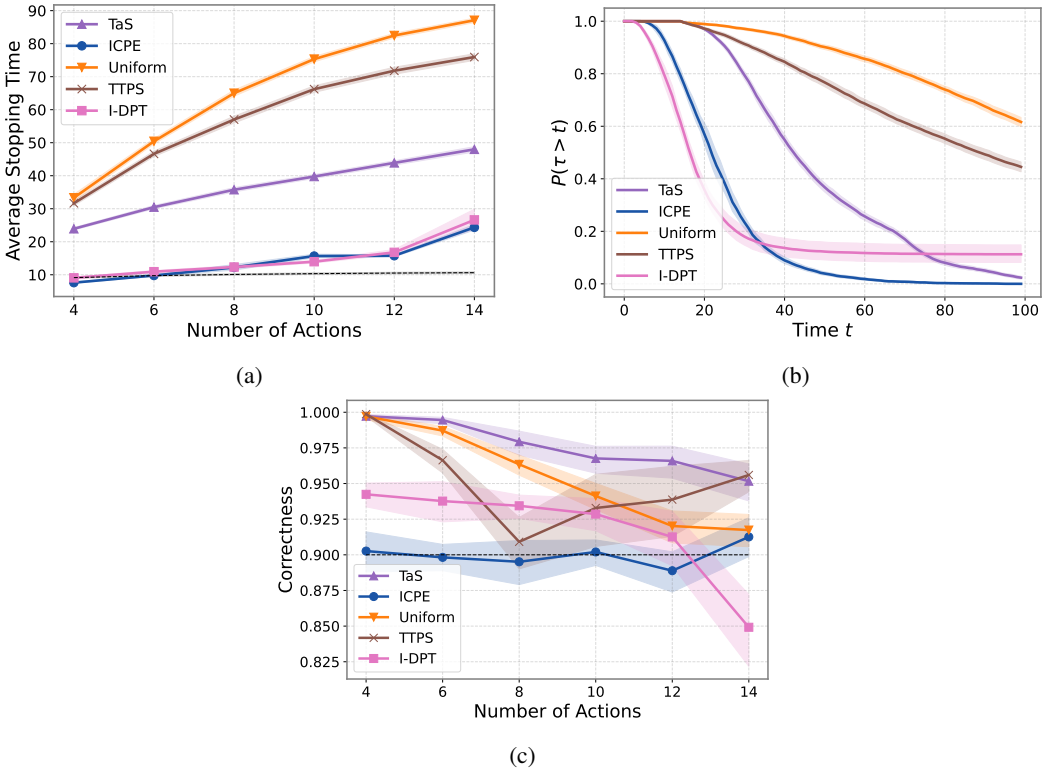

    \centering
    \begin{subfigure}[t]{0.48\textwidth}
        \centering
    \includegraphics[width=\linewidth]{figures/stochastic_mab_fixed_confidence/stochastic_mab_fixed_confidence_stopping_time.pdf}
    
    \caption{}
    
    \end{subfigure}
    \hfill
    \begin{subfigure}[t]{0.48\textwidth}
        \centering\includegraphics[width=\linewidth]{figures/stochastic_mab_fixed_confidence/stochastic_mab_fixed_confidence_stopping_time_prob.pdf}
        \caption{}
        
    \end{subfigure}
    \hfill
    \begin{subfigure}[t]{0.48\textwidth}
        \centering
    \includegraphics[width=\linewidth]{figures/stochastic_mab_fixed_confidence/stochastic_mab_fixed_confidence_correctness.pdf}
    \caption{}
    \end{subfigure}
    \caption{Results for stochastic MABs with fixed confidence $\delta=0.1$ and $N=100$: (a) average stopping time $\tau$; (b) survival function of $\tau$; (c) probability of correctness $\mathbb{P}^\pi(H^\star=\hat H_\tau)$.}
\end{figure}

\paragraph{Does \icpe{} learn randomized policies?} An intriguing question is whether \icpe{} is capable of learning randomized policies. Intuitively, one might expect randomized methods, such as actor-critic algorithms, to perform better. However, we observe that this is not the case for \icpe{}. Crucially, the inherent randomness of the environment, when passed as input to the transformer architecture, already serves as a source of stochasticity. Thus, although \icpe{} employs a deterministic mapping (via DQN) from observed trajectories, these trajectories themselves constitute random variables, rendering the policy's output effectively stochastic. To illustrate this, we examine an \icpe{} policy trained with fixed confidence ($\delta=0.1$) in a setting with $K=14$ actions (see the two rightmost plots in \cref{fig:icpe_stochastic_mab_probabilities}). By analyzing 100 trajectories from this environment and computing an averaged policy, we clearly observe how trajectory randomness influences the policy’s outputs. Specifically, exploration intensity peaks around the middle of the horizon and diminishes as the confidence level increases.

\paragraph{Does \icpe{} resembles Track and Stop?} In \cref{fig:icpe_stochastic_mab_probabilities} (left figure) we compare  an \icpe{} policy trained in the fixed confidence setting ($\delta=0.1$) with an almost optimal version of TaS, that can be easily computed without solving any optimization problem. Let $\hat \Delta_t(a)=\hat\mu_{\hat a_t}(t)-\max_{a\neq \hat a_t}\hat\mu_a$, where $\hat\mu_a(t)$ is the empirical reward of arm $a$  in round $t$ and $\hat a_t=\argmax_a \hat \mu_a(t)$ is the estimated optimal arm. Then, the approximate TaS policy is defined as
\[
\pi_t(a) =\frac{1/\hat\Delta_a(t)}{\sum_b 1/\hat\Delta_b(t)},
\]
with $\hat \Delta_{\hat a_t}(t)=\min_{a\neq \hat a_t}\hat \Delta_a(t)$. 
In the figure we sampled $100$ trajectories from a single environment with $K=14$, and computed an average \icpe{} policy. Then, we compared this policy to the approximate TaS policy, and computed the total variation. We can see that the two policies are not always similar. We believe this is due to the fact that \icpe{} is exploiting prior information on the environment, including the minimum gap assumption, and the fact that the average rewards are bounded in $[0,0.4K]$.

\begin{figure}
    \centering
    \includegraphics[width=\linewidth]{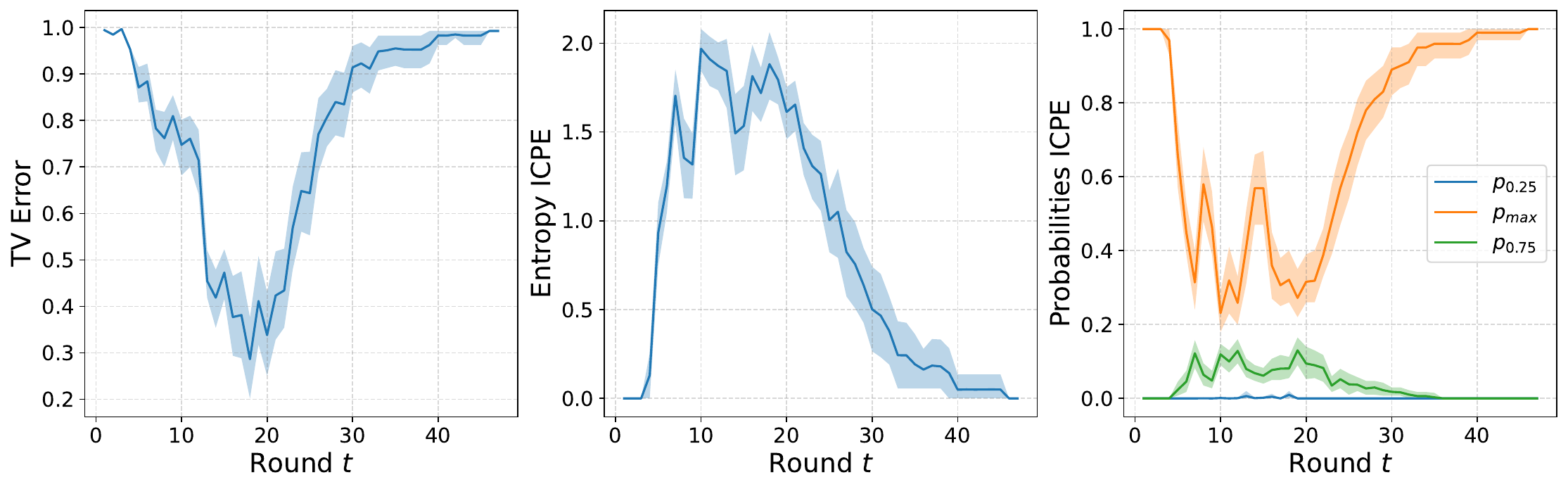}
    \caption{Statistics of \icpe{} with fixed confidence on $100$ trajectories from a single environment, with $K=14$. From left to right: Total variation error between the average \icpe{} policy and the approximate Track and Stop policy;  entropy of the average policy of \icpe{}; probabilities of the  average \icpe{} policy, with $p_{max}$ representing the maximum probability and $p_\alpha$ the $\alpha$-quantile.}
    \label{fig:icpe_stochastic_mab_probabilities}
\end{figure}

\begin{figure}[h!]
\centering
\includegraphics[width=0.48\linewidth]{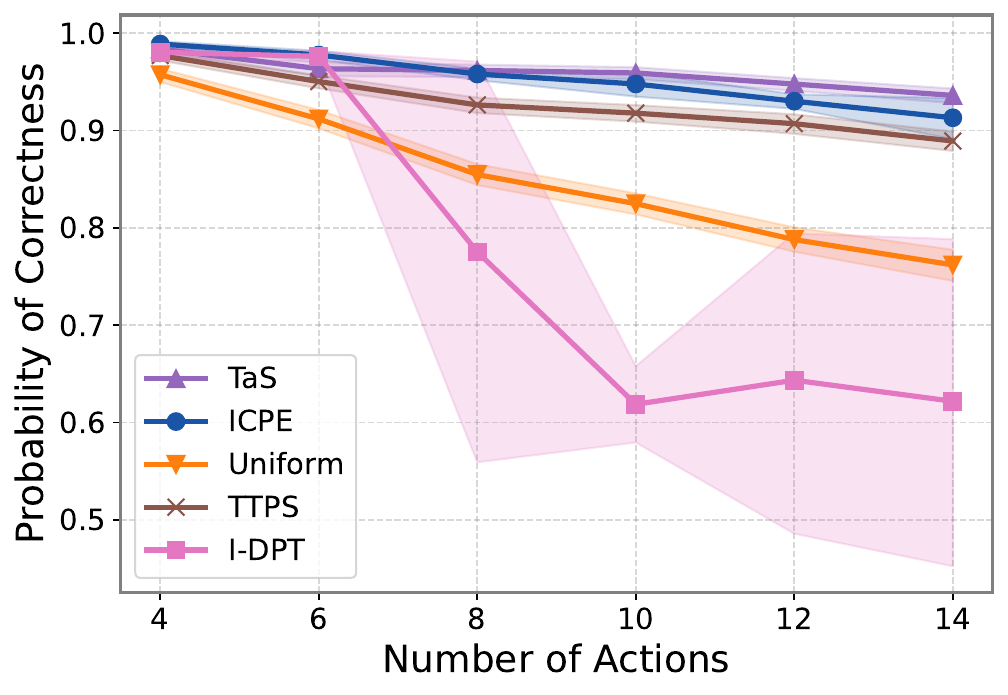}
    \caption{Correctness $\mathbb{P}^\pi(H^\star=\hat H_N)$ for stochastic MABs with fixed budget $N=30$.}
\end{figure}

\paragraph{Is \icpe{} robust to distribution shift?}

\begin{table}[t]
  \centering
  \begin{tabular}{c|cc|cc}
    \toprule
    \multicolumn{1}{c|}{\begin{tabular}[b]{@{}c@{}}
        \textit{KL Divergence}\\
        \textit{From Uniform}
    \end{tabular}} &
    \multicolumn{2}{c|}{\textit{Number of Actions = 4}} &
    \multicolumn{2}{c}{\textit{Number of Actions = 6}} \\
    & Correctness & Avg.\ Stop Time & Correctness & Avg.\ Stop Time \\
    \midrule
    0.00 & 0.91 $\pm$ 0.01 &  7.76 $\pm$ 0.20 & 0.91 $\pm$ 0.01 &  9.78 $\pm$ 0.22 \\
    0.25 & 0.91 $\pm$ 0.01 &  7.59 $\pm$ 0.19 & 0.91 $\pm$ 0.01 &  9.97 $\pm$ 0.26 \\
    0.50 & 0.90 $\pm$ 0.01 &  7.65 $\pm$ 0.21 & 0.91 $\pm$ 0.01 &  9.79 $\pm$ 0.26 \\
    1.00 & 0.90 $\pm$ 0.01 &  7.68 $\pm$ 0.20 & 0.90 $\pm$ 0.01 &  9.89 $\pm$ 0.28 \\
    2.00 & 0.89 $\pm$ 0.01 &  7.63 $\pm$ 0.21 & 0.90 $\pm$ 0.01 &  9.86 $\pm$ 0.28 \\
    4.00 & 0.89 $\pm$ 0.01 &  7.73 $\pm$ 0.22 & 0.90 $\pm$ 0.01 & 10.07 $\pm$ 0.28 \\
    \bottomrule
  \end{tabular}

  \vspace{0.6em}

  \begin{tabular}{c|cc|cc}
    \toprule
    \multicolumn{1}{c|}{\begin{tabular}[b]{@{}c@{}}
        \textit{KL Divergence}\\
        \textit{From Uniform}
    \end{tabular}} &
    \multicolumn{2}{c|}{\textit{Number of Actions = 8}} &
    \multicolumn{2}{c}{\textit{Number of Actions = 10}} \\
    & Correctness & Avg.\ Stop Time & Correctness & Avg.\ Stop Time \\
    \midrule
    0.00 & 0.90 $\pm$ 0.01 & 11.37 $\pm$ 0.22 & 0.91 $\pm$ 0.01 & 15.41 $\pm$ 0.37 \\
    0.25 & 0.89 $\pm$ 0.01 & 11.45 $\pm$ 0.26 & 0.92 $\pm$ 0.01 & 15.13 $\pm$ 0.37 \\
    0.50 & 0.89 $\pm$ 0.01 & 11.54 $\pm$ 0.26 & 0.91 $\pm$ 0.01 & 15.35 $\pm$ 0.40 \\
    1.00 & 0.90 $\pm$ 0.01 & 11.33 $\pm$ 0.24 & 0.90 $\pm$ 0.01 & 15.33 $\pm$ 0.42 \\
    2.00 & 0.89 $\pm$ 0.01 & 11.41 $\pm$ 0.28 & 0.91 $\pm$ 0.01 & 15.54 $\pm$ 0.41 \\
    4.00 & 0.88 $\pm$ 0.01 & 11.47 $\pm$ 0.28 & 0.91 $\pm$ 0.01 & 15.22 $\pm$ 0.40 \\
    \bottomrule
  \end{tabular}

  \vspace{0.6em}

  \begin{tabular}{c|cc|cc}
    \toprule
    \multicolumn{1}{c|}{\begin{tabular}[b]{@{}c@{}}
        \textit{KL Divergence}\\
        \textit{From Uniform}
    \end{tabular}} &
    \multicolumn{2}{c|}{\textit{Number of Actions = 12}} &
    \multicolumn{2}{c}{\textit{Number of Actions = 14}} \\
    & Correctness & Avg.\ Stop Time & Correctness & Avg.\ Stop Time \\
    \midrule
    0.00 & 0.91 $\pm$ 0.01 & 18.86 $\pm$ 0.51 & 0.91 $\pm$ 0.01 & 22.23 $\pm$ 0.72 \\
    0.25 & 0.91 $\pm$ 0.02 & 18.28 $\pm$ 0.52 & 0.92 $\pm$ 0.01 & 22.63 $\pm$ 0.71 \\
    0.50 & 0.91 $\pm$ 0.01 & 18.55 $\pm$ 0.53 & 0.91 $\pm$ 0.01 & 22.18 $\pm$ 0.75 \\
    1.00 & 0.90 $\pm$ 0.01 & 18.78 $\pm$ 0.52 & 0.91 $\pm$ 0.01 & 22.36 $\pm$ 0.72 \\
    2.00 & 0.91 $\pm$ 0.01 & 19.00 $\pm$ 0.60 & 0.91 $\pm$ 0.01 & 22.57 $\pm$ 0.75 \\
    4.00 & 0.91 $\pm$ 0.01 & 18.52 $\pm$ 0.54 & 0.91 $\pm$ 0.01 & 22.97 $\pm$ 0.75 \\
    \bottomrule
  \end{tabular}
  \caption{ICPE performance on stochastic fixed-confidence $(\delta=0.1, N=100)$ bandits when test environments drift from the uniform training distribution by a prescribed KL distance (symmetric Dirichlet). Reported values are $95\%$ confidence intervals computed via hierarchical bootstrapping.}
  \label{tab:icpe_dist_shift}
\end{table}

As an in-context learning method, ICPE is designed to be meta-trained on the same family of tasks on which it will be deployed. That said, understanding robustness to changes in the environment distribution is important for assessing practicality. Therefore, we trained ICPE in the stochastic fixed-confidence bandit setting described above, where environments are sampled from a uniform distribution over Gaussian bandits with a minimum gap. At test time, we then evaluated the same trained model on bandit instances drawn from \emph{shifted} environment distributions. We constructed these shifts by sampling reward means from a symmetric Dirichlet distribution with parameter $\alpha$ chosen so that
\[
\mathrm{KL}(\mathrm{Dir}(\alpha,\dots,\alpha)\,\|\,\mathrm{Dir}(1,\dots,1)) = \text{target KL},
\]
thereby controlling the divergence from the uniform training distribution. Intuitively, varying the target KL controls how concentrated generated samples are with respect to the simplex. ICPE’s correctness and average stopping time across a range of KL values and number of actions is reported in \cref{tab:icpe_dist_shift}. Across all experiments, we observe that both correctness and stopping time remain remarkably stable, with only minor fluctuations within the reported confidence intervals. This suggests that ICPE is not excessively sensitive to moderate shifts in the environment distribution around the training family.

\subsubsection{Bandit Problems with Hidden Information}\label{app:subsec:bandit_hiden_info}

\paragraph{Magic Action Environments}

We evaluate \icpe{} in bandit environments where certain actions reveal information about the identity of the optimal arm, testing its ability to uncover and exploit latent structure under the fixed-confidence setting.

Each environment contains $K = 5$ arms. Non-magic arms have mean rewards sampled uniformly from $[1,5]$, while the mean reward of the designated \textit{magic action} (always arm 1) is defined as $\mu_m = \phi(\arg\max_{a \ne a_m} \mu_a)$ with $\phi(i) = i/K$. The magic action is not the optimal arm, but it encodes information about which of the other arms is. To control the informativeness of this signal, we vary the standard deviation of the magic arm $\sigma_m \in \{0.0, 0.1, \dots, 1.0\}$, while fixing the standard deviation of all other arms to $\sigma = 1 - \sigma_m$.

\icpe{} is trained under the fixed-confidence setting with a target confidence level of $0.9$. For each $\sigma_m$, we compare \icpe{}’s sample complexity to two baselines: (1) the average theoretical lower bound computed for the problem computed via averaging the result of Theorem~\ref{thm:noisy_magic} over numerous environmental mean rewards, and (2) \textit{I-IDS}, a pure-exploration information-directed sampling algorithm that uses \icpe{}’s $I$-network for posterior estimation.
All methods are over 500 environments, with 10 trajectories per environment. 95\% confidence intervals are computed using hierarchical bootstrapping with two levels.


Beyond the exploration efficiency analysis shown in Figure~\ref{fig:noisy_magic}, we also assess the correctness of each method’s final prediction and its usage of the magic action. As shown in Figure~\ref{fig:cor_noisy}, both \icpe{} and \textit{I-IDS} consistently achieve the target accuracy of 0.9, validating their reliability under the fixed-confidence formulation.

Figure~\ref{fig:magicpct} plots the proportion of total actions that were allocated to the magic arm across different values of $\sigma_m$. While both methods adapt their reliance on the magic action as its informativeness degrades, \textit{I-IDS} tends to abandon it earlier. This behavior suggests that \icpe{} is better able to retain and exploit structured latent information beyond what is captured by simple heuristics for expected information gain.

\begin{figure}[ht]
    \centering
    \begin{subfigure}[b]{0.45\textwidth}
        \includegraphics[width=\linewidth]{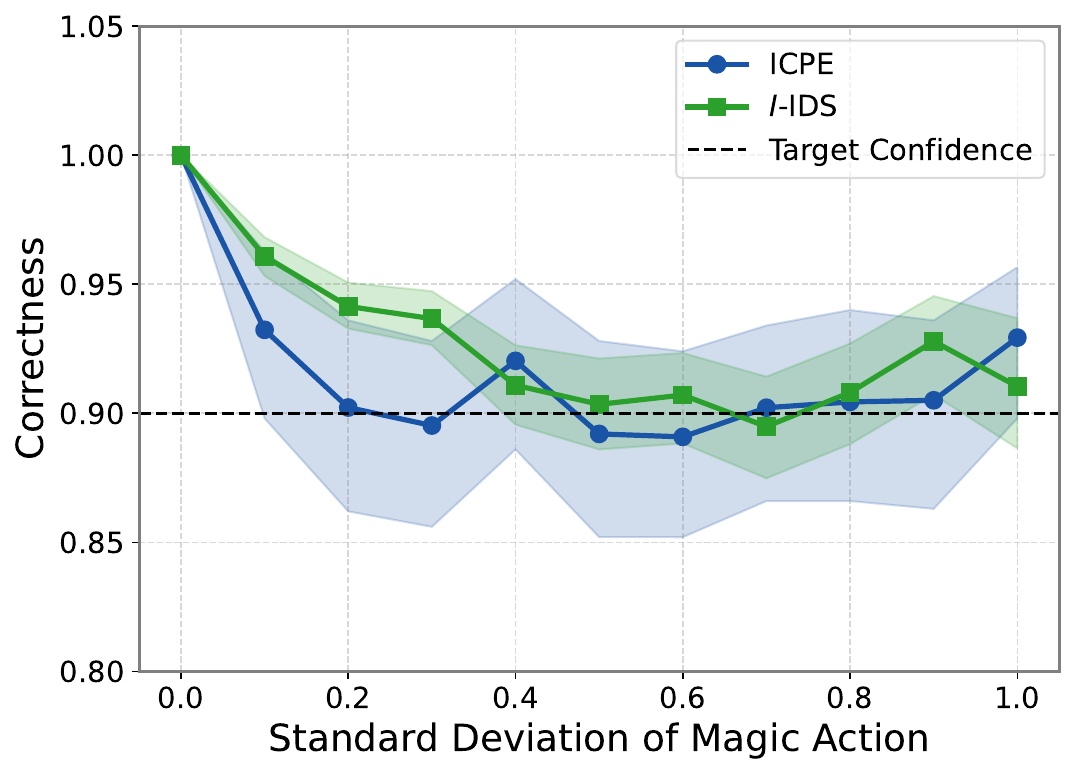}
        \caption{}
        \label{fig:cor_noisy}
    \end{subfigure}
    \hfill
    \begin{subfigure}[b]{0.45\textwidth}
        \includegraphics[width=\linewidth]{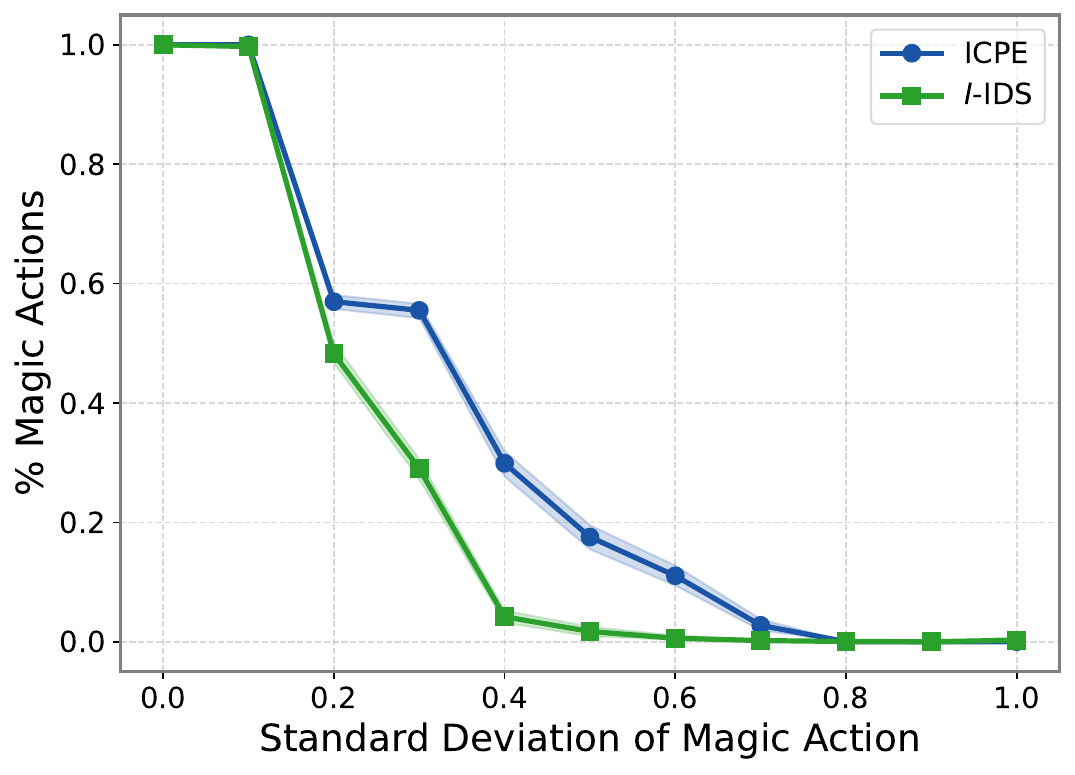}
        \caption{}
        \label{fig:magicpct}
    \end{subfigure}
    \caption{(a) Final prediction accuracy across varying levels of noise in the magic action ($\sigma_m$). Both \icpe{} and \textit{I-IDS} consistently achieve the target confidence threshold of 0.9. (b) Percentage of actions allocated to the magic arm as a function of $\sigma_m$. \icpe{} continues to exploit the magic action longer than \textit{I-IDS}, suggesting more robust use of latent structure.}
    \label{fig:app_noisy}
\end{figure}

We also assess \icpe{}’s performance in a regret minimization setting. We define an \textit{In-Context Explore-then-Commit} variant of \icpe{}, which explores until the $I$-network reaches confidence $1 - \delta$, then repeatedly selects the estimated optimal action. We compare this policy’s cumulative regret to that of three standard algorithms: \textit{UCB}, \textit{Thompson Sampling}, and \textit{IDS}, each initialized with Gaussian priors. For this evaluation, we fix $\sigma_m = 0.1$, $\sigma = 0.9$, and $\delta=0.01$.

Implementation details for $I$-IDS and In-Context Explore-then-Commit are provided in Sections \ref{app:iids} and \ref{app:icetc} respectively.

\paragraph{Magic Chain Environments}

To assess \icpe{}’s ability to perform multi-step reasoning over latent structure, we evaluate it in environments where identifying the optimal arm requires sequentially uncovering a chain of informative actions. In these \textit{magic chain} environments, each magic action reveals partial information about the next, culminating in identification of the best arm.

We use $K = 10$ arms and vary the number of magic actions $n \in \{1, 2, \dots, 9\}$. Mean rewards for magic actions are defined recursively as:
\[
\mu_{i_j} =
\begin{cases}
\phi(i_{j+1}), & \text{if } j = 1,\dots, n-1, \\[1mm]
\phi\Bigl(\arg\max_{a\notin\{i_1,\dots,i_n\}} \mu_a\Bigr), & \text{if } j = n,
\end{cases}
\]
where $\phi(i) = i/K$, and the remaining arms have mean rewards sampled uniformly from $[1, 2]$. All rewards are deterministic (zero variance).

\icpe{} is trained under the fixed-confidence setting with $\delta = 0.99$. For each $n$, five models are trained across five seeds. We compare \icpe{}’s average stopping time to the theoretical optimum (computed via Theorem~\ref{thm:bound_multi_magic}) and to the \textit{I-IDS} baseline equipped with access to the $I$-network. Each model is evaluated over 1000 test environments per seed. 95\% confidence intervals are computed using hierarchical
bootstrapping.

In interpreting the results from Figure~\ref{fig:mult_magic}, we observe that for environments with one or two magic actions, \icpe{} reliably learns the optimal policy of following the magic chain to completion. In these cases, the agent is able to identify the optimal arm without ever directly sampling it, by exploiting the structured dependencies embedded in the reward signals of the magic actions. Figure~\ref{fig:magic2} illustrates a representative trajectory from the two-magic-arm setting, where the first magic action reveals the location of the second, which in turn identifies the optimal arm. The episode terminates without requiring the agent to explicitly sample the best arm itself.

\begin{figure}[ht]
    \centering
    \includegraphics[width=0.7\linewidth]{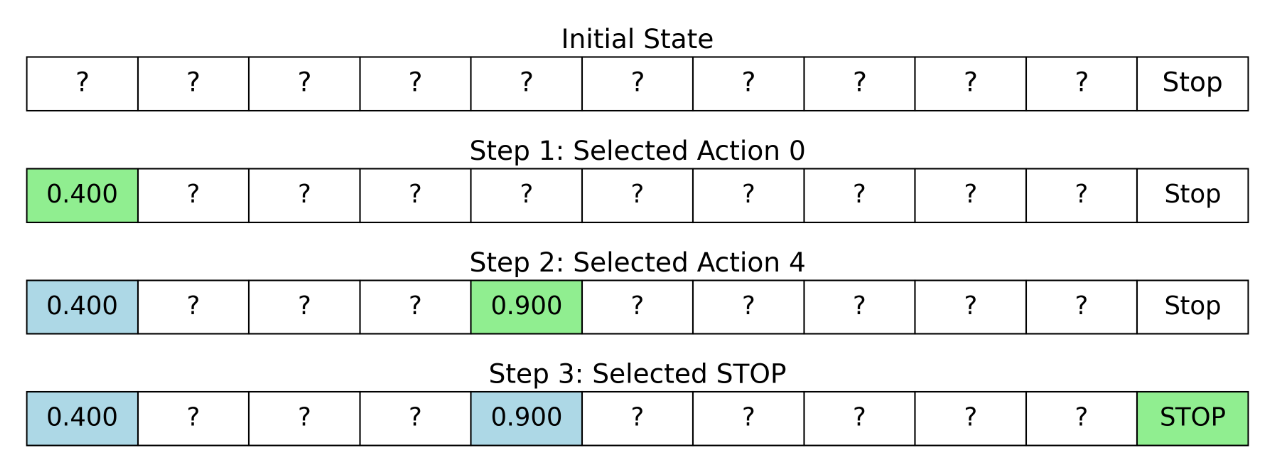}
    \caption{Example trajectory in the 2-magic-arm environment. \icpe{} follows the magic chain: the first magic action reveals the second, which identifies the optimal arm.}
    \label{fig:magic2}
\end{figure}

For environments with more than two magic actions, however, \icpe{} learns a different strategy. As the length of the magic chain increases, the expected sample complexity of following the chain from the start becomes suboptimal. Instead, \icpe{} learns to randomly sample actions until it encounters one of the magic arms and then proceeds to follow the chain from that point onward. This behavior represents an efficient, learned compromise between exploration cost and informativeness. Figure~\ref{fig:magic6} shows an example trajectory from the six-magic-arm setting, where the agent initiates random sampling until it lands on a magic action, then successfully follows the remaining chain to identify the optimal arm.

\begin{figure}[ht]
    \centering
    \includegraphics[width=0.7\linewidth]{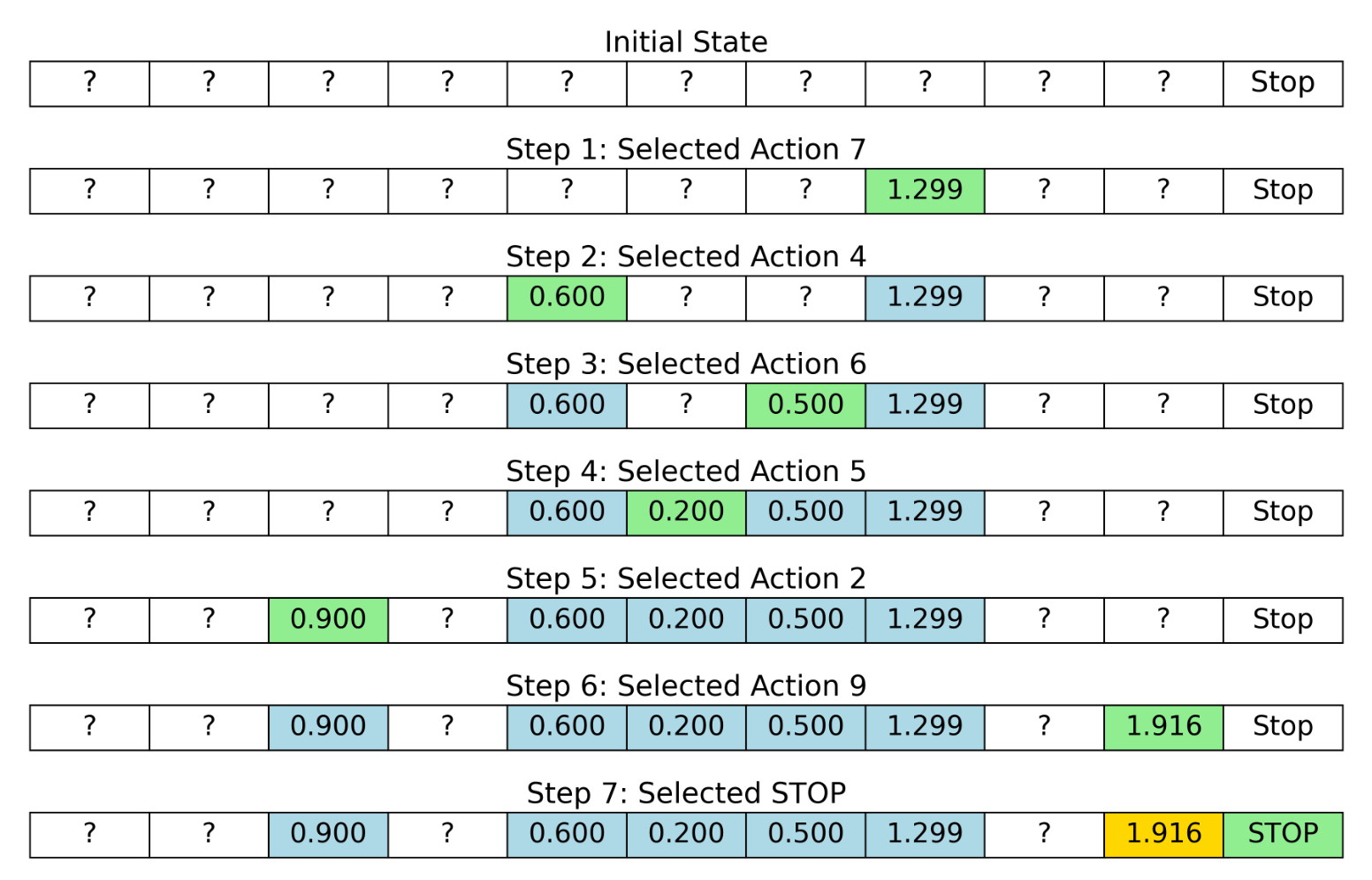}
    \caption{Example trajectory in the 6-magic-arm environment. Rather than starting from the first magic action, \icpe{} samples randomly until finding a magic action and then follows the chain to the optimal arm.}
    \label{fig:magic6}
\end{figure}

\subsection{Semi-Synthetic Pixel Sampling}
\label{app:pixel-sampling}

To evaluate \icpe{} in a setting that more closely resembles real-world decision-making tasks, we designed a semi-synthetic environment based on the MNIST dataset \citep{lecun1998gradient}, where the agent must adaptively reveal image regions to classify a digit while minimizing the number of pixels observed. This experiment serves as a proof-of-concept for using \icpe{} in perceptual tasks where observations are costly and information must be acquired efficiently.

\paragraph{Environment Details.}
Each MNIST image is partitioned into $36$ non-overlapping $5 \times 5$ pixel regions, defining an action space of size $K = 36$. At each timestep, the agent selects a region to reveal, progressively unmasking the image. The agent begins with a fully masked image and has a fixed budget of $N = 12$ steps to acquire information and make a prediction.

To prevent overfitting and encourage generalizable policies, we apply strong augmentations at each episode: random rotations ($\pm 30^\circ$), translations (up to 2 pixels), Gaussian noise ($\mathcal{N}(0, 0.1)$), elastic deformations, and random contrast adjustments. These augmentations ensure that agents cannot memorize specific pixel layouts and must instead learn adaptive exploration strategies.

\paragraph{Model Architecture and Optimization.}
Due to the visual nature of the task, we use a convolutional encoder rather than a transformer. The \icpe{} critic network combines a CNN image encoder with a separate action-count encoder. The CNN consists of 3 convolutional blocks with 16 base channels, followed by max pooling and global average pooling. The action counts (which track how often each region has been sampled) are passed through a linear embedding layer with 32 output units, followed by ReLU activation and LayerNorm. The image and action embeddings are concatenated and processed through two residual MLP layers, producing $Q$-values over actions. The $I$-network shares the same architecture but outputs logits over 10 digit classes. 

All models are implemented in PyTorch and trained with Adam using a learning rate of $1 \times 10^{-4}$. Training is performed over 500{,}000 episodes using 40 parallel environment instances. We use a batch size of 128, a replay buffer of size 100{,}000, and a discount factor $\gamma = 0.999$. The $Q$-network is updated using Polyak averaging with coefficient 0.01, and the $I$-network is updated every two steps using 30 bootstrap batches. To populate the buffer initially, we perform 10 batches of bootstrap updates before standard training begins. Gradients are clipped to a maximum norm of 2.

\paragraph{Pretraining the Inference Network.}
To provide stable reward signals and ensure consistency with baselines, we pretrain a separate CNN classifier to predict digit labels from fully revealed images. This classifier consists of three convolutional layers with max pooling, followed by two linear layers and a softmax head. The classifier is trained on the same augmented data used during \icpe{} training and is frozen during exploration learning. Its softmax confidence for the correct digit is used as the reward signal. This setup simulates realistic scenarios in which high-quality predictive models already exist for fully observed data (e.g., in clinical diagnosis).

\paragraph{Evaluation.}
We compare \icpe{} to two baselines: \textit{Uniform Sampling}, which selects image regions uniformly at random at each timestep, and \textit{Deep CMAB}~\citep{collier2018deep}, a contextual bandit algorithm that uses a Bayesian neural network to model $p(r \mid x, a)$ and performs posterior sampling via dropout.

The Deep CMAB model uses a convolutional encoder to extract image features, which are concatenated with a learned embedding of the action count vector. The combined representation is passed through a multilayer perceptron with dropout applied to each hidden layer. At each decision point, the agent samples a dropout rate from a uniform distribution over $(0,1)$ and uses the resulting forward pass as a sample from the posterior over rewards (Thompson sampling). The reward signal for each action is computed using the pretrained MNIST classifier: specifically, the increase in softmax probability for the correct digit class after a new region is revealed.

We train Deep CMAB for 100{,}000 episodes using Adam optimization. During training, the agent interacts with multiple MNIST instances in parallel, and updates its model based on the marginal improvement in confidence after each action. The model learns to maximize this incremental reward signal by associating particular visual contexts with the most informative actions.

For each trained model, we sample 1000 test environments and report on (1) the average final classification accuracy by the pretrained classifier at the end of trajectory, and (2) the average number of regions used before prediction. Confidence intervals are computed via bootstrapping.

\paragraph{Adaptive Sampling Analysis.}
To assess whether agents learn digit-specific exploration strategies, we analyze the distribution of selected image regions across digit classes. For each agent, we compute pairwise chi-squared tests between all digit pairs, testing whether the distributions of selected regions are statistically distinguishable.

To ensure sufficient support for the test, we only compare digit pairs that each have at least five trajectories and remove actions that appear in fewer than five total samples across the two classes. For each qualifying digit pair, we construct a $2 \times \tilde{K}$ contingency table, where $\tilde{K}$ is the number of region indices that are meaningfully used by either digit. The rows correspond to digit classes, and each column counts how many samples from each class selected the corresponding region at least once. 

We apply the chi-squared test of independence to each contingency table. A low p-value indicates that the region selection distributions for the two digits are significantly different, suggesting digit-specific adaptation. By comparing the number and strength of significant differences across agents, we evaluate the extent to which each method tailors its exploration policy to the structure of the input class.

We visualize the resulting pairwise p-values in Figure~\ref{fig:chisq} using a heatmap. Each cell shows the chi-squared test p-value between a pair of digits. Lower values (blue cells) indicate greater divergence in sampling behavior, and thus more adaptive and digit-specific strategies.

\begin{figure}[ht]
    \centering
    \includegraphics[width=\linewidth]{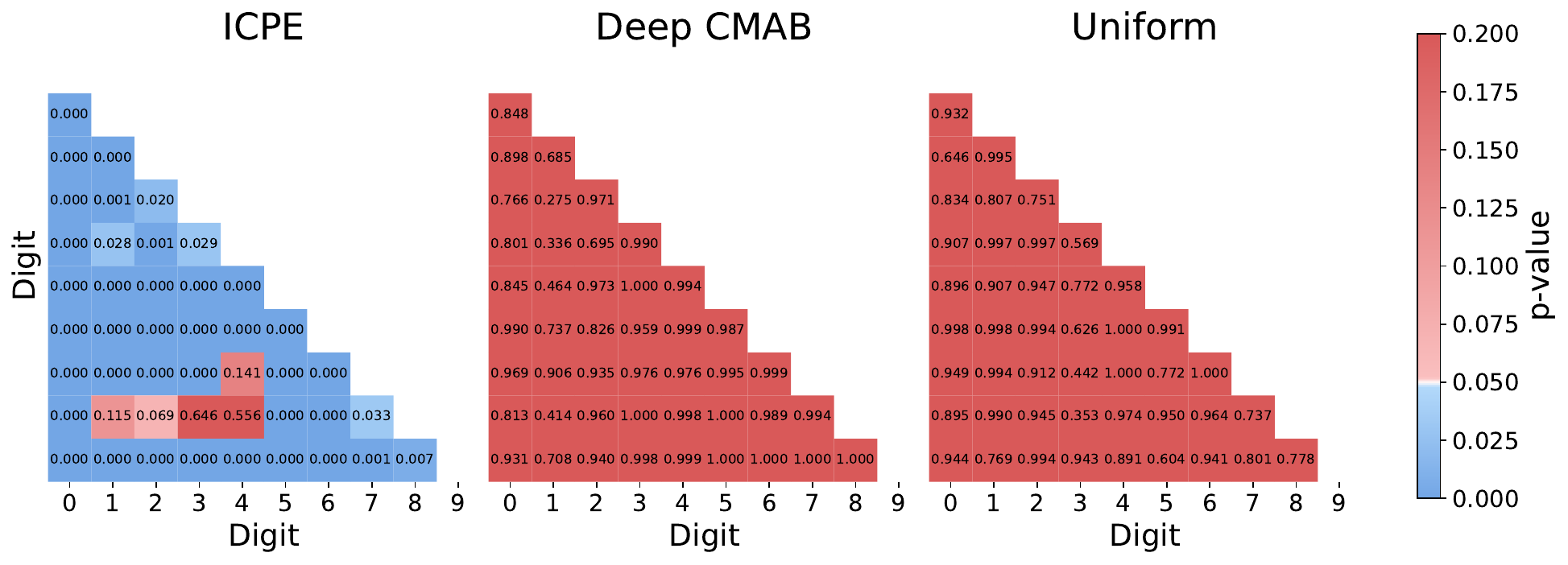}
    \caption{Pairwise chi-squared test p-values for region selection distributions across digit classes. Lower values indicate more statistically distinct exploration behaviors.}
    \label{fig:chisq}
\end{figure}

For further intuition into the sampling process, Figure~\ref{fig:sampling_example} shows a representative example of the \icpe{} pipeline progressively revealing image regions and correctly classifying the digit ‘2’. This highlights the interplay between exploration and inference as the agent strategically uncovers informative regions to guide its decision.

To illustrate the impact of input corruption, Figure~\ref{fig:inc_sampling_example} presents an example where \icpe{} fails to correctly classify the digit. Although the agent successfully reveals the central body of the digit, the applied augmentations distort the image to the extent that the digit becomes visually ambiguous. In this case, the agent incorrectly predicts an ‘8’ when the true label is a ‘9’, underscoring the challenge introduced by realistic image corruptions in this setting.

\begin{figure}[ht]
    \centering
    \includegraphics[width=0.7\linewidth]{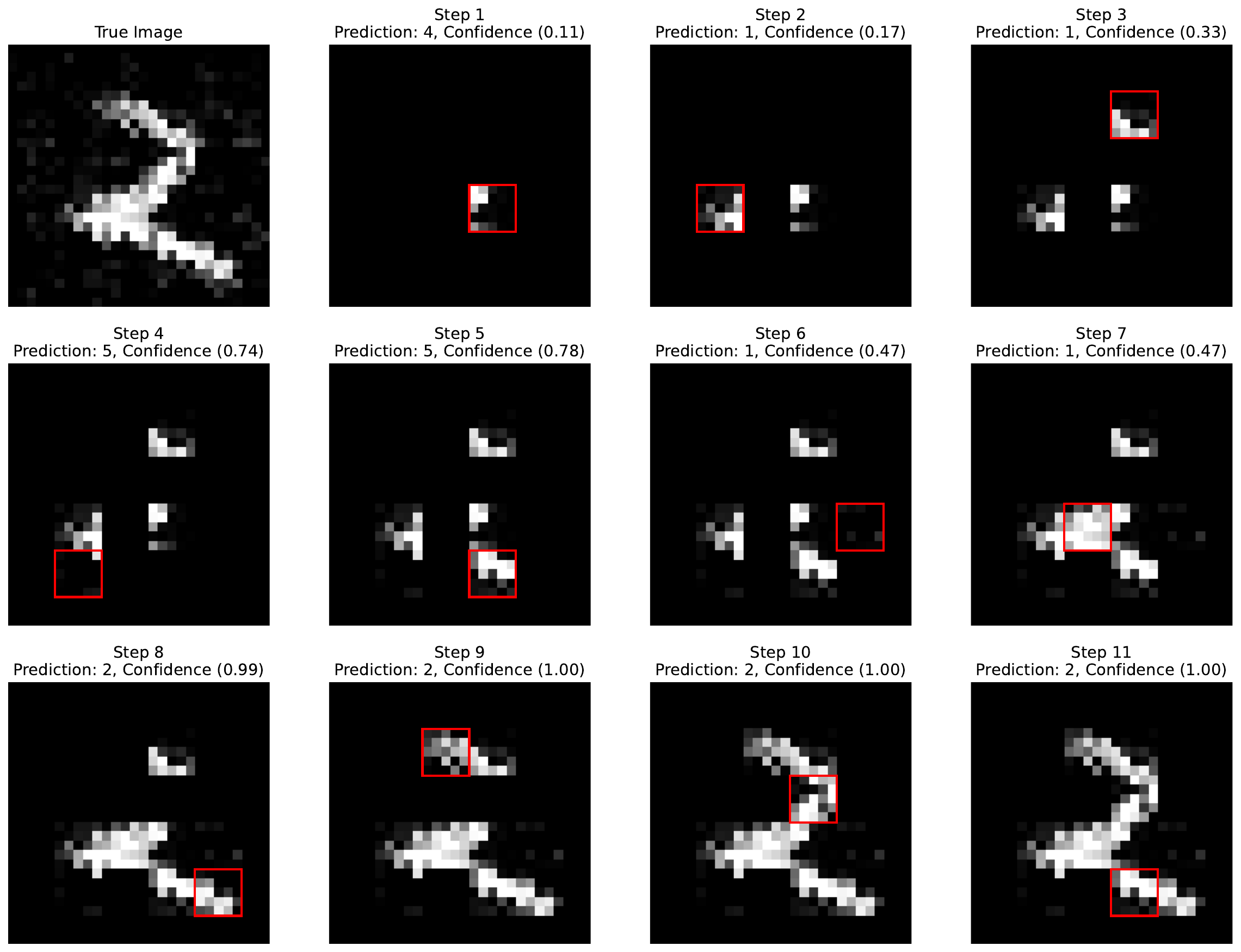}
    \caption{Illustrative example of the \icpe{} agent revealing regions of an MNIST digit and correctly classifying it as a ‘2’. The sequence shows the intermediate revealed image and predicted label at each timestep.}
    \label{fig:sampling_example}
\end{figure}

\begin{figure}[ht]
    \centering
    \includegraphics[width=0.7\linewidth]{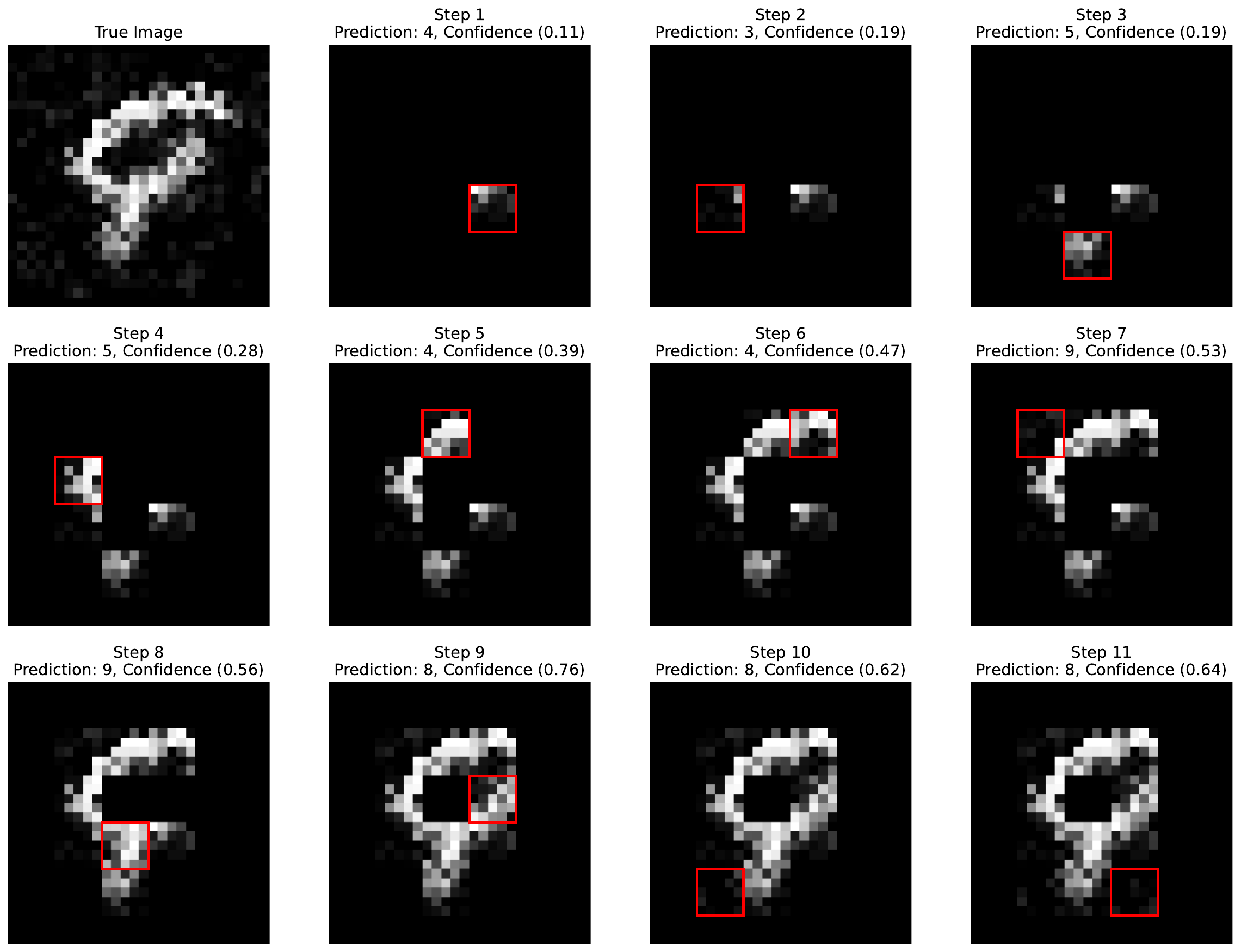}
    \caption{Example of an incorrect classification due to aggressive data augmentations. Although the agent reveals the central region of the digit, the distortions cause it to misclassify a ‘9’ as an ‘8’.}
    \label{fig:inc_sampling_example}
\end{figure}

\subsection{MDP Problems: Magic Room}\label{app:subsec:magic_room}

The Magic Room is a sequential decision-making environment structured as a $K\times K$ grid-shaped room containing four doors, each positioned at the midpoint of one of the four walls (top, bottom, left, right). At the beginning of each episode, exactly one of these doors is randomly chosen to be the correct door ($H^\star$), unknown to the agent. 

The agent's goal is to identify and pass through the correct door. Each episode lasts for a maximum of $N = K^2$ time steps, during which the agent navigates the grid, observes clues, and attempts to determine the correct door. Two binary clues, each randomly assigned a location within the sub-grid $[1,1] \times [K-1,K-1]$, are placed in the room at the start of each episode. Each clue has a binary value, randomly set to either $-1$ or $1$. Collecting both clues provides sufficient information to unambiguously determine the correct door, given that the agent has learned the mapping from clue configurations to door identity.

At each time step $t$, the agent observes the state vector:
\[
    x_t = (z_t, y_t, c_{1,t}, c_{2,t}, r_t),
\]
where:
\begin{itemize}
    \item $(z_t,y_t)$ are the agent's current coordinates on the grid.
    \item $c_{i,t} \in \{-1, 0, 1\}$ indicates the status of clue $i$: it equals $0$ if clue $i$ has not yet been observed by the agent, or it equals either $-1$ or $1$ if the clue has been observed.
    \item $r_t \in \{0,1\}$ represents the reward received at time $t$. Specifically, upon passing through a door:
    \begin{itemize}
        \item If the chosen door is the correct one, the agent receives a reward of $1$ with probability $\frac{1}{4}$, and a reward of $0$ otherwise.
        \item If the chosen door is incorrect, the agent always receives a reward of $0$.
    \end{itemize}
\end{itemize}

An episode terminates when the agent chooses to pass through any of the four doors, irrespective of correctness, or when the horizon $N=K^2$ steps is reached. Upon termination, the agent is required to explicitly select which door it believes to be the correct one.

\begin{table}[h!]
  \centering
\resizebox{\columnwidth}{!}{%
  \begin{tabular}{lcc|cc}
    \toprule
    Method & \multicolumn{2}{c}{Average Correctness} & \multicolumn{2}{c}{Average Stopping Time} \\
    \cmidrule(lr){2-3} \cmidrule(lr){4-5}
           & \(K=6\) & \(K=8\) & \(K=6\) & \(K=8\) \\
    \midrule
    \icpe{}   & \(0.953\,(0.940,\,0.968)\) & \(0.948\,(0.941,\,0.954)\) & \(13.721\,(13.298,\,14.165)\) & \(27.704\,(27.296,\,28.086)\) \\
    \bottomrule
  \end{tabular}
  }
    \caption{Magic Room: correctness and stopping times (mean and 95\% CI) for \(K=6\) and \(K=8\).}
  \label{tab:magic_room}
\end{table}

This setup provides two distinct strategies for the agent:
\begin{enumerate}
    \item \textbf{Luck-based strategy}: The agent directly attempts to pass through a door, observing the reward to determine correctness. A positive reward conclusively indicates the correct door; a zero reward provides no additional information.
    \item \textbf{Inference-based strategy}: The agent efficiently navigates the room, locates both clues to deduce the identity of the correct door, and subsequently exits through that door.
\end{enumerate}

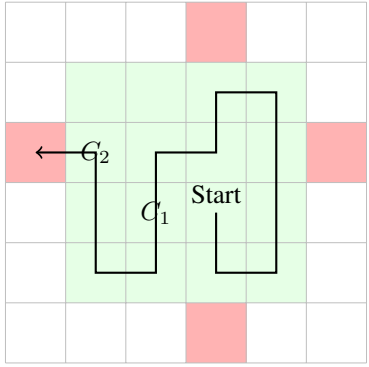
\begin{figure}[t]
    \centering

   \begin{tikzpicture}[scale=0.8]

     \fill[red!30] (0,3) rectangle +(1,1);  
  \fill[red!30] (5,3) rectangle +(1,1);  
    \fill[red!30] (3,0) rectangle +(1,1);  
        \fill[red!30] (3,5) rectangle +(1,1);  

              \fill[green!10] (1,1) rectangle +(4,4);  
  \draw[step=1cm,gray!50,very thin] (0,0) grid (6,6);
  
  \coordinate (P0)  at (3.5,2.5);
  \coordinate (P1)  at (3.5,1.5);
  \coordinate (P2)  at (4.5,1.5);
  \coordinate (P3)  at (4.5,2.5);
  \coordinate (P4)  at (4.5,3.5);
  \coordinate (P5)  at (4.5,4.5);
  \coordinate (P6)  at (3.5,4.5);
  \coordinate (P7)  at (3.5,3.5);
  \coordinate (P8)  at (2.5,3.5);
  \coordinate (P9)  at (2.5,2.5);
  \coordinate (P10) at (2.5,1.5);
  \coordinate (P11) at (1.5,1.5);
  \coordinate (P12) at (1.5,2.5);
  \coordinate (P13) at (1.5,3.5);
\coordinate (P14) at (0.5,3.5);

  \draw[->, thick]
    (P0) -- (P1) -- (P2) -- (P3) -- (P4) -- (P5)
         -- (P6) -- (P7) -- (P8) -- (P9) -- (P10)
         -- (P11)-- (P12)-- (P13) -- (P14);

  \node at (P9)  {$C_1$};
  \node at (P13) {$C_2$};
  
  \node[above] at (P0)  {Start};
\end{tikzpicture}
\caption{Magic room: example of trajectory of the \\icpe{}{} agent.}
\label{fig:magic_room:example_trajectory}
\end{figure}

Thus, optimal behavior requires an effective exploration of the room to finish as quickly as possible.

\begin{figure}[h!]
    \centering
    \includegraphics[width=0.48\linewidth]{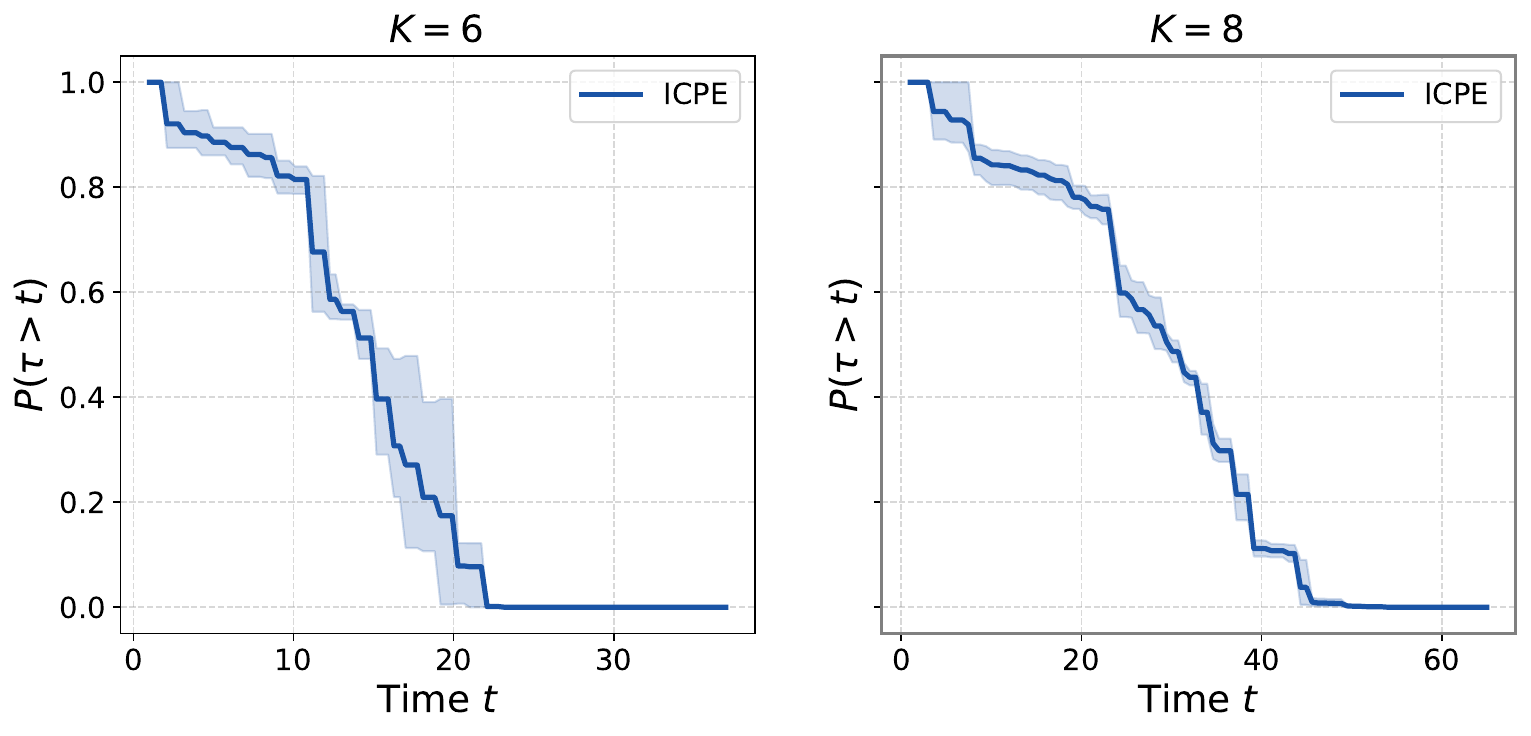}
    \hfill
    \includegraphics[width=0.48\linewidth]{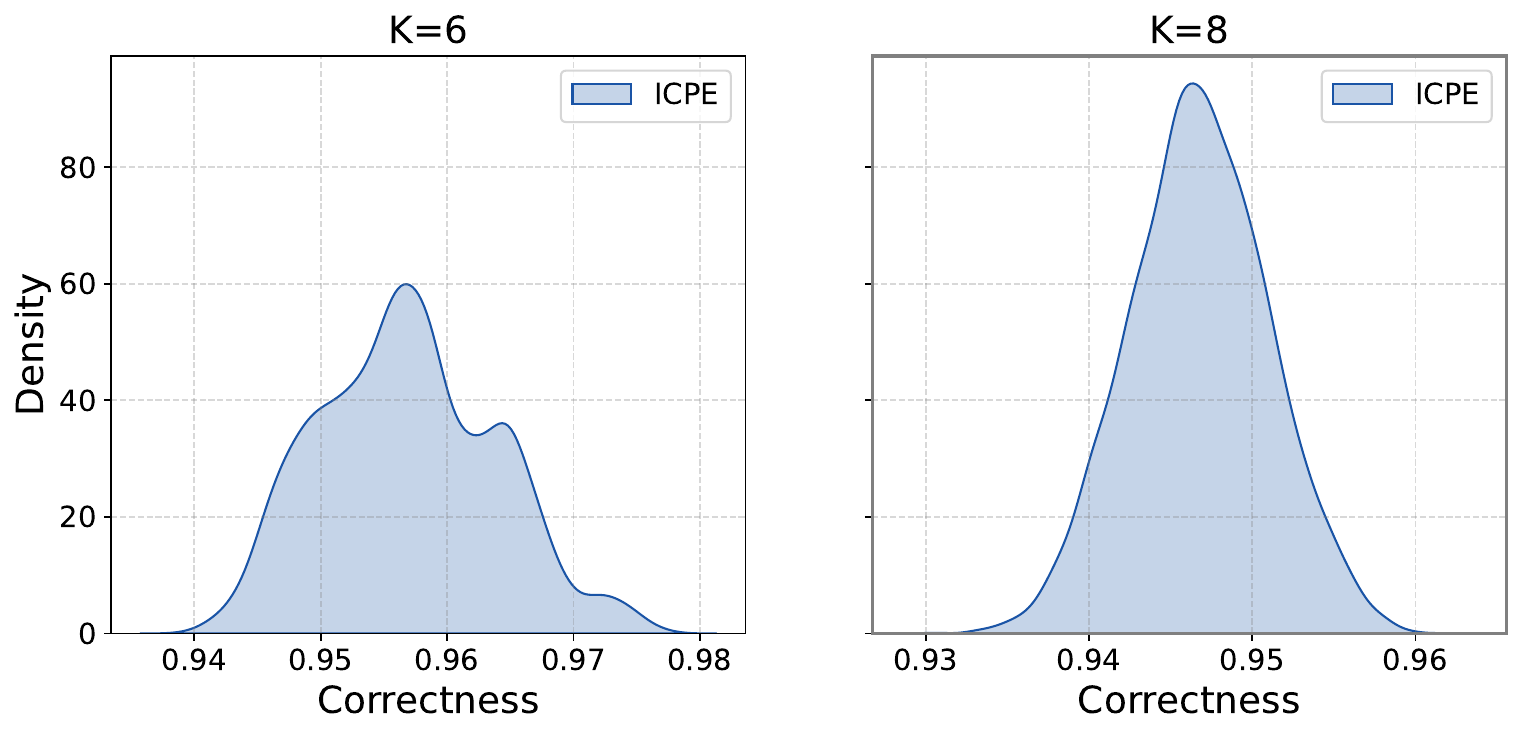}
    \caption{Magic room environment. Left: survival function $\mathbb{P}(\tau >t)$ for $K=6$ and $K=8$. Right: density of the correctness for $K=6$ and $K=8.$}
    \label{fig:magic_room:survival_stopping_time_correctness}
\end{figure}
We trained \icpe{} on $3$ seeds, using the fixed confidence setting (disabling the stopping action) using $\delta=0.05$ and evaluated the policies on $4500$ episodes for $K=6$ and $K=8$. In \cref{tab:magic_room} are shown the statistics of the average correctness and of the stopping time.

In \cref{fig:magic_room:example_trajectory} we can see a sample trajectory taken by \icpe{}. Starting from the middle of the room, \icpe{} follows a path that allows to find the clues $C_1,C_2$ in the green area. As soon as the second clue is found, it goes through the closest door.

In \cref{fig:magic_room:survival_stopping_time_correctness}, we present the survival functions of the stopping time $\tau$ for environments with grid sizes $K=6,8$, alongside the corresponding correctness densities. Lastly, \cref{fig:magic_room:clues} illustrates the relationship among agent correctness, the number of clues observed, and the stopping time. Specifically, smaller stopping times correlate with fewer observed clues, leading to  lower correctness. Conversely, when the agent observes both clues, it consistently selects the correct door, demonstrating that it has effectively learned the association between the clues and the correct hypothesis.

\begin{figure}
    \centering
    \includegraphics[width=0.7\linewidth]{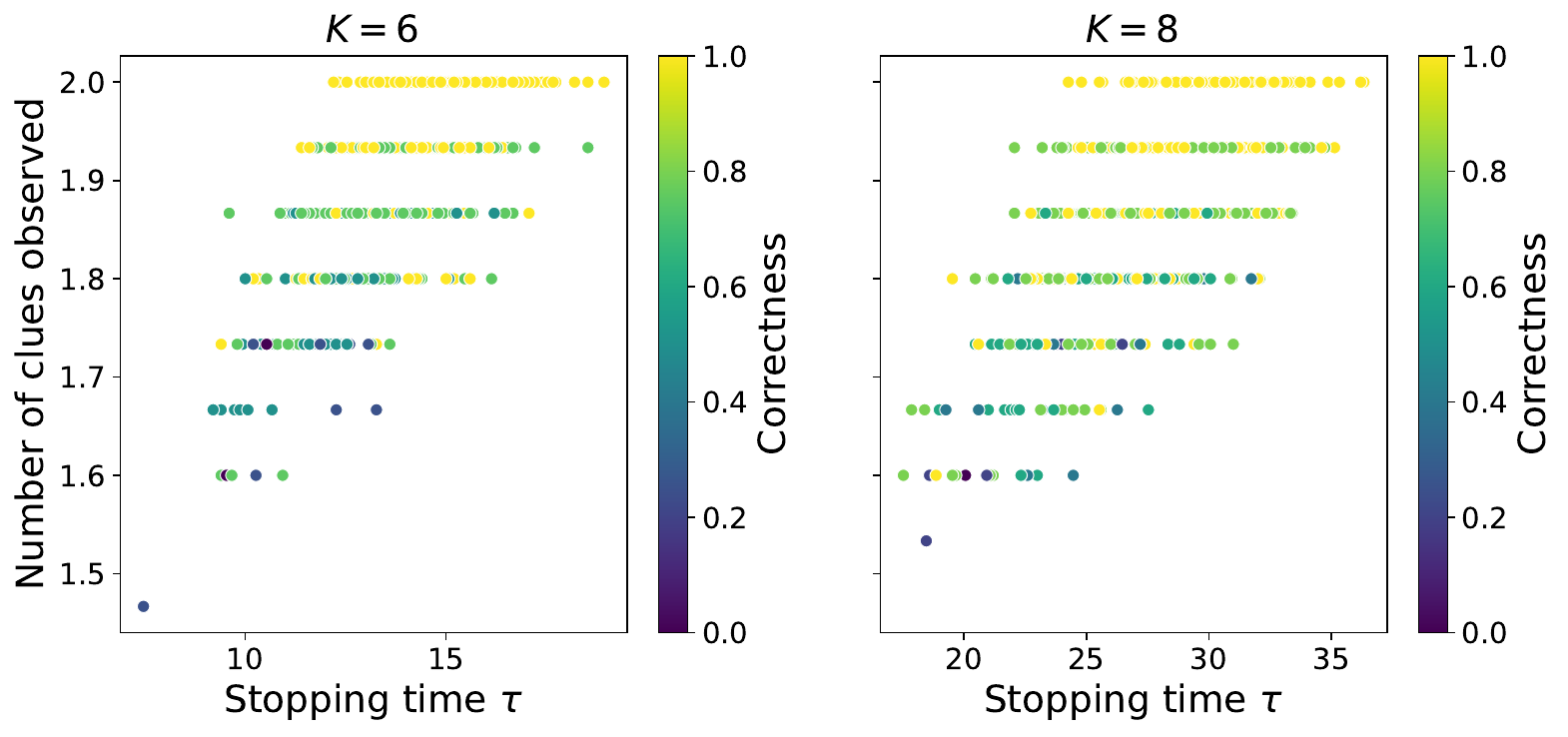}
    \caption{Magic room environment. Relationship among agent correctness, the number of clues observed, and the stopping time.}
    \label{fig:magic_room:clues}
\end{figure}

\subsection{Exploration on Feedback Graphs}\label{app:subsec:feedback_graphs}

In the standard bandits setting we studied in Section~\ref{subsec:bandits}, the learner observes the reward of the selected action, while in full-information settings, all rewards are revealed. Feedback graphs generalize this spectrum by specifying, via a directed graph $G$ which additional rewards are observed when a particular action is chosen. Each node corresponds to an action, and an edge from $u$ to $v$ means that playing $u$ may reveal feedback about $v$.

While feedback graphs have been widely studied for regret minimization~\citep{mannor2011bandits}, their use in pure exploration remains relatively underexplored~\citep{russo2025pure}. We study them here as a challenging and structured testbed for in-context exploration. Unlike unstructured bandits, these environments contain latent relational structure and stochastic feedback dependencies that must be inferred and exploited to explore efficiently.

Formally, we define a feedback graph as an adjacency matrix \( G \in [0,1]^{K \times K} \), where \( G_{u,v} \) denotes the probability that playing action \( u \) reveals the reward of action \( v \). The learner observes a feedback vector \( r \in \mathbb{R}^K \), where each coordinate is revealed independently with probability \( G_{u,v} \):
\[
r_v \sim
\begin{cases}
\mathcal{N}(\mu_v, \sigma^2), & \text{with probability } G_{u,v}, \\
\text{no observation}, & \text{otherwise}.
\end{cases}
\]

This setting allows us to test whether \icpe{} can learn to uncover and leverage latent graph structure to guide exploration. As in the bandits setting, we have a finite number of actions ${\cal A} = \{1, \dotsc, K\}$, corresponding to the actions (or vertices) in a feedback graph $G$. The learner's goal is to identify the best action, where $H^\star = \arg \max_a \mu_a$. At each time step $t$, the observation is the partially observed reward vector $x_t = r_t$. 

We evaluate performance on best-arm identification tasks across three representative feedback graph families:

\begin{itemize}
    \item \textbf{Loopy Star Graph} (Figure~\ref{fig:loopy_star_graph_app}): A star-shaped graph with self-loops, parameterized by $(p,q,r)$. The central node observes itself with probability $q$, one neighboring node with probability $p$, and all others with probability $r$. When $p$ is small, it may be suboptimal to pull the central node, requiring the agent to adapt its strategy accordingly.
    
    \item \textbf{Ring Graph} (Figure~\ref{fig:ring_graph_details}): A cyclic graph where each node observes its right neighbor with probability $p$ and its left neighbor with probability $1-p$. Effective exploration requires reasoning about which neighbors provide more informative feedback.
    
    \item \textbf{Loopless Clique Graph} (Figure~\ref{fig:looplessclique_graph_details}): A fully connected graph with no self-loops. Edge probabilities are defined as:
    \[
    G_{u,v} = 
    \begin{cases}
    0 & \text{if } u = v, \\
    \frac{p}{u} & \text{if } v \neq u \text{ and } v \text{ is odd}, \\
    1 - \frac{p}{u} & \text{otherwise}.
    \end{cases}
    \]
    Here, informativeness varies systematically with action index, requiring the learner to infer which actions are most useful.
\end{itemize}

These environments offer a diverse testbed for evaluating whether \icpe{} can uncover and exploit complex feedback structures without direct access to the underlying graph.

\begin{figure}[t]
    \centering
    \begin{minipage}[t]{0.3\textwidth}
        \centering
        \begin{tikzpicture}[
          scale=0.9, transform shape,
          vertex/.style = {circle, draw, minimum size=0.5cm},
          loop/.style = {looseness=3, in=60, out=120, min distance=7mm},
          every edge/.style = {draw, thick}
        ]
        \node[vertex] (A1) at (0,0) {$\mu_1$};
        \node[vertex] (B1) at (1.5,1.5) {$\mu_5$};
        \node[vertex] (C1) at (1.25,-1.5) {$\mu_4$};
        \node[vertex] (D1) at (-1.25,-1.5) {$\mu_3$};
        \node[vertex] (E1) at (-1.5,1.5) {$\mu_2$};
        \draw[->] (A1) -- node[above] {$p$} (B1);
        \draw[->] (A1) -- node[right] {$r$} (C1);
        \draw[->] (A1) -- node[left] {$r$} (D1);
        \draw[->] (A1) -- node[above] {$r$} (E1);
        \draw[->, loop] (A1) to node[above] {$q$} (A1);
        \draw[->, loop] (B1) to [out=240,in=300] node[below] {$1-p$} (B1);
        \draw[->, loop] (C1) to [out=240,in=300] node[below] {$(1-2p)^+$} (C1);
        \draw[->, loop] (D1) to [out=240,in=300] node[below] {$(1-2p)^+$} (D1);
        \draw[->, loop] (E1) to [out=240,in=300] node[below] {$(1-2p)^+$} (E1);
        \end{tikzpicture}
        \caption{Loopy star graph.}
        \label{fig:loopy_star_graph_app}
    \end{minipage}
    \hfill
    \begin{minipage}[t]{0.3\textwidth}
        \centering
        \begin{tikzpicture}[
          scale=0.9, transform shape,
          vertex/.style = {circle, draw, minimum size=0.5cm},
          every edge/.style = {draw, thick}
        ]
        \node[vertex] (A1) at (0,0) {$\mu_1$};
        \node[vertex] (B1) at (2.25,-2) {$\mu_2$};
        \node[vertex] (C1) at (0,-4) {$\mu_3$};
        \node[vertex] (D1) at (-2.25,-2) {$\mu_4$};
        \draw[->, bend right=20] (A1) to node[left] {$p$} (B1);
        \draw[->, bend right=20] (A1) to node[left] {$1-p$} (D1);
        \draw[->, bend right=20] (B1) to node[right] {$1-p$} (A1);
        \draw[->, bend right=20] (B1) to node[left] {$p$} (C1);
        \draw[->, bend right=20] (C1) to node[right] {$p$} (D1);
        \draw[->, bend right=20] (C1) to node[right] {$1-p$} (B1);
        \draw[->, bend right=20] (D1) to node[right] {$p$} (A1);
        \draw[->, bend right=20] (D1) to node[left] {$1-p$} (C1);
        \end{tikzpicture}
        \caption{Ring graph.}
        \label{fig:ring_graph_details}
    \end{minipage}
    \hfill
    \begin{minipage}[t]{0.3\textwidth}
        \centering
        \begin{tikzpicture}[
          scale=0.9, transform shape,
          vertex/.style = {circle, draw, minimum size=0.5cm},
          every edge/.style = {draw, thick}
        ]
        \node[vertex] (A1) at (0,0) {$\mu_1$};
        \node[vertex] (B1) at (2.25,-2) {$\mu_2$};
        \node[vertex] (C1) at (0,-4) {$\mu_3$};
        \node[vertex] (D1) at (-2.25,-2) {$\mu_4$};
        \draw[->, bend right=20] (A1) to (B1);
        \draw[->, bend right=20] (A1) to (D1);
        \draw[->, bend right=20] (B1) to (A1);
        \draw[->, bend right=20] (B1) to (C1);
        \draw[->, bend right=20] (C1) to (D1);
        \draw[->, bend right=20] (C1) to (B1);
        \draw[->, bend right=20] (D1) to (A1);
        \draw[->, bend right=20] (D1) to (C1);
        \draw[->, bend left=10] (D1) to (B1);
        \draw[->, bend left=10] (B1) to (D1);
        \draw[->, bend left=10] (A1) to (C1);
        \draw[->, bend left=10] (C1) to (A1);
        \end{tikzpicture}
        \caption{Loopless clique graph.}
        \label{fig:looplessclique_graph_details}
    \end{minipage}
\end{figure}

\paragraph{Fixed-Horizon.}
For each graph family, mean rewards were sampled uniformly from $[0,1]$ with fixed variance $0.2$, using hyperparameters: $(p, q, r) = (0.25, 0.3, 0.35)$ for the loopy star graph, $p = 0.3$ for the ring, and $p = 0.5$ for the loopless clique. We considered both small ($K=5$, $H=25$) and large ($K=10$, $H=50$) environments.

\icpe{} was compared to three baselines: Uniform Sampling, EXP3.G~\citep{rouyer2022near}, and Tas-FG~\citep{russo2025pure}. All methods performed maximum likelihood inference at the end of the trajectory. Table~\ref{tab:fg_fixed_horizon} reports the average probability of correctly identifying the best arm.

\begin{table}[ht]
\centering
\renewcommand{\arraystretch}{1.2}
\resizebox{\textwidth}{!}{
\begin{tabular}{|c||cc|cc|cc|}
\hline
\textbf{Algorithm} &
\multicolumn{2}{c|}{\textbf{Loopy Star}} &
\multicolumn{2}{c|}{\textbf{Loopless Clique}} &
\multicolumn{2}{c|}{\textbf{Ring}} \\
\cline{2-7}
& \textbf{Small} & \textbf{Large} & \textbf{Small} & \textbf{Large} & \textbf{Small} & \textbf{Large} \\
\hline \hline
\icpe{} & \textbf{0.88} $\pm$ \textbf{0.01} & 0.59 $\pm$ 0.02 & \textbf{0.95} $\pm$ \textbf{0.01} & 0.79 $\pm$ 0.04 & \textbf{0.79} $\pm$ \textbf{0.01} & 0.51 $\pm$ 0.03 \\
TasFG & 0.82 $\pm$ 0.01 & \textbf{0.73} $\pm$ \textbf{0.02} & 0.84 $\pm$ 0.01 & \textbf{0.83} $\pm$ \textbf{0.01} & 0.70 $\pm$ 0.02 & 0.56 $\pm$ 0.02 \\
EXP3.G & 0.66 $\pm$ 0.02 & 0.40 $\pm$ 0.01 & 0.84 $\pm$ 0.01 & 0.78 $\pm$ 0.02 & 0.77 $\pm$ 0.02 & 0.52 $\pm$ 0.02 \\
Uniform & 0.73 $\pm$ 0.02 & 0.60 $\pm$ 0.02 & 0.86 $\pm$ 0.01 & 0.79 $\pm$ 0.02 & 0.78 $\pm$ 0.02 & \textbf{0.62} $\pm$ \textbf{0.02} \\
\hline
\end{tabular}
}
\vspace{0.1in}
\caption{
Probability of correctly identifying the best arm. Small environments: $K=5$, $H=25$; Large: $K=10$, $H=50$. Results reported as mean ± 95\% CI.
}
\label{tab:fg_fixed_horizon}
\end{table}

\icpe{} outperforms all baselines in small environments across all graph families, highlighting its ability to learn efficient strategies from experience. Performance slightly degrades in larger environments, likely due to difficulty in credit assignment over long horizons. Still, \icpe{} remains competitive, validating its capacity to generalize across graph-structured settings.

\paragraph{Fixed-Confidence.}
We next tested \icpe{} in a fixed-confidence setting, using the same graph families but setting the optimal arm’s mean to $1$ and all others to $0.5$ to facilitate faster convergence. \icpe{} was trained for $K = 4, 6, \dotsc, 14$ with a target error rate of $\delta = 0.1$. We compared it to Uniform Sampling, EXP3.G, and Tas-FG using a shared stopping rule from~\citep{russo2025pure}.

\begin{figure}[ht]
    \centering
    \begin{subfigure}[t]{0.32\textwidth}
        \centering
        \includegraphics[width=\linewidth]{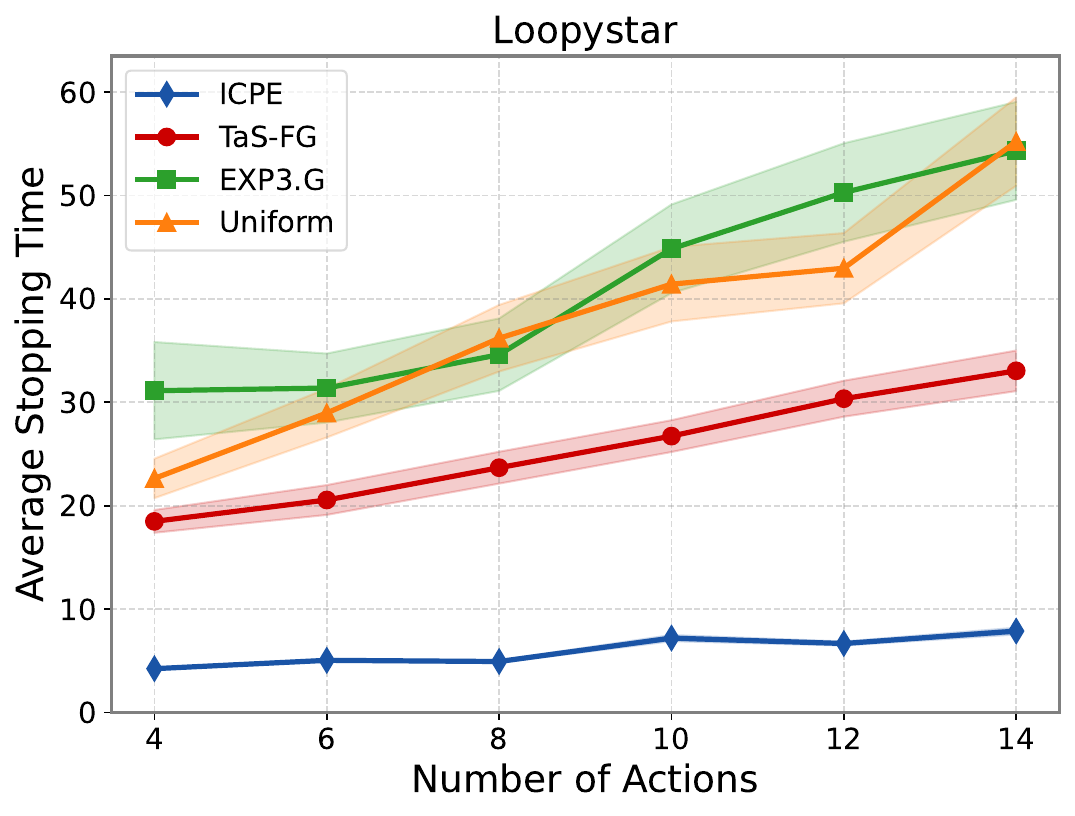}
        \caption{}
        \label{fig:subfig_a}
    \end{subfigure}
    \hfill
    \begin{subfigure}[t]{0.32\textwidth}
        \centering
        \includegraphics[width=\linewidth]{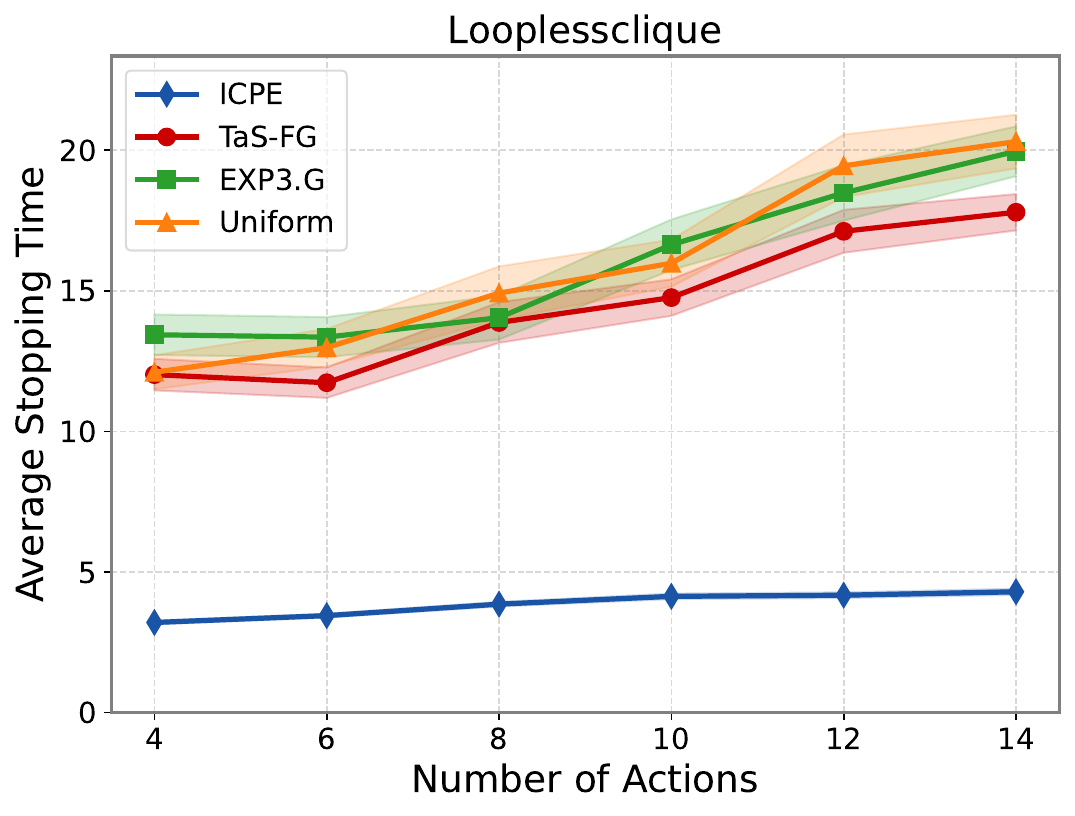}
        \caption{}
        \label{fig:subfig_b}
    \end{subfigure}
    \hfill
    \begin{subfigure}[t]{0.32\textwidth}
        \centering
        \includegraphics[width=\linewidth]{figures/benchmark_comparison_looplessclique.pdf}
        \caption{}
        \label{fig:subfig_c}
    \end{subfigure}
    \caption{Sample complexity comparison under the fixed-confidence setting for: (a) Loopy Star, (b) Loopless Clique, and (c) Ring graphs.}
    \label{fig:fg_fixed_confidence}
\end{figure}

As shown in Figure~\ref{fig:fg_fixed_confidence}, \icpe{} consistently achieves significantly lower sample complexity than all baselines. This suggests that \icpe{} is able to meta-learn the underlying structure of the feedback graphs and leverage this knowledge to explore more efficiently than \emph{uninformed} strategies. These results align with expectations: when environments share latent structure, learning to explore from experience offers a substantial advantage over fixed heuristics that cannot adapt across tasks.

\subsection{Meta-Learning Binary Search}\label{app:subsec:binary_search}

To test \icpe{}’s ability to recover classical exploration algorithms, we evaluate whether it can autonomously meta-learn binary search.

We define an action space of ${\cal A} = \{1, \dotsc, K\}$, where $K$ is the upper bound on the possible location of the hidden target $H^\star \sim \mathcal{A}$. Pulling an arm above or below $H^\star$ yields a observation $x_t = -1$ or $x_t = +1$, respectively—providing directional feedback.

We train \icpe{} under the fixed-confidence setting for $K = 2^3, \dotsc, 2^8$, using $150{,}000$ in-context episodes and a target error rate of $\delta = 0.01$. Evaluation was conducted on $100$ held-out tasks per setting. We report the minimum accuracy, mean stopping time, and worst-case stopping time, and compare against the theoretical binary search bound $O\left(\log_2K\right)$.

\begin{table}[ht]
\centering
\renewcommand{\arraystretch}{1.2}
\begin{adjustbox}{width=\textwidth}
\begin{tabular}{r|c|c|c|c}
\toprule
\textbf{Number of Actions ($K$)} & \textbf{Minimum Accuracy} & \textbf{Mean Stopping Time} & \textbf{Max Stopping Time} & $\log_2 K$\\
\midrule
8   & 1.00 & $2.13 \pm 0.12$ & 3 & 3 \\
16  & 1.00 & $2.93 \pm 0.12$ & 4 & 4 \\
32  & 1.00 & $3.71 \pm 0.15$ & 5 & 5 \\
64  & 1.00 & $4.50 \pm 0.21$ & 6 & 6 \\
128 & 1.00 & $5.49 \pm 0.23$ & 7 & 7 \\
256 & 1.00 & $6.61 \pm 0.26$ & 8 & 8 \\
\bottomrule
\end{tabular}
\end{adjustbox}
\caption{\icpe{} performance on the binary search task as the number of actions $K$ increases.}
\label{tab:binary_search}
\end{table}

As shown in Table~\ref{tab:binary_search}, \icpe{} consistently achieves perfect accuracy with worst-case stopping times that match the optimal $\log_2(K)$ rate, demonstrating that it has successfully rediscovered binary search purely from experience. While simple, this task illustrates \icpe{}’s broader potential to learn efficient search strategies in domains where no hand-designed algorithm is available.

\end{document}